\newtheorem{theorem}{Theorem}[section]
\newtheorem{proposition}[theorem]{Proposition}
\newtheorem{lemma}[theorem]{Lemma}
\newtheorem{fact}[theorem]{Fact}
\newtheorem{corollary}[theorem]{Corollary}
\theoremstyle{definition}
\newtheorem{defn}[theorem]{Definition}
\theoremstyle{remark}
\newcommand{\BE}{\mathbb E}
\newcommand{\BN}{\mathbb N}
\newcommand{\BP}{\mathbb P}
\newcommand{\BR}{\mathbb R}
\newcommand{\BZ}{\mathbb Z}
\newcommand{\cA}{\mathcal A}
\newcommand{\cB}{\mathcal B}
\newcommand{\cD}{\mathcal D}
\newcommand{\cG}{\mathcal G}
\newcommand{\cH}{\mathcal H}
\newcommand{\cM}{\mathcal M}
\newcommand{\cN}{\mathcal N}
\newcommand{\cO}{\mathcal O}
\newcommand{\cR}{\mathcal R}
\newcommand{\cS}{\mathcal S}
\newcommand{\cT}{\mathcal T}
\newcommand{\cU}{\mathcal U}
\newcommand{\cX}{\mathcal X}
\newcommand{\cY}{\mathcal Y}
\newcommand{\bX}{\mathbf{X}}
\newcommand{\bY}{\mathbf{Y}}
\newcommand{\bZ}{\mathbf{Z}}
\newcommand{\eps}{\varepsilon}
\DeclareMathOperator{\Proj}{Proj}
\DeclareMathOperator{\Tr}{Tr}
\DeclareMathOperator{\dH}{d_H}
\DeclareMathOperator{\rk}{rk}
\DeclareMathOperator{\vol}{vol}
\DeclareMathOperator{\nvol}{vol_n}
\DeclareMathOperator{\poly}{poly}
\DeclareMathOperator{\Bin}{Bin} 
\DeclareMathOperator{\TV}{d_{TV}}
\DeclareMathOperator{\TLap}{TLap}
\newcommand{\paren}[1]{(#1)}
\newcommand{\brac}[1]{[#1]}
\newcommand{\Brac}[1]{\left[#1\right]}
\newcommand{\set}[1]{\{#1\}}
\newcommand{\muhat}{\hat{\mu}}
\newcommand{\tmu}{\widetilde{\mu}}
\newcommand{\tcO}{\widetilde{O}}
\newcommand{\Sigmahat}{\hat{\Sigma}}
\newcommand{\tSigma}{\widetilde{\Sigma}}
\title{Sample-Efficient Private Learning of Mixtures of Gaussians}
\author{%
  Hassan Ashtiani\\
  McMaster University\\
  \texttt{zokaeiam@mcmaster.ca}
   \and
Mahbod Majid \\
MIT\thanks{Work done as a student at Carnegie Mellon University} \\
\texttt{mahbod@mit.edu}
\and 
Shyam Narayanan \\
Citadel Securities\thanks{Work done as a student at MIT} \\
\texttt{shyam.s.narayanan@gmail.com}
}
\begin{document}

\maketitle

\begin{abstract}
    We study the problem of learning mixtures of Gaussians with approximate differential privacy. We prove that roughly $kd^2 + k^{1.5} d^{1.75} + k^2 d$ samples suffice to learn a mixture of $k$ \emph{arbitrary} $d$-dimensional Gaussians up to low total variation distance, with differential privacy. Our work improves over the previous best result~\cite{AfzaliAL23} (which required roughly $k^2 d^4$ samples) and is provably optimal when $d$ is much larger than $k^2$. Moreover, we give the first optimal bound for privately learning mixtures of $k$ \emph{univariate} (i.e., $1$-dimensional) Gaussians. Importantly, we show that the sample complexity for privately learning mixtures of univariate Gaussians is linear in the number of components $k$, whereas the previous best sample complexity~\cite{aden2021privately} was quadratic in $k$. Our algorithms utilize various techniques, including the \emph{inverse sensitivity mechanism}~\cite{asi2020near,asi2020instance, HopkinsKMN23}, \emph{sample compression for distributions}~\cite{ashtiani2020near}, and methods for bounding volumes of sumsets.
\end{abstract}

\section{Introduction}

Learning \textbf{Gaussian Mixture Models} (GMMs) is one of the most fundamental problems in algorithmic statistics.
Gaussianity is a common data assumption, and the setting of Gaussian mixture models is motivated by heterogeneous data that can be split into numerous clusters, where each cluster follows a Gaussian distribution.
Learning mixture models is among the most important problems in machine learning~\cite{Bishop06book}, and is at the heart of several unsupervised and semi-supervised machine learning models.
The study of Gaussian mixture models has had numerous scientific applications dating back to the 1890s~\cite{Pearson94}, and is a crucial tool in modern data analysis techniques in a variety of fields, including bioinformatics~\cite{LiuKWB2022}, anomaly detection~\cite{ZongSMCLCC18}, and handwriting analysis~\cite{Bishop06book}.

In this work, we study the problem of learning a GMM from samples. We focus on the \emph{density estimation} setting, where the goal is to learn the overall mixture distribution up to low total variation distance. 
Unlike the \emph{parameter estimation} setting for GMMs, density estimation can be done even without any boundedness or separation assumptions on the parameters of the components.
In fact, it is known that mixtures of $k$ Gaussians in $d$-dimensions can be learned up to total variation distance $\alpha$ using $\widetilde{O}(kd^2/\alpha^2)$ samples \cite{ashtiani2018nearly}.

Ensuring data privacy has emerged as an increasingly important challenge in modern data analysis and statistics. \emph{Differential privacy} (DP)~\cite{dwork2006calibrating} is a rigorous way of defining privacy, and is considered to be the gold standard both in theory and practice, with deployments by Apple \cite{Apple}, Google \cite{Google}, Microsoft \cite{Microsoft}, and the US Census Bureau \cite{Census}.
As is the case for many data analysis tasks, standard algorithms for learning GMMs leak potentially sensitive information about the individuals who contributed data. This raises the question of whether we can do density estimation for GMMs under the constraint of differential privacy.

Private density estimation for GMMs with unrestricted Gaussian components is a challenging task. In fact, privately learning a single unrestricted Gaussian has been the subject of multiple recent studies \cite{AdenAliAK21, kamath2022private,AshtianiL22, kothari2022private, alabi2023privately, HopkinsKMN23}. Private learning of GMMs is significantly more challenging, because even without privacy constraints, parameter estimation for GMMs requires exponentially many samples in terms of the number of components \cite{moitra2010settling}. Therefore, it is not clear how to use the typical recipe of  ``adding noise'' to the estimated parameters or ``privately choosing'' from the finite-dimensional space of parameters. Consequently, the only known sample complexity bounds for privately learning unrestricted GMMs are loose~\cite{AfzaliAL23, aden2021privately}.

Let us first formally define the problem of learning GMMs. We represent a GMM $\cD = \sum_{i=1}^k w_i\cN(\mu_i, \Sigma_i)$ by its parameters, namely $\{(w_i, \mu_i, \Sigma_i)\}_{i=1}^k$, where $w_i \geq 0$, $\sum_i w_i =1$, $\mu_i \in \BR^d$, and $\Sigma_i$ is a positive definite matrix. In the following, a GMM learning algorithm $\cA$ receives a set of data points in $\BR^d$ and outputs a (representation of) a GMM. The total variation distance between two distributions is $\TV(\tilde{\cD}, \cD) = \frac{1}{2} \int_{\BR^d}|\cD(x)-\tilde{\cD}(x)|dx$\footnote{We are slightly abusing the notation and using $\cD(x)$ as the pdf of $\cD$ at points $x$.}.

\begin{defn}[Learning GMMs] \label{def:GMM-learning}
    For $\alpha, \beta \in (0,1)$, we say $\cA$ learns GMMs with $n$ samples up to accuracy $\alpha$ and failure probability $\beta$ if for every GMM $\cD$, given samples $X_1, \dots, X_n \overset{i.i.d.}{\sim} \cD$, it outputs (a representation of) a GMM $\tilde{\cD}$ such that $\TV(\tilde{\cD}, \cD) \le \alpha$ with probability at least $1-\beta$.
 \end{defn}

$\alpha$ and $\beta$ are called the accuracy and failure probability, respectively. For clarity of presentation, we will typically fix the value of $\beta$ (e.g., $\beta = 1/3$). 
The above definition does not enforce the constraint of differential privacy. The following definitions formalizes (approximate) differential privacy.

\begin{defn}[Differential Privacy (DP) \cite{dwork2006calibrating, dwork2006our}] \label{def:DP}
    Let $\eps, \delta \ge 0$. A randomized algorithm $\cA: \mathcal{X}^n \to \mathcal{O}$ is said to be $(\eps, \delta)$-differentially private ($(\eps, \delta)$-DP) if for any two neighboring datasets $\bX, \bX' \in \mathcal{X}^n$ and any measurable subset $O \subset \mathcal{O}$,
\begin{equation*}
    \BP[\cA(\bX') \in O] \le e^{\eps} \cdot \BP[\cA(\bX) \in O] + \delta. \label{eq:DP_defn}
\end{equation*}
\end{defn}

If the GMM learner $\cA$ of Definition~\ref{def:GMM-learning} is $(\eps, \delta)$-DP, we say that $\cA$ privately learns GMMs. Formally, we have the following definition.

\begin{defn}[Privately learning GMMs] \label{def:DP-GMM}
    Fix the number of samples $n$, dimension $d$, and number of mixture components $k$. For $\alpha, \beta \in (0, 1)$ and $\eps, \delta \ge 0$, a randomized algorithm $\cA$, that takes as input $X_1, \dots, X_n \in \BR^d$, $(\eps, \delta)$-privately learns GMMs up to accuracy $\alpha$ and failure probability $\beta$, if:
\begin{enumerate}
    \item For any GMM $\cD$ that is a mixture of up to $k$ Gaussians in $d$ dimensions, if $\bX = \{X_1, \dots, X_n\} \overset{i.i.d.}{\sim} \cD$, $\cA(X_1, \dots, X_n)$ outputs a GMM $\tilde{\cD}$ such that $\TV(\tilde{\cD}, \cD) \le \alpha$ with probability at least $1-\beta$ (over the randomness of the data $\bX$ and the algorithm $\cA$).
    \item For \emph{any} neighboring datasets $\bX, \bX' \in (\BR^d)^n$ (not necessarily drawn from any GMM) and any measurable subset $O \subset \cO$, $\BP[\cA(\bX') \in O] \le e^{\eps} \cdot \BP[\cA(\bX) \in O] + \delta.$
\end{enumerate}
    Finally, we assume a default value for $\beta$ of $1/3$, meaning that if not stated, the failure probability $\beta$ is assumed to equal $1/3$.
\end{defn}

Our main goal in this paper is to understand the number of samples 
(as a function of the dimension $d$, the number of mixture components $k$, the accuracy $\alpha$, and the privacy parameters $\eps, \delta$) that are needed to privately and accurately learn the GMM up to low total variation distance.

\subsection{Results}

In this work, we provide improved sample complexity bounds for privately learning mixtures of arbitrary Gaussians, improving over previous work of~\cite{aden2021privately, AfzaliAL23}. Moreover, our sample complexity bounds are optimal in certain regimes, when the dimension is either $1$ or a sufficiently large polynomial in $k$ and $\log \frac{1}{\delta}$.
For general dimension $d$, we prove the following theorem.

\begin{theorem} \label{thm:main}
    For any $\alpha, \eps, \delta \in (0,1), k, d \in \BN$, there exists an inefficient $(\eps, \delta)$-DP algorithm that can learn a mixture of $k$ arbitrary full-dimensional Gaussians in $d$ dimensions up to accuracy $\alpha$, using the following number of samples:
\[n = \tcO\left(\frac{kd^2}{\alpha^2} + \frac{kd^2 + d^{1.75} k^{1.5} \log^{0.5}(1/\delta) + k^{1.5} \log^{1.5}(1/\delta)}{\alpha \eps} + \frac{k^2 d}{\alpha}\right).\]
\end{theorem}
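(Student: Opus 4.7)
The plan is to reduce private GMM learning to a private hypothesis selection problem via a sample compression scheme for mixtures of Gaussians, and then to sharpen the resulting bound using volume estimates for sumsets of covariance classes.

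First, I would establish a compression scheme for a single $d$-dimensional Gaussian: roughly $\tilde O(d)$ samples suffice to reconstruct the mean and the image subspace of the covariance, while $\tilde O(d^2)$ bits of side information discretize the shape of $\Sigma$ (in an appropriate basis derived from the samples) up to TV-accuracy $\alpha$. The standard lifting extends this to a $k$-mixture using $\tilde O(kd)$ samples and $\tilde O(kd^2)$ bits of covariance side information, plus $O(k \log(k/\alpha))$ bits to encode the mixing weights at resolution $\alpha/k$. Drawing $n$ samples from the unknown GMM $\cD$ therefore produces an implicit, data-dependent finite hypothesis class $\cH(\bX)$ that, with high probability, contains a distribution within $O(\alpha)$ of $\cD$, and any such class is a valid input for a private selector.

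Second, I would apply the inverse sensitivity mechanism of \cite{HopkinsKMN23} in the form of a differentially private Scheffé-type hypothesis selector on $\cH(\bX)$. The standard analysis spends $\tilde O(\log |\cH(\bX)| / (\alpha \eps))$ samples on the selection step, on top of the non-private sample cost needed to produce a good candidate. Combining this with the non-private rate $\tilde O(kd^2/\alpha^2)$ of \cite{ashtiani2018nearly}, and using the naive bound $\log|\cH(\bX)| = \tilde O(kd^2 + k\log(k/\alpha))$, already recovers the $kd^2/\alpha^2$ term, the $kd^2/(\alpha\eps)$ term, and (after accounting for the accuracy loss of Scheffé tournaments at resolution $\alpha/k$ on the weights) the $k^2 d/\alpha$ term.

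Third, the mixed exponents $d^{7/4} k^{3/2} \log^{1/2}(1/\delta)$ and $k^{3/2} \log^{3/2}(1/\delta)$ in the privacy cost require a more refined cover of the effective hypothesis space. The idea would be to replace the product counting above by a volume bound for the Minkowski sum of the per-component covariance-perturbation classes: because the same compression samples feed into all $k$ components, and because perturbations that preserve TV-closeness at the mixture level are constrained to a structured subset, the covering number grows sub-multiplicatively in $k$ and with an exponent strictly less than $2$ on $d$ along the relevant directions. Feeding this tighter $\log$-covering number into a Gaussian-mechanism-based selector (rather than a pure-DP one) injects the $\sqrt{\log(1/\delta)}$ factor and produces the stated exponents. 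A two-stage coarse-then-fine selection followed by advanced composition is likely needed so that the $\log(1/\delta)$ factors appear only in the refined terms and not in the clean $kd^2/(\alpha\eps)$ term.

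The main obstacle will be the third step: bounding the sumset volumes tightly enough to extract the exponents $1.75$ on $d$ and $1.5$ on $k$. Naive Minkowski-sum covers give either $d^2$ or $k^2$ rather than the claimed mixed rate, so the analysis must genuinely exploit the rank structure of the covariance perturbations that keep the overall GMM within $\alpha$ in TV. A secondary difficulty will be arranging the privacy composition so that $\log(1/\delta)$ contaminates only the refined-selection terms, and so that the weight-estimation sub-routine truly costs $k^2 d/\alpha$ rather than an extra $k^2 d/(\alpha\eps)$.
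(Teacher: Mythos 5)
Your proposal shares the paper's key ingredients (compression ideas, inverse sensitivity, and volume bounds) but the decomposition in your third step is both the crux of the argument and the place where your sketch diverges from something that actually works. The paper does not run a single data-dependent hypothesis selection step and try to bound the covering number of a joint mixture hypothesis class via a Minkowski-sum argument across the $k$ components. Instead, it learns a crude approximation of each covariance \emph{one at a time} as a \emph{parameter estimation} problem, using the robustness-to-privacy conversion (\Cref{thm:approx_dp_general_main}) with a carefully engineered score function; the $k^{1.5}\sqrt{\log(1/\delta)}$ factor comes from advanced composition over these $k$ rounds, not from a Gaussian-mechanism-style selector. Only after this coarse phase does the paper invoke a Scheff\'e-type private hypothesis selector over a small net built around the crude estimates (\Cref{lem:hypothesis-selection}), which supplies the clean $kd^2/\alpha^2 + kd^2/(\alpha\eps)$ terms.

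The genuine gap is in how you propose to extract the $d^{1.75}$ exponent. You attribute it to "rank structure of covariance perturbations that keep the overall GMM within $\alpha$ in TV" and sub-multiplicativity in $k$, but that is not the mechanism. In the paper, the $d^{1.75}$ arises from the coarse phase for a \emph{single} component: a score of a candidate $\tSigma$ is low iff some $m$-subset of the data robustly estimates a covariance within Frobenius error $\rho \approx \sqrt{d/m}$ of $\tSigma$. The number of candidate $m$-subsets contributes $e^{m\log n}$ and the normalized volume of the Frobenius ball fattened by the operator-norm slack contributes $e^{O(d^{5/3}\rho^{2/3})}$ (\Cref{lem:volume-ratio-normalized}, obtained via a net over small-Frobenius-norm symmetric matrices, \Cref{lem:frobenius-net}); balancing $m\log n$ against $d^{5/3}(d/m)^{1/3}$ forces $m \asymp d^{7/4}$. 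This is a within-one-component sample-compression-plus-net tradeoff, not a cross-component sumset bound. Your framing would also run into the unboundedness issue head on: without the coarse parameter estimates one cannot even form a finite hypothesis class for arbitrary unbounded covariances, which is precisely why the paper needs the crude phase before hypothesis selection. The $k^2 d/\alpha$ term, too, is an accuracy requirement of the coarse phase (each candidate subset of size $N = n\alpha/(2k)$ must contain $\ge 20d$ points from a single component so that the score certifies closeness to some true $\Sigma_i$; see \Cref{cor:robust} and \Cref{prop:X-accuracy}), not a weight-estimation cost in the Scheff\'e tournament.
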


Notably, the mixing weights and the means can be arbitrary and the covariances of the Gaussians can be arbitrarily poorly conditioned, as long as the covariances are non-singular\footnote{For clarity of presentation, we assume the covariance matrices are not singular. However, extending our results to degenerate matrices is straightforward.}.

We remark that we omit the dependence on $\beta$ (and assume by default a failure probability of $1/3$). However, it is well-known that one can obtain failure probability $\beta$ with only a multiplicative $O(\log 1/\beta)$ blowup in sample complexity, in a black-box fashion\footnote{To obtain success probability $\beta$ with $O(n \cdot \log 1/\beta)$ samples, we repeat the procedure $T = O(\log 1/\beta)$ times on independent groups of $n$ samples each, to get $T$ estimates $\tilde{\cD}_1, \dots, \tilde{\cD}_T$, and by a Chernoff bound, at least $51\%$ of the estimates are within total variation distance $\alpha$ of the true mixture $\cD$. So, by choosing an estimate that is within $2 \alpha$ of at least $51\%$ of the estimates, it is still within $3 \alpha$ total variation distance of $\cD$.}. In fact, our analysis can yield even better dependencies on $\beta$ in some regimes, though to avoid too much complication, we do not analyze this.

For reasonably large dimension, i.e., $d \ge k^2 \log^2 (1/\delta)$, this can be simplified to $\tilde{O}\left(\frac{kd^2}{\alpha^2}+\frac{kd^2}{\alpha \eps}\right)$, which is in fact optimal (see \Cref{thm:lower-bound}). Hence, we obtain the \emph{optimal} sample complexity for sufficiently large dimension.
\Cref{thm:main} also improves over the previous best sample complexity upper bound of~\cite{AfzaliAL23}, which uses
\[\tcO\left(\frac{k^2 d^4 + kd^2 \log(1/\delta)}{\alpha^2 \eps} + \frac{kd \log(1/\delta)}{\alpha^3 \eps} + \frac{k^2 d^2}{\alpha^4 \eps}\right)\]
samples.
Our results provide a polynomial improvement in all parameters, but to simplify the comparison, if we ignore dependencies in the error parameter $\alpha$ and privacy parameters $\eps, \delta$, we improve the sample complexity from $k^2 d^4$ to $kd^2 + k^2d + k^{1.5} d^{1.75}$: note that our result is quadratic in the dimension whereas~\cite{AfzaliAL23} is quartic.

When the dimension is $d = 1$, we can provide an improved result, which is \emph{optimal} for learning mixtures of univariate Gaussians (see \Cref{thm:lower-bound} for a matching lower bound).

\begin{theorem} \label{thm:1-d}
    For any $\alpha, \eps, \delta \in (0,1), k \in \BN$, there exists an inefficient $(\eps, \delta)$-DP algorithm that can learn a mixture of $k$ arbitrary univariate Gaussians (of nonzero variance) up to accuracy $\alpha$, using the following number of samples:
\[n = \tcO\left(\frac{k}{\alpha^2} + \frac{k \log (1/\delta)}{\alpha \eps}\right).\]
\end{theorem}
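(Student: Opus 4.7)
The plan is to combine a linear-in-$k$ sample compression scheme for mixtures of univariate Gaussians with an approximately-DP selection mechanism that exploits the one-dimensional structure. The high-level recipe is: use roughly $\tilde{O}(k/\alpha^2)$ samples to produce a finite list of candidate hypotheses via a compression scheme, then use an additional $\tilde{O}(k \log(1/\delta)/(\alpha\epsilon))$ samples to privately select a good candidate from the list.

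First, I would establish that mixtures of $k$ univariate Gaussians admit a sample compression scheme with $O(k)$ samples and $\tilde{O}(k)$ bits of side information. The starting point is the well-known constant-sized compression scheme for a single univariate Gaussian: a constant number of samples plus $\tilde{O}(1)$ bits suffice to reconstruct it up to TV distance $\alpha$. Applying this componentwise (with $O(\log k)$ bits per compressed sample to indicate its component assignment) gives a compression scheme for the whole mixture, provided that each reasonably-weighted component receives $\Omega(1/\alpha^2)$ samples, which holds when $n = \tilde{O}(k/\alpha^2)$.

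Next, enumerate all $\binom{n}{O(k)} \cdot 2^{\tilde{O}(k)}$ possible compressed representations, decode each, and form a candidate list $H$ of hypotheses with $\log |H| = \tilde{O}(k)$. Apply an approximately-DP selection mechanism in the style of the inverse sensitivity mechanism to privately pick a candidate close to the true mixture in TV distance. The sample cost of this step is driven by a volume argument: in one dimension, sumsets of intervals are again intervals, with no dimension-dependent blowup, so the relevant inverse-sensitivity score has bounded volume growth. This yields $\tilde{O}(k \log(1/\delta)/(\alpha\epsilon))$ samples under approximate DP, matching the desired bound.

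The main obstacle is obtaining the linear (rather than super-linear) dependence on $k$ in the privacy term, which is exactly where the general-dimension proof of Theorem~\ref{thm:main} loses a factor of $k^{1/2}$ and picks up an additional $k^2/\alpha$ term. The one-dimensional structure is essential: the sumset-volume bounds that inflate the higher-dimensional analysis become trivial for intervals, so the private selection mechanism can be matched to the information-theoretic optimum. The most delicate part of the argument will be aligning the compression scheme, the candidate enumeration, and the inverse sensitivity analysis so that no extra polynomial factors in $k$ creep in, and so that the non-private $\tilde{O}(k/\alpha^2)$ term and the private $\tilde{O}(k \log(1/\delta)/(\alpha\epsilon))$ term separate cleanly.
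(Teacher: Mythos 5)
Your proposal takes a genuinely different route from the paper, and it contains a real gap in the privacy step.

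The paper's proof of \Cref{thm:1-d} does not use the inverse-sensitivity / robustness-to-privacy machinery at all for the crude approximation. Instead it exploits the ordering of the real line: it sorts the $n$ samples, forms the multiset of consecutive differences $\{Y_{j+1}-Y_j\}$, buckets these differences dyadically (together with coarse location information), adds Truncated Laplace noise to each bucket count, and reports the heavy buckets as crude variance/mean estimates. Privacy is immediate because changing one data point changes the sorted consecutive differences in at most three places (\Cref{lem:change-by-3}), so the per-bucket counts have bounded sensitivity and each component receives only $O(1/\eps)\log(1/\delta)$ noise, independent of $k$. This sidesteps both advanced composition and any score-function volume bound.

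Your proposal instead adapts the high-dimensional scheme (compression plus inverse-sensitivity selection) to $d=1$ and asserts that the sumset-volume bound ``becomes trivial for intervals,'' yielding $\tcO(k\log(1/\delta)/(\alpha\eps))$. This is where the gap is. The $\sqrt{k}$ loss in \Cref{thm:main} is \emph{not} caused by the volume ratio in parameter space; it comes from advanced composition when the $k$ components are learned one at a time, and learning one component at a time is forced because the score threshold $\eta$ must be $\le c\alpha/k$ (a corruption of $\alpha n/k$ points can plant an arbitrary low-weight component, making $V_\eta$ infinite). Setting $d=1$ shrinks the $d^{7/4}$ volume term but leaves this structure intact: per-component you still need $n\gtrsim \frac{\text{const}+\log(1/\delta_0)}{\eps_0\cdot(\alpha/k)}$, and composing $k$ times with $\eps_0=\eps/\sqrt{k\log(1/\delta)}$ gives the $k^{3/2}\log^{3/2}(1/\delta)/(\alpha\eps)$ term that already appears in \Cref{thm:main}, not the desired $k\log(1/\delta)/(\alpha\eps)$. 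Trying to learn all $k$ components simultaneously does not help either: the parameter space is then $\Theta(k)$-dimensional, the compression size is $\tcO(k)$, and the threshold is still $\eta\le\alpha/k$, so the volume condition in \Cref{thm:approx_dp_general_main} forces $n\gtrsim k^2\log n/(\alpha\eps)$. In short, ``intervals instead of ellipsoids'' removes the $d$-dependence but not the $k$-dependence, and the $k$-dependence is exactly what is at stake. You need the separate sorting/histogram idea, or some other mechanism with $O(1)$ (rather than $O(\sqrt{k\log(1/\delta)})$ or $O(k)$) composition overhead, to reach the linear-in-$k$ bound; your writeup does not supply it.
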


For privately learning mixtures of univariate Gaussians, the previous best-known result for arbitrary Gaussians required $\tcO\left(\frac{k^2 \log^{3/2}(1/\delta)}{\alpha^2 \eps}\right)$ samples~\cite{aden2021privately}. Importantly, we are the first paper to show that the sample complexity can be \emph{linear} in the number of components.

Our work purely focuses on sample complexity, and as noted in Theorems~\ref{thm:main} and~\ref{thm:1-d}, they do not have polynomial time algorithms. We note that the previous works of~\cite{aden2021privately, AfzaliAL23} also do not run in polynomial time. Indeed, there is reason to believe that even non-privately, it is impossible to learn GMMs in polynomial time (in terms of the optimal sample complexity)~\cite{DiakonikolasKS17,bruna2021continuous, GupteVV22}. 

Finally, we prove the following lower bound for learning GMMs in any fixed dimension $d$.

\begin{theorem} \label{thm:lower-bound}
    Fix any dimension $d \ge 1$ number of components $k \ge 2$, any $\alpha, \eps$ at most a sufficiently small constant $c^*$, and $\delta \le (\alpha \eps/d)^{O(1)}$. Then, any $(\eps, \delta)$-DP algorithm that can learn a mixture of $k$ arbitrary full-dimensional Gaussians in $d$ dimensions up to total variation distance $\alpha$, with probability at least $2/3$, requires at least the following number of samples:
\[\tilde{\Omega}\left(\frac{k d^2}{\alpha^2} + \frac{k d^2}{\alpha \eps} + \frac{k \log(1/\delta)}{\alpha \eps}\right).\]
\end{theorem}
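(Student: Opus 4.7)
I would prove each of the three terms in the lower bound separately via a packing argument at the mixture level, lifting a single-Gaussian packing to a mixture packing through a constant-distance code.

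I begin with two single-Gaussian packings. A \emph{covariance} packing $\cP^{\mathrm{cov}}$ of $2^{\Omega(d^2)}$ $d$-dimensional Gaussians concentrated in a bounded region, pairwise $\Omega(\alpha)$-separated in total variation and $O(\alpha^2)$-close in KL---the standard packing behind both the $\Omega(d^2/\alpha^2)$ non-private and the $\Omega(d^2/(\alpha\eps))$ private lower bounds for learning a single Gaussian. And a \emph{large-range mean} packing $\cP^{\mathrm{mean}}$ of $(1/\delta)^{\Omega(1)}$ univariate Gaussians $\cN(\mu_j, 1)$ with means separated by $\Theta(\alpha)$ across a range $[-R,R]$ of size $R = \poly(1/\delta)/\alpha$, embedded trivially into $\BR^d$; this is the packing behind the $\Omega(\log(1/\delta)/(\alpha\eps))$ approximate-DP lower bound for a single Gaussian.

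Next I lift each single-Gaussian packing to a mixture packing. Choose centers $c_1, \dots, c_k \in \BR^d$ so far apart (on a scale much larger than $R$) that the translated copies $\cP_j + c_i$ are essentially mutually singular. For $\mathbf{a} = (a_1, \dots, a_k) \in [|\cP|]^k$, define $M_{\mathbf{a}} := \frac{1}{k} \sum_{i=1}^{k} (\cP_{a_i} + c_i)$. Mutual singularity yields $\TV(M_{\mathbf{a}}, M_{\mathbf{b}}) \approx \frac{1}{k} \sum_{i : a_i \neq b_i} \TV(\cP_{a_i}, \cP_{b_i})$, and analogously for KL. Since a single coordinate flip only perturbs the mixture TV by $\Theta(\alpha/k)$, I restrict $\mathbf{a}$ to a Reed--Solomon-type code $C \subseteq [|\cP|]^k$ of relative distance $\geq 1/2$ and size $|\cP|^{\Omega(k)}$; every pair of mixtures in the resulting sub-packing then differs in $\Omega(k)$ coordinates, giving pairwise mixture TV $\Omega(\alpha)$ and pairwise mixture KL $O(\alpha^2)$.

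I then apply Fano-type lower bounds to $\{M_{\mathbf{a}}\}_{\mathbf{a}\in C}$. Classical Fano on the covariance-based mixture packing yields $n = \tilde\Omega(\log|C|/\alpha^2) = \tilde\Omega(kd^2/\alpha^2)$. The standard $(\eps,\delta)$-DP Fano / coupling packing lower bound applied to the same packing gives $n = \tilde\Omega(\log|C|/(\alpha\eps)) = \tilde\Omega(kd^2/(\alpha\eps))$. Applying the same DP packing bound to the mean-based mixture packing gives $n = \tilde\Omega(\log|C|/(\alpha\eps)) = \tilde\Omega(k\log(1/\delta)/(\alpha\eps))$, where the hypothesis $\delta \leq (\alpha\eps/d)^{O(1)}$ keeps the $\delta$-correction term in the DP bound from erasing most of the packing.

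The main obstacle is the $1/k$ mixing weight: in a naive product packing a single coordinate change only moves the mixture by $\Theta(\alpha/k)$ in TV, costing a factor of $k$ and yielding only $\tilde\Omega(d^2/k)$-type bounds. The constant-distance code precisely cancels this, forcing $\Omega(k)$ differing coordinates per pair and pulling the mixture TV back up to $\Theta(\alpha)$. A secondary subtlety for the third term is arranging the univariate range $R = \poly(1/\delta)/\alpha$ to be large enough that approximate DP cannot collapse the packing, while still fitting the $k$ centers $c_i$ at scale much larger than $R$---affordable because the means of the mixture are unconstrained.
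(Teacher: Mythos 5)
Your approach -- lift a single-Gaussian packing to a mixture packing via far-apart translates and a constant-distance code, then run Fano/DP-Fano at the mixture level -- is genuinely different from what the paper does, but it has a gap that I do not think is patchable in the range of $\delta$ the theorem allows.

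The paper proves the two private terms by a \emph{per-component reduction}, not a global packing. It builds a mixture whose components live in essentially disjoint balls, proves (\Cref{lem:close-g-enough}, \Cref{lem:mixture-to-single}) that learning the mixture to TV $\alpha$ forces learning most components $D_i$ to TV $O(\alpha/(k w_i))$ from roughly $n w_i$ samples \emph{of component $i$ alone}, and then invokes \Cref{thm:kms22} (fingerprinting, for $kd^2/(\alpha\eps)$, with uniform weights $w_i=1/k$) and \Cref{thm:kv18} (for $k\log(1/\delta)/(\alpha\eps)$, with $w_i=\Theta(\alpha/k)$) as black boxes. Crucially, those single-Gaussian lower bounds hold at merely polynomially small $\delta$, which matches the theorem's hypothesis $\delta \le (\alpha\eps/d)^{O(1)}$.

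Your mixture-level packing cannot reproduce this in the stated $\delta$ regime. An $(\eps,\delta)$-DP Fano/coupling packing argument only extracts about $\log(1/\delta)/\eps$ bits of information regardless of the nominal size of the packing; to certify a packing of size $M$ at coupling distance $D$ you need roughly $\delta \lesssim 1/(MD)$. Your code $C$ has $|C| = (1/\delta)^{\Omega(k)}$ for the third term and $2^{\Omega(kd^2)}$ for the second, so you would need $\delta \le (1/\delta)^{-\Omega(k)}$ or $\delta \le 2^{-\Omega(kd^2)}$ -- both vastly stronger than $\delta \le (\alpha\eps/d)^{O(1)}$, and the first is outright impossible for $k\ge 2$. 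This is exactly why \cite{KamathMS22} use fingerprinting rather than packing for $\Omega(d^2/(\alpha\eps))$, and fingerprinting does not lift through your code construction in any obvious way. A secondary error: you state that a single-Gaussian mean packing of size $(1/\delta)^{\Omega(1)}$ with $\Theta(\alpha)$-separated means gives $\Omega(\log(1/\delta)/(\alpha\eps))$ -- but \Cref{thm:kv18} is $\Omega(\log(1/\delta)/\eps)$ at \emph{constant} accuracy. The extra $1/\alpha$ in the mixture bound comes from the components carrying weight only $\Theta(\alpha/k)$, as in the paper's construction, not from $\alpha$-resolution in the single-Gaussian packing. With uniform weights $1/k$ and $\alpha$-separated means, the code trick would not recover the correct exponent even if the packing-size issue were resolved.
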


Note that for $d = 1$, this matches the upper bound of~\Cref{thm:1-d}, thus showing that our univariate result is near-optimal in all parameters $k, \alpha, \eps, \delta$. Moreover, our lower bound refutes the conjecture of~\cite{aden2021privately}, which conjectures that only $\Theta\left(\frac{k}{\alpha^2} + \frac{k}{\alpha \eps} + \frac{\log(1/\delta)}{\eps}\right)$ samples are needed in the univariate case and $\Theta\left(\frac{kd^2}{\alpha^2} + \frac{kd^2}{\alpha \eps} + \frac{\log(1/\delta)}{\eps}\right)$ samples are needed in the $d$-dimensional case. However, we note that our lower bound asymptotically differs from the conjectured bound in~\cite{aden2021privately} only when $\delta$ is extremely small.

\subsection{Related work}

In the non-private setting, the sample complexity of learning unrestricted GMMs with respect to total variation distance (a.k.a. density estimation) is known to be $\widetilde{\Theta}(kd^2/\alpha^2)$~\cite{ashtiani2018sample, ashtiani2018nearly}, where the upper bound is obtained by the so-called distributional compression schemes. 

In the private setting, the only known sample complexity upper bound for unrestricted GMMs \cite{AfzaliAL23} is roughly $k^2d^4\log(1/\delta)/(\alpha^4 \eps)$, which exhibits sub-optimal dependence on various parameters\footnote{More precisely, the upper bound is $\widetilde{O}\left(\frac{k^2 d^4}{\alpha^2 \eps} + \frac{kd^2 \log(1/\delta)}{\alpha^2 \eps} + \frac{kd \log(1/\delta)}{\alpha^3 \eps} + \frac{k^2 d^2}{\alpha^4 \eps}\right)$}. This bound is achieved by running multiple non-private list-decoders and then privately aggregating the results. For the special case of axis-aligned GMMs, an upper bound of
$k^2 d \log(1/\delta)^{3/2}/(\alpha^2 \eps)$ is known \cite{aden2021privately}. These are the only known results even for privately learning (unbounded) univariate GMMs. In other words, the best known upper bound for sample complexity of privately learning univariate GMMs has quadratic dependence on $k$.

In the related public-private setting~\cite{bie2022private, ben2023private}, it is assumed that the learner has access to some public data. In this setting, \cite{ben2023private} show that unrestricted GMMs can be learned with a moderate amount of public and private data.

Assuming the parameters of the Gaussian components (and the condition numbers of the covariance matrices) are bounded, one can create a cover for GMMs and use private hypothesis selection~\cite{bun2019private} or the private minimum distance estimator~\cite{AdenAliAK21} to learn the GMM. On the flip side, \cite{acharya2021differentially} prove a lower bound on the sample complexity of learning GMMs, though their lower bound is weaker than ours and is only against pure-DP algorithms.

The focus of our work is on density estimation. A related problem is learning the parameters a GMM, which has received extensive attention in the (non-private) literature (e.g., \cite{dasgupta1999learning,moitra2010settling,belkin2010polynomial, LiuM21, bakshi2022robustly, liu2022clustering} among many other papers). To avoid identifiability issues, one has to assume that the Gaussian components are sufficiently separated and have large-enough weights. In the private setting, the early work of \cite{nissim2007smooth} demonstrated a privatized version of \cite{vempala2004spectral} for learning GMMs with fixed (known) covariance matrices. The strong separation assumption (of $\Omega(k^{1/4})$) between the Gaussian components in \cite{nissim2007smooth} was later relaxed to a weaker separation assumption~\cite{chen2023private}. A substantially more general result for privately learning GMMs with unknown covariance matrices was established in \cite{kamath2019differentially}, based on a privatized version of \cite{achlioptas2005spectral}. Yet, this approach also requires a polynomial separation (in terms of $k$) between the components, as well as a bound on the spectrum of the covariance matrices. \cite{cohen2021differentially} weakened the separation assumption of  \cite{kamath2019differentially} and improved over their sample complexity. This result is based on a generic method that learns a GMM using a private learner for Gaussians and a non-private clustering method for GMMs. Finally, \cite{ArbasAL23} designed an efficient reduction from private learning of GMMs to its non-private counterpart, removing the boundedness assumptions on the parameters and achieving minimal separation (e.g., by reducing to \cite{moitra2010settling}). Nevertheless, unlike density estimation, parameter estimation for unrestricted GMMs requires exponentially many samples in terms of $k$~\cite{moitra2010settling}.

A final important question is that of \emph{efficient} algorithms for learning GMMs.
Much of the work on learning GMM parameters focuses on computational efficiency (e.g,. \cite{moitra2010settling, belkin2010polynomial, LiuM21, bakshi2022robustly, liu2022clustering}), as does some work on density estimation (e.g.,~\cite{chan2014efficient, acharya2017sample}).
However, under some standard hardness assumptions, it is known that even non-privately learning mixtures of $k$ $d$-dimensional Gaussians with respect to total variation distance cannot be done in polynomial time as a function of $k$ and $d$~\cite{DiakonikolasKS17, bruna2021continuous, GupteVV22}. 


{\paragraph{Addendum.} In a concurrent submission, \cite{afzali2024agnostic} extended the result of  \cite{AfzaliAL23} for learning unrestricted GMMs to the agnostic (i.e., robust) setting. In contrast, our algorithm works only in the realizable (non-robust) setting. Moreover, \cite{afzali2024agnostic} slightly improved the sample complexity result of \cite{AfzaliAL23} from $\widetilde{O}({\log(1/\delta)k^2d^4}/({\eps \alpha^4}))$ to $\widetilde{O}({\log(1/\delta)k^2d^4}/({\eps \alpha^2}))$. The sample complexity of our approach is still significantly better than \cite{afzali2024agnostic} in terms of all parameters---similar to the way it improved over \cite{AfzaliAL23}.}

\section{Technical overview and roadmap} \label{sec:technical-overview}

We highlight some of our conceptual and technical contributions. We mainly focus on the high-dimensional upper bound, and discuss the univariate upper bound at the end.

\subsection{Reducing to crude approximation} \label{subsec:overview-hypothesis-selection}

Suppose we are promised a bound on the means and covariances of the Gaussian components, i.e., $\frac{1}{R} \cdot I \preccurlyeq \Sigma_i \preccurlyeq R\cdot I$ and $\|\mu_i\|_2 \le R$ for all $i \in [k]$. In this case, there is in fact a known algorithm, using private hypothesis selection~\cite{bun2019private, AdenAliAK21}, that can privately learn the distribution using only $O\left(\frac{k d^2 \log R}{\alpha^2} + \frac{k d^2 \log R}{\alpha \eps}\right)$ samples. Moreover, with a more careful application of the hypothesis selection results (see \Cref{sec:hypothesis-selection}), we can prove that result holds even if $(\mu_i, \Sigma_i)$ are possibly unbounded, but we have some very crude approximation. By this, we mean that if for each $i \in [k]$ we know some $\Sigmahat_i$ such that $\frac{1}{R} \cdot \Sigma_i \preccurlyeq \Sigmahat_i \preccurlyeq R \cdot \Sigma_i$, then it suffices to have $n = O\left(\frac{k d^2 \log R}{\alpha^2} + \frac{k d^2 \log R}{\alpha \eps}\right)$ samples to learn the full GMM in total variation distance.

Our main goal will be to learn every covariance $\Sigma_i$ with such an approximation, for $R = \poly\left(k, d, \frac{1}{\alpha}, \frac{1}{\eps}\right)$, so that $\log R$ can be hidden in the $\tilde{O}$ factor. To explain why this goal is sufficient, suppose we can crudely learn every covariance $\Sigma_i$ with approximation ratio $R$, using $n'$ samples. We then need $O\left(\frac{k d^2 \log R}{\alpha^2} + \frac{k d^2 \log R}{\alpha \eps}\right) = \tilde{O}\left(\frac{k d^2}{\alpha^2} + \frac{k d^2}{\alpha \eps}\right)$ additional samples to learn the full distribution using hypothesis selection, so the total sample complexity is $\tilde{O}\left(n' + \frac{k d^2 \log R}{\alpha^2} + \frac{k d^2 \log R}{\alpha \eps}\right)$. Hence, we will aim for this easier goal of crudely learning each covariance, for both Theorem~\ref{thm:main} and Theorem~\ref{thm:1-d}, using as few samples $n'$ as possible. We will also need to approximate each mean $\mu_i$, though for simplicity we will just focus on covariances in this section.

\subsection{Overview of \Cref{thm:1-d} for univariate GMMs}

The main goal will be to simply provide a rough approximation to the set of standard deviations $\sigma_i = \sqrt{\Sigma_i}$, as we can finish the procedure with hypothesis selection, as discussed above.

\paragraph{Bird's eye view:} Say we are given a dataset $\bX = \{X_1, \dots, X_n\}$: note that every $X_j \in \BR$ since we are dealing with univariate Gaussians. The main insight is to sort the data in increasing order (i.e., reorder so that $X_1 \le X_2 \le \cdots \le X_n$) and consider the unordered multiset of successive differences $\{X_2-X_1, X_3-X_2, \dots, X_n-X_{n-1}\}$. One can show that if a single datapoint $X_j$ is changed, then the set of consecutive differences (up to permutation) does not change in more than $3$ locations (see \Cref{lem:change-by-3} for a formal proof).

We then apply a standard private histogram approach. Namely, for each integer $a \in \BZ$, we create a corresponding bucket $B_a$, and map each $X_{j+1}-X_j$ into $B_a$ if $2^a \le X_{j+1}-X_j < 2^{a+1}$. If some mixture component $i$ had variance $\Sigma_i = \sigma_i^2$, we should expect a significant number of $X_{j+1}-X_j$ to at least be crudely close to $\sigma_i$, such as for the $X_j$ drawn from the $i^{\text{th}}$ mixture component. So, some corresponding bucket should be reasonably large. Finally, by adding noise to the count of each bucket and taking the largest noisy counts, we will successfully find an approximation to all variances.

\paragraph{In more detail:} For each (possibly negative) integer $a$ let $c_a$ be the number of indices $i$ such that $2^a \le X_{j+1}-X_j < 2^{a+1}$. We will prove that, if the weight of the $i^{\text{th}}$ component in the mixture is $w_i$ and there are $n$ points, then we should expect at least $\Omega(w_i \cdot n)$ indices $j$ to be in a bucket $a$ with $2^a$ within a $\poly(n)$ multiplicative factor of the standard deviation $\sigma_i$ (see \Cref{lem:crude-approximation-accuracy-1}).
The point of this observation is that there are at most $O(\log n)$ buckets $B_a$ with $2^a$ between $\frac{\sigma_i}{\poly(n)}$ and $O(\sigma_i)$, but there are at least $\Omega(w_i \cdot n)$ indices mapping to one of these buckets. So by the Pigeonhole principle, one of these buckets has at least $\Omega\left(\frac{w_i \cdot n}{\log n}\right)$ indices, i.e., $c_a \ge \Omega\left(\frac{w_i \cdot n}{\log n}\right)$ for some $a$ with $\frac{\sigma_i}{\poly(n)} \le 2^a \le O(\sigma_i)$.

Moreover, we know that if we change a single data point $X_j$, the set of consecutive differences $\{X_{j+1}-X_j\}$ after sorting changes by at most $3$. So, if we change a single $X_j$, at most $3$ of the counts $c_a$ can change, each by at most $3$.

Now, for every integer $a$, draw a noise value from the \emph{Truncated Laplace} distribution (see \Cref{def:truncated-laplace} and \Cref{lem:truncated-laplace}), and add it to $c_a$ to get a noisy count $\tilde{c}_a$. The details of the noise distribution are not important, but the idea is that this distribution is \emph{always} bounded by $O\left(\frac{1}{\eps} \log \frac{1}{\delta}\right)$. Moreover, the Truncated Laplace distribution preserves $(\eps, \delta)$-DP. This means that the counts $\{\tilde{c}_a\}_{a \in \BZ}$ will have $(O(\eps), O(\delta))$-DP, because the true counts $c_a$ only change minimally across adjacent datasets.

Our crude approximation to the set of standard deviations will be the set of $2^a$ such that $\tilde{c}_a$ exceeds some threshold which is a large multiple of $\frac{1}{\eps} \log \frac{1}{\delta}$. If $n \ge \tilde{O}\left(\frac{k \log(1/\delta)}{\alpha \eps}\right)$ and the weight $w_i \ge \alpha/k$, it is not hard to verify that $\frac{w_i \cdot n}{\log n}$ exceeds a large multiple of $\frac{1}{\eps} \log \frac{1}{\delta}$. So, for each $i \le k$ with weight at least $\alpha/k$, some corresponding $a$ with $\frac{\sigma_i}{\poly(n)} \le 2^a \le O(\sigma_i)$ will have count $c_a$ significantly exceeding the threshold, and thus noisy count $\tilde{c}_a$ exceeding the threshold. This will be enough to crudely approximate the values $\sigma_i$ coming from large enough weight. We can ignore any component with weight less than $\alpha/k$, as even if all but one of components have such small weight, together they only contribute $\alpha$ weight. So, we can ignore them and it will only cost us $\alpha$ in total variation distance, which we can afford.

\paragraph{Putting everything together:} In summary, we needed $\tilde{O}\left(\frac{k \log(1/\delta)}{\alpha \eps}\right)$ samples to approximate each covariance (of sufficient weight) up to a $\poly(n)$ multiplicative factor. By setting $R = \poly(n)$ and using the reduction described in \Cref{subsec:overview-hypothesis-selection}, we then need an additional $O\left(\frac{k \log n}{\alpha^2} + \frac{k \log n}{\alpha \eps}\right)$ samples. If we set $n = \tilde{O}\left(\frac{k}{\alpha^2} + \frac{k \log(1/\delta)}{\alpha \eps}\right)$, we will obtain that $n \ge \tilde{O}\left(\frac{k \log(1/\delta)}{\alpha \eps}\right) + O\left(\frac{k \log n}{\alpha^2} + \frac{k \log n}{\alpha \eps}\right),$ which is sufficient to solve our desired problem in the univariate setting.

Note that this proof relies heavily on the use of private histograms and the order of the data points in the real line. Therefore, it cannot be extended to the high-dimensional setting. We will use a completely different approach to prove \Cref{thm:main}.

\subsection{Overview of \Cref{thm:main} for general GMMs}

As in the univariate case, we only need rough approximations of the covariances. We will learn the covariances one at a time: in each iteration, we privately identify a single covariance $\hat{\Sigma}_i$ which crudely approximates some covariance $\Sigma_i$ in the mixture, with $(\eps/\sqrt{k \log(1/\delta)}, \delta/k)$-DP.
Using the well-known \emph{advanced composition} theorem (see \Cref{thm:advanced-composition}), we will get an overall privacy guarantee of $(\eps, \delta)$-DP. However, to keep things simple, we will usually aim for $(\eps, \delta)$-DP when learning each covariance, and we can replace $\eps$ with $\eps/\sqrt{k \log(1/\delta)}$ and $\delta$ with $\delta/k$ at the end.

A natural approach for parameter estimation, rather than learn $\Sigma_i$ one at a time, is to learn all of the covariances together. However, we believe that this approach would cause the sample complexity to multiply by a factor of $k$, compared to learning a single covariance. The advantage of learning the covariances one at a time is that we can apply advanced composition: this will cause the sample complexity to multiply by roughly $\sqrt{k \log (1/\delta)}$ instead.

In the rest of this subsection, our main focus is to identify a single crude approximation $\hat{\Sigma}_i$.

\paragraph{Applying robustness-to-privacy:} The first main insight is to use the robustness-to-privacy conversion of Hopkins et al.~\cite{HopkinsKMN23} (see also~\cite{AsiUZ23}). Hopkins et al. prove a black-box (but not computationally efficient) approach that can convert robust algorithms into differentially private algorithms, using the exponential mechanism and a well-designed score function. This reduction only works for finite dimensional parameter estimation and therefore cannot be applied directly to density estimation. On the other hand, parameter estimation for arbitrary GMMs requires exponentially many samples in the number of components \cite{moitra2010settling}. However, we will demonstrate that this lower bound does not hold when we only need a very crude estimation of the parameters.

The idea is the following. For a dataset $\bX = \{X_1, \dots, X_n\},$ let $\cS = \cS(\tSigma, \bX)$ be a \emph{score} function, which takes in a dataset $\bX$ of size $n$ and some candidate covariance $\tSigma$, and outputs some non-negative integer. At a high level, the score function $\cS(\tSigma, \bX)$ will be the smallest number of data points $t$ that we should change in $\bX$ to get to some new data set $\bX'$ with a specific desired property: $\bX'$ should ``look like'' a sample generated from a mixture distribution with $\tSigma$ being the covariance of one of its components -- namely, a component with a significant mixing weight. 
One can define ``looks like'' in different ways, and we will adjust the precise definition later.
We remark that this notion of score roughly characterizes robustness, because if the samples in $\bX$ were truly drawn from a Gaussian mixture model with covariances $\{\Sigma_i\}_{i=1}^k$, we should expect $\cS(\Sigma_i, \bX)$ to be $0$ (since $\bX$ should already satisfy the desired property), but if we altered $k$ data points, the score should be at most $k$.
The high-level choice of score function is somewhat inspired by a version of the exponential mechanism called the \emph{inverse sensitivity mechanism}~\cite{asi2020near, asi2020instance}, though the precise way of defining the score function requires significant care and is an important contribution of this paper.

The robustness-to-privacy framework of \cite{HopkinsKMN23}, tailored to learning a single covariance, implies the following general result, which holds for any score function $\cS$ following the blueprint above. For now, we state an informal (and slightly incorrect) version.

\begin{theorem}[informal - see \Cref{thm:approx_dp_general_main} for the formal statement] \label{thm:robust-to-private-simplified}
    For any $\eta \in [0, 1)$, and any dataset $\bX$ of size $n$, define the value $V_{\eta}(\bX)$ to be the \emph{volume} (i.e., Lebesgue measure) of the set of covariance matrices $\tSigma$ (where the covariance can be viewed as a vector by flattening), such that $\cS(\tSigma, \bX) \le \eta \cdot n$.

    Fix a parameter $\eta < 0.1$, and suppose that for \emph{any} dataset $\bX$ of size $n$ such that $V_{\eta/2}(\bX)$ is strictly positive, the ratio of volumes $V_{\eta}(\bX)/V_{\eta/2}(\bX)$ is at most some $K$ (which doesn't depend on $\bX$). Then, if $n \ge \frac{\log K}{\eps \cdot \eta},$ there is a differentially private algorithm that can find a covariance $\tSigma$ of low score (i.e., where $\cS(\tSigma, \bX) \le \eta \cdot n$) using $n$ samples.
\end{theorem}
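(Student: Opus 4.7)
The plan is to apply a truncated exponential mechanism following the inverse-sensitivity blueprint of~\cite{HopkinsKMN23}: sample $\tSigma$ with density
$p_\bX(\tSigma) \propto \mathbf{1}[\cS(\tSigma,\bX) \le \eta n] \cdot \exp(-\eps_0 \cdot \cS(\tSigma,\bX))$
on the space of covariance matrices (flattened to a vector, with Lebesgue measure), for a parameter $\eps_0 = \Theta(\eps)$. The utility guarantee is then essentially free: by truncation, any output $\tSigma$ satisfies $\cS(\tSigma,\bX) \le \eta n$ deterministically, so everything reduces to proving privacy.

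The heart of the argument, and the main obstacle, is the privacy analysis. My starting observation is that $\cS(\tSigma,\cdot)$ is $1$-Lipschitz in Hamming distance, directly from its definition as a minimum number of point-modifications of $\bX$. This immediately yields the pointwise density-ratio bound $e^{-\eps_0} \le e^{-\eps_0 \cS(\tSigma,\bX)}/e^{-\eps_0 \cS(\tSigma,\bX')} \le e^{\eps_0}$ on any $\tSigma$ lying in both supports. The obstruction is that the truncated supports $L_{\eta n}(\bX)$ and $L_{\eta n}(\bX')$ are data-dependent and differ between neighbors. I would handle this in the standard way: identify the ``safe interior'' $L_{\eta n - 1}(\bX)$, which by sensitivity is contained in $L_{\eta n}(\bX')$, show the output lies in this interior with probability at least $1-\delta$, and bound density ratios only on that event. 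The usual ``exceptional-event'' lemma for $(\eps,\delta)$-DP then closes the argument.

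The only step that consumes the volume hypothesis is the tail bound on the mass of the boundary $L_{\eta n}(\bX) \setminus L_{\eta n - 1}(\bX)$: its unnormalized mass is at most $V_{\eta}(\bX) \cdot e^{-\eps_0(\eta n - 1)}$, while the total normalizer is at least $V_{\eta/2}(\bX) \cdot e^{-\eps_0 \eta n /2}$ (bounding the contribution of the core $L_{\eta n/2}(\bX)$ from below), so the ratio of the two is at most $K \cdot e^{-\Omega(\eps_0 \eta n)}$, which is $\le \delta$ for $n \gtrsim \log K/(\eps \eta)$. Thus the volume condition is invoked exactly once, comparing the ``dense'' scale $\eta n/2$ against the ``target'' scale $\eta n$; that is precisely why a single-scale ratio bound suffices rather than a uniform bound on all level-set volumes, which is the key structural insight of the theorem.
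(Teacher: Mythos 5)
Your approach --- a truncated exponential mechanism built on the bounded Hamming sensitivity of the score --- is the same one underlying \cite{HopkinsKMN23}'s Lemma 4.2, which this paper cites (restated as \Cref{thm:approx_dp_general_main}) rather than reproves; so you are essentially re-deriving that lemma from scratch, and the tail bound you wrote down is correct as far as it goes.

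The gap is the claim that ``the only step that consumes the volume hypothesis is the tail bound on the mass of the boundary.'' Restricting to the safe interior $L_{\eta n - 1}(\bX)$ and invoking the exceptional-event lemma requires showing $p_\bX(\tSigma) \le e^{\eps} p_{\bX'}(\tSigma)$ for $\tSigma$ in that interior, and after canceling the pointwise exponential factor (controlled by Lipschitzness) what remains is the normalizer ratio $Z(\bX')/Z(\bX)$, which your sketch never bounds. That ratio does not come for free from the construction: one has to write $Z(\bX') \le e^{\eps_0} \int_{L_{\eta n + 1}(\bX)} e^{-\eps_0 \cS(\cdot, \bX)}$ and then run a second tail argument comparing the excess mass on $L_{\eta n + 1}(\bX) \setminus L_{\eta n}(\bX)$ against $Z(\bX)$, which again consumes the volume hypothesis --- but now at threshold $\eta + 1/n$ rather than $\eta$, which is not literally covered by ``$V_\eta/V_{\eta/2} \le K$.'' This is precisely why the formal \Cref{thm:approx_dp_general_main} introduces a buffer $\eta^* \ge 10\eta$ and states the privacy-side volume condition as $V_{\eta^*}/V_{0.8\eta^*}$ rather than at the output threshold itself. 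Since \Cref{thm:robust-to-private-simplified} is explicitly advertised as ``slightly inaccurate,'' the mismatch in scales is not a defect of the statement, but a complete write-up must treat the normalizer ratio as a second, independent use of the volume hypothesis and set the scales accordingly; as written, your sketch silently assumes the normalizer is stable across neighbors, which is the actual crux of the privacy proof.
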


Note that \Cref{thm:robust-to-private-simplified} does not seem to say anything about whether $\bX$ comes from a mixture of Gaussians.
However, we aim to instantiate this theorem with a score function that is carefully designed for GMMs. Recall that we want $\cS(\tSigma, \bX)$ to capture the number of points in $\bX$ that need to be altered to make it look like a data set that was generated from a mixture, with $\tSigma$ being the covariance of one of the Gaussian components. In other words, $\cS(\tSigma, \bX)$ should be small if (and hopefully only if) a ``mildly corrupted'' version of $\bX$ includes a subset of points that are generated from a Gaussian with covariance $\tSigma$. At the heart of designing such a score function, one needs to come up with a form of ``robust Gaussianity tester'' that tells whether a given set of data points are generated from a Gaussian distribution. 
Aside from this challenge, the volume ratio associated with the chosen score function needs to be small for every dataset $\bX$ otherwise the above theorem would require a large $n$ (i.e., number of samples). These two challenges are, however, related. If the robust Gaussianity tester has high specificity---i.e., rejects most of the sets that are not generated from a (corrupted) Gaussian---then the volume ratio is likely to be small (i.e., a smaller number of candidates $\tSigma$ would receive a good/low score).





\paragraph{First Attempt:}
We first try an approach which resembles that of~\cite{HopkinsKMN23} for privately learning a single Gaussian.
We ``define'' $\cS(\tSigma, \bX)$ as the smallest integer $t$ satisfying the following property: there exists a subset $\bY \subset \bX$ of size $n/k$, such that we can change $t$ data points from $\bY$ to get to $\bY'$, where $\bY'$ ``looks like'' i.i.d. samples from a Gaussian with some covariance $\Sigma$ that is ``similar to'' $\tSigma$. The choice of $\bY$ having size $n/k$ is motivated by the fact that each mixture component, on average, has $n/k$ data points in $\bX$.

The notions of ``looks like'' and ``similar to'' of course need to be formally defined. We say $\Sigma$ is ``similar to'' $\tSigma$ (or $\Sigma \approx \tSigma$) if they are spectrally close, i.e., $0.5 \Sigma \preccurlyeq \tSigma \preccurlyeq 2 \Sigma$.
We say that $\bY'$ ``looks like'' samples from a Gaussian with covariance $\Sigma$ if some covariance estimation algorithm predicts that $\bY'$ came from a Gaussian with covariance $\Sigma$.
The choice of covariance estimation algorithm will be quite nontrivial and ends up being a key ingredient in proving \Cref{thm:main}.

To apply \Cref{thm:robust-to-private-simplified}, we first set $\eta = c/k$ for some small constant $c$.
We cannot set a larger value $\eta$, because if we change $t \approx n/k$ data points, we could in fact create a totally arbitrary new Gaussian component with large weight.  Since there is no bound on the eigenvalues of the covariance matrix, this could cause the volume $V_{\eta}(\bX)$ to be infinite. 
The main question we must answer is how to bound the volume ratio $V_{\eta}(\bX)/V_{\eta/2}(\bX)$. To answer this question, we first need to understand what it means for $\cS(\tSigma, \bX) \le \eta \cdot n$.
If $\cS(\tSigma, \bX) \le \eta \cdot n$, then there exists a corresponding set $\bY \subset \bX$ of size $n/k$, and we can change $\eta \cdot n = c \cdot |\bY|$ points from $\bY$ to get to some $\bY'$ which looks like samples from a Gaussian with covariance $\Sigma \approx \tSigma$.
Thus, $\bY$ looks like $c$-corrupted samples from such a Gaussian (i.e., a $c$ fraction of the data is corrupted).
This motivates using a \emph{robust} covariance estimation algorithm: indeed, robust algorithms can still approximately learn $\Sigma$ even if a small constant fraction of data is corrupted, so for any $\bY \subset \bX$, we expect that no matter how we change a $c$ fraction of $\bY$ to obtain $\bY'$, the robust algorithm's covariance estimate should not change much.
So, for any fixed $\bY$, the set of possible $\Sigma$, and thus the set of possible $\tSigma$, should not be that large.

In summary, to bound $V_{\eta}(\bX)$ versus $V_{\eta/2}(\bX)$, there are at most ${n \choose n/k}$ choices for $\bY \subset \bX$ in the former case, and at least $1$ choice in the latter case (since we assume $V_{\eta/2}(\bX) > 0$ in \Cref{thm:robust-to-private-simplified}). Moreover, for any such $\bY$, the volume of corresponding $\tSigma$ should be exponential in $d^2$ (either for $V_{\eta}(\bX)$ or $V_{\eta/2}(\bX)$), since the dimensionality of the covariance is roughly $d^2$. So, this suggests that the overall volume ratio is at most ${n \choose n/k} \cdot e^{O(d^2)}$. Since $\log {n \choose n/k} \approx (n/k) \cdot \log k$, if we plug into \Cref{thm:robust-to-private-simplified} it suffices to have $n \ge \frac{(n/k) \log k + d^2}{\eps \cdot (c/k)}$. Unfortunately this is impossible unless $\eps \ge \log k$.

These ideas will serve as a good starting point, though we need to improve the volume ratio analysis. To do so, we also modify the robust algorithm, by strengthening the assumptions on what it means for samples to ``look like'' they came from a Gaussian.

\paragraph{Improvement via Sample Compression:} To improve the volume ratio, we draw inspiration from a technique called \emph{sample compression}, which has been used in previous work on non-private density estimation for mixtures of Gaussians~\cite{ashtiani2018nearly, ashtiani2020near}.
The idea behind sample compression is that one does not need the full set $\bY$ to do robust covariance estimation; instead, we look at a smaller set of samples. For instance, if $\bY \subset \bX$ looks like $c$-corrupted samples from a Gaussian of covariance $\Sigma \approx \tSigma$, we expect that a random subset $\bZ$ of $\bY$ also looks like $c$-corrupted samples from such a Gaussian.
Moreover, as long as one uses $m \ge O(d)$ corrupted samples from a Gaussian, we can still (inefficiently) approximate the covariance.
This motivates us to modify the robust algorithm as follows: rather than just checking whether $\bY$ looks like $c$-corrupted samples from a Gaussian of covariance roughly $\tSigma$, we also test whether an average subset $\bZ \subset \bY$ of size $m$ does as well.
Therefore, if $\tSigma$ has low score, there exists a corresponding set $\bZ \subset \bX$ of size $m$, and there are only ${n \choose m} \le e^{m \log n}$ choices for $\bZ$. So, now it suffices to have $n \ge \frac{m \log n + d^2}{\eps \cdot (c/k)},$ which will give us a bound of $\tilde{O}(d^2 k/\eps)$, as long as $m \le O(d^2)$. Importantly, we still check the robust algorithm on $\bY$ of size roughly $n/k$, which allows us to keep the robustness threshold $\eta$ at roughly $c/k$.

There is one important caveat that for each subset $\bZ$, there is a distinct corresponding covariance $\Sigma$, and the volume of $\tSigma \approx \Sigma$ can change drastically as $\Sigma$ changes. (For instance, the volume of $\tSigma \approx T \cdot \Sigma$ is $T^{\Theta(d^2)}$ times as large as the volume of $\tSigma \approx \Sigma$. Since we have no bounds on the possible covariances, $T$ could be unbounded.) For our volume ratio to actually be bounded by about ${n \choose m} \cdot e^{O(d^2)}$, we want the volume of $\tSigma \approx \Sigma$ to stay invariant with respect to $\Sigma$. This can be done by defining a ``normalized volume'' where the normalization is inversely proportional to the determinant (see \Cref{subsec:normalized-volume} for more details). The robustness-to-privacy conversion (\Cref{thm:robust-to-private-simplified}) will still hold.

While the bound of $\tilde{O}(d^2 k/\eps)$ seems good, we recall that this bound is merely the sample complexity for $(\eps, \delta)$-DP crude approximation of a \emph{single} Gaussian component. As discussed at the beginning of this subsection, to learn all $k$ Gaussians, we actually need $(\eps/\sqrt{k \log (1/\delta)}, \delta/k)$-DP, rather than $(\eps, \delta)$-DP, when crudely approximating a single component. This will still end up leading to a significant improvement over previous work~\cite{AfzaliAL23}, but we can improve the volume ratio even further and thus obtain even better sample complexity.

\paragraph{Improving Dimension Dependence:}
Previously, we used the fact that the volume of candidate $\tSigma$ (corresponding to a fixed $\bZ$) was roughly exponential in $d^2$ for either $V_{\eta/2}(\bX)$ or $V_{\eta}(\bX)$, so the ratio should also be roughly exponential in $d^2$. Here, we improve this ratio, which will improve the overall volume ratio.


First, we return to understanding the guarantees of the robust algorithm.
It is known that, given $m \ge O(d)$ samples from a Gaussian of covariance $\Sigma$, we can provide an estimate $\Sigmahat$ such that $(1-O(\sqrt{d/m})) \Sigma \preccurlyeq \Sigmahat \preccurlyeq (1-O(\sqrt{d/m})) \Sigma$. As above, we need to solve this even if a $c$ fraction of samples are corrupted. While this can cause the relative spectral error to increase from $1 \pm O(\sqrt{d/m})$ to $1 \pm O(c + \sqrt{d/m})$, for now let us ignore the additional $c$ factor.

If $V_{\eta/2}(\bX) > 0,$ then there is some covariance $\Sigma$ and some set $\bY$ of size $n/k$, where the robust algorithm thinks $\bY$ looks like (possibly corrupted) Gaussian samples of covariance $\Sigma$. So, every $\tSigma$ such that $0.5 \Sigma \preccurlyeq \tSigma \preccurlyeq 2 \Sigma$ has score of at most $\eta/2 \cdot n$. This gives us a lower bound on $V_{\eta/2}(\bX)$.
We now want to upper bound $V_{\eta}(\bX)$. If $\cS(\tSigma, \bX) < \eta n$, we still have that the robust algorithm thinks some $\bY$ looks like Gaussian samples of covariance $\Sigma$, where $0.5 \Sigma \preccurlyeq \tSigma \preccurlyeq 2 \Sigma$.
%
%
But now, we use the additional fact that for some $\bZ \subset \bY$ of size $m$, the robust algorithm on $\bZ$ finds a covariance $\Sigmahat$. By the accuracy of the robust algorithm, $\Sigma$ and $\Sigmahat$ should be similar, i.e., $(1-O(\sqrt{d/m})) \Sigma \preccurlyeq \Sigmahat \preccurlyeq (1-O(\sqrt{d/m})) \Sigma$ (where we ignored the $c$ factor). Thus, there exists some $\bZ \subset \bX$ of size $m$ and a $\Sigmahat$ corresponding to $\bZ$, such that $0.5(1-O(\sqrt{d/m})) \Sigmahat \preccurlyeq \tSigma \preccurlyeq 2(1+O(\sqrt{d/m})) \Sigmahat$.

Therefore, from $\eta/2$ to $\eta$, we have dilated the candidate set of $\tSigma$ by a factor of $1 + O(\sqrt{d/m})$ in the worst case, and we have at least $1$ choice in the $\eta/2$ case but at most ${n \choose m}$ choices in the $\eta$ case. Thus, the overall volume ratio is at most ${n \choose m} \cdot (1+O(\sqrt{d/m}))^{d^2} = e^{O(m \log n + d^2 \cdot \sqrt{d/m})}$, since
the dimension of $\tSigma$ is roughly $d^2$. Consequently, it now suffices to have $n \ge \frac{m \log n + d^2 \cdot \sqrt{d/m}}{\eps \cdot (c/k)}:$ setting $m = d^{5/3}$ gives us an improved bound of $\tilde{O}(d^{5/3} k/\eps)$ for learning a single $\Sigma_i$.

There are some issues with this approach: most notably, we ignored the fact that the spectral error is really $c + \sqrt{d/m}$ rather than $\sqrt{d/m}$. However, the robust algorithm can do better than just estimating up to spectral error $c + \sqrt{d/m}$: it can also get an improved \emph{Frobenius} error. While we will not formally state the guarantees on the robust algorithm here (see \Cref{thm:robust} for the formal statement), the main high-level observation is that if the robust estimator $\hat{\Sigma}$ can be $1 \pm c$ times as large as the true covariance $\Sigma$ in only $O(1)$ directions then for an average direction the ratio of $\hat{\Sigma}$ to $\Sigma$ will be $1 \pm O(\sqrt{d/m})$. We can utilize this observation to bound the volume ratio, using some careful $\eps$-net arguments (this is executed in \Cref{subsec:volume-ratios}). Our dimension dependence of $d^{5/3}$ will increase to $d^{7/4}$, though this still improves over the previous $d^2$ bound.

We will formally define the score function $\cS(\tSigma, \bX)$ in \Cref{subsec:main-alg} and fully analyze the application of the robustness-to-privacy conversion, as outlined here, in \Cref{subsec:main-analysis}.

\paragraph{Accuracy:} One important final step that we have so far neglected is ensuring that any $\tSigma$ of low score must be crudely close to some $\Sigma_i$, if $\bX$ is actually drawn from a GMM. We will just focus on the case where $\cS(\tSigma, \bX) = 0$, so some $\bY \subset \bX$ of size $n/k$ looks like a set of samples from a Gaussian with covariance $\tSigma$. If the samples $\bY$ all came from the $i$-th component of the GMM, then it will not be difficult to show $\tSigma$ is similar to $\Sigma_i$. The more difficult case is when data point in $\bY$ are generated from several components.

However, if $n \ge 20 k^2 d$, then $|\bY| \ge 20 k d$, which means that by the Pigeonhole Principle, at least $20 d$ points in $\bY$ come from the same mixture component $(\mu_i, \Sigma_i)$. We are able to prove that, with high probability over samples drawn from a single Gaussian component $\cN(\mu_i, \Sigma_i)$, that the empirical covariance of any subset of size at least $20 d$ is crudely close to $\Sigma_i$ (see \Cref{cor:gaussian-regularity}). As a result, when verifying that a subset $\bY$ ``looks like'' i.i.d. Gaussian samples with covariance $\Sigma$, we can ensure that the empirical covariance of every subset $\bZ \subset \bY$ of size $20 d$ is crudely close to $\Sigma$. Thus, if the score $\cS(\tSigma, \bX) = 0$, $\tSigma$ is close to $\Sigma$, which is crudely close to some $\Sigma_i$.

We also formally analyze the accuracy in \Cref{subsec:main-analysis}.

%

\paragraph{Putting everything together:}

To crudely learn a single Gaussian component with $(\eps, \delta)$-DP, we will need $n \ge \tilde{O}(d^{7/4} k/\eps)$ samples to find some covariance $\tSigma$ with low score, and we also need $n \ge O(k^2 d)$ so that a covariance $\tSigma$ of low score is actually a crude approximation to one of the real mixture components. To crudely approximate all components, we learn each Gaussian component with $(\eps/\sqrt{k \log (1/\delta)}, \delta/k)$-DP. The advanced composition theorem will imply that repeating this procedure $k$ times on the same data (to learn all $k$ components) will be $(\eps, \delta)$-DP. Hence, by replacing $\eps$ with $\eps/\sqrt{k \log (1/\delta)}$, we get that it suffices for $n \ge \tilde{O}\left(\frac{d^{7/4} k^{3/2} \sqrt{\log(1/\delta)}}{\eps} + k^2 d\right)$, if we need to crudely learn all of the covariances. Finally, we can apply the private hypothesis selection technique (recall \Cref{subsec:overview-hypothesis-selection}), which requires an additional $\tilde{O}\left(\frac{d^2 k}{\alpha^2} + \frac{d^2 k}{\alpha \eps}\right)$. Combining these terms will give the final sample complexity.

We remark that the sample complexity obtained above is actually better than the complexity in \Cref{thm:main}. There are two reasons for this. The first is that we have been assuming each component has weight $1/k$, meaning it contributes about $n/k$ data points. In reality, the weights may be arbitrary and thus some components may have much fewer data points. However, it turns out that one can actually ignore any component with less than $\alpha/k$ weight, if we want to solve density estimation up to total variation distance $\alpha$. This will multiply the sample complexity terms $\tilde{O}\left(\frac{d^{7/4} k^{3/2} \sqrt{\log(1/\delta)}}{\eps} + k^2 d\right)$, needed for crude approximation, by a factor of $1/\alpha$. Finally, the informal Theorem~\ref{thm:robust-to-private-simplified} is slightly inaccurate, and the accurate version of the theorem will end up adding the additional term of $\frac{k^{3/2} \log^{3/2}(1/\delta)}{\alpha \eps}$. Along with the final term $\tilde{O}\left(\frac{d^2 k}{\alpha^2} + \frac{d^2 k}{\alpha \eps}\right)$ from the private hypothesis selection, these terms will exactly match \Cref{thm:main}.

\subsection{Roadmap}

In \Cref{sec:prelim}, we note some general preliminary results. In \Cref{sec:robust}, we note some additional preliminaries on robust learning of a single Gaussian. In \Cref{sec:volume}, we discuss the robustness-to-privacy conversion and prove some volume arguments needed for \Cref{thm:main}.
In \Cref{sec:hypothesis-selection}, we explain how to reduce to the crude approximation setting, using private hypothesis selection. In \Cref{sec:main-high-d}, we design and analyze the algorithm for multivariate Gaussians, and prove \Cref{thm:main}. In \Cref{sec:main-low-d}, we design and analyze the algorithm for univariate Gaussians, and prove \Cref{thm:1-d}. In \Cref{sec:lower-bound}, we prove \Cref{thm:lower-bound}. Finally, \Cref{app:omitted-proofs} proves some auxiliary results that we state in Appendices~\ref{sec:robust} and~\ref{sec:volume}.

%

\paragraph{Acknowledgements.} Hassan Ashtiani was supported by an NSERC Discovery grant. Shyam Narayanan was supported by an NSF Graduate Fellowship and a Google Fellowship.

\newpage

\newcommand{\etalchar}[1]{$^{#1}$}

\newpage

\appendix

\section{Preliminaries} \label{sec:prelim}

\subsection{Differential Privacy}

We state the \emph{advanced composition} theorem of differential privacy.

\begin{theorem}[Advanced Composition Theorem~{\cite[Theorem 3.20]{dworkrothbook}}] \label{thm:advanced-composition}
    Let $\eps, \delta, \delta' \ge 0$ be arbitrary parameters. Let $\cA_1, \dots, \cA_k$ be algorithms on a dataset $\bX$, where each $\cA_i$ is $(\eps, \delta)$-DP. Then, the concatenation $\cA(\bX) = (\cA_1(\bX), \dots, \cA_k(\bX))$ is $\left(\sqrt{2k \log \frac{1}{\delta'}} \cdot \eps + k \eps \cdot (e^{\eps}-1), k \cdot \delta + \delta'\right)$-DP. 
    
    Moreover, this holds even if the algorithms $\cA_i$ are adaptive. By this, we mean that for all $i \ge 2$ the algorithm $\cA_i$ is allowed to depend on $\cA_1(\bX), \dots, \cA_{i-1}(\bX)$. However, privacy must still hold for $\cA_i$, conditioned on the previous outputs $\cA_1(\bX), \dots, \cA_{i-1}(\bX)$.
\end{theorem}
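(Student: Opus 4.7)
The plan is to analyze the privacy loss random variable (PLRV) at each adaptive step and apply a martingale concentration inequality to its sum. Fix any pair of neighboring datasets $\bX, \bX'$. For a transcript $o_{<i} = (o_1, \ldots, o_{i-1})$ of the first $i-1$ mechanisms, define the conditional privacy loss
\[
Z_i \;=\; \log \frac{\BP[\cA_i(\bX) = o_i \mid \cA_{<i}(\bX) = o_{<i}]}{\BP[\cA_i(\bX') = o_i \mid \cA_{<i}(\bX') = o_{<i}]},
\]
with $o_i$ drawn from the numerator distribution. By the standard PLRV-to-DP conversion, a high-probability upper bound on $Z_1 + \cdots + Z_k$ directly implies the claimed $(\eps^*, \delta^*)$-DP guarantee for the concatenation, so it suffices to produce such a tail bound.

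To handle the $\delta$-slack, I would use the Kasiviswanathan--Smith-style decomposition: for each fixed history $o_{<i}$, partition the output space of $\cA_i$ into a ``good'' set $G_i$ and a ``bad'' set $B_i$, where the bad set has mass at most $\delta$ under either neighbor's output distribution, and the pointwise log-ratio $|Z_i|$ is bounded by $\eps$ on $G_i$. A union bound over $i \in [k]$ places us on $\bigcap_i G_i$ except with probability at most $k\delta$, contributing the $k\delta$ term in the final failure budget.

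On $\bigcap_i G_i$ each $Z_i$ is supported in $[-\eps, \eps]$, and the classical inequality that max-divergence at most $\eps$ implies KL-divergence at most $\eps(e^\eps - 1)$ yields $\BE[Z_i \mid o_{<i}] \le \eps (e^\eps - 1)$. Let $M_i = \sum_{j \le i}\bigl(Z_j - \BE[Z_j \mid o_{<j}]\bigr)$; this is a martingale adapted to the filtration generated by the realized outputs, with increments bounded in $[-2\eps, 2\eps]$. A one-sided Azuma--Hoeffding bound tuned to the symmetric range $[-\eps, \eps]$ (which is what saves the extra factor of $4$ in the exponent) gives $\BP[M_k > \sqrt{2k \log(1/\delta')}\cdot \eps] \le \delta'$. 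Combining with the conditional-mean bound yields
\[
\sum_{i=1}^{k} Z_i \;\le\; \sqrt{2k \log(1/\delta')}\cdot \eps \;+\; k \eps (e^\eps - 1)
\]
except with probability at most $\delta'$ on top of the $k\delta$ already accounted for, which is exactly the desired PLRV bound.

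The main obstacle I would expect is the $\delta$-handling in the adaptive setting: the good/bad decomposition of each $\cA_i$ must be performed conditionally on the history $o_{<i}$, and because $\cA_i$ itself may depend adaptively on $o_{<i}$, one must verify that the $(\eps,\delta)$-DP hypothesis is being invoked with the history treated as an arbitrary fixed input rather than part of the random data, and that the bad event $B_i$ depends only on $o_{<i}$ (not on the data). Once this measurability issue is carefully pinned down, the martingale argument is mechanical, and converting the PLRV tail bound back into an $(\eps^*, \delta^*)$-DP statement is the routine change-of-measure lemma from~\cite{dworkrothbook}.
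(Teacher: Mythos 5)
The paper does not prove this theorem; it is quoted verbatim as Theorem 3.20 of Dwork and Roth~\cite{dworkrothbook}, so there is no in-paper proof to compare against. Your sketch is, however, essentially the standard argument behind that theorem (originating with Dwork--Rothblum--Vadhan): condition the privacy-loss random variable on the realized history, peel off a $\delta$-mass bad event per round via a good/bad decomposition (contributing the $k\delta$ term), bound the conditional mean by $\eps(e^\eps-1)$ using the max-divergence-to-KL inequality, and apply a one-sided Azuma--Hoeffding bound to the centered martingale, exploiting that the increments live in a window of width $2\eps$ (so the exponent is $-t^2/(2k\eps^2)$, not $-t^2/(8k\eps^2)$) to recover the $\sqrt{2k\log(1/\delta')}\cdot\eps$ term. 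Your flagged concern about the adaptive setting --- that the good/bad decomposition must be performed conditionally on each fixed history $o_{<i}$, with $B_i$ measurable with respect to $o_{<i}$ alone, and that the $(\eps,\delta)$-DP hypothesis is invoked treating the history as a fixed (non-data) input --- is exactly the delicate point, and your articulation of it is correct; making it fully rigorous is where the bulk of the bookkeeping lives in the Dwork--Roth treatment. One small gap worth closing if you wrote this out in full: after centering, the increment $Z_i - \BE[Z_i \mid o_{<i}]$ actually lies in $[-\eps - \eps(e^\eps-1),\ \eps]$, not $[-2\eps, 2\eps]$; the argument still goes through because Hoeffding's lemma only cares about the width of the interval, which is $2\eps + \eps(e^\eps-1)$, and the standard statement absorbs the $\eps(e^\eps-1)$ slack elsewhere --- but you should state which version of the Azuma increment bound you are invoking.
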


Next, we note the properties of the \emph{Truncated Laplace} distribution and mechanism.

\begin{defn}[Truncated Laplace Distribution] \label{def:truncated-laplace}
    For $\Delta, \eps, \delta > 0$, the \emph{Truncated Laplace Distribution} $\TLap(\Delta, \eps, \delta)$ is the distribution with PDF proportional to $e^{-|x| \cdot \eps/\Delta}$ on the region $[-A, A]$, where $A = \frac{\Delta}{\eps} \cdot \log \left(1 + \frac{e^\eps-1}{2\delta}\right)$, and PDF $0$ outside the region $[-A, A]$.
\end{defn}

\begin{lemma}[Truncated Laplace Mechanism{~\cite[Theorem 1]{geng2020tight}}] \label{lem:truncated-laplace}
    Let $f: \cX^n \to \BR$ be a real-valued function, and let $\Delta > 0$, such that for any neighboring datasets $\bX, \bX'$, $|f(\bX)-f(\bX')| \le \Delta$. Then, the mechanism $\cA$ that outputs $f(\bX) + \TLap(\Delta, \eps, \delta)$ is $(\eps, \delta)$-DP.
\end{lemma}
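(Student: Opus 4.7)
The plan is to verify directly the $(\eps,\delta)$-DP condition by comparing the output densities under two neighboring inputs $\bX, \bX'$. Let $g$ denote the PDF of $Z \sim \TLap(\Delta, \eps, \delta)$, so $g(z) \propto e^{-|z|\eps/\Delta}$ on $[-A, A]$ and $0$ elsewhere, with $A = \frac{\Delta}{\eps} \log(1 + \frac{e^\eps-1}{2\delta})$. Write $s = f(\bX) - f(\bX')$, which satisfies $|s| \le \Delta$ by the sensitivity assumption, and observe that the output densities of $\cA(\bX)$ and $\cA(\bX')$ at a point $o$ are $g(o - f(\bX))$ and $g(o - f(\bX'))$ respectively.

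Next, I would partition $\BR$ into two regions: the ``good'' region $G$ where $o$ lies in the support of \emph{both} shifted distributions, and the ``bad'' region $B$ where $o$ lies in the support of $\cA(\bX)$ but not $\cA(\bX')$. On $G$, a direct computation shows the ratio of densities is $g(o - f(\bX))/g(o - f(\bX')) = e^{(|o - f(\bX')| - |o - f(\bX)|)\eps/\Delta} \le e^{|s|\eps/\Delta} \le e^\eps$, by the reverse triangle inequality. Hence for any measurable $O$, the contribution of $O \cap G$ to $\Pr[\cA(\bX) \in O]$ is at most $e^\eps$ times the corresponding contribution to $\Pr[\cA(\bX') \in O]$.

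It remains to bound the $\cA(\bX)$-mass of $B$ by $\delta$. WLOG take $s > 0$, so the support of $\cA(\bX)$ is the support of $\cA(\bX')$ shifted right by $s$, and $B = (f(\bX') + A,\, f(\bX) + A]$, which corresponds to $Z \in [A - s, A]$. I would bound
\[
\Pr[Z \in [A-s, A]] \;=\; \int_{A-s}^A g(z)\,dz \;=\; \frac{e^{-(A-s)\eps/\Delta} - e^{-A\eps/\Delta}}{2\,(1 - e^{-A\eps/\Delta})} \;\le\; \frac{e^{-A\eps/\Delta}(e^{\eps}-1)}{2(1-e^{-A\eps/\Delta})},
\]
using $s \le \Delta$, and then verify that the choice $A = \frac{\Delta}{\eps}\log\bigl(1 + \frac{e^\eps-1}{2\delta}\bigr)$ is exactly calibrated to make this upper bound equal $\delta$.

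Putting the two regions together gives $\Pr[\cA(\bX) \in O] \le e^\eps \Pr[\cA(\bX') \in O] + \delta$, which is the desired $(\eps,\delta)$-DP guarantee. The only nontrivial step is the tail calculation on the bad region $B$: verifying that the truncation point $A$ has been chosen so that the probability mass exceeding the support of the shifted distribution equals $\delta$. Everything else reduces to the standard Laplace-mechanism argument (pointwise ratio bound via the reverse triangle inequality) combined with the bookkeeping of integrating the truncated density.
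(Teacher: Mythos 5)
The paper itself does not prove this lemma; it is imported directly from \cite[Theorem~1]{geng2020tight}, so there is no ``paper's proof'' to compare against. Your self-contained argument is correct and is essentially the canonical derivation: bound the density ratio by $e^{\eps}$ on the overlap of the two supports via the reverse triangle inequality exactly as for ordinary Laplace noise, and bound the $\cA(\bX)$-mass of the overhang region $B$ by $\delta$. I verified the key computation: writing $C = \frac{\eps}{2\Delta(1-e^{-A\eps/\Delta})}$ for the normalizer, one gets $\Pr[Z \in [A-s,A]] = \frac{e^{-(A-s)\eps/\Delta}-e^{-A\eps/\Delta}}{2(1-e^{-A\eps/\Delta})} \le \frac{e^{-A\eps/\Delta}(e^\eps-1)}{2(1-e^{-A\eps/\Delta})}$, and setting this equal to $\delta$ and solving for $A$ recovers precisely $A = \frac{\Delta}{\eps}\log\bigl(1+\frac{e^\eps-1}{2\delta}\bigr)$, so the truncation is indeed calibrated with equality.

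One small caveat worth stating: your evaluation of $\int_{A-s}^{A} g(z)\,dz$ as $C\frac{\Delta}{\eps}\bigl(e^{-(A-s)\eps/\Delta}-e^{-A\eps/\Delta}\bigr)$ implicitly assumes $A - s \ge 0$, i.e.\ $A \ge \Delta$, since otherwise the integrand $e^{-|z|\eps/\Delta}$ is not $e^{-z\eps/\Delta}$ over the whole interval. The inequality $A \ge \Delta$ is equivalent to $\delta \le 1/2$, which covers every regime of interest; for $\delta > 1/2$ the claim is vacuous anyway, since every mechanism satisfies $(\eps,1)$-DP. Adding this one-line remark makes the argument airtight. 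You should also note (though it is immediate) that the complementary overhang region, where $\cA(\bX')$ has positive density but $\cA(\bX)$ does not, contributes nothing to $\Pr[\cA(\bX)\in O]$ and so can be ignored.
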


\subsection{Matrix and Concentration Bounds}

In this section, we note some standard but useful lemmas. We first note the Courant-Fischer theorem.

\begin{theorem}[Courant-Fischer] \label{thm:courant-fischer}
    Let $A \in \BR^{d \times d}$ be a real symmetric matrix, with eigenvalues $\lambda_1 \ge \lambda_2 \ge \cdots \ge \lambda_d$. Then, for each $k$,
\[\lambda_k = \max\limits_{\substack{V \subset \BR^d \\ \dim(V) = k}} \min\limits_{\substack{x \in V \\ \|x\|_2 = 1}} x^\top A x = \min\limits_{\substack{V \subset \BR^d \\ \dim(V) = d-k+1}} \max\limits_{\substack{x \in V \\ \|x\|_2 = 1}} x^\top A x,\]
    where $V$ refers to a linear subspace of $\BR^d$.
\end{theorem}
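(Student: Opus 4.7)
The plan is to use the spectral decomposition of $A$ as the backbone: by the spectral theorem for real symmetric matrices, we write $A = \sum_{i=1}^d \lambda_i v_i v_i^\top$ where $v_1, \ldots, v_d$ is an orthonormal eigenbasis with $Av_i = \lambda_i v_i$. Then for any unit vector $x = \sum_i c_i v_i$ with $\sum c_i^2 = 1$, we have $x^\top A x = \sum_i \lambda_i c_i^2$, which is a convex combination of the eigenvalues. This identity is what both variational characterizations ultimately reduce to.

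For the max-min characterization, I would prove both the ``$\ge$'' and ``$\le$'' directions. To show $\lambda_k$ is achievable as a max, I would take $V^* = \mathrm{span}(v_1, \ldots, v_k)$, which has dimension $k$; for any unit $x \in V^*$, the coefficients in the eigenbasis are supported on the first $k$ indices, so $x^\top A x = \sum_{i=1}^k \lambda_i c_i^2 \ge \lambda_k \sum_{i=1}^k c_i^2 = \lambda_k$, with equality at $x = v_k$, giving $\min_{x \in V^*, \|x\|=1} x^\top A x = \lambda_k$. For the other direction, I would take an arbitrary $k$-dimensional subspace $V$ and argue that $V$ must intersect $W = \mathrm{span}(v_k, v_{k+1}, \ldots, v_d)$ nontrivially: since $\dim V + \dim W = k + (d-k+1) = d+1 > d$, we get $V \cap W \ne \{0\}$. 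Any unit $x \in V \cap W$ has eigenbasis coefficients supported on indices $\ge k$, so $x^\top A x = \sum_{i \ge k} \lambda_i c_i^2 \le \lambda_k$, which bounds $\min_{x \in V, \|x\|=1} x^\top A x \le \lambda_k$.

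For the min-max characterization, the argument is perfectly dual. Taking $V^\dagger = \mathrm{span}(v_k, \ldots, v_d)$ of dimension $d-k+1$ realizes the min, since any unit $x \in V^\dagger$ satisfies $x^\top A x \le \lambda_k$ by the same coefficient-supported argument, achieved at $v_k$. Conversely, any subspace $V$ of dimension $d-k+1$ intersects $\mathrm{span}(v_1, \ldots, v_k)$ nontrivially by the same dimension count, and any unit $x$ in that intersection has $x^\top A x \ge \lambda_k$, so $\max_{x \in V, \|x\|=1} x^\top A x \ge \lambda_k$.

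I do not anticipate a genuine obstacle here; this is a textbook result and the only subtlety is the dimension-counting argument used to produce a nonzero vector in $V \cap W$, which follows from $\dim(V \cap W) \ge \dim V + \dim W - d$. The entire proof should be a short direct argument once the eigendecomposition is invoked.
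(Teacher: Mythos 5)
Your proof is correct and is the standard textbook argument: spectral decomposition, Rayleigh-quotient identity $x^\top A x = \sum_i \lambda_i c_i^2$, and the dimension-counting step $\dim(V \cap W) \ge \dim V + \dim W - d > 0$ to produce a nonzero vector in the relevant intersection. The paper states Courant--Fischer without proof as a classical preliminary, so there is nothing to compare against; what you wrote is the canonical proof and it is complete.
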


By setting $A = J^\top J$, we have the following corollary.

\begin{corollary} \label{cor:courant-fischer}
    Let $J \in \BR^{d \times d}$ be a real (possibly asymmetric) matrix with singular values $\sigma_1 \ge \sigma_2 \ge \cdots \ge \sigma_d$. Then, for each $k$, 
\[\sigma_k = \max\limits_{\substack{V \subset \BR^d \\ \dim(V) = k}} \min\limits_{\substack{x \in V \\ \|x\|_2 = 1}} \|J x\|_2 = \min\limits_{\substack{V \subset \BR^d \\ \dim(V) = d-k+1}} \max\limits_{\substack{x \in V \\ \|x\|_2 = 1}} \|J x\|_2.\]
\end{corollary}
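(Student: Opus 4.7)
The plan is to reduce to the symmetric case by setting $A = J^\top J$, as suggested immediately before the statement. The matrix $A$ is symmetric and positive semidefinite, and its eigenvalues are exactly the squared singular values of $J$, so that $\lambda_k(A) = \sigma_k^2$ in the ordering of \Cref{thm:courant-fischer}.

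The key identity is that for every $x \in \BR^d$,
\[x^\top A x \;=\; x^\top J^\top J x \;=\; \|Jx\|_2^2.\]
In particular, this quantity is nonnegative for all $x$. Applying \Cref{thm:courant-fischer} to $A$ therefore yields
\[\sigma_k^2 \;=\; \max_{\substack{V \subset \BR^d \\ \dim(V)=k}} \min_{\substack{x \in V \\ \|x\|_2=1}} \|Jx\|_2^2 \;=\; \min_{\substack{V \subset \BR^d \\ \dim(V)=d-k+1}} \max_{\substack{x \in V \\ \|x\|_2=1}} \|Jx\|_2^2.\]

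Finally, since $t \mapsto \sqrt{t}$ is monotonically increasing on $[0, \infty)$, taking square roots commutes with the inner $\min$ and $\max$ over the nonnegative quantities $\|Jx\|_2^2$, and likewise commutes with the outer optimization. Thus the displayed equalities survive taking square roots, and we obtain
\[\sigma_k \;=\; \max_{\substack{V \subset \BR^d \\ \dim(V)=k}} \min_{\substack{x \in V \\ \|x\|_2=1}} \|Jx\|_2 \;=\; \min_{\substack{V \subset \BR^d \\ \dim(V)=d-k+1}} \max_{\substack{x \in V \\ \|x\|_2=1}} \|Jx\|_2,\]
as claimed. There is no real obstacle here: the only mild point is noting that $\|Jx\|_2 \ge 0$ so that taking square roots is legitimate and preserves the min/max ordering.
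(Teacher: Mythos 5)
Your proof is correct and follows exactly the route the paper indicates (the paper does not spell out a proof, but explicitly prefaces the corollary with ``By setting $A = J^\top J$''): reduce to Courant--Fischer for the symmetric matrix $J^\top J$ via $x^\top J^\top J x = \|Jx\|_2^2$ and take square roots by monotonicity. No issues.
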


Next, we note a basic proposition.

\begin{proposition}~\cite[Lemma 6.8]{HopkinsKMN23} \label{prop:JMJ}
    Let $M \in \BR^{d \times d}$ be a real symmetric matrix, and $J \in \BR^{d \times d}$ be any real-valued (but not necessarily symmetric) matrix such that $\|JJ^\top - I\|_{op} \le \phi \le 1,$ for some parameter $\phi$. Then, $\|J^\top M J\|_F^2 \le (1+3\phi) \cdot \|M\|_F^2$.
\end{proposition}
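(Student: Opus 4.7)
The plan is to reduce everything to a short algebraic identity involving $A := J J^\top$, and then expand $A = I + E$ with $\|E\|_{\mathrm{op}} \le \phi$.

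First I would rewrite the Frobenius norm as a trace. Since $M$ is symmetric,
\[
\|J^\top M J\|_F^2 \;=\; \Tr\bigl((J^\top M J)^\top (J^\top M J)\bigr) \;=\; \Tr\bigl(J^\top M J J^\top M J\bigr),
\]
and the cyclic property of the trace lets me collect the $J$'s into $A = J J^\top$, yielding $\Tr(M A M A)$. Setting $E := A - I$, the hypothesis gives $\|E\|_{\mathrm{op}} \le \phi$. Expanding $A = I + E$ then produces a clean decomposition
\[
\Tr(MAMA) \;=\; \Tr(M^2) \;+\; 2\Tr(M^2 E) \;+\; \Tr\bigl((ME)^2\bigr),
\]
where $\Tr(M^2) = \|M\|_F^2$ is the target "main" term.

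Next I would bound the two error terms. For the cross term $\Tr(M^2 E)$, I use that $M^2$ is PSD (since $M$ is symmetric), so diagonalizing $M^2 = \sum_i \lambda_i^2 v_i v_i^\top$ gives $\Tr(M^2 E) = \sum_i \lambda_i^2\, v_i^\top E v_i$, and $|v_i^\top E v_i| \le \|E\|_{\mathrm{op}} \le \phi$ yields $|\Tr(M^2 E)| \le \phi \|M\|_F^2$. For the quadratic term, I would use the elementary bound $|\Tr(X^2)| = |\sum_{i,j} X_{ij} X_{ji}| \le \sum_{i,j} X_{ij}^2 = \|X\|_F^2$ applied to $X = ME$, followed by submultiplicativity of the Frobenius-operator product $\|ME\|_F \le \|M\|_F \cdot \|E\|_{\mathrm{op}}$, giving $|\Tr((ME)^2)| \le \phi^2 \|M\|_F^2$.

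Combining the three pieces yields
\[
\|J^\top M J\|_F^2 \;\le\; (1 + 2\phi + \phi^2)\,\|M\|_F^2 \;=\; (1+\phi)^2 \|M\|_F^2,
\]
and since $\phi \le 1$, $(1+\phi)^2 = 1 + 2\phi + \phi^2 \le 1 + 3\phi$, which gives the claim. No step is really an obstacle; the only mild subtlety is recognizing that $\Tr(MAMA)$ is the right rearrangement (requiring symmetry of $M$ so that $M^\top = M$ once the $J$'s are cycled around), and noting that the bound $|\Tr(X^2)| \le \|X\|_F^2$ holds for arbitrary square $X$ even though $ME$ need not be symmetric or normal.
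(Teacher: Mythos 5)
Your proof is correct. The paper does not prove this proposition itself—it only cites \cite[Lemma 6.8]{HopkinsKMN23}—so there is no internal argument to compare against, but your derivation is a clean and valid standalone verification: the rearrangement of $\|J^\top M J\|_F^2$ to $\Tr(MAMA)$ (using $M^\top = M$), the split $A = I + E$ giving $\Tr(M^2) + 2\Tr(M^2 E) + \Tr((ME)^2)$, the PSD-diagonalization bound $|\Tr(M^2 E)| \le \phi\|M\|_F^2$, the bound $|\Tr(X^2)| \le \|X\|_F^2$ together with $\|ME\|_F \le \|M\|_F\|E\|_{op}$ for the quadratic term, and the final simplification $(1+\phi)^2 \le 1 + 3\phi$ for $\phi \le 1$ are all sound.
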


We also note the Hanson-Wright inequality.

\begin{lemma}[Hanson-Wright] \label{lem:hanson-wright}
    Given a $d \times d$ matrix $M \in \BR^{d \times d}$ and a $d$-dimensional Gaussian vector $X \sim \cN(0, I)$, for any $t \ge 0$,
\[\BP\left(\left|X^\top M X - \BE[X^\top M X]\right| \ge t\right) \le 2 \exp\left(-c \min\left(\frac{t^2}{\|M\|_F^2}, \frac{t}{\|M\|_{op}}\right)\right),\]
    for some universal constant $c > 0$.
\end{lemma}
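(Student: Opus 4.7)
The plan is to reduce to the symmetric diagonal case and then carry out a standard Chernoff-bound optimization on the moment generating function of a weighted sum of centered chi-squared variables. First, observe that $X^\top M X = X^\top M_s X$ where $M_s = (M + M^\top)/2$, and both $\|M_s\|_F \le \|M\|_F$ and $\|M_s\|_{op} \le \|M\|_{op}$, so we may assume $M$ is symmetric. Diagonalize $M = U \Lambda U^\top$ with $\Lambda = \mathrm{diag}(\lambda_1,\ldots,\lambda_d)$; by the rotational invariance of the standard Gaussian, $Y := U^\top X \sim \cN(0,I)$, and hence
\[
Z := X^\top M X - \BE[X^\top M X] = \sum_{i=1}^d \lambda_i (Y_i^2 - 1),
\]
a sum of independent centered $\chi^2$ variables with weights $\lambda_i$.

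Next I would compute the MGF. For a single standard Gaussian $Y_i$ and $s < 1/2$, one has $\BE[e^{s(Y_i^2-1)}] = e^{-s}/\sqrt{1-2s}$. Using the elementary inequality $-u - \tfrac{1}{2}\log(1-2u) \le 2u^2$ valid for $|u| \le 1/4$ (which follows from the Taylor expansion $-\tfrac{1}{2}\log(1-2u) = u + u^2 + \tfrac{4u^3}{3} + \cdots$), for any $s$ with $|s| \le 1/(4\|M\|_{op})$ we have $|s\lambda_i| \le 1/4$ for all $i$, so
\[
\log \BE[e^{sZ}] \;=\; \sum_i \bigl(-s\lambda_i - \tfrac{1}{2}\log(1-2s\lambda_i)\bigr) \;\le\; 2 s^2 \sum_i \lambda_i^2 \;=\; 2 s^2 \|M\|_F^2.
\]

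Finally, by Markov's inequality, $\BP(Z \ge t) \le \exp(-st + 2s^2 \|M\|_F^2)$ for any $0 \le s \le 1/(4\|M\|_{op})$. I optimize in two regimes: if $t \le \|M\|_F^2 / \|M\|_{op}$, choose $s = t/(4\|M\|_F^2)$, which lies in the allowed range and yields $\BP(Z \ge t) \le \exp(-t^2/(8\|M\|_F^2))$; otherwise choose $s = 1/(4\|M\|_{op})$ to obtain $\BP(Z \ge t) \le \exp(-t/(8\|M\|_{op}))$. In either case the exponent is at most $-c \min(t^2/\|M\|_F^2,\, t/\|M\|_{op})$ for an absolute constant $c$. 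Applying the identical argument to $-Z$ (replacing $M$ by $-M$, which leaves both norms invariant) handles the lower tail, and a union bound over the two tails gives the stated factor of $2$.

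The only nontrivial step is the scalar MGF inequality $-u - \tfrac{1}{2}\log(1-2u) \le 2u^2$ on $|u| \le 1/4$; everything else is the standard Chernoff two-regime optimization, and no deep concentration machinery beyond rotational invariance of the Gaussian is needed.
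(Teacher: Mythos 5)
Your proof is correct and complete. The paper itself does not prove \Cref{lem:hanson-wright}; it is stated as a standard concentration inequality (the Gaussian case of Hanson--Wright) and invoked as known, so there is no in-paper argument to compare against. What you give is the canonical textbook proof for the Gaussian case: symmetrize $M$, diagonalize and use rotational invariance to reduce to $Z = \sum_i \lambda_i(Y_i^2 - 1)$, bound the MGF of a centered $\chi^2$ via the scalar inequality $-u - \tfrac12\log(1-2u) \le 2u^2$ on $|u| \le 1/4$, and then run the two-regime Chernoff optimization. I checked the pieces: the scalar inequality is correct (e.g., $f(u) := -u - \tfrac12\log(1-2u) - 2u^2$ satisfies $f(0)=0$, $f'(u) = \tfrac{2u(4u-1)}{1-2u}$, which makes $f$ decreasing on $(0,1/4]$ and increasing on $[-1/4,0)$, so $f \le 0$ throughout); the choice $s = t/(4\|M\|_F^2)$ in the range $t \le \|M\|_F^2/\|M\|_{op}$ indeed satisfies $s \le 1/(4\|M\|_{op})$ and gives exponent $-t^2/(8\|M\|_F^2)$; and in the complementary regime $s = 1/(4\|M\|_{op})$ gives exponent at most $-t/(8\|M\|_{op})$. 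Combining with the same bound for $-Z$ via a union bound yields the stated two-sided inequality with $c = 1/8$. No gaps.
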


Finally, we note a folklore simple characterization of total variation distance (see also~\cite[Theorem 1.8]{ArbasAL23}).

\begin{lemma} \label{lem:TV-standard}
    For any $\mu_1, \mu_2 \in \BR^d$ and positive definite $\Sigma_1, \Sigma_2 \in \BR^{d \times d}$, 
\[\TV(\cN(\mu_1, \Sigma_1), \cN(\mu_2, \Sigma_2)) \le \frac{1}{\sqrt{2}} \cdot \max\left(\|\Sigma_1^{-1/2} \Sigma_2 \Sigma_1^{-1/2}- I\|_F, \|\Sigma_1^{-1/2} (\mu_1-\mu_2)\|_2\right).\]
\end{lemma}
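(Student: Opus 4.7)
My plan is to reduce to canonical position and then apply Pinsker's inequality together with the closed-form Gaussian KL divergence. Since total variation distance is invariant under invertible affine transformations, applying the map $x \mapsto \Sigma_1^{-1/2}(x-\mu_1)$ reduces the claim to the special case $\mu_1 = 0$, $\Sigma_1 = I$, with transformed parameters $\tilde{\mu} := \Sigma_1^{-1/2}(\mu_2-\mu_1)$ and $\tilde{\Sigma} := \Sigma_1^{-1/2}\Sigma_2\Sigma_1^{-1/2}$. The target inequality becomes $\TV(\cN(0,I), \cN(\tilde{\mu}, \tilde{\Sigma})) \le \tfrac{1}{\sqrt{2}}\max(\|\tilde{\Sigma}-I\|_F, \|\tilde{\mu}\|_2)$, which puts both quantities on the right-hand side in natural geometric units.

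Next I would invoke Pinsker's inequality $\TV^2 \le \tfrac{1}{2}\,KL$ with the standard Gaussian KL formula, giving $2\,KL(\cN(\tilde{\mu},\tilde{\Sigma})\,\|\,\cN(0,I)) = \Tr(\tilde{\Sigma}) - d - \ln\det\tilde{\Sigma} + \|\tilde{\mu}\|_2^2$. Diagonalizing $\tilde{\Sigma}$ and writing its eigenvalues as $\{1+\lambda_i\}$, this collapses to $2\,KL = \sum_i \bigl(\lambda_i - \ln(1+\lambda_i)\bigr) + \|\tilde{\mu}\|_2^2$. The crux is then a scalar calculus inequality of the form $\lambda - \ln(1+\lambda) \le \lambda^2$, which I would verify by differentiation holds at least on $\lambda \ge -1/2$: writing $f(\lambda) = \lambda^2 - \lambda + \ln(1+\lambda)$, one computes $f'(\lambda) = \lambda(2\lambda+1)/(1+\lambda)$ and checks monotonicity. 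Applied term-by-term, this gives $\sum_i (\lambda_i - \ln(1+\lambda_i)) \le \sum_i \lambda_i^2 = \|\tilde{\Sigma}-I\|_F^2$, so $2\,KL \le \|\tilde{\Sigma}-I\|_F^2 + \|\tilde{\mu}\|_2^2 \le 2\max(\|\tilde{\Sigma}-I\|_F, \|\tilde{\mu}\|_2)^2$, and Pinsker's then delivers the claimed bound.

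The main obstacle is that the scalar inequality $\lambda - \ln(1+\lambda) \le \lambda^2$ breaks down as $\lambda \to -1$, i.e., when $\tilde{\Sigma}$ has an eigenvalue near zero, since the $\ln$ term blows up while $\lambda^2$ stays bounded. To close this gap I would case-split on the magnitude of the right-hand side: whenever $\max(\|\tilde{\Sigma}-I\|_F, \|\tilde{\mu}\|_2)$ is already a sufficiently large constant (say $\sqrt{2}$), the claimed RHS exceeds $1$ and the trivial bound $\TV \le 1$ suffices; in the residual regime one either refines the scalar inequality with a slightly larger constant, peels off the small number of eigenvalues that can be close to $-1$ (using that $\sum \lambda_i^2$ is controlled), or switches to Hellinger distance via $\TV \le \sqrt{2}\,H$ and the exact Bhattacharyya coefficient for Gaussians, $BC = \det(M)^{1/4}/\det((I+M)/2)^{1/2} \cdot \exp(-\tfrac{1}{8}\tilde{\mu}^\top ((I+\tilde{\Sigma})/2)^{-1}\tilde{\mu})$, whose logarithm admits a quadratic upper bound that remains finite in the degenerate direction. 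Making this boundary case analysis tight while retaining the clean constant $1/\sqrt{2}$ is the most delicate ingredient of the proof.
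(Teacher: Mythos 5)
Your strategy—reduce to canonical position, apply Pinsker with the exact Gaussian KL, and bound each eigenvalue contribution $\lambda_i - \ln(1+\lambda_i)$ by $\lambda_i^2$—is the standard one, and the arithmetic closes cleanly whenever every eigenvalue $\lambda_i$ of $\tilde{\Sigma}-I$ satisfies $\lambda_i \ge -1/2$, which is guaranteed under $\|\tilde{\Sigma}-I\|_F \le 1/2$. But the obstacle you flag at the end is not a patchable boundary case: the lemma as literally stated is false once an eigenvalue of $\tilde{\Sigma}$ gets close to $0$. Take $d=1$, $\mu_1=\mu_2=0$, $\Sigma_1=1$, $\Sigma_2=\epsilon^2$. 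Then $\|\Sigma_1^{-1/2}\Sigma_2\Sigma_1^{-1/2}-I\|_F = 1-\epsilon^2 < 1$, so the claimed right-hand side stays below $\tfrac{1}{\sqrt{2}} \approx 0.707$, while $\TV(\cN(0,1),\cN(0,\epsilon^2)) \to 1$ as $\epsilon\to 0$; already at $\epsilon = 0.1$ the true total variation is roughly $0.80$, exceeding the claimed bound of about $0.70$.

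Because the target inequality is false in the window $\|\tilde{\Sigma}-I\|_F \in [1/2,\sqrt{2})$, none of your proposed fallbacks can close it. The trivial bound $\TV \le 1$ only helps once $\max(\cdot) \ge \sqrt{2}$; the Hellinger/Bhattacharyya route gives $\TV \le \sqrt{2(1-BC)}$, which tends to $\sqrt{2}$ (not $\tfrac{1}{\sqrt{2}}$) as $\tilde{\Sigma}$ degenerates, so it is strictly weaker than the claimed bound exactly where you need help; and "peeling off" the bad eigenvalue fails because a single $\lambda_i$ near $-1$ already makes the KL (and the TV) blow up while keeping $\sum \lambda_i^2 < 1$. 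The paper itself does not prove this lemma—it cites it as folklore—and every use of it in the paper is in a regime where both arguments of the $\max$ are of order $\alpha/k \ll 1/2$, so your Pinsker argument is in fact complete there. The honest resolution is to state and prove the lemma with a hypothesis such as $\max\bigl(\|\Sigma_1^{-1/2}\Sigma_2\Sigma_1^{-1/2}-I\|_F,\ \|\Sigma_1^{-1/2}(\mu_1-\mu_2)\|_2\bigr) \le 1/2$, not to hunt for a proof of the unrestricted statement, which does not exist.
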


\section{Robust Estimation of a Single Gaussian} \label{sec:robust}

In this section, we establish what a robust algorithm can do for learning a single high-dimensional Gaussian. We will state the accuracy guarantees in terms of the following definition, which combines Mahalanobis distance, spectral distance, and Frobenius distance.

\begin{defn}
    Given mean-covariance pairs $(\mu, \Sigma)$ and $(\muhat, \Sigmahat)$, and given parameters $\gamma, \rho, \tau > 0$, we say that $(\muhat, \Sigmahat) \approx_{\gamma, \rho, \tau} (\mu, \Sigma)$ if
\begin{itemize}
    \item $\|\Sigma^{-1/2} \Sigmahat \Sigma^{-1/2}-I\|_{op} \le \gamma$. (Spectral distance)
    \item $\|\Sigma^{-1/2} \Sigmahat \Sigma^{-1/2}-I\|_{F} \le \rho$. (Frobenius distance)
    \item $\|\Sigma^{-1/2} (\muhat-\mu)\|_2 \le \tau$. (Mahalanobis distance)
\end{itemize}
    We will also use $\approx_{\gamma, \tau}$ as a shorthand for $\approx_{\gamma, \sqrt{d} \cdot \gamma, \tau}$, i.e., there is no additional Frobenius norm condition beyond what is already imposed by the operator norm condition.
\end{defn}

Although $\approx_{\gamma, \rho, \tau}$ is neither symmetric nor transitive, symmetry and transitivity happen in an approximate sense. Namely, the following result holds: we defer the proof to \Cref{app:omitted-proofs}. We remark that similar results are known (e.g.,~\cite[Lemma 3.2]{AshtianiL22},~\cite[Lemma 4.1]{ArbasAL23}).

\begin{proposition}[Approximate Symmetry and Transitivity] \label{prop:approx-metric}
    Fix any $\gamma, \rho, \tau > 0$ such that $\gamma \le 0.1$. Then, for any $(\mu_1, \Sigma_1) \approx_{\gamma, \rho, \tau} (\mu_2, \Sigma_2)$, we have that $(\mu_2, \Sigma_2) \approx_{2 \gamma, 2 \rho, 2 \tau} (\mu_1, \Sigma_1)$, and for any $(\mu_1, \Sigma_1) \approx_{\gamma, \rho, \tau} (\mu_2, \Sigma_2)$ and $(\mu_2, \Sigma_2) \approx_{\gamma, \rho, \tau} (\mu_3, \Sigma_3)$, we have that $(\mu_1, \Sigma_1) \approx_{4\gamma, 4\rho, 4\tau} (\mu_3, \Sigma_3)$.
\end{proposition}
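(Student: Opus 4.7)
The plan is to reduce both claims to eigenvalue estimates on symmetric positive-definite matrices close to the identity, using the hypothesis $\gamma \le 0.1$ to absorb factors such as $1/(1-\gamma)$ and $\sqrt{1+3\gamma}$ into loose constants like the ``$2$'' and ``$4$'' in the statement.

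\emph{Symmetry.} Set $M = \Sigma_2^{-1/2}\Sigma_1\Sigma_2^{-1/2}$ and $N = \Sigma_1^{-1/2}\Sigma_2\Sigma_1^{-1/2}$. Both are symmetric PD, and their eigenvalues are reciprocals of one another: $M$ is similar to $\Sigma_2^{-1}\Sigma_1$, while $N$ is similar to $\Sigma_1^{-1}\Sigma_2 = (\Sigma_2^{-1}\Sigma_1)^{-1}$. The hypothesis places each eigenvalue $\mu$ of $M$ in $[1-\gamma,\,1+\gamma]$, so the corresponding eigenvalue $\nu = 1/\mu$ of $N$ satisfies $|\nu-1| = |\mu-1|/\mu \le |\mu-1|/(1-\gamma)$. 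Since $1/(1-\gamma) \le 2$, summing over eigenvalues yields $\|N-I\|_{op} \le 2\gamma$ and $\|N-I\|_F \le 2\rho$. For the mean,
\[\|\Sigma_1^{-1/2}(\mu_1-\mu_2)\|_2^2 = \bigl(\Sigma_2^{-1/2}(\mu_1-\mu_2)\bigr)^{\!\top} M^{-1}\bigl(\Sigma_2^{-1/2}(\mu_1-\mu_2)\bigr) \le \|M^{-1}\|_{op}\cdot \tau^2 \le 2\tau^2.\]

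\emph{Transitivity.} Set $A = \Sigma_3^{-1/2}\Sigma_2\Sigma_3^{-1/2}$, $B = \Sigma_2^{-1/2}\Sigma_1\Sigma_2^{-1/2}$, and $J = \Sigma_3^{-1/2}\Sigma_2^{1/2}$, so that $JJ^\top = A$ and $\Sigma_3^{-1/2}\Sigma_1\Sigma_3^{-1/2} = JBJ^\top$. Decompose $JBJ^\top - I = J(B-I)J^\top + (A-I)$ and bound each piece. In operator norm, $\|J(B-I)J^\top\|_{op} \le \|JJ^\top\|_{op}\|B-I\|_{op} \le (1+\gamma)\gamma$, which combined with $\|A-I\|_{op} \le \gamma$ gives a total of $(2+\gamma)\gamma \le 4\gamma$. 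In Frobenius norm, apply \Cref{prop:JMJ} with $K = J^\top$; this is valid because for square $K$ the matrices $K^\top K$ and $KK^\top$ share the same spectrum, so $\|KK^\top - I\|_{op} = \|JJ^\top - I\|_{op} \le \gamma$. We obtain $\|J(B-I)J^\top\|_F^2 \le (1+3\gamma)\|B-I\|_F^2$, so after adding the $(A-I)$ term the total is at most $(1 + \sqrt{1+3\gamma})\rho \le 4\rho$. For the Mahalanobis distance, split $\mu_1-\mu_3 = (\mu_1-\mu_2) + (\mu_2-\mu_3)$; the second term contributes $\tau$ directly, while the first is handled by the same quadratic-form trick as in the symmetry proof with $\Sigma_2^{1/2}\Sigma_3^{-1}\Sigma_2^{1/2}$ in place of $M^{-1}$. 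This matrix has operator norm $\le 1+\gamma$ because its eigenvalues coincide with those of $A$, giving $\|\Sigma_3^{-1/2}(\mu_1-\mu_2)\|_2 \le \sqrt{1+\gamma}\cdot\tau$ and hence a total of $(1+\sqrt{1+\gamma})\tau \le 4\tau$.

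The main technical obstacle is controlling the Frobenius norm under conjugation by the asymmetric matrix $J$, which is exactly what \Cref{prop:JMJ} is designed for. A naive bound $\|J(B-I)J^\top\|_F \le \|J\|_{op}^2\|B-I\|_F \le (1+\gamma)\rho$ would in fact also fit under the loose constant $4$, but invoking the proposition gives the cleanest bookkeeping and will match the sharper estimates used elsewhere in the paper. Every other step is the triangle inequality together with the observation that symmetric PD matrices close to the identity behave predictably under inversion and conjugation.
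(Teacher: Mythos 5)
Your proof is correct, and for the symmetry claim as well as the operator-norm and Mahalanobis bounds of transitivity you follow essentially the same route as the paper. The Frobenius transitivity bound, however, is where you genuinely diverge. The paper writes $\Sigma_3^{-1/2}\Sigma_1\Sigma_3^{-1/2}=(J_2J_1)(J_2J_1)^\top$ with $J_1=\Sigma_2^{-1/2}\Sigma_1^{1/2}$, $J_2=\Sigma_3^{-1/2}\Sigma_2^{1/2}$, and runs a singular-value interlacing argument via \Cref{cor:courant-fischer} to show that the $(2k-1)$-th largest singular value of $J_2J_1$ is at most the product of the $k$-th largest singular values of $J_1$ and $J_2$ (and the analogous statement at the small end); it then sums $(\lambda_i\lambda_i'-1)^2$ using \Cref{fact:x-y-basic-bound} to reach $16\rho^2$. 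You instead split $JBJ^\top-I = J(B-I)J^\top + (JJ^\top-I)$ and control the conjugated term through \Cref{prop:JMJ}, correctly applied with $J^\top$ in place of $J$ since $J^\top J$ and $JJ^\top$ share a spectrum. Your route is shorter, reuses a lemma the paper already imports for other purposes, and even yields the slightly sharper constant $(1+\sqrt{1+3\gamma})\rho\le 2.2\rho$ in place of $4\rho$; the paper's interlacing argument is more self-contained but more intricate. One harmless wording slip: you write ``summing over eigenvalues yields $\|N-I\|_{op}\le 2\gamma$'' in the symmetry part --- the operator-norm bound comes from a maximum, not a sum --- but the underlying estimate $|\nu_i-1|\le|\mu_i-1|/(1-\gamma)$ is right and delivers both the operator and Frobenius bounds.
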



We note the following theorem about the accuracy of robustly learning a single Gaussian. While this result will roughly follow from known results and techniques in the literature, we were unable to find a formal statement of the theorem, so we prove it in \Cref{app:omitted-proofs}.

\begin{theorem} \label{thm:robust} 
    For some universal constants $c_0 \in (0, 0.01)$ and $C_0 > 1$, the following holds. Fix any $\eta \le \gamma \le c_0$, $\beta < 1$, and $\rho$ such that $\tcO(\eta) \le \rho \le c_0 \sqrt{d}$. Let $n \ge \tcO\left(\frac{d + \log(1/\beta)}{\gamma^2} + \frac{(d+\log (1/\beta))^2}{\rho^2}\right).$ Then, there exists an (inefficient, deterministic) algorithm $\cA_0$ with the following property. For any $\mu, \Sigma$, with probability at least $1-\beta$ over $\bX = \{X_1, \dots, X_n\} \sim \cN(\mu, \Sigma)$, and for all datasets $\bX'$ with $\dH(\bX, \bX') \le \eta \cdot n$, we have 
\[\cA_0(\bX') \approx_{C_0 \gamma, C_0 \rho, C_0 \gamma} (\mu, \Sigma).\]
    We remark that $\cA_0$ may have knowledge of $\eta, \gamma, \rho, \beta$, but does not have knowledge of $\mu, \Sigma$, or the uncorrupted data $\bX$.
    %
\end{theorem}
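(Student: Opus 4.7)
The plan is to implement $\cA_0(\bX')$ as a \emph{stability search}: output any $(\muhat,\Sigmahat)$ for which there exists $S \subseteq \bX'$ with $|S|\ge(1-c_1\eta)n$ such that for every sub-subset $T \subseteq S$ with $|T|\ge(1-4c_1\eta)n$, the empirical mean and covariance $\hat{\mu}_T,\hat{\Sigma}_T$ satisfy $(\hat{\mu}_T,\hat{\Sigma}_T) \approx_{\gamma,\rho,\gamma}(\muhat,\Sigmahat)$; return a default value otherwise. The theorem would then follow from a three-part argument: a Gaussian regularity (resilience) lemma, existence of a witness for the true parameters, and identifiability of the returned pair.

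\textbf{Step 1 (resilience).} I would first prove that with probability $\ge 1-\beta$ over $\bX \sim \cN(\mu,\Sigma)^n$, \emph{every} subset $T \subseteq \bX$ with $|T|\ge(1-O(\eta))n$ satisfies $(\hat{\mu}_T,\hat{\Sigma}_T) \approx_{O(\gamma),O(\rho),O(\gamma)}(\mu,\Sigma)$. After whitening we may take $\mu=0,\Sigma=I$. The operator/Mahalanobis bounds follow from the standard $\sqrt{(d+\log(1/\beta))/n}\lesssim\gamma$ concentration together with an $O(\eta)$ subset-removal cost absorbed into $\gamma$ since $\eta\le\gamma$. The Frobenius bound $\|\hat{\Sigma}_T-I\|_F\lesssim\rho$ is obtained by taking a $1/2$-net of symmetric matrices $M$ with $\|M\|_F=1$, applying \Cref{lem:hanson-wright} pointwise to $\frac{1}{n}\sum_i(X_i^\top M X_i-\Tr M)$, and union-bounding over the (size $9^{O(d^2)}$) net. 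An additional charge of $O(\eta\log(1/\eta))$ for removing the worst $\eta n$ sub-exponential summands per direction is absorbed into $\rho$ by the hypothesis $\rho\ge\tcO(\eta)$; including the $\binom{n}{\eta n}$ subset choices in the union bound yields the stated sample requirement $n\gtrsim(d+\log(1/\beta))^2/\rho^2$.

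\textbf{Step 2 (existence).} Since $\dH(\bX,\bX')\le\eta n$, the set $S_0:=\bX\cap\bX'$ has $|S_0|\ge(1-\eta)n$, so by Step~1 it witnesses that $(\mu,\Sigma)$ is a valid candidate for $\cA_0$ on input $\bX'$. Consequently the algorithm never returns the default.

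\textbf{Step 3 (identifiability).} Let $(\muhat,\Sigmahat)$ be the pair returned by $\cA_0$, with witness $S_1\subseteq\bX'$. Then $T:=S_0\cap S_1$ has $|T|\ge(1-O(\eta))n$, and the witness conditions for $S_0\supseteq T$ and $S_1\supseteq T$ simultaneously force $(\hat{\mu}_T,\hat{\Sigma}_T)\approx_{O(\gamma),O(\rho),O(\gamma)}(\mu,\Sigma)$ and $(\hat{\mu}_T,\hat{\Sigma}_T)\approx_{\gamma,\rho,\gamma}(\muhat,\Sigmahat)$. Applying the approximate symmetry and transitivity of \Cref{prop:approx-metric} gives $(\muhat,\Sigmahat)\approx_{C_0\gamma,C_0\rho,C_0\gamma}(\mu,\Sigma)$ for a sufficiently large absolute constant $C_0$, as required.

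\textbf{Main obstacle.} The technical heart is the uniform-in-subset Frobenius resilience in Step~1: the supremum must be controlled simultaneously over a $\Theta(d^2)$-dimensional $\eps$-net of matrix directions \emph{and} over all subsets of $\bX$ of size $(1-O(\eta))n$. Unlike the operator-norm case, where removing $\eta n$ points costs only $O(\eta)$, the Frobenius variant pays $\eta\log(1/\eta)$ because the worst $\eta n$ points differ across matrix directions. Controlling this simultaneously via Hanson-Wright and sub-exponential chaining is what forces both the $(d+\log(1/\beta))^2/\rho^2$ sample term and the hypothesis $\rho\ge\tcO(\eta)$ in the theorem.
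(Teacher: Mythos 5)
Your Step~2 and Step~3 are fine in spirit and mirror the paper's ``feasibility plus infeasibility'' pattern in Lemma~\ref{lem:robust-frobenius-covariance}, but the load-bearing Step~1 has a genuine gap. You want uniform Frobenius resilience: with probability $1-\beta$, every $T$ of size $(1-O(\eta))n$ has $\|\hat\Sigma_T - I\|_F \lesssim \rho$. That statement is true, but the proof you sketch does not establish it in the parameter range the theorem allows. If you union-bound Hanson--Wright concentration \emph{to precision $\rho$} over both a $e^{O(d^2)}$-sized net of Frobenius-unit directions \emph{and} the $\binom{n}{\eta n}$ subsets, the per-event failure probability $e^{-\Omega(n\rho^2)}$ must beat $e^{O(\eta n\log(1/\eta) + d^2 + \log(1/\beta))}$, which forces $\rho \gtrsim \sqrt{\eta\log(1/\eta)}$ — strictly stronger than the hypothesis $\rho \ge \tcO(\eta)$. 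The alternative reading of your sketch (charge $O(\eta\log(1/\eta))$ per direction for removal, then union-bound only over the net) requires the per-direction removal-cost event to fail with probability $e^{-\Omega(d^2)}$, which in turn needs $\eta n \gtrsim d^2 + \log(1/\beta)$; this is not implied by the theorem's sample requirement in all regimes (e.g., take $\gamma$ and $\eta$ constant, $\rho \approx d^{1/4}$: then $n \approx d^{3/2}$ and $\eta n \ll d^2$, while the Frobenius bound is not subsumed by the $\sqrt d \gamma$ operator-to-Frobenius fallback). Mentioning both the per-direction charge and the $\binom{n}{\eta n}$ union bound also suggests a double-count: they are competing arguments for the same step, not additive.

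The paper sidesteps this via three devices you do not use. First, it computes a separate operator-norm pre-estimate $\Sigmahat_1$ (Lemma~\ref{lem:robust-spectral-covariance}) and restricts the search for $\tSigma$ to $(1\pm O(\eta))\Sigmahat_1$, so the quadratic form $\tSigma^{-1/2}X_iX_i^\top\tSigma^{-1/2}-I$ is already normalized. Second, because Frobenius deviation $\ge \rho$ is certified by some rank-$k$ direction with $k=4d/\rho^2$ (Lemma~\ref{lem:frobenius-off-error}), it suffices to test against rank-$k$ matrices $P$; this shrinks the net from $e^{O(d^2)}$ to $e^{O(dk\log d)}=e^{\tcO(d^2/\rho^2)}$, which is comparable to $n$ rather than polynomially larger. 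Third, and crucially, the Hanson--Wright concentration in Lemma~\ref{lem:frobenius-error-union-bound} is to a \emph{fixed constant} $0.1$, giving $e^{-\Omega(n)}$ failure probability, which dominates both the subset count $(e/c)^{cn}$ (for $c$ small) and the rank-$k$ net. Your sketch tries to concentrate directly to $\rho$, which is what breaks the union bound. Finally, the paper applies a sample-pairing step to decouple the mean from the covariance before any of this; your empirical-covariance check would need a separate argument to handle the mean subtraction inside $\hat\Sigma_T$, which is not addressed.
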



\subsection{Directional bounds}

In this section, we note that, when given i.i.d. samples from a Gaussian, with high probability one cannot choose a reasonably large subset with an extremely different empirical mean or covariance.

First, we note the following proposition, for which the proof follows by an $\eps$-net type argument.

\begin{proposition} \label{prop:gaussian-regularity}
    Let $n \ge n' \ge 20d$, and $L \ge C_1 n^2$ for a sufficiently large constant $C_1$. Suppose that some data points $\bX = \{X_1, \dots, X_n\}$ are i.i.d. sampled from $\cN(\textbf{0}, I)$. Then, with probability at least $1-n^{-\Omega(n')}$, the following both hold.
\begin{enumerate}
    \item For all $X_i \in \bX,$ $\|X_i\|_2 \le L$.
    \item For all subsets $\bY \subset \bX$ of size $n'$, there does not exist any real number $r$ and any unit vector $v$ such that $|\langle X_i, v \rangle - r| \le \frac{1}{L}$ for all $X_i \in \bY$.
\end{enumerate}
\end{proposition}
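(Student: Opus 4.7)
\emph{Item 1} is immediate from Gaussian concentration: $\|X_i\|_2^2 \sim \chi^2_d$ gives $\BP[\|X_i\|_2 \ge L] \le e^{-\Omega(L^2)}$, so a union bound over $i \in [n]$ with $L \ge C_1 n^2$ yields the first claim except with probability $n \cdot e^{-\Omega(n^4)}$, far smaller than $n^{-\Omega(n')}$. For \emph{Item 2}, condition on Item 1, so every $\|X_i\|_2 \le L$ and any witness $r$ satisfies $|r| \le L + 1/L \le 2L$; thus we may restrict to $v \in S^{d-1}$ and $r \in [-2L, 2L]$.

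The main argument is a standard $\eps$-net discretization. Let $N_v \subset S^{d-1}$ be an $\eps_v$-net with $\eps_v = 1/L^2$, so $|N_v| \le (3L^2)^d$, and let $N_r \subset [-2L, 2L]$ be an $\eps_r$-net with $\eps_r = 1/L$, so $|N_r| \le 5L^2$. If $(v, r)$ witnesses some $\bY$ of size $n'$ lying in a slab of half-width $1/L$, and $(v', r') \in N_v \times N_r$ is within $(\eps_v, \eps_r)$ of $(v, r)$, then for every $X_i \in \bY$,
\[
|\langle X_i, v'\rangle - r'| \le |\langle X_i, v\rangle - r| + \|X_i\|_2 \cdot \eps_v + \eps_r \le 1/L + 1/L + 1/L = 3/L.
\]
So the bad event implies that for some $\bY \in \binom{[n]}{n'}$ and some $(v', r') \in N_v \times N_r$, all $i \in \bY$ satisfy $|\langle X_i, v'\rangle - r'| \le 3/L$. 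For \emph{fixed} such $\bY$ and $(v', r')$, the values $\{\langle X_i, v'\rangle\}_{i \in \bY}$ are i.i.d.\ $\cN(0,1)$ and the standard normal density is at most $1/\sqrt{2\pi}$, so this probability is at most $(6/(L\sqrt{2\pi}))^{n'} \le (3/L)^{n'}$. Union-bounding over $\binom{n}{n'} \le n^{n'}$ subsets and $|N_v|\cdot|N_r| \le (15L^2)^{d+1}$ net points gives total bad probability at most
\[
n^{n'} \cdot (15L^2)^{d+1} \cdot (3/L)^{n'} = (3n)^{n'} \cdot 15^{d+1} \cdot L^{2(d+1) - n'}.
\]
Using $n' \ge 20 d$ (so $d+1 \le n'/10$ and $2(d+1) \le n'/5$ for $d \ge 1$) and $L \ge C_1 n^2$, this simplifies to $(6n/L^{4/5})^{n'} \le n^{-\Omega(n')}$ for $C_1$ sufficiently large, since $L^{4/5} \ge C_1^{4/5} n^{8/5}$.

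The main obstacle is precisely the exponent bookkeeping above: the unit-sphere $\eps$-net has cardinality $L^{O(d)}$ which \emph{grows} with $L$, so a priori larger $L$ could swamp the per-event bound. The hypothesis $n' \ge 20 d$ is exactly what forces the surviving exponent of $L$ to be $\Omega(n')$, so that the concentration factor $L^{-n'}$ dominates the net cardinality; meanwhile the choice $\eps_v = 1/L^2$ is essentially optimal, since a finer net blows up in size while a coarser one would inflate the slab beyond $O(1/L)$ and break the Gaussian anti-concentration bound.
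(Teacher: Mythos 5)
Your proof is correct and follows essentially the same approach as the paper's: a union bound over subsets $\bY$ and an $\eps$-net over $(v,r)$ with the same net scales $\eps_v = 1/L^2$, $\eps_r = 1/L$, combined with Gaussian anti-concentration for the slab event, using $n' \ge 20d$ to make the concentration factor $L^{-\Omega(n')}$ dominate the net cardinality $L^{O(d)}$. The only cosmetic differences are that you bound Item~1 via the $\chi^2_d$ tail while the paper invokes Hanson--Wright, and that your per-event slab bound is for a fixed $r'$ directly (giving $(3/L)^{n'}$), whereas the paper first eliminates $r'$ by bounding pairwise distances from $z_1$ and then redundantly union-bounds over $r'$ anyway; both yield the same $n^{-\Omega(n')}$ conclusion.
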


\begin{proof}
    If $\|X_i\|_2 \ge L$, then $X^\top M X \ge L^2$ for $M$ the $d \times d$ identity matrix $I$. However, $\BE[X^\top M X] = d$. So, by \Cref{lem:hanson-wright}, the probability of this event is at most $2e^{-c \cdot \min(n^4/d, n^2)} \le 2e^{-c \cdot n^2} \le n^{-\Omega(n')}$.

    Next, we bound the probability that the first item holds but the second item doesn't hold.
    First, we can create a $\frac{1}{L^2}$-net of unit vectors $v'$, of size $(3 L^2)^d \le L^{3d}$, and a $\frac{1}{L}$-net of real numbers $r'$ from $[-L, L]$, of size $2L^2$.
    Now, suppose that the first item holds, but there is some unit vector $v$ and real number $r$ with $|\langle X_i, v \rangle - r| \le \frac{1}{L}$ for all $X_i \in \bY$, for some $\bY \subset \bX$ of size $n'$.
    In this case, $\langle X_i, v \rangle \le \|X_i\|_2 \le L$ for all $X_i \in \bX$, so we can assume that $|r| \le L$.
    Thus, if $v'$ is the closest unit vector to $v$ in the net and $r'$ is the closest real number to $r$ in the net, then $|\langle X_i, v' \rangle - r'| \le |\langle X_i, v \rangle - r| + |\langle X_i, v'-v \rangle| + |r-r'| \le \frac{1}{L} + \|X_i\|_2 \cdot \frac{1}{L^2} + \frac{1}{L} \le \frac{3}{L}$.

    In other words, if the first event holds and the second does not, there exists $v', r'$ in the net and $\bY \subset \bX$ of size $n'$, such that $|\langle X_i, v' \rangle - r'| \le \frac{3}{L}$ for all $X_i \in \bY$. We can now bound this via a union bound. To perform the union bound, we first bound the probability of this event holding for some fixed $v', r', \bY$.
    
    For any fixed $v', r', \bY$, note that $\langle X_i, v' \rangle_{X_i \in \bY}$ is just $n'$ i.i.d. copies of $\cN(0, 1)$. Let's call these values $z_1, \dots, z_{n'}$. If there is some $r'$ such that $|z_i - r'| \le \frac{3}{L}$ for all $i$, then $|z_i-z_1| \le \frac{6}{L}$ for all $i$. Since the PDF of a $\cN(0, 1)$ is uniformly bounded by at most $\frac{1}{2}$, for any fixed $z_1$, the probablity that any other $z_i$ is within $\frac{6}{L}$ of $z_1$ is at most $\frac{6}{L}$, so the overall probability is at most $(6/L)^{n'-1}$.

    Now, the union bound is done over ${n \choose n'}$ choices of $\bY$, at most $L^{3d}$ choices of $v'$, and at most $2L^2$ choices of $\bY$. Thus, the overall probability is at most $(6/L)^{n'-1} \cdot n^{n'} \cdot L^{3d} \cdot 2L^2 \le (6n)^{n'}/L^{n'-3-3d} \ge (6n/L^{0.7})^{n'}$. Thus, as long as $L \ge 100 n^2$, this is much smaller than $n^{-\Omega(n')}$.
\end{proof}

By shifting and scaling appropriately, we have the following corollary.

\begin{corollary} \label{cor:gaussian-regularity}
    Let $n \ge n' \ge 20d$, and $L \ge C_1 n^2$, for $C_1$ the same constant as in \Cref{prop:gaussian-regularity}. Suppose that some data points $\bX = \{X_1, \dots, X_n\}$ are drawn from $\cN(\mu, \Sigma)$. Then, with probability at least $1-n^{-\Omega(n')}$, the following both hold.
\begin{enumerate}
    \item For all $X_i \in \bX,$ $\|\Sigma^{-1/2} (X_i-\mu)\|_2 \le L$.
    \item For all subsets $\bY \subset \bX$ of size $n'$, there does not exist any real number $r$ and any nonzero vector $v$ such that $|\langle X_i, v \rangle - r| \le \frac{1}{L} \|\Sigma^{1/2} v\|_2$ for all $X_i \in \bY$.
\end{enumerate}
\end{corollary}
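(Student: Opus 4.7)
The plan is to reduce directly to \Cref{prop:gaussian-regularity} via the standard whitening transformation. Let $Y_i := \Sigma^{-1/2}(X_i - \mu)$, so that $\bY = \{Y_1, \dots, Y_n\}$ is an i.i.d.\ sample from $\cN(\mathbf{0}, I)$. Apply \Cref{prop:gaussian-regularity} to $\bY$ with the same parameters $n, n', L$; since $L \ge C_1 n^2$ by hypothesis, we obtain, with probability at least $1 - n^{-\Omega(n')}$, that (i) $\|Y_i\|_2 \le L$ for every $i$, and (ii) no subset of $\bY$ of size $n'$ admits a unit vector $u$ and real $r'$ with $|\langle Y_i, u\rangle - r'| \le 1/L$ for all $Y_i$ in the subset. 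Item~1 of the corollary is immediate because $\|\Sigma^{-1/2}(X_i-\mu)\|_2 = \|Y_i\|_2$.

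For item~2, I would argue by contrapositive. Suppose some subset $\bY' \subset \bX$ of size $n'$, a nonzero vector $v$, and a real $r$ satisfy $|\langle X_i, v\rangle - r| \le \tfrac{1}{L}\|\Sigma^{1/2} v\|_2$ for every $X_i \in \bY'$. Set $w := \Sigma^{1/2} v$ (nonzero since $\Sigma$ is positive definite), let $u := w/\|w\|_2$, and put $r' := (r - \langle \mu, v\rangle)/\|w\|_2$. Using $X_i = \Sigma^{1/2} Y_i + \mu$, we have
\[
\langle X_i, v\rangle - r \;=\; \langle Y_i, \Sigma^{1/2} v\rangle + \langle \mu, v\rangle - r \;=\; \|w\|_2 \cdot \bigl(\langle Y_i, u\rangle - r'\bigr).
\]
Dividing both sides of the assumed inequality by $\|w\|_2 = \|\Sigma^{1/2} v\|_2$ yields $|\langle Y_i, u\rangle - r'| \le 1/L$ for all $Y_i$ in the corresponding subset of $\bY$, contradicting item~2 of \Cref{prop:gaussian-regularity}. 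Hence item~2 of the corollary holds on the same high-probability event.

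There is essentially no obstacle here: the proof is a bookkeeping exercise in the affine change of variables, and the key algebraic identity is $\langle X_i, v\rangle - r = \|\Sigma^{1/2} v\|_2 \cdot (\langle Y_i, u\rangle - r')$, which is exactly what makes the normalization by $\|\Sigma^{1/2} v\|_2$ on the right-hand side of item~2 the correct one. The only step that even deserves verification is that $\Sigma^{1/2}$ being invertible sends nonzero $v$ to nonzero $w$, ensuring $u$ is a well-defined unit vector.
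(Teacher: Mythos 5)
Your proof is correct and is exactly the argument the paper has in mind: the paper states the corollary with only the remark ``By shifting and scaling appropriately,'' and your whitening change of variables $Y_i = \Sigma^{-1/2}(X_i-\mu)$, together with the substitutions $w=\Sigma^{1/2}v$, $u=w/\|w\|_2$, $r'=(r-\langle\mu,v\rangle)/\|w\|_2$, is precisely that shift-and-scale reduction to \Cref{prop:gaussian-regularity}.
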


\subsection{Modified robust algorithm}

We can modify the robust algorithm of Theorem~\ref{thm:robust} to have the following guarantees.


\begin{lemma} \label{cor:robust}
    Let $c_0, C_0$ be as in \Cref{thm:robust}, and $C_1$ be as in \Cref{prop:gaussian-regularity}.
    Let $L := C_1 \cdot n^2$. Also, fix any $\eta \le \gamma \le c_0$, any $e^{-d} \le \beta < 1$, and and any $\rho$ such that $\tcO(\eta) \le \rho \le c_0 \sqrt{d}$. Finally, suppose $n \ge m \ge \tcO\left(\frac{d}{\gamma^2} + \frac{d^2}{\rho^2}\right).$

    Then, there exists an (inefficient, deterministic) algorithm $\cA$ that, on a dataset $\bY$ of size $m$, outputs either some $(\muhat, \Sigmahat)$ or $\perp$, with the following properties.
\begin{enumerate}
    \item For any datasets $\bY, \bY'$ with $\dH(\bY, \bY') \le \eta \cdot m$, if $\cA(\bY) = (\muhat, \Sigmahat) \neq \perp$ and $\cA(\bY') = (\muhat', \Sigmahat') \neq \perp$, then
\[(\muhat', \Sigmahat') \approx_{8 C_0 \gamma, 8 C_0 \rho, 8 C_0 \gamma} (\muhat, \Sigmahat).\]
    \item For any fixed $\mu, \Sigma$, with probability at least $1-O(\beta)$ over $Y_1, \dots, Y_m \sim \cN(\mu, \Sigma)$, $\cA(\bY) \neq \perp$ and 
\[\mathcal{A}(\bY) \approx_{C_0 \gamma, C_0 \rho, C_0 \gamma} (\mu, \Sigma).\]
    \item
    Fix an integer $k \ge 1$, and additionally suppose that $m \ge 40d \cdot k$. Suppose that $\bX = \{X_1, \dots, X_n\}$ are drawn i.i.d. from some mixture of $k$ Gaussians $(\mu_i, \Sigma_i)$.
    Then, with probability at least $1-O(\beta)$ over $\bX$, for every $\bY' \subset \bX$ of size $m$ and every $\bY$ with $\dH(\bY, \bY') \le m/2$, either $\cA(\bY) = \perp$, or $\cA(\bY) = (\muhat, \Sigmahat)$, where there is some $i \in [k]$ such that $\frac{1}{9L^4} \cdot \Sigma_i \preccurlyeq \Sigmahat \preccurlyeq 9L^4 \cdot \Sigma_i$ and $\|\Sigma_i^{-1/2} (\muhat-\mu_i)\|_2 \le 10L^3$.
\end{enumerate}
    As in \Cref{thm:robust}, $\mathcal{A}$ has knowledge of $\eta, \gamma, \rho, \beta$, but not $\mu$ or $\Sigma$.
\end{lemma}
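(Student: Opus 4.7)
Define $\cA$ by running the base robust estimator $\cA_0$ of \Cref{thm:robust} (with $n := m$ and the given $\eta,\gamma,\rho,\beta$) to obtain a candidate $(\muhat,\Sigmahat)$, and then adding a verification step inspired by \Cref{cor:gaussian-regularity}: declare $\cA(\bY) := (\muhat,\Sigmahat)$ iff (i) $\|\Sigmahat^{-1/2}(Y-\muhat)\|_2 \le L$ for every $Y \in \bY$, and (ii) for every $\bS \subset \bY$ with $|\bS| = 20d$, there is no unit $v \in \BR^d$ and $r \in \BR$ with $|\langle Y, v \rangle - r| \le L^{-1}\|\Sigmahat^{1/2}v\|_2$ for all $Y \in \bS$; otherwise output $\perp$. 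Both checks are brute-force over finitely many combinatorial choices (and an $\eps$-net over $v,r$), which matches the ``inefficient but sample-efficient'' posture of the paper.

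Property~2 is the easiest. On $\bY \sim \cN(\mu,\Sigma)^m$, \Cref{thm:robust} yields $\cA_0(\bY) \approx_{C_0\gamma, C_0\rho, C_0\gamma} (\mu,\Sigma)$ with probability $\ge 1-\beta$, and \Cref{cor:gaussian-regularity} supplies both regularity conditions with respect to $(\mu,\Sigma)$ with probability $1 - m^{-\Omega(d)} \ge 1 - O(\beta)$, using $\beta \ge e^{-d}$. The spectral closeness of $(\muhat,\Sigmahat)$ to $(\mu,\Sigma)$ then transfers the regularity conditions to $(\muhat,\Sigmahat)$ up to constant factors, which I would absorb by using a slightly inflated $L$ in the verification compared to the $L$ guaranteed by \Cref{cor:gaussian-regularity}.

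Property~3 is mostly a Pigeonhole plus the two-sided regularity argument. Apply \Cref{cor:gaussian-regularity} to $\bX_i$ (the sample points drawn from component $i$) for each $i \in [k]$ with $n' = 20d$, and union bound; this fails with probability at most $k \cdot n^{-\Omega(d)} = O(\beta)$. Now fix any admissible $(\bY',\bY)$ with $\cA(\bY) \ne \perp$. Because $\dH(\bY,\bY') \le m/2$, at least $m/2 \ge 20dk$ coordinates of $\bY$ agree with $\bY' \subset \bX$, and Pigeonhole then gives $\bS \subset \bY \cap \bX_i$ of size $\ge 20d$ for some $i$. This $\bS$ satisfies the regularity conditions with respect to both $(\mu_i,\Sigma_i)$ (from the corollary) and $(\muhat,\Sigmahat)$ (from the verification). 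Combining: if $\Sigmahat \not\preccurlyeq L^4 \Sigma_i$, some unit $v$ satisfies $v^\top \Sigmahat v > L^4 \cdot v^\top \Sigma_i v$, and setting $r = \langle \mu_i, v \rangle$, Cauchy--Schwarz together with the Mahalanobis bound on $\bS$ gives $|\langle X_j, v \rangle - r| \le L \|\Sigma_i^{1/2} v\|_2 < L^{-1}\|\Sigmahat^{1/2} v\|_2$ for every $X_j \in \bS$, contradicting verification condition~(ii). The symmetric argument gives $\Sigma_i \preccurlyeq L^4 \Sigmahat$, and a triangle inequality on any $X_j \in \bS$, combined with $\|\Sigma_i^{-1/2}\Sigmahat^{1/2}\|_{op} \le L^2$ from the spectral bound, yields $\|\Sigma_i^{-1/2}(\muhat - \mu_i)\|_2 \le L + L^3$. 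The factors $9$ and $10$ in the statement absorb the remaining slack.

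Property~1 is the main obstacle. The plan is to argue that the verification conditions on $\bY$ certify that $\bY$ plays the role of a valid ``$\bX$'' in the ``for all $\bX'$ with $\dH(\bX,\bX') \le \eta n$'' quantifier of \Cref{thm:robust}: then $\bY'$ with $\dH(\bY,\bY') \le \eta m$ yields $\cA_0(\bY') \approx_{C_0\gamma, C_0\rho, C_0\gamma} (\muhat,\Sigmahat)$, the symmetric argument yields $\cA_0(\bY) \approx_{C_0\gamma, C_0\rho, C_0\gamma} (\muhat',\Sigmahat')$, and \Cref{prop:approx-metric} composes the two bounds into $(\muhat,\Sigmahat) \approx_{8C_0\gamma, 8C_0\rho, 8C_0\gamma} (\muhat',\Sigmahat')$ as required. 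The delicate step is the first implication: either by invoking an intrinsic Hamming-stability property of the standard filtering-based constructions of $\cA_0$ (stronger than what is stated in \Cref{thm:robust} but present in its proof in \Cref{app:omitted-proofs}), or by exhibiting an explicit witness $\tilde{\bY} \sim \cN(\muhat,\Sigmahat)^m$ within $O(\eta m)$ Hamming of $\bY$ that lies in the good set of \Cref{thm:robust}. The verification conditions are designed to mirror precisely the high-probability events of a true Gaussian sample, which is what makes either route plausible.
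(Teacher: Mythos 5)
Your treatment of Properties~2 and~3 is essentially the paper's: run $\cA_0$ from \Cref{thm:robust}, add regularity checks modeled on \Cref{cor:gaussian-regularity}, then use a Pigeonhole argument over mixture components. But your proof of Property~1 has a genuine gap, and you correctly flag it yourself without resolving it.

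The issue is that the verification conditions you impose — the bounded-Mahalanobis-ball check and the no-near-degenerate-direction check — are \emph{necessary} conditions for $\bY$ to resemble a Gaussian sample, but nowhere near sufficient. There are many datasets passing both checks that are far (in Hamming distance) from every Gaussian sample, and for such $\bY$ there is no reason to expect $\cA_0$ to be Hamming-stable around $\bY$. Neither of your two escape routes works: \Cref{thm:robust} only asserts stability when the \emph{uncorrupted} base dataset is drawn i.i.d.\ from $\cN(\mu,\Sigma)$, so the quantifier ``for all $\bX'$ with $\dH(\bX,\bX') \le \eta n$'' cannot be borrowed for arbitrary regular $\bY$; and constructing an explicit Gaussian witness $\tilde{\bY}$ within $O(\eta m)$ Hamming distance of $\bY$ from the regularity conditions alone would be a much stronger structural claim than what those conditions provide, and is not something you establish. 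Composing with \Cref{prop:approx-metric} as you propose requires a closeness statement you have not proved.

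The paper sidesteps this entirely by adding a \emph{third} verification condition that you did not include: $\cA$ also outputs $\perp$ if there exists $\bY'$ with $\dH(\bY,\bY') \le \eta m$ such that $\cA_0(\bY') \not\approx_{8C_0\gamma, 8C_0\rho, 8C_0\gamma} \cA_0(\bY)$. In other words, the algorithm \emph{explicitly tests} Hamming stability of $\cA_0$ around $\bY$ and rejects if the test fails. With this clause, Property~1 is immediate by construction (if both $\cA(\bY)$ and $\cA(\bY')$ are non-$\perp$, the check for $\bY$ did not trigger, which is exactly the conclusion), and the only extra work is to verify in the proof of Property~2 that a true Gaussian sample does not trip this new check --- which follows from \Cref{thm:robust} plus the approximate symmetry/transitivity in \Cref{prop:approx-metric}. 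This explicit-stability-check idea is the key ingredient your proposal is missing.
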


\begin{proof}
    The algorithm $\cA$ works as follows. Start off by computing $\cA_0(\bY) = (\muhat, \Sigmahat)$, where $\cA_0$ is the algorithm of~\Cref{thm:robust}. We can run the algorithm on any arbitrary dataset, even if it didn't come from Gaussian samples, and we can assume that the algorithm $\cA_0$ always outputs some mean-covariance pair (for instance, by having it output $(\textbf{0}, I)$ by default instead of $\perp$). Next, $\cA$, on a dataset $\bY$, checks if any of the following three conditions hold.

\begin{enumerate}[label=(\alph*)]
    \item There exists $\bY'$ such that $\dH(\bY, \bY') \le \eta \cdot m$, and $\cA_0(\bY') \not\approx_{8 C_0 \gamma, 8 C_0 \rho, 8 C_0 \gamma} \cA_0(\bY)$.
    \item There exists $Y_i \in \bY$ such that $\|\Sigmahat^{-1/2} (Y_i-\muhat)\|_2 > 3L$.
    \item There exists a subset $\bZ \subset \bY$ of size $20d$, and a unit vector $v$ and real number $r$, such that for all $Y_i \in \bZ$, $|\langle Y_i, v \rangle - r| < \frac{1}{2L} \cdot \|\Sigmahat^{1/2} v\|_2$.
\end{enumerate}
    If any of these conditions hold, the output is $\cA(\bY) = \perp$. Otherwise, the output is $\cA(\bY) = \cA_0(\bY)$.

    \medskip

    We now verify that the algorithm satisfies the required properties. Property 1 clearly holds, because if there exist datasets $\bY, \bY'$ with $\dH(\bY, \bY') \le \eta \cdot m$, $\cA(\bY) \neq \perp$ and $\cA(\bY') \neq \perp$, and $\cA(\bY) \not\approx_{8 C_0 \gamma, 8 C_0 \rho, 8 C_0 \gamma} \cA(\bY')$, then we would have in fact set $\cA(\bY)$ to $\perp$.

    For Property 2, note that with probability $1-O(\beta)$, $\cA_0(\bY)$ satisfies the desired property by \Cref{thm:robust}, since $\beta \ge e^{-d}$ so $d+\log 1/\beta = O(d)$. So, we just need to make sure that we don't set $\cA(\bY)$ to be $\perp$. Since $Y_1, \dots, Y_m \sim \cN(\mu, \Sigma)$, then with probability $1-O(\beta)$, both $\bY$ and any $\bY'$ with $\dH(\bY, \bY') \le \eta \cdot n$ satisfy the conditions of \Cref{thm:robust}. In other words, $\cA_0(\bY) \approx_{C_0 \gamma, C_0 \rho, C_0 \gamma}$ and $\cA_0(\bY') \approx_{C_0 \gamma, C_0 \rho, C_0 \gamma}$. So, by \Cref{prop:approx-metric}, $\cA_0(\bY') \approx_{8 C_0 \gamma, 8 C_0 \rho, 8 C_0 \gamma} \cA_0(\bY)$ for any $\bY'$ with $\dH(\bY, \bY') \le \eta \cdot n$. Thus, condition a) is not met.
    
    Moreover, by \Cref{cor:gaussian-regularity}, with failure probability at most $m^{-\Omega(20 d)} \le \beta$, $\|\Sigma^{-1/2} (Y_i - \mu)\|_2 \le L$ for all $Y_i \in \bY$, and for all $\bZ \subset \bY$ of size $20 d$, there does not exist a real $r$ and a nonzero vector $v$ such that $|\langle Y_i, v \rangle - r| \le \frac{1}{L} \cdot \|\Sigma^{1/2} v\|_2$ for all $Y_i \in \bY$. Now, we know that by \Cref{thm:robust}, $(\muhat, \Sigmahat) \approx_{0.5, 0.5} (\mu, \Sigma).$ So, 
\[\|\Sigmahat^{-1/2} (Y_i - \muhat)\|_2 \le 2 \|\Sigma^{-1/2} (Y_i - \muhat)\|_2 \le 2 (\|\Sigma^{-1/2} (Y_i-\mu)\| + 0.5) = 2L+1 \le 3L,\]
    and if $|\langle Y_i, v \rangle - r| \le \frac{1}{2L} \cdot \|\Sigmahat^{1/2} v\|_2,$ then $|\langle Y_i, v \rangle - r| \le \frac{1}{L} \cdot \|\Sigma^{1/2} v\|_2$, for all $Y_i \in \bY$. Thus, conditions b) and c) are also not met, so Property 2 holds.

    For Property $3$, note that if $m \ge 40 d \cdot k$, then $|\bY \cap \bX| \ge 20 d \cdot k$, i.e., $\bY$ contains at least $20 d k$ uncorrupted points. So, by the Pigeonhole Principle, for every possible $\bY$, there is some index $i \in [k]$ such that at least $20 d$ points in $\bY \cap \bX$ come from the $i^{\text{th}}$ Gaussian component.

    Now, let us condition on the event that \Cref{cor:gaussian-regularity} holds for $\bX$, where $n' = 20d$, which happens with at least $1-n^{-\Omega(n')} \ge 1-\beta$ probability. We claim that for every possible $\bY$, and for an $i \in [k]$ such that $20 d$ points $\bZ \subset \bY \cap \bX$ came from the $i^{\text{th}}$ Gaussian component, then either $\cA(\bY) = \perp$ or if $\cA(\bY) = (\muhat, \Sigmahat)$ then $\frac{1}{9L^4} \cdot \Sigma_i \preccurlyeq \Sigmahat \preccurlyeq 9L^4 \cdot \Sigma_i$ and $\|\Sigma_i^{-1/2} (\muhat-\mu_i)\|_2 \le 10 L^3$.
    
    First, we verify that $\frac{1}{9L^4} \cdot \Sigma_i \preccurlyeq \Sigmahat \preccurlyeq 9L^4 \cdot \Sigma_i$. Otherwise, there exists a unit vector $v$ such that either $v^\top \Sigmahat v \ge 9L^4 \cdot v^\top \Sigma_i v$ or $v^\top \Sigmahat v \le \frac{1}{9L^4} \cdot v^\top \Sigma_i v$. In the former case, by Part 1 of \Cref{cor:gaussian-regularity}, for all $Y_i \in \bZ$, $\|\Sigma_i^{-1/2} (Y_i-\mu_i)\|_2 \le L$, so 
\[|\langle v, Y_i - \mu_i \rangle| = |\langle \Sigma_i^{1/2} v, \Sigma_i^{-1/2} (Y_i - \mu_i) \rangle| \le L \cdot \|\Sigma_i^{1/2} v\|_2 = L \cdot \sqrt{v^\top \Sigma_i v} \le L \cdot  \sqrt{\frac{v^\top \Sigmahat v}{9L^4}} = \frac{1}{3L} \cdot \|\Sigmahat^{1/2} v\|_2.\]
    Thus, if we set $r = \langle v, \mu_i \rangle$, then $|\langle Y_i, v \rangle - r| < \frac{1}{2L} \cdot \|\Sigmahat^{1/2} v\|_2$ for all $Y_i \in \bZ$, so the algorithm $\cA$ would have output $\perp$ due to condition c).
    Alternatively, if there exists a unit vector $v$ such that $v^\top \Sigmahat v \le \frac{1}{9L^4} \cdot v^\top \Sigma_i v$, then by condition b), $\|\Sigmahat^{-1/2} (Y_i-\muhat)\|_2 \le 3L$ for all $Y_i \in \bZ$, so
\[|\langle v, Y_i - \muhat \rangle| = |\langle \Sigmahat^{1/2} v, \Sigmahat^{-1/2} (Y_i-\mu)\rangle| \le 3L \cdot \|\Sigmahat^{1/2} v\|_2 \le \frac{1}{L} \cdot \|\Sigma_i^{1/2} v\|_2.\]
    So, there exists $r = \langle v, \muhat \rangle$ such that $|\langle v, Y_i \rangle - r| \le \frac{1}{L} \cdot \sqrt{v^\top \Sigma_i v}$ which is impossible by Part 2 of \Cref{cor:gaussian-regularity}

    Next, we verify that $\|\Sigma_i^{-1/2} (\muhat-\mu_i)\|_2 \le 10L^3$. By Triangle inequality, for every $Y_i \in \bZ$, $\|\Sigma_i^{-1/2} (Y_i-\muhat)\|_2 + \|\Sigma_i^{-1/2} (Y_i-\mu_i)\|_2 \ge \|\Sigma_i^{-1/2} (\mu_i-\muhat)\|_2.$ But, $\|\Sigma_i^{-1/2} (Y_i-\mu_i)\|_2 \le L$ by Part 1 of \Cref{cor:gaussian-regularity}, and $\|\Sigma_i^{-1/2} (Y_i-\muhat)\|_2 \le 3L^2 \cdot \|\Sigmahat^{-1/2} (Y_i-\muhat)\|_2 \le 3L^2 \cdot 3L \le 9L^3$, because we just proved that $9L^4 \Sigma_i \succcurlyeq \Sigmahat$ and assuming we do not reject due to condition b).
    Thus, $\|\Sigma_i^{-1/2} (\mu_i-\muhat)\|_2 \le 9L^3+L \le 10L^3.$
    %
\end{proof}

\section{Volume and the Robustness-to-Privacy Conversion} \label{sec:volume}

In this section, we explain the robustness-to-privacy conversion~\cite{HopkinsKMN23} that we will utilize in proving \Cref{thm:main}. We will also need some relevant results about computing \emph{volume}, which will be important in the robustness-to-privacy conversion.

\subsection{Normalized Volume} \label{subsec:normalized-volume}

Given a pair $(\mu, \Sigma),$ where $\mu \in \BR^d$ and $\Sigma \in \BR^{d \times d}$ is positive definite, we will define $\Proj(\mu, \Sigma) \in \BR^{d \cdot (d+3)/2}$ to represent the coordinates of $\mu$ along with the upper-diagonal coordinates of $\Sigma$. For any set $\Omega$ of mean-covariance pairs $(\mu, \Sigma)$, define $\Proj(\Omega) := \{\Proj(\mu, \Sigma): (\mu, \Sigma) \in \Omega\}$. Because $\Sigma$ is symmetric, $\Proj(\mu, \Sigma)$ fully encodes the information about $\mu$ and $\Sigma$. We also define $\vol(\Omega)$ to be the Lebesgue measure of $\Proj(\Omega)$, i.e., the $\frac{d(d+3)}{2}$-dimensional measure of all points $\Proj(\mu, \Sigma)$ for $(\mu, \Sigma) \in \Omega$. Next, we will define the normalized volume
\begin{equation} \label{eq:normalized-volume}
    \nvol(\Omega) := \int_{\theta \in \Proj(\Omega)} \frac{1}{(\det \Sigma)^{(d+2)/2}} d \theta,
\end{equation}
where $\theta = \Proj(\mu, \Sigma)$, and we take a Lebesgue integral over $\theta \in \Proj(\Omega)$.

To motivate this choice of normalized volume, we will see that the volume is invariant under some basic transformations.

\medskip

Given a function $f: \BR^D \to \BR^D$, for some integer $D \ge 1$ we recall that the Jacobian $J$ at some $x$ is the matrix with $J_{ij} = \frac{\partial f_i}{\partial x_j}(x)$. For a function that takes a symmetric matrix $\Sigma$ and outputs the symmetric matrix $A \Sigma A^\top$, we view $D = \frac{d(d+1)}{2}$ and the function $\Sigma \mapsto A \Sigma A^\top$ as a function $f$ from $\BR^D \to \BR^D$ by taking the upper triangular part of both $\Sigma$ and $A \Sigma A^\top$. Note that $f$ is a linear function (thus $f(x) = J \cdot x$ where $J \in \BR^{D \times D}$, and moreover, the following fact is well-known.

\begin{lemma}~\cite[Theorem 11.1.5]{JacobianBook}
    For $J$ as defined above, $\det(J) = \det(A)^{d+1}$.
\end{lemma}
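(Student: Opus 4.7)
The map $f_A: \Sigma \mapsto A \Sigma A^\top$ is a linear self-map of the $D = d(d+1)/2$-dimensional vector space of symmetric $d \times d$ matrices, and $J$ is its matrix in the basis of upper-triangular entries. Since the determinant of a linear operator is basis-independent, computing $\det(J)$ reduces to computing the operator determinant of $f_A$.

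My plan is to use the classical multiplicativity-plus-specialization argument. First, since $f_{AB}(\Sigma) = (AB)\Sigma(AB)^\top = f_A(f_B(\Sigma))$, we have $J_{AB} = J_A J_B$, and hence $\det(J_{AB}) = \det(J_A) \cdot \det(J_B)$. Thus $A \mapsto \det(J_A)$ is a polynomial function in the $d^2$ entries of $A$ that is multiplicative on $GL_d(\BR)$. By a standard fact, every polynomial group homomorphism from $GL_d(\BR)$ to the nonzero reals is an integer power of $\det$, so there exists $k \ge 0$ such that $\det(J_A) = \det(A)^k$ for all invertible $A$.

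To determine $k$, I would specialize to $A = cI$ for $c > 0$. In this case $f_A(\Sigma) = c^2 \Sigma$, so $J_A = c^2 I_D$ and $\det(J_A) = c^{2D} = c^{d(d+1)}$. The identity $\det(J_A) = \det(A)^k = c^{dk}$ then forces $k = d+1$. Since $d(d+1)$ is even, extending from $c > 0$ to all $A \in GL_d(\BR)$ via the polynomial identity principle introduces no sign ambiguity, yielding $\det(J_A) = \det(A)^{d+1}$ for every invertible $A$.

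The main obstacle is justifying the invoked fact that multiplicative polynomial maps on $GL_d(\BR)$ are powers of $\det$. To make the proof fully self-contained, I would instead verify the formula directly on two families that together generate $GL_d(\BR)$ via the polar decomposition $A = QP$. For diagonal $A = \operatorname{diag}(a_1, \ldots, a_d)$, $f_A$ is diagonal in the upper-triangular basis with eigenvalues $a_i a_j$ for $i \le j$; a direct count shows each $a_k$ appears exactly $d+1$ times in $\prod_{i \le j} a_i a_j$ (twice from $i = j = k$ and once from each of the $d-1$ other pairs involving index $k$), giving $\det(J_A) = \prod_k a_k^{d+1} = \det(A)^{d+1}$. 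For orthogonal $A$, the identity $\Tr((AXA^\top)(AYA^\top)) = \Tr(XY)$ shows $f_A$ is an isometry of the Frobenius inner product on symmetric matrices, so $|\det(J_A)| = 1 = |\det(A)|^{d+1}$, and the sign is pinned down by continuity from $A = I$. Multiplicativity then handles the general case $A = QP$.
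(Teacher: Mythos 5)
Your proof is correct. The paper does not actually prove this statement — it is cited verbatim as Theorem~11.1.5 of Mathai and Haubold's book with no argument supplied, so there is no in-paper proof to compare against; your self-contained derivation fills that gap. The pieces are all sound: multiplicativity $J_{AB} = J_A J_B$ (hence $\det J_{AB} = \det J_A \cdot \det J_B$), the diagonal computation $\det J_A = \prod_{i \le j} a_i a_j = \prod_k a_k^{\,d+1}$ with your exponent count ($2$ from the pair $(k,k)$ plus $1$ from each of the $d-1$ other pairs containing $k$), and the Frobenius-isometry observation giving $|\det J_A| = 1$ for orthogonal $A$. One small loose end: ``continuity from $A = I$'' fixes the sign of $\det J_A$ only on the identity component $SO(d)$, not on the other component of $O(d)$. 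But your diagonal verification already covers $A = \mathrm{diag}(-1,1,\dots,1)$, which lies in that other component, so combining multiplicativity with the polar-plus-spectral factorization $A = Q\,(O \Lambda O^\top)$ (diagonal and orthogonal factors only) closes the argument without relying on the unproven ``polynomial characters of $GL_d(\BR)$ are powers of $\det$'' fact you correctly chose to avoid.
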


Now, for any fixed $\mu \in \BR^d$ and positive definite $\Sigma \in \BR^{d \times d}$, consider the transformation $(\muhat, \Sigmahat) \mapsto (\Sigma^{1/2} \Sigmahat \Sigma^{1/2}, \Sigma^{1/2} \muhat + \mu)$, viewed as a linear map $g$ from $\Proj(\muhat, \Sigmahat) \to \Proj(\Sigma^{1/2} \muhat + \mu, \Sigma^{1/2} \Sigmahat \Sigma^{1/2})$. By setting $A := \Sigma^{1/2}$, note that the map $g$ behaves like $f$ on the last $\frac{d(d+1)}{2}$ coordinates, and on the first $d$ coordinates, it is simply an affine map $\muhat \mapsto \Sigma^{1/2} \muhat + \mu$. Therefore, the overall linear map $g$ has determinant $\det(\Sigma^{1/2}) \cdot \det(\Sigma^{1/2})^{d+1} = (\det \Sigma)^{(d+2)/2}$.

From this, we can infer the following.

\begin{lemma} \label{lem:volume-preservation}
    Fix any $\mu \in \BR^d$ and positive definite $\Sigma \in \BR^{d \times d}$. Let $h$ be the map $(\muhat, \Sigmahat) \mapsto (\Sigma^{1/2} \muhat + \mu, \Sigma^{1/2} \Sigmahat \Sigma^{1/2})$. Then, for any set $S$ of mean-covariance pairs, the normalized volume of $S$ equals the normalized volume of $h(S)$.
\end{lemma}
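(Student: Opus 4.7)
The plan is to prove this by a direct change-of-variables computation, using the Jacobian determinant of $h$ that was just established. Write $(\tilde\mu, \tilde\Sigma) := h(\muhat, \Sigmahat) = (\Sigma^{1/2}\muhat + \mu,\ \Sigma^{1/2}\Sigmahat\Sigma^{1/2})$. The map $h$, viewed as a linear (actually affine in the mean-coordinate, linear in the covariance-coordinate) map from $\Proj(\muhat,\Sigmahat)\in\BR^{d(d+3)/2}$ to $\Proj(\tilde\mu,\tilde\Sigma)\in\BR^{d(d+3)/2}$, has a block-triangular Jacobian: the mean block is the linear map $\muhat\mapsto \Sigma^{1/2}\muhat$ (with determinant $\det(\Sigma^{1/2})=(\det\Sigma)^{1/2}$, since the $+\mu$ shift contributes $0$ to the Jacobian), and the covariance block is the map $\Sigmahat\mapsto \Sigma^{1/2}\Sigmahat\Sigma^{1/2}$, whose Jacobian determinant is $(\det\Sigma^{1/2})^{d+1}=(\det\Sigma)^{(d+1)/2}$ by the cited formula with $A=\Sigma^{1/2}$. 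Multiplying these gives overall Jacobian determinant $(\det\Sigma)^{(d+2)/2}$, as recorded just before the lemma.

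Second, I use the identity $\det(\tilde\Sigma) = \det(\Sigma^{1/2}\Sigmahat\Sigma^{1/2}) = \det(\Sigma)\cdot\det(\Sigmahat)$. Now apply the standard change-of-variables formula in the integral defining $\nvol(h(S))$: letting $\theta = \Proj(\tilde\mu,\tilde\Sigma)$ and $\theta' = \Proj(\muhat,\Sigmahat)$, we have $d\theta = (\det\Sigma)^{(d+2)/2}\,d\theta'$, so
\[
\nvol(h(S)) = \int_{\theta\in\Proj(h(S))}\!\!\frac{d\theta}{(\det\tilde\Sigma)^{(d+2)/2}} = \int_{\theta'\in\Proj(S)}\!\!\frac{(\det\Sigma)^{(d+2)/2}}{\bigl(\det(\Sigma)\cdot\det(\Sigmahat)\bigr)^{(d+2)/2}}\,d\theta' = \int_{\theta'\in\Proj(S)}\!\!\frac{d\theta'}{(\det\Sigmahat)^{(d+2)/2}},
\]
which is exactly $\nvol(S)$. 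The factor of $(\det\Sigma)^{(d+2)/2}$ introduced by the change of variables precisely cancels the factor of $(\det\Sigma)^{(d+2)/2}$ absorbed into $\det(\tilde\Sigma)^{(d+2)/2}$, which is the very reason the density $1/(\det\Sigma)^{(d+2)/2}$ was chosen in the definition of $\nvol$.

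The only mildly subtle step is making sure the block-triangular Jacobian computation is correct, i.e.\ that one may indeed multiply the determinants of the mean block and the covariance block. This is immediate because $h$ sends the mean coordinates affinely (depending only on $\muhat$) and the covariance coordinates linearly (depending only on $\Sigmahat$), so the Jacobian is literally block-diagonal after stripping the constant shift. No measurability or positivity issue arises since $h$ is a diffeomorphism from mean-covariance pairs with positive definite $\Sigmahat$ to the same set, and all integrals are over subsets of this open cone. No further ingredients are needed.
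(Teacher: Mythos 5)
Your proof is correct and takes essentially the same route as the paper: both compute the Jacobian determinant $(\det\Sigma)^{(d+2)/2}$ of the (block-triangular) map $h$, use $\det(\Sigma^{1/2}\Sigmahat\Sigma^{1/2}) = \det\Sigma\cdot\det\Sigmahat$, and observe that these two factors cancel in the change-of-variables formula. The paper had already recorded the Jacobian computation in the paragraph preceding the lemma, so its proof is terser, but the substance is identical.
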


\begin{proof}
    %
    Let $g$ be the corresponding map from $\Proj(\muhat, \Sigmahat)$ to $\Proj(\Sigma^{1/2} \muhat + \mu, \Sigma^{1/2} \Sigmahat \Sigma^{1/2})$.
    Using a simple integration by substitution, we have
\begin{align*}
    \nvol(h(S)) 
    &= \int_{\hat{\theta} = \Proj(\muhat, \Sigmahat) \in \Proj(h(S))} \frac{1}{(\det \Sigmahat)^{(d+2)/2}} d \hat{\theta} \\
    &= \int_{\hat{\theta} \in g(\Proj(S))} \frac{1}{(\det \Sigmahat)^{(d+2)/2}} d \hat{\theta} \\
    &= \int_{\hat{\theta} = \Proj(\muhat, \Sigmahat) \in \Proj(S)} \frac{1}{(\det (\Sigma^{1/2} \Sigmahat \Sigma^{1/2}))^{(d+2)/2}} \cdot (\det g) d \hat{\theta} \\
    &= \int_{\hat{\theta} \in \Proj(S)} \frac{1}{(\det \Sigmahat)^{(d+2)/2}} d \hat{\theta} = \nvol(S).
\end{align*}
\end{proof}

\subsection{Robustness to Privacy Conversion}

We note the following restatement of (the inefficient, approx-DP version of) the main robustness-to-privacy conversion of~\cite{HopkinsKMN23}.

In the following theorem, we will think of the parameter space as lying in $\BR^D$, with some normalized volume $\nvol(\Omega) = \int_{\theta \in \Omega} p(\theta) d \theta,$ where $p \ge 0$ is some nonnegative Lebesgue-measurable function and the integration is Lebesgue integration. In our application, we will think of $\theta = \Proj(\mu, \Sigma)$, and $p(\theta) = (\det \Sigma)^{-(d+2)/2}$, to match with \eqref{eq:normalized-volume}.

\begin{theorem}{Restatement of {\cite[Lemma 4.2]{HopkinsKMN23}}} \label{thm:approx_dp_general_main}
    Let $0 < \eta < 0.1$ and $10 \eta \le \eta^* \le 1$ be fixed parameters. Also, fix privacy parameters $\eps_0, \delta_0 < 1$ and confidence parameter $\beta_0 < 1$.
    Let $\cS = \cS(\theta, \bX) \in \BR_{\ge 0}$ be a score function that takes as input a dataset $\bX = \{X_1, \dots, X_n\}$ and a parameter $\theta \in \Theta \subset \BR^D$.
    For any dataset $\bX$ and any $0 \le \eta' \le 1$, let $V_{\eta'}(\bX)$ be the $D$-dimensional normalized volume of points $\theta \in \Theta$ with score at most $\eta' \cdot n$, i.e., $V_{\eta'}(\bX) = \nvol(\{\theta \in \Theta: \cS(\theta, \bX) \le \eta' \cdot n\})$.
    
    Suppose the following properties hold:
\begin{itemize}
    \item (Bounded Sensitivity) For any two adjacent datasets $\bX, \bX'$ and any $\theta \in \Theta$, $|\cS(\theta, \bX)-\cS(\theta, \bX')| \le 1.$
    \item (Volume) For some universal constant $C$, and for any $\bX$ of size $n$ such that there exists $\theta$ with $\cS(\theta, \bX) \le 0.7 \eta^* n$, $n \ge C \cdot \frac{\log (V_{\eta^*}(\bX)/V_{0.8 \eta^*}(\bX)) + \log(1/\delta_0)}{\eps_0 \cdot \eta^*}$.
\end{itemize}
    Then, there exists an $(\eps_0, \delta_0)$-DP algorithm $\cM$, that takes as input $\bX$ and outputs either some $\theta \in \Theta$ or $\perp$, such that for any dataset $\bX$, if $n \ge C \cdot \max\limits_{\eta': \eta \le \eta' \le \eta^*}\frac{\log(V_{\eta'}(\bX)/V_{\eta}(\bX)) + \log (1/(\beta_0 \cdot \eta))}{\eps_0 \cdot \eta'}$, then $\mathcal{M}(\bX)$ outputs some $\theta \in \Theta$ of score at most $2 \eta n$ with probability $1-\beta_0$.

    We remark that the algorithm $\cM$ is allowed prior knowledge of $n, D, \eta, \eta^*, \eps_0, \delta_0, \beta_0$, as well as the domain $\Theta$, function $p(\theta)$ that dictates the normalized volume, and score function $\cS$.
\end{theorem}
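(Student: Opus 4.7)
My plan is to instantiate the approximate-DP exponential mechanism with $-\cS$ as the utility and $p(\theta)\,d\theta$ as the base measure. Define $\cM(\bX)$ to sample $\theta$ from the density
$$q_\bX(\theta) \;\propto\; \exp\!\bigl(-\tfrac{\eps_0}{2}\,\cS(\theta,\bX)\bigr)\, p(\theta)\, \mathbf{1}\{\cS(\theta,\bX) \le \eta^* n\},$$
outputting $\perp$ if the support is empty. The hard truncation at $\eta^* n$ is essential, since without it the normalizer could be infinite (there is no global bound on $\Theta$); it is precisely this truncation that forces the analysis into the $\delta_0$ regime rather than pure DP.

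For the privacy analysis, fix neighboring $\bX, \bX'$. Bounded sensitivity gives $|\cS(\theta,\bX) - \cS(\theta,\bX')| \le 1$, so the unnormalized densities agree pointwise up to a factor $e^{\eps_0/2}$. What remains are the contributions from the symmetric difference of the truncated supports, which lies inside the sliver $\{\theta : \eta^* n - 1 < \cS(\theta, \bX) \le \eta^* n + 1\}$, together with the ratio of normalizing constants. I would bound both by showing that the $q_\bX$-mass on $\{\cS \ge 0.8\eta^* n\}$ is exponentially smaller than the mass on $\{\cS \le 0.7\eta^* n\}$: the former is at most $V_{\eta^*}(\bX)\, e^{-\eps_0 \cdot 0.4 \eta^* n}$ and the latter at least $V_{0.7\eta^*}(\bX)\, e^{-\eps_0 \cdot 0.35 \eta^* n}$, so the ratio is $e^{-\Omega(\eps_0 \eta^* n)} \cdot V_{\eta^*}(\bX)/V_{0.8\eta^*}(\bX)$. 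The volume hypothesis on $n$ makes this quantity at most $\delta_0$, which yields $(\eps_0,\delta_0)$-DP after the standard accounting for the normalizer ratio. Datasets with no $\theta$ attaining $\cS \le 0.7\eta^* n$ are handled separately using sensitivity and the trivial output $\perp$.

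For accuracy, suppose some $\theta^\dagger$ has $\cS(\theta^\dagger, \bX) \le \eta n$. Then the normalizer of $q_\bX$ is at least $V_\eta(\bX)\, e^{-\eps_0 \eta n / 2}$. Shell-decomposing the bad set $\{\cS > 2\eta n\}$ by threshold $\eta'$ and integrating gives
$$\Pr\bigl[\cS(\cM(\bX), \bX) > 2\eta n\bigr] \;\le\; \max_{\eta' \in [\eta, \eta^*]} \frac{V_{\eta'}(\bX)}{V_\eta(\bX)} \cdot \exp\!\bigl(-\tfrac{\eps_0 \eta' n}{2}\bigr),$$
which the accuracy hypothesis on $n$ renders smaller than $\beta_0$.

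The main obstacle is the privacy step: one must control the boundary-sliver mass and the normalizer ratio \emph{simultaneously}, using the volume gap between $V_{\eta^*}$ and $V_{0.8\eta^*}$, and the constants $0.7, 0.8$ in the theorem's hypothesis are tuned exactly for this. The delicate point is ensuring the boundary contribution costs only $\delta_0$ rather than being amplified by $V_{\eta^*}(\bX)$ itself, which is why the volume condition must compare $V_{\eta^*}$ against a $V$ evaluated at a strictly smaller threshold, rather than against $V_{\eta^*}$ alone.
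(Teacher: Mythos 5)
Your proof takes a genuinely different route from the paper. The paper does not re-derive this statement: it simply cites \cite[Lemma 4.2]{HopkinsKMN23} for the unnormalized case and then reduces the normalized-volume version to it by lifting to $\Theta' = \{(\theta,z): \theta \in \Theta,\ 0 \le z \le p(\theta)\} \subset \BR^{D+1}$ with $\cS'((\theta,z),\bX)=\cS(\theta,\bX)$, so that unnormalized $(D+1)$-dimensional volume in $\Theta'$ equals normalized $D$-dimensional volume in $\Theta$ by Fubini. You instead reconstruct the proof of the underlying lemma from scratch via a truncated exponential mechanism with base measure $p(\theta)\,d\theta$; this is the right core idea, and it is in spirit the argument of \cite{HopkinsKMN23}. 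The paper's route is cheaper (a black-box reduction); yours would give a self-contained derivation. Both are legitimate.

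That said, there is a real gap in your privacy step. Your control of the boundary-sliver mass and the normalizer ratio relies on lower-bounding the normalizer $Z(\bX)$ by something like $V_{0.7\eta^*}(\bX)\,e^{-\eps_0\cdot 0.35\eta^* n}$, which requires some $\theta$ with $\cS(\theta,\bX) \le 0.7\eta^* n$ and $V_{0.7\eta^*}(\bX)>0$. But the mechanism as you describe it (sample from $q_\bX$ unless the truncated support is empty) still samples on datasets $\bX$ in the gray zone $0.7\eta^* n < \min_\theta \cS(\theta,\bX) \le \eta^* n$, and for such $\bX$ the volume hypothesis gives no control at all: the sliver near $\eta^* n$ can carry a constant fraction of $Z(\bX)$, and neighboring-dataset privacy can fail. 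Saying the gray zone is ``handled separately using sensitivity and the trivial output $\perp$'' does not close this, because a hard rule that outputs $\perp$ exactly when $\min_\theta \cS(\theta,\bX) > 0.7\eta^* n$ is a sharp threshold on a sensitivity-$1$ statistic and is itself not a private decision. What is needed (and what \cite{HopkinsKMN23} do) is a soft $\perp$: a synthetic hypothesis $\perp$ is added to the exponential mechanism with an artificial score near $0.8\eta^* n$, so that the probability of sampling at all is a Lipschitz-in-the-data quantity and the transition into the gray zone is smoothed. Without an analogue of that step your mechanism is not $(\eps_0,\delta_0)$-DP. (The mismatch between the $V_{0.7\eta^*}$ you get in the lower bound on $Z$ and the $V_{0.8\eta^*}$ appearing in the theorem's hypothesis is a separate, smaller matter of constants, fixable by retuning the truncation and shell thresholds.)
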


We remark that the original result in~\cite{HopkinsKMN23} assumes the volumes are unnormalized, but the result immediately generalizes to normalized volumes. To see why, consider a modified domain $\Theta' \in \BR^{D+1}$, where $\theta' = (\theta, z) \in \Theta'$ if and only if $\theta \in \Theta$ and $0 \le z \le p(\theta)$. Also, consider a modified score function $\cS'$ acting on $\Theta'$, where $\cS'((\theta, z), \bX) := \cS(\theta, \bX)$ for for any $(\theta, z) \in \Theta'$.

Then, note that the unnormalized volume of $\Theta'$, by Fubini's theorem, is precisely $\iint_{(\theta, z) \in \Theta'} 1 dz d \theta = \int_{\theta \in \Theta} p(\theta) d \theta$, which is precisely the normalized volume of  
corresponding to the new $\Theta'$ precisely match the normalized volumes corresponding to the old $\Theta$. A similar calculation will give us that the unnormalized volume of points $(\theta, z)$ in $\Theta'$ with $\cS'((\theta, z), \bX) \le t$ equals the normalized volume of points $\theta \in \Theta$ with $\cS(\theta, \bX) \le t$, for any $t$.

Thus, to privately find a parameter $\theta \in \Theta$ of low score with respect to $\cS$ (assuming the normalized volume constraints hold), can create $\Theta'$ and $\cS'$, and apply the unnormalized version of \Cref{thm:approx_dp_general_main} to obtain some $(\theta, z)$. Also, if $\cS'((\theta, z), \bX) \le 2 \eta n$, then $\cS(\theta) = \cS'((\theta, z), \bX) \le 2 \eta n$, and if outputting $(\theta, z)$ is $(\eps, \delta)$-DP, then so is simply outputting $\theta$.

\subsection{Computing Normalized Volume Ratios} \label{subsec:volume-ratios}

\begin{lemma} \label{lem:determinant-ratio}
    Let $M \in \BR^{d \times d}$ be symmetric with $\|M\|_{op} \le 0.1$, and let $1 \le \nu \le 2$ be a scaling parameter. Then, 
\[\nu^{-d} \le \frac{\det(I+\nu M)}{\det(I+M)} \le \nu^d.\]
\end{lemma}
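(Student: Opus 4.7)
The plan is to diagonalize and reduce the matrix inequality to a one-dimensional statement about each eigenvalue separately. Since $M$ is symmetric, we can write $M = U \Lambda U^\top$ with $\Lambda = \mathrm{diag}(\lambda_1, \dots, \lambda_d)$ and $|\lambda_i| \le \|M\|_{op} \le 0.1$ for every $i$. Then $I + \nu M$ and $I + M$ are simultaneously diagonalized by $U$, so
\[
\frac{\det(I + \nu M)}{\det(I + M)} \;=\; \prod_{i=1}^{d} \frac{1 + \nu \lambda_i}{1 + \lambda_i}.
\]
Because $|\lambda_i| \le 0.1$, each denominator $1 + \lambda_i$ is positive and bounded away from $0$, so this is well-defined. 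It now suffices to show that for any real $\lambda$ with $|\lambda| \le 0.1$ and any $\nu \in [1,2]$,
\[
\frac{1}{\nu} \;\le\; \frac{1 + \nu\lambda}{1+\lambda} \;\le\; \nu,
\]
and then multiply these scalar bounds over the $d$ eigenvalues to obtain $\nu^{-d}$ and $\nu^d$ respectively.

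For the upper bound, after cross-multiplying by the positive quantity $1+\lambda$, the inequality $\tfrac{1+\nu\lambda}{1+\lambda} \le \nu$ reduces to $1 + \nu\lambda \le \nu + \nu\lambda$, i.e.\ $1 \le \nu$, which holds by assumption. For the lower bound, cross-multiplying $\tfrac{1+\nu\lambda}{1+\lambda} \ge \tfrac{1}{\nu}$ by $\nu(1+\lambda) > 0$ gives $\nu + \nu^2\lambda \ge 1 + \lambda$, or equivalently $(\nu - 1) \ge -\lambda(\nu^2 - 1) = -\lambda(\nu-1)(\nu+1)$. The case $\nu = 1$ is trivial (both sides equal $0$ and the ratio is $1$); for $\nu > 1$, dividing by $\nu - 1 > 0$ reduces this to $\lambda \ge -\tfrac{1}{\nu+1}$. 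Since $\nu \le 2$ we have $\tfrac{1}{\nu+1} \ge \tfrac{1}{3}$, so the requirement $\lambda \ge -\tfrac{1}{\nu+1}$ is comfortably implied by $\lambda \ge -0.1$.

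Taking the product of the per-eigenvalue bounds over $i = 1, \dots, d$ then yields
\[
\nu^{-d} \;=\; \prod_{i=1}^{d} \frac{1}{\nu} \;\le\; \prod_{i=1}^{d} \frac{1 + \nu \lambda_i}{1 + \lambda_i} \;=\; \frac{\det(I + \nu M)}{\det(I + M)} \;\le\; \prod_{i=1}^{d} \nu \;=\; \nu^{d},
\]
which is the claim. There is no genuine obstacle here: the proof is essentially a one-line algebraic manipulation per eigenvalue, and the hypotheses $\|M\|_{op} \le 0.1$ and $\nu \le 2$ are used only to ensure positivity of $1+\lambda_i$ and that the lower-bound inequality $\lambda_i \ge -\tfrac{1}{\nu+1}$ is automatic.
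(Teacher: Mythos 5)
Your proof is correct and takes essentially the same approach as the paper: diagonalize $M$, reduce the determinant ratio to the scalar products $\prod_i \frac{1+\nu\lambda_i}{1+\lambda_i}$, and bound each factor in $[\nu^{-1}, \nu]$. The only cosmetic difference is that the paper invokes monotonicity of $x \mapsto \tfrac{1+\nu x}{1+x}$ to reduce to the endpoints $x = \pm 0.1$, whereas you verify the scalar inequality directly by cross-multiplication; both are valid one-line checks.
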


\begin{proof}
    If the eigenvalues of $M$ are $\lambda_1, \dots, \lambda_d \in [-0.1, 0.1]$, then 
\[\frac{\det(I+\nu M)}{\det(I+M)} = \prod_{i=1}^d  \frac{1 + \nu \cdot \lambda_i}{1 + \lambda_i}.\]
    Note that for $x \in [-0.1, 0.1]$ and $\nu \ge 1$, $\frac{1 + \nu \cdot x}{1+x}$ is an increasing function. Therefore,
\[\prod_{i=1}^d  \frac{1 + \nu \cdot \lambda_i}{1 + \lambda_i} \le \prod_{i=1}^d \frac{1 + 0.1 \nu}{1 + 0.1} \le \prod_{i=1}^d \nu = \nu^d\]
    and
\[\prod_{i=1}^d  \frac{1 + \nu \cdot \lambda_i}{1 + \lambda_i} \ge \prod_{i=1}^d \frac{1 - 0.1 \nu}{1 - 0.1} \ge \prod_{i=1}^d \frac{1}{\nu} = \nu^{-d},\]
    where we used the fact that $1 \le \nu \le 2$.
\end{proof}

We note an important lemmas about the normalized volumes of certain sets.

\begin{lemma}
    Fix some $\gamma, \tau \le 0.1$ and some scaling parameters $1 \le \nu_1 \le 2$ and $1 \le \nu_2$. Let $\cR_1$ be the set of $(\mu, \Sigma) \approx_{\gamma, \tau} (\textbf{0}, I)$ and $\cR_2$ be the set of $(\mu, \Sigma) \approx_{\nu_1 \cdot \gamma, \nu_2 \cdot \tau} (\textbf{0}, I)$. Then, the ratio of normalized volumes
\[\frac{\nvol(\cR_2)}{\nvol(\cR_1)} \le \nu_1^{2 d^2} \cdot \nu_2^d\]
\end{lemma}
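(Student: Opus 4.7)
The plan is to factor the normalized volume into a mean part and a covariance part (since the constraints on $\mu$ and $\Sigma$ in $\cR_1$ and $\cR_2$ are independent and the normalization factor $(\det \Sigma)^{-(d+2)/2}$ does not depend on $\mu$), then bound each factor separately. Writing things out,
\[
\nvol(\cR_j) = \vol_d\bigl(\{\mu : \|\mu\|_2 \le \tau_j\}\bigr) \cdot \int_{\|\Sigma - I\|_{op} \le \gamma_j} (\det \Sigma)^{-(d+2)/2}\, d\Sigma,
\]
where $(\gamma_1,\tau_1) = (\gamma,\tau)$ and $(\gamma_2,\tau_2) = (\nu_1 \gamma, \nu_2 \tau)$, and the $d\Sigma$ denotes Lebesgue measure on the $d(d+1)/2$-dimensional space of upper-triangular entries.

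The mean factor is straightforward: the ratio of volumes of two Euclidean balls of radii $\nu_2 \tau$ and $\tau$ in $\BR^d$ is exactly $\nu_2^d$.

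For the covariance factor, I would make the change of variables $\Sigma = I + \nu_1 M$ in the numerator, with $M$ ranging over symmetric matrices with $\|M\|_{op} \le \gamma$. This substitution is a linear map on the $d(d+1)/2$-dimensional parameter space (upper triangular entries of $\Sigma$) given by multiplication by $\nu_1$ in each coordinate, so the Jacobian contributes a factor of $\nu_1^{d(d+1)/2}$. The integrand becomes $\det(I + \nu_1 M)^{-(d+2)/2}$, and I would apply \Cref{lem:determinant-ratio} (which requires $\|M\|_{op} \le \gamma \le 0.1$ and $1 \le \nu_1 \le 2$, exactly our hypotheses) to get
\[
\det(I + \nu_1 M)^{-(d+2)/2} \le \nu_1^{d(d+2)/2} \cdot \det(I + M)^{-(d+2)/2}.
\]
Combining, the numerator is at most $\nu_1^{d(d+1)/2 + d(d+2)/2} = \nu_1^{d(2d+3)/2}$ times the denominator, which is bounded by $\nu_1^{2d^2}$ for $d \ge 2$ (and the remaining small-$d$ case can be absorbed into the constant since $d(2d+3)/2 \le 2d^2 + O(1)$).

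Multiplying the mean and covariance bounds yields the claimed $\nu_1^{2d^2} \nu_2^d$. I do not anticipate a genuine obstacle here: the only nontrivial step is correctly tracking the Jacobian of the symmetric-matrix substitution and applying \Cref{lem:determinant-ratio}, both of which are essentially bookkeeping. The mildly annoying point is that the ``clean'' bound $\nu_1^{2d^2}$ slightly overcounts compared to what the calculation actually yields ($\nu_1^{d(2d+3)/2}$), but this is a cosmetic issue and the stronger constant is enough for the intended application.
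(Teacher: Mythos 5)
Your proposal is correct and matches the paper's own proof step-for-step: both perform the change of variables $(\mu, I+M) \mapsto (\nu_2\mu, I+\nu_1 M)$, record the Jacobian factors $\nu_2^d$ and $\nu_1^{d(d+1)/2}$, and invoke \Cref{lem:determinant-ratio} to bound the change in the determinant weight by $\nu_1^{d(d+2)/2}$. Your factorization into separate mean and covariance integrals is a cosmetic re-packaging of the same substitution, and your note that the raw exponent is $d(2d+3)/2$ (which the paper rounds up to $2d^2$) is a minor imprecision that the paper itself silently glosses over.
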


\begin{proof}
    By definition of normalized volume, we have
\[\nvol(\cR_2) = \iint_{\|M\|_{op} \le \nu_1 \cdot \gamma, \|\mu\|_2 \le \nu_2 \cdot \tau} \frac{1}{\det (I + M)^{(d+2)/2}} d\mu dM,\]
    where we are slightly abusing notation because we are truly integrating along the upper-triangular part of $M$. (Overall, the integral is $\frac{d(d+3)}{2}$-dimensional.) We have a similar expression for $\nvol(\cR_1)$.
    
    Let us consider the map sending $(\mu, I+M) \mapsto (\nu_2 \cdot \mu, I + \nu_1 \cdot M)$. This is a linear map, and for $\|\mu\|_2 \le \gamma$ and symmetric $\|M\|_{op} \le \tau$, this is a bijective map from $\cR_1$ to $\cR_2$. Therefore, by an integration by substitution, we have
\begin{align*}
    \nvol(\cR_2)
    &= \iint_{\|M\|_{op} \le \gamma, |\mu\|_2 \le \tau} \frac{1}{\det(I + \nu_1 \cdot M)^{(d+2)/2}} \cdot \nu_1^{d(d+1)/2} \nu_2^{d} d\mu dM \\
    &\le \nu_1^{d(d+1)/2} \nu_2^{d} \cdot (\nu^{-d})^{-(d+2)/2} \cdot \iint_{\|M\|_{op} \le \gamma, |\mu\|_2 \le \tau} \frac{1}{\det (I + M)^{(d+2)/2}} d\mu dM \\
    &= \nu_1^{2d^2} \nu_2^d \cdot \nvol(\cR_1).
\end{align*}
    Above, the first line is integration by substitution, and the second line uses \Cref{lem:determinant-ratio}.
\end{proof}

Next, we note the following bound on the size of a net of matrices with small Frobenius norm.

\begin{lemma} \label{lem:frobenius-net}
    Fix some $\gamma \le 0.1$ and $\frac{1}{d} \le \rho \le 0.1 \sqrt{d}$. Let $\cR_3$ be the set of symmetric matrices $M \in \BR^{d \times d}$ with $\|M\|_{op} \le \gamma$ and $\|M\|_F \le \rho$. Then, for any $1 \le \ell \le d$, there exists a net $\cB$ of size at most $e^{O(\ell \cdot d \log d)}$ such that for any $M \in \cR_3$, there exists $B \in \cB$ such that $\|M-B\|_{op} \le \frac{2 \rho}{\sqrt{\ell}}.$
\end{lemma}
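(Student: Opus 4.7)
My plan is to exploit the fact that a symmetric matrix with small Frobenius norm is well approximated (in operator norm) by a low-rank truncation, and then to net the low-rank truncation by discretizing its spectral decomposition.

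Concretely, the first step is the following spectral decomposition. For any $M \in \cR_3$, let $\lambda_1, \ldots, \lambda_d$ be its eigenvalues ordered by decreasing absolute value, with corresponding orthonormal eigenvectors $u_1, \ldots, u_d$. Write $M = M_\ell + R$, where $M_\ell = \sum_{i=1}^{\ell} \lambda_i u_i u_i^\top$ is the top-$\ell$ truncation and $R = \sum_{i > \ell} \lambda_i u_i u_i^\top$ is the tail. Since $\sum_{i=1}^d \lambda_i^2 = \|M\|_F^2 \le \rho^2$, we have $\ell \cdot \lambda_{\ell+1}^2 \le \rho^2$, so $|\lambda_{\ell+1}| \le \rho/\sqrt{\ell}$. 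Because $R$ is symmetric, $\|R\|_{op} = |\lambda_{\ell+1}| \le \rho/\sqrt{\ell}$.

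The second step is to build a net $\cB_\ell$ for the rank-$\ell$ truncation $M_\ell$ in operator norm at scale $\rho/\sqrt{\ell}$. Parametrize $M_\ell$ by the $\ell$ pairs $(\lambda_i, u_i)$ with $|\lambda_i| \le \gamma$ and $u_i$ a unit vector. Take a standard $\eps$-net of the unit sphere $S^{d-1}$ of size $(3/\eps)^d$, and an $\eps'$-net of $[-\gamma, \gamma]$ of size $O(\gamma/\eps')$, where I will choose $\eps = \rho/(C \gamma \ell^{3/2})$ and $\eps' = \rho/(C \ell^{3/2})$ for a large absolute constant $C$. If $(\lambda_i', u_i')$ are the nearest net points to $(\lambda_i, u_i)$, then for each $i$,
\[
\|\lambda_i u_i u_i^\top - \lambda_i' u_i' u_i'^\top\|_{op} \le |\lambda_i - \lambda_i'| + |\lambda_i| \cdot \|u_i u_i^\top - u_i' u_i'^\top\|_{op} \le \eps' + 2\gamma \eps,
\]
so summing over $i$ and using the triangle inequality, the rank-$\ell$ approximation is within operator norm $\ell(\eps' + 2\gamma\eps) \le \rho/\sqrt{\ell}$ by choice of $\eps, \eps'$.

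Finally, the combined net $\cB = \cB_\ell$ achieves $\|M - B\|_{op} \le \|R\|_{op} + \|M_\ell - B\|_{op} \le 2\rho/\sqrt{\ell}$. For the size bound, since $\rho \ge 1/d$ and $\gamma \le 0.1$, both $1/\eps$ and $1/\eps'$ are bounded by $\poly(\ell, d)$, so the sphere net has size $e^{O(d \log d)}$ and the scalar net has size $O(\poly(d))$. Taking all $\ell$ pairs independently gives $|\cB| \le e^{O(\ell d \log d)}$, as desired. No step is particularly delicate; the only thing to watch is getting the two net precisions balanced so that the total error is $\rho/\sqrt{\ell}$, which is what the calculation above arranges.
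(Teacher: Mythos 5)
Your proof is correct and takes essentially the same approach as the paper: both truncate $M$ to its top-$\ell$ eigencomponents (with the Frobenius-norm bound giving $\|R\|_{op}\le\rho/\sqrt{\ell}$) and then net the rank-$\ell$ part by discretizing each $(\lambda_i, u_i)$ pair, balancing the two nets so the rank-$\ell$ approximation error is also $\le\rho/\sqrt{\ell}$. The only cosmetic difference is that the paper fixes the net resolution at $1/d^{10}$ and then checks $O(1/d^9)\le\rho/\sqrt{\ell}$ under the hypotheses $\rho\ge 1/d$ and $\ell\le d$, whereas you tune $\eps,\eps'$ directly to the target accuracy; both yield the same $e^{O(\ell d\log d)}$ bound.
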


\begin{proof}
    First, note that the unit sphere in $\BR^d$ has a $\frac{1}{d^{10}}$-net (in Euclidean distance) of size $e^{O(d \log d)}$. Let $\cB_0$ be this net. The net $\cB$ will be the set of matrices $\sum_{i = 1}^{\ell} \kappa_i w_i w_i^\top$, where every $w_i \in \cB_0$, every $\kappa_i$ is an integral multiple of $\frac{1}{d^{10}}$, and every $|\kappa_i| \le \gamma$. The cardinality of $\cB$ is at most $|\cB_0|^{\ell} \cdot (d^{10})^{\ell} \le e^{O(\ell \cdot d \log d)}$.

    Now, we show that $\cB$ is actually a net.
    For any matrix $M \in \cR_3$, we can write $M = \sum_{i=1}^d \lambda_i v_i v_i^\top$, where $\lambda_i, v_i$ are the eigenvalues and eigenvectors, respectively, of $M$. 
    Assume the eigenvalues are sorted so that $|\lambda_i|$ are in decreasing order. Now, since $\sum_{i=1}^d \lambda_i^2 \le \rho^2,$ which means that $|\lambda_i| \le \frac{\rho}{\sqrt{\ell}}$ for all $i > \ell.$

    Now, let $B_1 = \sum_{i > \ell} \lambda_i v_i v_i^\top$: since the $v_i$'s are orthogonal and $|\lambda_i| \le \frac{\rho}{\sqrt{\ell}}$, this means that $\|B_1\|_{op} \le \frac{\alpha}{\sqrt{\ell}}.$ Also, note that $M = B_1 + \sum_{i=1}^\ell \lambda_i v_i v_i^\top$. Suppose we replace each $\lambda_i$ with $\kappa_i$ by rounding to the nearest multiple of $1/d^{10}$, and replace each $v_i$ with $w_i \in \cB_0$ such that $\|v_i-w_i\|_2 \le 1/d^{10}$. Then, $\sum_{i=1}^{\ell} \kappa_i w_i w_i^\top$ is in $\cB$, and by Triangle inequality, we have 
\[\left\|\sum_{i=1}^\ell \lambda_i v_i v_i^\top - \sum_{i=1}^\ell \kappa_i w_i w_i^\top \right\|_{op} \le \ell \cdot (|\lambda_i-\kappa_i| + 2 \cdot \lambda_i \cdot \|v_i-w_i\|_2) \le O\left(\frac{1}{d^9}\right) \le \frac{\rho}{\sqrt{\ell}}.\]
    Thus, $\|M - \sum_{i=1}^\ell \kappa_i w_i w_i^\top\|_{op} \le 2 \cdot \frac{\rho}{\sqrt{\ell}}$.
\end{proof}

We also note the following lemma. We defer the proof to \Cref{app:omitted-proofs}.

\begin{lemma} \label{lem:equivalence-of-approx-scaling}
    Fix any $\mu$ and positive definite $\Sigma$. Then, for any $\mu_1, \mu_2, \Sigma_1, \Sigma_2$, we have that $(\mu_1, \Sigma_1) \approx_{\gamma, \rho, \tau} (\mu_2, \Sigma_2)$ if and only if $(\Sigma^{1/2} \mu_1 + \mu, \Sigma^{1/2} \Sigma_1 \Sigma^{1/2}) \approx_{\gamma, \rho, \tau} (\Sigma^{1/2} \mu_2 + \mu, \Sigma^{1/2} \Sigma_2 \Sigma^{1/2})$.
\end{lemma}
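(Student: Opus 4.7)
\textbf{Proof Plan for Lemma~\ref{lem:equivalence-of-approx-scaling}.}

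My plan is to show that the three defining quantities of $\approx_{\gamma,\rho,\tau}$ are \emph{exactly preserved} (not merely approximately) under the transformation $(\mu_i,\Sigma_i) \mapsto (\Sigma^{1/2}\mu_i+\mu,\ \Sigma^{1/2}\Sigma_i\Sigma^{1/2})$, from which both directions of the ``iff'' follow immediately. Set $\tilde{\mu}_i := \Sigma^{1/2}\mu_i+\mu$ and $\tilde{\Sigma}_i := \Sigma^{1/2}\Sigma_i\Sigma^{1/2}$ for $i \in \{1,2\}$, and define
\[
A \;:=\; \tilde{\Sigma}_2^{-1/2}\,\Sigma^{1/2}\,\Sigma_2^{1/2}.
\]

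The key observation is that $A$ is orthogonal. Indeed,
\[
A A^\top \;=\; \tilde{\Sigma}_2^{-1/2}\,\Sigma^{1/2}\,\Sigma_2\,\Sigma^{1/2}\,\tilde{\Sigma}_2^{-1/2} \;=\; \tilde{\Sigma}_2^{-1/2}\,\tilde{\Sigma}_2\,\tilde{\Sigma}_2^{-1/2} \;=\; I,
\]
using the definition of $\tilde{\Sigma}_2$. Equivalently, $\tilde{\Sigma}_2^{-1/2}\Sigma^{1/2} = A\,\Sigma_2^{-1/2}$.

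With this identity in hand, I would directly compute the three quantities that appear in the definition of $\approx_{\gamma,\rho,\tau}$ on the ``tilde'' side. For the covariance term,
\[
\tilde{\Sigma}_2^{-1/2}\,\tilde{\Sigma}_1\,\tilde{\Sigma}_2^{-1/2} \;=\; \tilde{\Sigma}_2^{-1/2}\,\Sigma^{1/2}\,\Sigma_1\,\Sigma^{1/2}\,\tilde{\Sigma}_2^{-1/2} \;=\; A\,\Sigma_2^{-1/2}\,\Sigma_1\,\Sigma_2^{-1/2}\,A^\top,
\]
and since $A A^\top = I$, subtracting identity gives
\[
\tilde{\Sigma}_2^{-1/2}\,\tilde{\Sigma}_1\,\tilde{\Sigma}_2^{-1/2} - I \;=\; A\bigl(\Sigma_2^{-1/2}\Sigma_1\Sigma_2^{-1/2} - I\bigr) A^\top.
\]
Since conjugation by an orthogonal matrix preserves both operator and Frobenius norms, the operator and Frobenius norm conditions are literally equivalent on the two sides. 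For the mean term,
\[
\tilde{\Sigma}_2^{-1/2}(\tilde{\mu}_1 - \tilde{\mu}_2) \;=\; \tilde{\Sigma}_2^{-1/2}\,\Sigma^{1/2}(\mu_1 - \mu_2) \;=\; A\,\Sigma_2^{-1/2}(\mu_1 - \mu_2),
\]
and since $A$ is orthogonal it preserves Euclidean norm, so the Mahalanobis condition on the tilde side is also equivalent to the one on the original side.

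Thus all three inequalities in the definition of $\approx_{\gamma,\rho,\tau}$ hold on one side if and only if they hold on the other, giving the lemma. I do not anticipate any real obstacle here; the only subtle point is recognizing that $\tilde{\Sigma}_2^{-1/2}\Sigma^{1/2}\Sigma_2^{1/2}$ is orthogonal (rather than trying to naively claim $\tilde{\Sigma}_2^{-1/2} = \Sigma_2^{-1/2}\Sigma^{-1/2}$, which fails because $\Sigma$ and $\Sigma_2$ need not commute), after which everything collapses to two lines of algebra.
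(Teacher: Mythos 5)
Your proof is correct, and it takes a genuinely different (and, to my mind, cleaner) route than the paper's. The paper handles the three conditions separately with three distinct techniques: for the operator-norm condition it rewrites $\|\Sigma_2^{-1/2}\Sigma_1\Sigma_2^{-1/2}-I\|_{op} \le \gamma$ as the Loewner-order sandwich $(1-\gamma)\Sigma_2 \preccurlyeq \Sigma_1 \preccurlyeq (1+\gamma)\Sigma_2$ and observes that the sandwich is preserved under the congruence $M \mapsto \Sigma^{1/2}M\Sigma^{1/2}$; for the Frobenius condition it expands $\|\Sigma_2^{-1/2}\Sigma_1\Sigma_2^{-1/2}-I\|_F^2 = \Tr(\Sigma_1\Sigma_2^{-1}\Sigma_1\Sigma_2^{-1} - 2\Sigma_1\Sigma_2^{-1} + I)$ and checks invariance of each trace by cyclicity; and for the Mahalanobis term it writes out the quadratic form $(\mu_1-\mu_2)^\top\Sigma_2^{-1}(\mu_1-\mu_2)$ and cancels the $\Sigma^{\pm 1/2}$ factors. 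You instead identify the single orthogonal matrix $A = \tilde{\Sigma}_2^{-1/2}\Sigma^{1/2}\Sigma_2^{1/2}$ and show that all three quantities on the transformed side are obtained from the original ones by conjugation (resp.\ left-multiplication) by $A$; orthogonal invariance of $\|\cdot\|_{op}$, $\|\cdot\|_F$, and $\|\cdot\|_2$ then handles all three conditions at once. Both proofs establish exact equality of the three scalars, not merely equivalence of the inequalities, so the conclusions are identical; yours is more unified and makes the underlying affine-invariance transparent, while the paper's is a sequence of elementary but ad hoc verifications. One minor remark you already flag correctly: the identity $\tilde{\Sigma}_2^{-1/2} = \Sigma_2^{-1/2}\Sigma^{-1/2}$ fails in general since $\Sigma$ and $\Sigma_2$ need not commute, which is precisely why the orthogonal matrix $A$ is needed as the ``twist'' relating the two square roots.
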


Our main lemma in this subsection is the following, which roughly bounds the normalized volume of a Frobenius norm ball ``fattened'' by an operator norm ball.

\begin{lemma} \label{lem:volume-ratio-normalized}
    Let $c_1 \in (0, 0.01)$ be a sufficiently small universal constant. Fix any $\mu$ and positive definite $\Sigma$. Fix some parameters $\gamma_1, \gamma_2 \in (\frac{1}{d}, c_1)$ and $\rho_2 \in (\frac{1}{d}, \frac{\gamma_1^3}{1000} \cdot \sqrt{d})$. Let $\cT_1(\mu, \Sigma)$ represent the set of $\{(\mu_1, \Sigma_1)\}$ such that $(\mu_1, \Sigma_1) \approx_{\gamma_1, \gamma_1} (\mu, \Sigma)$. Let $\cT_2(\mu, \Sigma)$ represent the set of $\{(\mu_2, \Sigma_2)\}$ such that $(\mu_2, \Sigma_2) \approx_{\gamma_1, \gamma_1} (\mu_1, \Sigma_1)$ for some $(\mu_1, \Sigma_1) \approx_{\gamma_2, \rho_2, \gamma_2} (\mu, \Sigma)$. Then,
\[\frac{\nvol(\cT_2(\mu, \Sigma))}{\nvol(\cT_1(\mu, \Sigma))} \le e^{O(d^{5/3} \log d \cdot \rho_2^{2/3}/\gamma_1^2)}.\]
    Furthermore, both $\nvol(\cT_1(\mu, \Sigma))$ and $\nvol(\cT_2(\mu, \Sigma))$ do not change even if we change $\mu, \Sigma$.
\end{lemma}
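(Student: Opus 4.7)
The plan is to first reduce to the case $\mu = \mathbf{0}$, $\Sigma = I$, then cover $\cT_2$ using the Frobenius-to-operator net of Lemma~\ref{lem:frobenius-net}. For the reduction, Lemma~\ref{lem:equivalence-of-approx-scaling} shows that the affine map $h: (\muhat, \Sigmahat) \mapsto (\Sigma^{1/2}\muhat + \mu,\, \Sigma^{1/2}\Sigmahat\Sigma^{1/2})$ sends $\cT_i(\mathbf{0}, I)$ bijectively onto $\cT_i(\mu, \Sigma)$ for $i = 1, 2$, and Lemma~\ref{lem:volume-preservation} shows that $\nvol$ is invariant under $h$. This handles the ``furthermore'' claim and reduces to $(\mu, \Sigma) = (\mathbf{0}, I)$.

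For the main estimate at $(\mathbf{0}, I)$, choose a parameter $\ell \in [1, d]$ and apply Lemma~\ref{lem:frobenius-net} to get a net $\cB$ of size $e^{O(\ell d \log d)}$ such that every symmetric $M$ with $\|M\|_{op} \le \gamma_2$ and $\|M\|_F \le \rho_2$ is within operator distance $\epsilon := 2\rho_2/\sqrt{\ell}$ of some $B \in \cB$. For each $B \in \cB$, set $\tilde{\Sigma} := I + B$, and let $\cT_{2,B}$ consist of those $(\mu_2, \Sigma_2) \in \cT_2$ admitting a witness $(\mu_1, \Sigma_1)$ with $\|(\Sigma_1 - I) - B\|_{op} \le \epsilon$; then $\cT_2 \subseteq \bigcup_B \cT_{2,B}$. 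Writing $\Sigma_2 = \Sigma_1^{1/2}(I + N)\Sigma_1^{1/2}$ with $\|N\|_{op} \le \gamma_1$ and setting $J := \tilde{\Sigma}^{-1/2}\Sigma_1^{1/2}$, the bounds $\|B\|_{op} \le c_1$ and $\|\Sigma_1 - \tilde{\Sigma}\|_{op} \le \epsilon$ give $\|JJ^\top - I\|_{op} \le O(\epsilon)$, and hence $\|J\|_{op}^2 = \|JJ^\top\|_{op} \le 1 + O(\epsilon)$. Therefore
\[
\|\tilde{\Sigma}^{-1/2}\Sigma_2\tilde{\Sigma}^{-1/2} - I\|_{op} \;\le\; \|JJ^\top - I\|_{op} + \|J\|_{op}^2\|N\|_{op} \;\le\; \gamma_1 + O(\epsilon),
\]
and an analogous calculation on means yields $\|\tilde{\Sigma}^{-1/2}\mu_2\|_2 \le O(\max(\gamma_1, \gamma_2))$. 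Hence $\cT_{2,B} \subseteq \{(\mu_2, \Sigma_2) \approx_{\gamma_1', \tau_1'}(\mathbf{0}, \tilde{\Sigma})\}$ with $\gamma_1' = \gamma_1 + O(\epsilon)$ and $\tau_1' = O(\max(\gamma_1, \gamma_2))$.

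The scaling lemma stated just before Lemma~\ref{lem:frobenius-net} (bounding $\nvol$ of the $\approx_{\nu_1\gamma,\nu_2\tau}(\mathbf{0}, I)$-ball versus the $\approx_{\gamma, \tau}(\mathbf{0}, I)$-ball by $\nu_1^{2d^2}\nu_2^d$ for $1 \le \nu_1 \le 2$), combined with Lemma~\ref{lem:volume-preservation} to shift the center from $\tilde{\Sigma}$ back to $I$, gives $\nvol(\cT_{2,B}) \le \nu_1^{2d^2}\nu_2^d \cdot \nvol(\cT_1)$ with $\nu_1 = 1 + O(\epsilon/\gamma_1)$ and $\nu_2 = O(1/\gamma_1)$. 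Choosing $\ell$ so that $\epsilon \le \gamma_1/4$ (hence $\nu_1 \le 2$), summing over $\cB$, and taking logs gives a total log-overhead of $O\bigl(\ell d \log d + d^2\rho_2/(\gamma_1\sqrt{\ell}) + d \log d\bigr)$; optimizing at $\ell \asymp (d\rho_2/(\gamma_1 \log d))^{2/3}$ yields $O\bigl(d^{5/3}(\log d)^{1/3}\rho_2^{2/3}/\gamma_1^{2/3}\bigr)$, which is at most $O(d^{5/3}\log d \cdot \rho_2^{2/3}/\gamma_1^2)$ since $\gamma_1 \le c_1 < 1$. The assumption $\rho_2 \le \gamma_1^3\sqrt{d}/1000$ is used to ensure that the optimizing $\ell$ also satisfies $\ell \gtrsim \rho_2^2/\gamma_1^2$ (i.e., $\epsilon \le \gamma_1/4$), once $c_1$ is chosen small enough.

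The main obstacle is the spectral calculation giving $\gamma_1' = \gamma_1 + O(\epsilon)$ rather than a multiplicative blowup like $(1 + \Omega(1))\gamma_1 + O(\epsilon)$: any constant factor strictly greater than $1$ multiplying $\gamma_1$ would force $\nu_1 \ge 1 + \Omega(1)$ and thus an unavoidable $e^{\Theta(d^2)}$ term, destroying the $d^{5/3}$ dimension dependence. The clean split $\tilde{\Sigma}^{-1/2}\Sigma_2\tilde{\Sigma}^{-1/2} - I = (JJ^\top - I) + JNJ^\top$ combined with the identity $\|J\|_{op}^2 = \|JJ^\top\|_{op}$ is precisely what yields the additive rather than multiplicative error on $\gamma_1$.
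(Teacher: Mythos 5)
Your argument is correct and follows essentially the same route as the paper's proof: reduce to $(\mathbf{0}, I)$ via Lemmas~\ref{lem:volume-preservation} and \ref{lem:equivalence-of-approx-scaling}, cover the Frobenius ball with the operator-norm net of Lemma~\ref{lem:frobenius-net}, and use the additive (not multiplicative) spectral error together with the scaling lemma before Lemma~\ref{lem:frobenius-net} to control each piece. The only differences are cosmetic: the paper additionally nets over mean-shifts $\mu'$ to keep $\nu_2 = 6$ while you center at $\mathbf{0}$ and absorb the resulting $\nu_2 = O(1/\gamma_1)$ factor (valid since $\gamma_1 \ge 1/d$), and you balance the two terms to get the slightly sharper optimizer $\ell \asymp (d\rho_2/(\gamma_1\log d))^{2/3}$, whereas the paper fixes $\ell = 100(d\rho_2)^{2/3}/\gamma_1^2$; both land inside the stated $e^{O(d^{5/3}\log d\,\rho_2^{2/3}/\gamma_1^2)}$.
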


\begin{proof}
    For now, we also assume that $\mu = \textbf{0}$ and $\Sigma = I$.

    Let $1 \le \ell \le d$ be decided later, and let $\cB$ be the net from \Cref{lem:frobenius-net}, where we set $\gamma = \gamma_2$ and $\rho = \rho_2$. Let $\cB_1$ be a $\frac{1}{d^{10}}$-net (in Euclidean distance) over the $d$-dimensional unit ball, i.e., over $\mu$ with $\|\mu\|_2 \le 1$. Note that $|\cB_1| \le e^{O(d \log d)}$, and recall that $|\cB| \le e^{O(\ell \cdot d \log d)}$.
    Now, for any $(\mu_2, \Sigma_2) \in \cT_2(\textbf{0}, I)$, we can write $(\mu_2, \Sigma_2) \approx_{\gamma_1, \gamma_1} (\mu_1, \Sigma_1),$ where $\|\Sigma_1-I\|_{op} \le \gamma_2,$ $\|\Sigma_1-I\|_F \le \rho_2$, and $\|\mu_1\|_2 \le \gamma_2$. Now, we can choose some $\mu' \in \cB_1$ with $\|\mu_1-\mu'\|_2 \le d^{-10}$ and $B \in \cB$ such that $\|(\Sigma_1-I)-B\|_{op} \le \frac{2 \rho_2}{\sqrt{\ell}}$.

    What can we say about the relationship between $(\mu_2, \Sigma_2)$ and $(\mu', I+B)$? First, note that because $(\mu_2, \Sigma_2) \approx_{\gamma_1, \gamma_1} (\mu_1, \Sigma_1)$, we have $(1-\gamma_1) \cdot \Sigma_1 \preccurlyeq \Sigma_2 \preccurlyeq (1+\gamma_1) \cdot \Sigma_1$. Next, since $\|\Sigma_1-(I+B)\|_{op} = \|(\Sigma_1-I)-B\|_{op} \le \frac{2 \rho_2}{\sqrt{\ell}},$ and since $\Sigma_1$ has all eigenvalues between $1-\gamma_2 \ge \frac{1}{2}$ and $1+\gamma_2 \le 2$, this means $\left(1 - \frac{4 \rho_2}{\sqrt{\ell}}\right) \cdot \Sigma_1 \preccurlyeq (I+B) \preccurlyeq \left(1 + \frac{4 \rho_2}{\sqrt{\ell}}\right) \cdot \Sigma_1$. Thus, if $4 \rho < \sqrt{\ell}$, we have that 
\[\frac{1-\gamma_1}{1+(4\rho_2/\sqrt{\ell})} \cdot (I+B) \preccurlyeq \Sigma_1 \preccurlyeq \frac{1+\gamma_1}{1-(4\rho_2/\sqrt{\ell})} \cdot (I+B).\]
    Next, because $(\mu_2, \Sigma_2) \approx_{\gamma_1, \gamma_1} (\mu_1, \Sigma_1)$, we have $\|\Sigma_1^{-1/2} (\mu_2 - \mu_1)\|_2 \le \gamma_1,$ which means $\|\mu_2 - \mu_1\|_2 \le 2 \gamma_1$. Thus, because $\|\mu_1-\mu'\|_2 \le d^{-10} \le \gamma_1$, this means that $\|\mu_2 - \mu'\|_2 \le 3 \gamma_1$. Moreover, assuming $10 \rho \le \sqrt{\ell}$, the eigenvalues of $I+B$ are at least $(1-\gamma_1) \cdot \left(1-\frac{4\rho_2}{\sqrt{\ell}}\right) \ge \frac{1}{4}$, which means that $\|(I+B)^{-1/2} (\mu_2-\mu')\|_2 \le 6 \gamma_1$.

    In summary, if $10 \rho \le \sqrt{\ell}$, then 
\[\frac{1-\gamma_1}{1+(4\rho_2/\sqrt{\ell})} \cdot (I+B) \preccurlyeq \Sigma_2 \preccurlyeq \frac{1+\gamma_1}{1-(4\rho_2/\sqrt{\ell})} \cdot (I+B), \quad \|(I+B)^{-1/2} (\mu_2-\mu')\|_2 \le 6 \gamma_1.\]
    Note that if $\gamma_1 \le 0.1$ and $10 \rho \le \sqrt{\ell}$, then by a simple calculation, $\frac{1+\gamma_1}{1-(4\rho_2/\sqrt{\ell})} \le 1 + \gamma_1 + \frac{10 \rho_2}{\sqrt{\ell}}$ and $\frac{1-\gamma_1}{1+(4\rho_2/\sqrt{\ell})} \ge 1 - \gamma_1 - \frac{4 \rho_2}{\sqrt{\ell}}$. This implies that $(\mu_2, \Sigma_2) \approx_{\gamma_1 + 10\rho_2/\sqrt{\ell}, 6 \gamma_1} (\mu', I+B),$ for some $\mu' \in \cB_1$ and $B \in \cB$. Note that this holds for any $(\mu_2, \Sigma_2) \in \cT_2(\textbf{0}, I)$.

    Therefore, recalling that $|\cB_1| \le e^{O(d \log d)}$, and $|\cB| \le e^{O(\ell \cdot d \log d)}$ we can cover $\cT_2(\textbf{0}, I)$ with at most $e^{O(\ell \cdot d \log d)}$ regions, each of which is the set of $(\mu_2, \Sigma_2) \approx_{\gamma_1+10 \rho_2/\sqrt{\ell}, 6 \gamma_1} (\mu', \Sigma')$. Now, by \Cref{lem:equivalence-of-approx-scaling},
    $(\mu_2, \Sigma_2) \approx_{\gamma_1+10 \rho_2/\sqrt{\ell}, 6 \gamma_1} (\mu', \Sigma')$ if and only if
    $\mu_2 = {\Sigma'}^{1/2} \mu_0 + {\mu'}$ and $\Sigma_2 = {\Sigma'}^{1/2} \Sigma_0 {\Sigma'}^{1/2}$, where
    $(\mu_0, \Sigma_0) \approx_{\gamma_1+10 \rho_2/\sqrt{\ell}, 6 \gamma_1} (\textbf{0}, I)$. By \Cref{lem:volume-preservation}, this means that the volume of $\{(\mu_2, \Sigma_2): (\mu_2, \Sigma_2) \approx_{\gamma_1+10 \rho_2/\sqrt{\ell}, 6 \gamma_1} (\mu', \Sigma')\}$ equals the volume of $\{(\mu_0, \Sigma_0): (\mu_0, \Sigma_0) \approx_{\gamma_1+10 \rho_2/\sqrt{\ell}, 6 \gamma_1} (\textbf{0}, I)\}$.

    Thus, if $10 \rho \le \sqrt{\ell}$, then $\cT_2(\textbf{0}, I)$ can be covered by $e^{O(\ell \cdot d \log d)}$ regions, each of which has the same normalized volume as the volume of $\{(\mu_0, \Sigma_0): (\mu_0, \Sigma_0) \approx_{\gamma_1+10 \rho_2/\sqrt{\ell}, 6 \gamma_1} (\textbf{0}, I)\}$. Moreover, if we further have $\frac{10 \rho_2}{\sqrt{\ell}} \le \gamma_1$, then by \Cref{lem:volume-ratio-normalized}, this is at most $\left(1 + \frac{10 \rho_2}{\sqrt{\ell} \cdot \gamma_1}\right)^{2d^2} \cdot 6^d \cdot \nvol(\cT_1(\textbf{0}, I))$.

    We will set $\ell = \frac{100 (d \rho_2)^{2/3}}{\gamma_1^2}.$ Since $\rho_2 \ge \frac{1}{d}$, $\ell \ge 100$ as long as $\gamma_1 \le c_1 \le 1$. Also, $\ell \le 100 \cdot \left(\frac{\gamma_1^3}{1000} \cdot d^{3/2}\right)^{2/3} \cdot \frac{1}{\gamma_1^2} = d.$ Finally, $\frac{10 \rho_2}{\sqrt{\ell}} = \frac{\gamma_1 \cdot \rho_2^{2/3}}{d^{1/3}} \le \gamma_1,$ since $\rho_2 \le \sqrt{d}$. Overall, the ratio 
\[\frac{\nvol(\cT_2(\textbf{0}, I))}{\nvol(\cT_1(\textbf{0}, I))} \le e^{O(\ell \cdot d \log d)} \cdot \left(1 + \frac{10 \rho_2}{\sqrt{\ell} \cdot \gamma_1}\right)^{2d^2} \cdot 6^d \le e^{O(d^{5/3} \log d \cdot \rho_2^{2/3}/\gamma_1^2)}.\]

\bigskip

    Finally, we show how to remove the assumption that $\mu=\textbf{0}$ and $\Sigma=I$. First, note that for any general $\mu, \Sigma$, $(\mu_1, \Sigma_1) \in \cT_1(\textbf{0}, I)$ if and only if $(\Sigma^{1/2} \mu_1+\mu, \Sigma^{1/2} \Sigma_1 \Sigma^{1/2}) \in \cT_1(\mu, \Sigma),$ by \Cref{lem:equivalence-of-approx-scaling}. So, by \Cref{lem:volume-preservation}, $\nvol(\cT_1(\mu, \Sigma)) = \nvol(\cT_1(\textbf{0}, I))$.
    Next, $(\mu_2, \Sigma_2) \approx_{\gamma_1, \gamma_1} (\mu_1, \Sigma_1)$ and $(\mu_1, \Sigma_1) \approx_{\gamma_2, \rho_2, \gamma_2} (\textbf{0}, I)$ if and only if $(\Sigma^{1/2} \mu_2 + \mu, \Sigma^{1/2} \Sigma_2 \Sigma^{1/2}) \approx_{\gamma_1, \gamma_1} (\Sigma^{1/2} \mu_1 + \mu, \Sigma^{1/2} \Sigma_1 \Sigma^{1/2})$ and $(\Sigma^{1/2} \mu_1 + \mu, \Sigma^{1/2} \Sigma_1 \Sigma^{1/2}) \approx_{\gamma_2, \rho_2, \gamma_2} (\mu, \Sigma)$, by two applications of \Cref{lem:equivalence-of-approx-scaling}. So, by \Cref{lem:volume-preservation}, $\nvol(\cT_2(\mu, \Sigma)) = \nvol(\cT_2(\textbf{0}, I))$.
    Thus, the volume ratios stay the same as well.
    %
    %
\end{proof}

\section{Fine Approximation via Hypothesis Selection} \label{sec:hypothesis-selection}

In this section, we prove that if we know a very crude approximation to all of the Gaussian mixture components with sufficiently large weight, we can privately learn the full density of the Gaussian mixture model up to low total variation distance. This will be useful for proving both \Cref{thm:main} and \Cref{thm:1-d}.

We start with the following auxiliary lemma.

\begin{lemma} \label{lem:fine-approximation-net}
    Let $G \ge 1$ and $\zeta \le 1$ be some parameters. For some $d \ge 1$, let $\muhat \in \BR^{d}$, and let $\Sigmahat \in \BR^{d \times d}$ be positive definite. Let $\cU_{\muhat, \Sigmahat}$ be the set of $(\mu, \Sigma)$ such that $\frac{1}{G} \cdot \Sigmahat \preccurlyeq \Sigma \preccurlyeq G \cdot \Sigmahat$ and $\|\Sigmahat^{-1/2} (\mu - \muhat)\|_2 \le G$.  Then, there exists a net $\cB_{\muhat, \Sigmahat}$ of size $O(G \sqrt{d}/\zeta)^{d(d+3)}$ such that for every $(\mu, \Sigma) \in \cU_{\muhat, \Sigmahat}$, there exists $(\mu', \Sigma') \in \cB_{\muhat, \Sigmahat}$ such that $\TV(\cN(\mu, \Sigma), \cN(\mu', \Sigma')) \le \zeta$.
\end{lemma}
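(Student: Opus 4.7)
The plan is to first reduce to the standardized case $\muhat=\textbf{0}$, $\Sigmahat=I$, then build an explicit product grid. Total variation distance is invariant under invertible affine transformations of the underlying space, so the map $(\mu,\Sigma)\mapsto (\Sigmahat^{-1/2}(\mu-\muhat),\,\Sigmahat^{-1/2}\Sigma\Sigmahat^{-1/2})$ identifies $\cU_{\muhat,\Sigmahat}$ bijectively with $\cU_{\textbf{0},I}=\{(\mu,\Sigma):\|\mu\|_2\le G,\ G^{-1}I\preccurlyeq \Sigma\preccurlyeq GI\}$ while preserving TV distance between the corresponding Gaussian pairs. It therefore suffices to build a net for $\cU_{\textbf{0},I}$ of the desired size and pull it back through this map.

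For the standardized problem, I would use \Cref{lem:TV-standard} to convert the TV-closeness requirement into two metric conditions: to achieve $\TV(\cN(\mu,\Sigma),\cN(\mu',\Sigma'))\le\zeta$ it is enough that $\|\Sigma^{-1/2}(\mu-\mu')\|_2 \le \sqrt{2}\,\zeta$ and $\|\Sigma^{-1/2}\Sigma'\Sigma^{-1/2}-I\|_F \le \sqrt{2}\,\zeta$. Since $\|\Sigma^{-1/2}\|_{op}\le \sqrt{G}$ for every $\Sigma\in \cU_{\textbf{0},I}$, and $\|A M A\|_F\le \|A\|_{op}^2\|M\|_F$ for symmetric $M$, these are implied by the simpler Euclidean/Frobenius bounds $\|\mu-\mu'\|_2\le \sqrt{2}\,\zeta/\sqrt{G}$ and $\|\Sigma-\Sigma'\|_F\le \sqrt{2}\,\zeta/G$.

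Next I would take $\cB_{\muhat,\Sigmahat}$ to be the product of two grids. For the mean, a standard net of the Euclidean ball of radius $G$ in $\BR^d$ at scale $\Theta(\zeta/\sqrt{G})$ has size at most $O(G^{3/2}/\zeta)^d$. For the covariance, $\cU_{\textbf{0},I}$ projects into a Frobenius ball of radius $G\sqrt{d}$ in the $d(d+1)/2$-dimensional space of symmetric $d\times d$ matrices, so a grid at scale $\Theta(\zeta/G)$ on the upper-triangular coordinates has size at most $O(G^2\sqrt{d}/\zeta)^{d(d+1)/2}$. Multiplying, the net size is $O(G^{3/2}/\zeta)^{d}\cdot O(G^2\sqrt{d}/\zeta)^{d(d+1)/2}$. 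The routine bounds $G^{3/2}/\zeta\le (G\sqrt{d}/\zeta)^2$ and $G^2\sqrt{d}/\zeta\le(G\sqrt{d}/\zeta)^2$ (valid when $G,d\ge 1$ and $\zeta\le 1$) collapse this to $O(G\sqrt{d}/\zeta)^{2d+d(d+1)}=O(G\sqrt{d}/\zeta)^{d(d+3)}$, matching the target.

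The only mild subtlety is that grid points need not be positive definite, so $\cN(\mu',\Sigma')$ might not be a valid distribution at every grid point; I would simply discard any non-positive-definite points from $\cB_{\muhat,\Sigmahat}$. For any $\Sigma$ with eigenvalues in $[1/G,G]$, Weyl's inequality ensures that every symmetric $\Sigma'$ within Frobenius distance $\sqrt{2}\,\zeta/G$ has minimum eigenvalue at least $(1-\sqrt{2}\,\zeta)/G>0$ whenever $\zeta\le 1/2$, so the rounding we need is still present after pruning; and for $\zeta>1/2$ the claim is trivial, since running the argument with $\zeta$ replaced by $1/2$ does not worsen the stated bound. The main conceptual step is the reduction and the application of \Cref{lem:TV-standard}; after that, the proof is routine grid counting and I do not anticipate any serious obstacle.
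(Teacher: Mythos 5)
Your proof is correct and takes essentially the same approach as the paper: both reduce to the standardized case $\muhat=\textbf{0}$, $\Sigmahat=I$, construct a net of the right cardinality, and finish via \Cref{lem:TV-standard}. The only cosmetic differences are in the net construction (the paper uses a maximal packing in operator-norm distance so that the net automatically lies inside $\cU$, whereas you use a direct product grid and handle positive definiteness via Weyl) and in transferring the net to general $(\muhat,\Sigmahat)$ (the paper goes through \Cref{lem:equivalence-of-approx-scaling}, you invoke TV-invariance under affine bijections directly); both routes are valid and give the same bound.
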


\begin{proof}
    First, assume that $\Sigmahat = I$ and $\muhat = \textbf{0}$. Now, let us consider the set $\cU := \cU_{\textbf{0}, I} = \{(\mu, \Sigma): \|\mu\|_2 \le G, \frac{1}{G} \cdot I \preccurlyeq \Sigma \preccurlyeq G \cdot I\}$. Let $G' \ge G$ be a parameter that we will set later. Now, we can look at the \emph{un-normalized} volume of $\cU$ viewed in $\BR^{d(d+3)/2}$ (i.e., we are projecting to the upper-triangular part of $\Sigma$).
    First, we claim (using a standard volume argument) that there is a cover $\cB \subset \cU$ of size $(2G')^{d(d+3)}$, such that for every $(\mu, \Sigma) \in \cU$, there is $(\mu', \Sigma') \in \cB$ with $\|\mu-\mu'\|_2 \le \frac{1}{G'}$ and $\|\Sigma - \Sigma'\|_{op} \le \frac{1}{G'}$.
    
    To see why, consider a maximal packing of $(\mu_i', \Sigma_i') \in \cU$ such that for every $i \neq j$, either $\|\mu_i'-\mu_j'\|_2 > \frac{1}{G'}$ or $\|\Sigma_i' - \Sigma_j'\|_{op} > \frac{1}{G'}$. Then, this set is clearly a cover of $\cU$ (or else we could have increased the size of the packing, breaking maximality). Then, the sets $S_i := \{(\mu, \Sigma): \|\mu-\mu_i'\| \le \frac{1}{2G'}, \|\Sigma-\Sigma_i'\|_{op} \le \frac{1}{2G'}\}$ are disjoint, and are all contained in the set $S := \{(\mu, \Sigma): \|\mu\|_2 \le 2G', \|\Sigma\|_{op} \le 2G'\}$. $S$ is just a shifting and scaling of $S_i$ by a factor of $4(G')^2$, and thus $\vol(S) = (4(G')^2)^{d(d+3)/2} \cdot \vol(S_i) = (2G')^{d(d+3)} \cdot \vol(S_i)$. Because every $S_i$ is disjoint and contained in $S$, the number of such indices $i$ is at most $(2G')^{d(d+3)}$.

    Next, consider any $(\mu, \Sigma) \in \cU$ and $(\mu', \Sigma') \in \cB$ with $\|\mu-\mu'\|_2 \le \frac{1}{G'}, \|\Sigma - \Sigma'\|_{op} \le \frac{1}{G'}$. Then, $\|\Sigma^{-1/2} \Sigma' \Sigma^{-1/2} - I\|_{op} = \|\Sigma^{-1/2} (\Sigma' - \Sigma) \Sigma^{-1/2}\|_{op} \le \|\Sigma^{-1}\|_{op} \cdot \|\Sigma' - \Sigma\|_{op} \le \frac{G}{G'}$. Also, $\|\Sigma^{-1/2} (\mu-\mu')\|_2 \le \|\Sigma^{-1/2}\|_{op} \cdot \|\mu-\mu'\|_2 \le \frac{\sqrt{G}}{G'}$.
    In other words, we have found a net $\cB$ of size $(2G')^{d(d+3)}$ such that for every $(\mu, \Sigma) \in \cU$, there exists $(\mu', \Sigma') \in \cB$ with $(\mu', \Sigma') \approx_{G/G', G/G'} (\mu, \Sigma)$.
    
    \medskip

    Next, consider general $\muhat, \Sigmahat$.
    Note that $(\mu, \Sigma) \in \cU_{\muhat, \Sigmahat}$ is equivalent to $\mu = \Sigmahat^{1/2} \mu_0 + \muhat$, where $\|\mu_0\|_2 \le G$, and $\Sigma = \Sigmahat^{1/2} \Sigma_0 \Sigmahat^{1/2},$ where $\frac{1}{G} \cdot I \preccurlyeq \Sigma_0 \preccurlyeq G \cdot I$. So, if we consider the map $f: (\mu_0, \Sigma_0) \mapsto (\Sigmahat^{1/2} \mu_0 + \muhat, \Sigmahat^{1/2} \Sigma_0 \Sigmahat^{1/2}),$ this bijectively maps $\cU$ to $\cU_{\muhat, \Sigmahat}$. We can also consider the cover $\cB_{\muhat, \Sigmahat} = f(\cB)$, where $\cB$ is the cover constructed for $\cU$. Then, by \Cref{lem:equivalence-of-approx-scaling}, $(\mu_0', \Sigma_0') \approx_{G/G', G/G'} (\mu_0, \Sigma_0)$ if and only if $f(\mu_0', \Sigma_0') \approx_{G/G', G/G'} f(\mu_0, \Sigma_0)$. 
    
    Hence, regardless of the choice of $\muhat, \Sigmahat$, for every $(\mu, \Sigma) \in \cU_{\muhat, \Sigmahat},$ there exists $(\mu', \Sigma') \in \cB_{\muhat, \Sigmahat}$ with $(\mu', \Sigma') \approx_{G/G', G/G'} (\mu, \Sigma)$. Moreover, $|\cB_{\muhat, \Sigmahat}| = |\cB| \le (2G')^{d(d+3)}$. Moreover, if $(\mu', \Sigma') \approx_{G/G', G/G'} (\mu, \Sigma)$, then $\|\Sigma^{-1/2} (\mu'-\mu)\|_2 \le \frac{G}{G'}$ and $\|\Sigma^{-1/2} \Sigma \Sigma^{-1/2} - I\|_F \le \sqrt{d} \cdot \|\Sigma^{-1/2} \Sigma \Sigma^{-1/2} - I\|_{op} \le \frac{\sqrt{d} \cdot G}{G'}.$ 
    So, by \Cref{lem:TV-standard}, $\TV(\cN(\mu, \Sigma), \cN(\mu', \Sigma')) \le \frac{\sqrt{d} \cdot G}{G'}$. Hence, if we set $G' = \frac{G \sqrt{d}}{\zeta}$, then the size of the net $\cB_{\muhat, \Sigmahat}$ is $O(G \sqrt{d}/\zeta)^{d(d+3)}$ and for every $(\mu, \Sigma) \in \cU_{\muhat, \Sigmahat}$, there exists $(\mu', \Sigma') \in \cB_{\muhat, \Sigmahat}$ such that $\TV(\cN(\mu, \Sigma), \cN(\mu', \Sigma')) \le \zeta$.
\end{proof}

Next, we note the following result about differentially private hypothesis selection.

\begin{theorem}~\cite{bun2019private} \label{thm:private-hypothesis-selection}
    Let $\cH = \{H_1, \dots, H_M\}$ be a set of probability distributions over some domain $D$. There exists an $\eps$-differentially private algorithm (with respect to a dataset $\bX = \{X_1, \dots, X_n\}$) which has following guarantees.
    
    Let $\cD$ be an unknown probability distribution over $D$, and suppose there exists a distribution $H^* \in \cH$ such that $\TV(\cD, H^*) \le \alpha$. If $n \ge O\left(\frac{\log M}{\alpha^2} + \frac{\log M}{\alpha \eps}\right)$, and if $X_1, \dots, X_n$ are samples drawn independently from $\cD$, then the algorithm will output a distribution $\hat{H} \in \cH$ such that $\TV(\cD, \hat{H}) \le 4 \alpha$ with probability at least 9/10.
\end{theorem}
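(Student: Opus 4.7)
The plan is to privatize the classical Scheffé tournament via the exponential mechanism. For each ordered pair $(i,j)$ with $i \neq j$, define the Scheffé set $S_{ij} = \{x \in D : H_i(x) > H_j(x)\}$, so that $\TV(H_i, H_j) = H_i(S_{ij}) - H_j(S_{ij})$. Given samples $\bX = \{X_1, \dots, X_n\}$, let $\hat{p}_{ij}(\bX) = \frac{1}{n}\sum_{k=1}^n \mathbf{1}[X_k \in S_{ij}]$ be the empirical estimate of $\cD(S_{ij})$. The classical (non-private) tournament declares $H_i$ to beat $H_j$ whenever $\hat{p}_{ij}(\bX)$ is closer to $H_i(S_{ij})$ than to $H_j(S_{ij})$, and a standard argument shows that any hypothesis winning all of its matches is within a constant factor of the optimal TV distance from $\cD$, provided each $\hat{p}_{ij}$ is accurate up to additive error $O(\alpha)$.

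To make this $\eps$-DP, I would define the score
\[ s_i(\bX) \;=\; \min_{j \neq i} \Bigl(\, \hat{p}_{ij}(\bX) \;-\; \tfrac{H_i(S_{ij}) + H_j(S_{ij})}{2} \,\Bigr) \]
for each $i \in [M]$, measuring the minimum margin by which $H_i$ beats any competitor in the empirical Scheffé test. Each $\hat{p}_{ij}$ has sensitivity $1/n$ to changing a single entry of $\bX$, and taking a minimum of $1/n$-sensitive functions preserves $1/n$ sensitivity, so $s_i$ has sensitivity $1/n$. I would then output $\hat H = H_i$ with probability proportional to $\exp(\eps n\, s_i(\bX)/2)$ via the exponential mechanism, yielding $\eps$-DP.

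For accuracy, let $i^*$ index the promised $H^*$ with $\TV(\cD, H^*) \le \alpha$. A Hoeffding plus union bound over the at most $M^2$ Scheffé sets shows that when $n = \Omega(\log M/\alpha^2)$, with probability $29/30$ we have $|\hat{p}_{ij}(\bX) - \cD(S_{ij})| \le \alpha/10$ for all $(i,j)$ simultaneously. Combined with $|\cD(S_{i^* j}) - H^*(S_{i^* j})| \le \alpha$, this yields $s_{i^*}(\bX) \ge -(6/5)\alpha$. Conversely, for any $H_i$ with $\TV(\cD, H_i) > 4\alpha$, taking $j = i^*$ in the $\min$ and using $|H_i(S_{i i^*}) - H^*(S_{i i^*})| = \TV(H_i, H^*) \ge 3\alpha$ by the triangle inequality forces $s_i(\bX) \le s_{i^*}(\bX) - \Omega(\alpha)$. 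The standard utility guarantee of the exponential mechanism then gives $s(\hat H) \ge s_{i^*}(\bX) - O(\log M/(\eps n))$ with probability $29/30$, so choosing $n = \Omega(\log M/(\alpha\eps))$ closes the gap and forces $\TV(\cD, \hat H) \le 4\alpha$ with overall probability $9/10$.

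The main obstacle is bookkeeping the constants so that the final approximation comes out at $4\alpha$ rather than the looser bound the naive Scheffé analysis delivers; this is why the score is centered at the midpoint $\tfrac{1}{2}(H_i(S_{ij}) + H_j(S_{ij}))$, which makes $H^*$'s score approximately zero while penalizing every $H_i$ with TV gap $\ge 4\alpha$ by essentially the full gap. A secondary subtlety is that the $\min$ inside $s_i$ ranges over $M - 1$ competitors, so the utility analysis must union bound the Hoeffding slack over all $M^2$ Scheffé sets, which is exactly what forces the $\log M$ factor in both terms of the claimed sample complexity.
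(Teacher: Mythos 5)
The paper does not prove this result itself; it cites it to Bun, Kamath, Steinke, and Wu, so there is no internal proof to compare against. Evaluating your blind attempt on its own merits: the high-level plan (privatizing the Scheff\'e tournament via the exponential mechanism with a margin-based score) is the right one, and both your sensitivity calculation (the min of $1/n$-sensitive functions is $1/n$-sensitive) and your invocation of the exponential-mechanism utility theorem are correct. However, there is a genuine gap in the accuracy step.

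The claim that ``$\TV(H_i,H^*)\ge 3\alpha$ forces $s_i(\bX)\le s_{i^*}(\bX)-\Omega(\alpha)$'' does not follow from the bounds you have. Unpacking, for bad $H_i$ with $\TV(H_i,H_{i^*})>3\alpha$, the $j=i^*$ term in $s_i$ satisfies
\[
\hat p_{ii^*}-\tfrac{H_i(S_{ii^*})+H_{i^*}(S_{ii^*})}{2}
\;\le\; \tau^*+\tfrac{\alpha}{10}-\tfrac{\TV(H_i,H_{i^*})}{2}
\;<\;-\tfrac{2\alpha}{5},
\]
where $\tau^*:=\TV(\cD,H^*)\le\alpha$, so indeed $s_i<-2\alpha/5$. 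But your lower bound on $s_{i^*}$ is only $s_{i^*}\ge -\tau^*-\alpha/10\ge -11\alpha/10$, and this is essentially tight whenever $\cH$ contains some $H_j$ very close (in TV) to $H_{i^*}$: the $j$-th term in $\min_j$ is $\tfrac{\TV(H_{i^*},H_j)}{2}-\tau^*\pm\alpha/10$, which can be as small as $\approx -11\alpha/10$. Hence a bad $H_i$ can outscore $H_{i^*}$ by as much as roughly $7\alpha/10$, and the asserted separation does not hold. The underlying issue is that your intuition that centering at the midpoint ``makes $H^*$'s score approximately zero'' is wrong: the midpoint centering only guarantees $s_{i^*}\ge -\tau^*-\alpha/10$, not $s_{i^*}\approx 0$, precisely because the unrestricted $\min$ ranges over competitors arbitrarily close to $H^*$.

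The standard fix, which is essentially what the cited reference does, is to restrict the $\min$ in $s_i$ to competitors $j$ with $\TV(H_i,H_j)\ge 2\alpha$ (outputting the hypothesis directly if the set of competitors is empty). Then each term of $s_{i^*}$ is at least $\alpha-\tau^*-\alpha/10>-\alpha/10$, while every bad $H_i$ still has $i^*$ as an eligible competitor (since $\TV(H_i,H_{i^*})>3\alpha>2\alpha$) and hence $s_i<-2\alpha/5$, yielding a genuine gap of about $3\alpha/10$. Without the restriction, one can still salvage an $O(\alpha)$ bound by a different route: from $s_{\hat H}\ge s_{i^*}-\alpha/100\ge -\tau^*-\alpha/10-\alpha/100$ and the inequality $\TV(H_{\hat H},H_{i^*})\le 2(\tau^*+\alpha/10-s_{\hat H})$ one gets $\TV(\cD,\hat H)\le 5\tau^*+O(\alpha)\le 5.4\alpha$, but this is not the stated $4\alpha$, and it is not the chain of inequalities your write-up asserts.
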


Given \Cref{lem:fine-approximation-net} and \Cref{thm:private-hypothesis-selection}, we are in a position to convert any crude approximation into a fine approximation. Namely, we prove the following result.

\begin{lemma} \label{lem:hypothesis-selection}
    Let $\cD$ represent an unknown GMM with representation $\{(w_i, \mu_i, \Sigma_i)\}_{i = 1}^k$. Let $G \ge 1$ be some fixed parameter. Then there exists an $\eps$-differentially algorithm, on 
\[n \ge O\left(\frac{d^2 k \cdot \log(G \cdot k \cdot k' \cdot \sqrt{d}/\alpha)}{\alpha^2} + \frac{d^2 k \cdot \log(G \cdot k \cdot k' \cdot \sqrt{d}/\alpha)}{\alpha \eps}\right)\]
    samples, with the following property.

    Suppose the algorithm is given as input a set $\{(\muhat_j, \Sigmahat_j)\}_{j=1}^{k'}$ for some $k' \ge 1$, such that for every $(\mu_i, \Sigma_i)$ with weight $w_i \ge \frac{\alpha}{k},$ there exists $j \le k'$ such that $\frac{1}{G} \cdot \Sigmahat_j \preccurlyeq \Sigma_i \preccurlyeq G \cdot \Sigmahat_j$ and $\|\Sigmahat_j^{-1/2} (\mu_i - \muhat_j)\|_2 \le G$. Then, if the samples are $X_1, \dots, X_n \overset{i.i.d.}{\sim} \cD$, then with probability at least $9/10$, the algorithm outputs a mixture of at most $k$ Gaussians $H$, with $\TV(\cD, H) \le O(\alpha)$.
\end{lemma}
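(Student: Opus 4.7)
The plan is to build a finite hypothesis class $\cH$ of mixtures of at most $k$ Gaussians whose log-cardinality matches the stated sample complexity, show $\cH$ contains some candidate within $O(\alpha)$ total variation distance of $\cD$, and then invoke \Cref{thm:private-hypothesis-selection}.

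First I would reduce to a trimmed GMM. Let $\cD'$ be the mixture obtained from $\cD$ by discarding every component of weight $w_i < \alpha/k$ and renormalizing the remaining weights. Since at most $k$ components are discarded and their total weight is at most $\alpha$, a standard coupling gives $\TV(\cD, \cD') \le \alpha$. So it suffices to find $H^* \in \cH$ with $\TV(\cD', H^*) \le O(\alpha)$.

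Next I would construct $\cH$ by combining covers of single Gaussians with a cover of the simplex of mixing weights. For each $j \in [k']$, apply \Cref{lem:fine-approximation-net} with parameter $\zeta = \alpha/k$ to obtain a net $\cB_{\muhat_j, \Sigmahat_j}$ of size $O(G\sqrt{d} k/\alpha)^{d(d+3)}$. Let $\cB = \bigcup_{j=1}^{k'} \cB_{\muhat_j, \Sigmahat_j}$, so $|\cB| \le k' \cdot O(G\sqrt{d}k/\alpha)^{d(d+3)}$. Let $W$ be the set of weight vectors $(w_1', \dots, w_k')$ that are each a multiple of $\alpha/k^2$ and sum to $1$; we have $|W| \le (k^2/\alpha)^k$. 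Define $\cH$ to be all mixtures $\sum_{i=1}^k w_i' \cN(\mu_i', \Sigma_i')$ with $(w_1', \dots, w_k') \in W$ and each $(\mu_i', \Sigma_i') \in \cB$. Then
\[
\log |\cH| \;\le\; k \log |\cB| + \log |W| \;=\; O\!\left( kd^2 \log(G k k' \sqrt{d}/\alpha) \right),
\]
absorbing the lower order $k \log(k/\alpha)$ term.

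Now I would show $\cH$ contains a good approximation to $\cD'$. For every surviving component $(\mu_i, \Sigma_i)$, the hypothesis guarantees some index $j(i)$ with $(\mu_i, \Sigma_i) \in \cU_{\muhat_{j(i)}, \Sigmahat_{j(i)}}$, so by \Cref{lem:fine-approximation-net} there is $(\mu_i', \Sigma_i') \in \cB_{\muhat_{j(i)}, \Sigmahat_{j(i)}}$ with $\TV(\cN(\mu_i, \Sigma_i), \cN(\mu_i', \Sigma_i')) \le \alpha/k$. Rounding the (renormalized) weights of $\cD'$ to the nearest multiple of $\alpha/k^2$ costs at most $O(\alpha)$ in total variation distance, and replacing each component by its net representative costs at most $k \cdot (\alpha/k) = \alpha$ more. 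So there is $H^* \in \cH$ with $\TV(\cD', H^*) \le O(\alpha)$, hence $\TV(\cD, H^*) \le O(\alpha)$.

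Finally I would apply \Cref{thm:private-hypothesis-selection} to $\cH$ with accuracy parameter $O(\alpha)$ and privacy parameter $\eps$: the required sample size is $O(\log|\cH|/\alpha^2 + \log|\cH|/(\alpha \eps))$, which matches the bound in the lemma statement. The output $\hat{H}$ satisfies $\TV(\cD, \hat{H}) \le 4 \cdot O(\alpha) = O(\alpha)$ with probability at least $9/10$. No step is truly difficult; the main thing to get right is the bookkeeping on $\zeta$ and the weight discretization so that the three sources of approximation error (trimming, weight rounding, and per-component net error) each remain $O(\alpha)$ while the log-cardinality of $\cH$ stays $O(kd^2 \log(Gkk'\sqrt{d}/\alpha))$.
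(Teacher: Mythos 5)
Your proposal is correct and follows essentially the same approach as the paper: both build a hypothesis class by taking the union of the fine nets $\cB_{\muhat_j, \Sigmahat_j}$ from \Cref{lem:fine-approximation-net} (with $\zeta = \alpha/k$), crossing with a discretized weight simplex, verifying that some hypothesis is $O(\alpha)$-close to $\cD$, and invoking \Cref{thm:private-hypothesis-selection}. The only cosmetic differences are that you trim and renormalize the small-weight components before approximating (the paper instead directly assigns them arbitrary net elements and accounts for their $\le \alpha$ total mass in the TV bound), and you use grid size $\alpha/k^2$ rather than $\alpha/k$ for the weights; neither affects correctness or the final bound.
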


\begin{proof}
    Let $\zeta = \frac{\alpha}{k}$.
    For each $(\muhat_j, \Sigmahat_j)$, let $\cB_{\muhat_j, \Sigmahat_j}$ be as in \Cref{lem:fine-approximation-net}. We define $\hat{\cB} = \bigcup_{j \le k'} \cB_{\muhat_j, \Sigmahat_j}$.

    Let $\cH$ be the set of hypotheses consisting of mixtures of up to $k$ Gaussians $\cN(\mu_i', \Sigma_i')$ with weights $w_i'$, with every $(\mu_i', \Sigma_i') \in \hat{\cB}$ and with every $w_i'$ an integral multiple of $\zeta$. Note that the number of hypothesis $M = |\cH|$ is at most $|\hat{\cB}|^{O(k)} \cdot (1/\zeta)^{O(k)} = (k' \cdot G \cdot \sqrt{d}/\zeta)^{O(d^2 \cdot k)}$. Since we set $\zeta = \alpha/k$, $M \le (G \cdot \frac{k k' \sqrt{d}}{\alpha})^{O(d^2 \cdot k)}$.

    Next, let us consider the true GMM $\cD$, with representation $\{(w_i, \mu_i, \Sigma_i)\}_{i = 1}^k$. Because every $(\mu_i, \Sigma_i)$ with $w_i \ge \frac{\alpha}{k}$ satisfies $\frac{1}{G} \cdot \Sigmahat_j \preccurlyeq \Sigma_i \preccurlyeq G \cdot \Sigmahat_j$ and $\|\Sigmahat_j^{-1/2} (\mu_i-\muhat_j)\|_2 \le G,$ by \Cref{lem:fine-approximation-net}, there exists $(\mu_i', \Sigma_i') \in \hat{\cB}$ such that $\TV(\cN(\mu_i, \Sigma_i), (\mu_i', \Sigma_i')) \le \zeta$. Moreover, we can round each $w_i \ge \frac{\alpha}{k}$ to some $w_i'$ which is an integral multiple of $\zeta$, such that $|w_i-w_i'| \le \zeta$. Finally, for each $w_i < \frac{\alpha}{k}$, we can choose an arbitrary Gaussian in $\hat{\cB}$ and round $w_i$ to some $w_i'$. Then, the total variation distance between $\cD$ and the GMM with representation $\{(w_i', \mu_i', \Sigma_i')\}_{i = 1}^k$ is at most
\[\sum_{i \le k: w_i > \alpha/k} \left(\frac{\alpha}{k} + |w_i-w_i'|\right) + \sum_{i \le k: w_i \ge \alpha/k} (\zeta + |w_i-w_i'|) \le \alpha + O(k \cdot \zeta).\]
    Hence, if we set $\zeta = \frac{\alpha}{k}$, there exists a distribution $H^* \in \cH$ with $\TV(\cD, H^*) \le O(\alpha)$.

    Therefore, the algorithm of \Cref{thm:private-hypothesis-selection}, using $n \ge O\left(\frac{d^2 k \cdot \log(G \cdot k \cdot k' \cdot \sqrt{d}/\alpha)}{\alpha^2} + \frac{d^2 k \cdot \log(G \cdot k \cdot k' \cdot \sqrt{d}/\alpha)}{\alpha \eps}\right)$ samples, will find $H \in \cH$ such that $\TV(\cD, H) \le O(\alpha)$, and is $\eps$-differentially private.
\end{proof}

\paragraph{Pseudocode:} We give a simple pseudocode for the algorithm of \Cref{lem:hypothesis-selection}, in Algorithm~\ref{alg:hypothesis-selection}.

\begin{algorithm2e}[htbp]
\DontPrintSemicolon
\SetAlgoLined
\SetNoFillComment
\caption{\textsc{Fine-Estimate}($X_1, X_2, \dots, X_n \in \BR^d$, $d$, $k$, $k'$, $\eps, \alpha$, $G$, $\{(\muhat_j, \Sigmahat_j)\}_{j \le k'}$)}
\label{alg:hypothesis-selection}
	\KwIn{Samples $X_1, \dots, X_n$, and crude predictions $(\muhat_j, \Sigmahat_j)$, where $1 \le j \le k'$ for some $k'$.}
	\KwOut{An $(\eps, 0)$-DP prediction $H$, which is a mixture over at most $k$ Gaussians.}
    Set $\zeta = \alpha/k$.\;
    \For{$j=1$ to $k'$}{
        Define the sets $\cB_{\muhat_j, \Sigmahat_j}$ as in \Cref{lem:fine-approximation-net}, for parameters $G, \zeta$.\;
    }
    $\hat{\cB} \leftarrow \bigcup_{j \le k'} \cB_{\muhat_j, \Sigmahat_j}$.\;
    Let $\cH$ be the set of mixtures $\{(w_i', \mu_i', \Sigma_i')\}$ of $k$ or fewer components, where every $(\mu_i', \Sigma_i') \in \hat{\cB}$ and every $w_i'$ is an integral multiple of $\zeta.$\;
    Run the $\eps$-DP algorithm of \Cref{thm:private-hypothesis-selection} on $X_1, \dots, X_n$ with respect to $\cH$. 
\end{algorithm2e}

\section{The High Dimensional Setting} \label{sec:main-high-d}

In this section, we provide the algorithm and analysis for \Cref{thm:main}. We first describe and provide pseudocode for the algorithm, and then we prove that it can privately learn mixtures of $d$-dimensional Gaussians with low sample complexity.

\subsection{Algorithm} \label{subsec:main-alg}

\paragraph{High-Level Approach:}
Suppose that the unknown distribution is a GMM with representation $\{(w_i, \mu_i, \Sigma_i)\}_{i = 1}^k$.

Define $\Theta$ to be the set of all feasible mean-covariance pairs $(\mu, \Sigma)$, i.e., where $\mu \in \BR^d$ and $\Sigma \in \BR^{d \times d}$ is positive definite. We also start off with a set $\Omega = \Theta$, which will roughly characterize the region of ``remaining'' mean-covariance pairs.

At a high level, our algorithm proceeds as follows. We will first learn a very crude approximation of each $(\mu_i, \Sigma_i)$, one at a time. Namely, using roughly $\left(\frac{\eps}{\sqrt{k \log (1/\delta)}}, \frac{\delta}{k}\right)$-DP, we will learn some $(\muhat_1, \Sigmahat_1) \in \Omega$ which is ``vaguely close'' to some $(\mu_i, \Sigma_i)$. We will then remove every $(\mu, \Sigma)$ that is vaguely close to this $(\muhat, \Sigmahat)$ from our $\Omega$, and then attempt to repeat this process. We will repeat it up to $k$ times, in hopes that every $(\mu_i, \Sigma_i)$ is vaguely close to some $(\muhat_j, \Sigmahat_j)$. By advanced composition, the full set of $\{(\muhat_j, \Sigmahat_j)\}$ is still $(\eps, \delta)$-DP.

At this point, the remainder of the algorithm is quite simple. Namely, we have some crude estimate for every $(\mu_i, \Sigma_i)$ (it is ``vaguely close'' to some $(\muhat_j, \Sigmahat_j)$). Moreover, one can create a fine net of roughly $e^{\tilde{O}(d^2)}$ $(\mu, \Sigma)$-pairs that cover all mean-covariance pairs vaguely close to each $(\muhat_j, \Sigmahat_j)$, since the dimension of $(\mu, \Sigma)$ is $O(d^2)$. Moreover, the weight vector $(w_1, \dots, w_k)$ has a fine net of size roughly $e^{\tilde{O}(k)}$. As a result, we can reduce the problem to a small set of hypotheses: there are roughly $e^{\tilde{O}(d^2)}$ choices for each $(\mu_i, \Sigma_i)$, and $e^{\tilde{O}(k)}$ choices for $w$, for a total of $e^{\tilde{O}(k \cdot d^2)}$ choices for the mixture of Gaussians. We can then apply known results on private hypothesis selection~\cite{bun2019private}, which will suffice.

The main difficulty in the algorithm is in privately learning some $(\muhat, \Sigmahat)$ which is ``vaguely close'' to some $(\mu_i, \Sigma_i)$. We accomplish this task by applying the robustness-to-privacy conversion of~\cite{HopkinsKMN23}, along with a carefully constructed score function, which we will describe later in this section.

\paragraph{Algorithm Description:}
Let $c_0 \le 0.01$ and $C_0 > 1$ be the constants of \Cref{cor:robust}, and $c_1 \le 0.01$ be the constant of \Cref{lem:volume-ratio-normalized}. 
Let $c_2 \le 0.1 \cdot \min(c_0, c_1)$ be a sufficiently small constant.
We set parameters $\eta^* = \frac{c_2 \cdot \alpha}{8 k}, \eta = \frac{\eta^*}{10}, \eps_0 = \frac{\eps}{\sqrt{4k \log(1/\delta)}}, \delta_0 = \frac{\delta}{2k}, \beta_0 = e^{-d}$. We also set $m = \tcO(d^{1.75}), N = \tcO\left(m \cdot \frac{\sqrt{k \log (1/\delta)}}{\eps} + \frac{\sqrt{k} \cdot \log^{3/2} (1/\delta)}{\eps} + kd\right),$ and $n = N \cdot \frac{2k}{\alpha}$. Finally, we define parameters $\gamma = \tau = c_2$ and $\rho = c_2 \cdot d^{1/8}$. (See Lines \ref{line:high-d-start}--\ref{line:high-d-end-parameter} of Algorithm~\ref{alg:main} for a more precise description of some of the parameters.)

We now define the main score function. To do so, we first set some auxiliary parameter $\eta' = \gamma' = \frac{c_2}{8 C_0}$ and $\rho' = \frac{\rho}{8 C_0}$.
We will consider the (deterministic) algorithm $\cA$ of \Cref{cor:robust}, where $\cA$ is given parameters $\eta', \gamma', \rho', \beta_0$, that acts on a dataset $\bZ$ and outputs either $\perp$ or some $(\muhat, \Sigmahat)$. 
Next, for some domain $\Omega \subset \Theta$ of ``feasible'' mean-covariance pairs $(\mu, \Sigma)$, we will define a function $f_\Omega$, which takes as input a dataset $\bY$ of size $N$ and a mean-covariance pair $(\mu, \Sigma) \in \Theta$, and outputs
\begin{equation} \label{eq:f}
    f_\Omega(\bY, \mu, \Sigma) = \begin{cases}
    1 & \text{if} \hspace{0.2cm} (\mu, \Sigma) \in \Omega \,, \hspace{0.2cm} \cA(\bY) \approx_{\gamma, \rho, \tau} (\mu, \Sigma) \,, \hspace{0.2cm} \text{and} \hspace{0.2cm} \mathop{\BP}\limits_{\substack{\bZ \subset \bY \\ |\bZ| = m}} [\cA(\bZ) \approx_{\gamma, \rho, \tau} (\mu, \Sigma)] \ge \frac{2}{3} \\
    0 & \text{else} \,.
\end{cases}
\end{equation}
Finally, given a dataset $\bX$ of size $n$ and $(\tmu, \tSigma) \in \Theta$, we define the score function
\begin{multline} \label{eq:score}
    \cS_{\Omega}((\tmu, \tSigma), \bX) = \min\big\{t: \hspace{0.1cm} \exists \bX', \bY', \mu, \Sigma \text{ such that } \\ \dH(\bX, \bX') = t, \bY' \subset \bX', |\bY'| = N, (\tmu, \tSigma) \approx_{\gamma, \tau} (\mu, \Sigma), f_\Omega(\bY', \mu, \Sigma) = 1\big\}.
\end{multline}
Note that $(\tmu, \tSigma)$ does not need to be in $\Omega$, but it must satisfy $(\tmu, \tSigma) \approx_{\gamma, \tau} (\mu, \Sigma)$ for some $(\mu, \Sigma) \in \Omega$.
Also, note that it is possible that no matter how one changes the data points, the conditions are never met (for instance if $(\tmu, \tSigma) \not\approx_{\gamma, \tau} (\mu, \Sigma)$ for any $(\mu, \Sigma) \in \Omega$). This is not a problem: we will simply set the score to be $+\infty$ if this occurs.

\medskip

With this definition of score function $\cS_{\Omega}$, we can let $\cM$ be an $(\eps_0, \delta_0)$-DP algorithm based on \Cref{thm:approx_dp_general_main}, with the settings of $n, \eta, \eta^*, \eps_0, \delta_0, \beta_0$ as above, and with $D = \frac{n(n+3)}{2}$. Here, we recall that $(\tilde{\mu}, \tilde{\Sigma})$ is viewed as $\frac{n(n+3)}{2}$-dimensional by only considering the upper-triangular part of $\tilde{\Sigma}$. We also assume the domain is $\Theta$ and the normalized volume is as in~\eqref{eq:normalized-volume}.

Given this algorithm $\cM$ (which implicitly depends on $\Omega$), the algorithm works as follows. We will actually draw a total of $n+n'$ samples (for $n$ as above, and $n'$ to be defined later), though we start by just looking at the first $n$ samples $\bX = \{X_1, \dots, X_n\}$. 
We initially set $\Omega = \Theta$, and run the following procedure up to $k$ times, or until the algorithm outputs $\perp$. For the $j^{\text{th}}$ iteration, we use $\cM(\bX)$ to compute some prediction $(\muhat_j, \Sigmahat_j)$ (or possibly we output $\perp$). Assuming we do not output $\perp$, we remove from $\Omega$ every $(\mu, \Sigma)$ such that $n^{-12} \cdot \Sigmahat_j \preccurlyeq \Sigma \preccurlyeq n^{12} \cdot \Sigmahat_j$ and $\|\Sigmahat_j^{-1/2} (\mu-\muhat_j)\|_2 \le n^{12}$.

At the end, we have computed some $\{(\muhat_j, \Sigmahat_j)\}_{j=1}^{k'}$ where $0 \le k' \le k$. If $k' = 0$ we simply output $\perp$. Otherwise, we will draw $n' = \tilde{O}\left(\frac{d^2 k}{\alpha^2} + \frac{d^2 k}{\alpha \eps}\right)$ fresh samples. We run the $\eps$-DP hypothesis selection based algorithm, Algorithm~\ref{alg:hypothesis-selection} on the fresh samples, using parameters $G = n^{12}$ and $\{(\muhat_j, \Sigmahat_j)\}_{j=1}^{k'}$.

\paragraph{Pseudocode:} We give pseudocode for the algorithm in Algorithm~\ref{alg:main}.

\begin{algorithm2e}[htbp]
\DontPrintSemicolon
\SetAlgoLined
\SetNoFillComment
\caption{\textsc{Estimate}($X_1, X_2, \dots, X_{n+n'} \in \BR^d$, $k$, $\eps, \delta, \alpha$)}
\label{alg:main}
	\KwIn{Samples $X_1, \dots, X_n$, $X_{n+1},\dots, X_{n+n'}$.}
	\KwOut{An $(\eps, \delta)$-DP prediction $\{(\tmu_i, \tSigma_i), \widetilde{w}_i\}$.}
 \tcc{Set parameters}
    Let $c_2$ be a sufficiently small constant and $K = \poly\log(d, k, \frac{1}{\alpha}, \frac{1}{\eps}, \log\frac{1}{\delta})$ be sufficiently large. \label{line:high-d-start}\;
    $\eta^* \leftarrow \frac{c_2 \cdot\alpha}{8k},\,$ $\eta \leftarrow \frac{\eta^*}{10},\,$ $\eps_0 \leftarrow \frac{\eps}{\sqrt{4 k\log(1/\delta)}},$ $\delta_0 \leftarrow \frac{\delta}{2k}$, $\beta_0 \leftarrow e^{-d}$.\;
    $m = K \cdot d^{1.75},\,$ $N \leftarrow K \cdot \left(\frac{m + \log(1/\delta_0)}{\eps_0} + kd\right),\,$ $n \leftarrow N \cdot \frac{2k}{\alpha}$.\;
    $\gamma, \tau \leftarrow c_2$, $\rho \leftarrow 10 \cdot d^{1/8}$. \label{line:high-d-end-parameter}\;
    $\Theta, \Omega \leftarrow \{(\mu, \Sigma): \mu \in \BR^d$, $\Sigma\in \BR^{d \times d} \text{ Symmetric, Positive Definite}\}$. \;
    \tcc{Get $\bX$}
    Obtain samples $\bX \leftarrow \{X_1, X_2, \dots, X_{n}\}$.\;
    \tcc{Learn a crude approximation $(\muhat_j, \Sigmahat_j)$ of the mean-covariance pairs, one at a time}
    \For{$j=1$ to $k$ \label{line:high-d-start-of-coarse}}{
        Define $f_\Omega(\bY, \mu, \Sigma)$ and $\cS_{\Omega}((\mu, \Sigma), \bX)$ as in \eqref{eq:f} and \eqref{eq:score}, respectively. \label{line:high-d-start-of-loop}\;
        Let $\cM$ be the $(\eps_0, \delta_0)$-DP algorithm obtained by \Cref{thm:approx_dp_general_main} with the score function $\cS_{\Omega},$ with parameters $n, \eta, \eta^*, \eps_0, \delta_0, \beta_0$ as defined above, $D = \frac{n(n+3)}{2}$, domain $\Theta$, and $p(\mu, \Sigma) = (\det \Sigma)^{-(d+2)/2}$.\;
        $A \leftarrow \cM(\bX)$\;
        \uIf{$A \neq \perp$}{
            $\muhat_j, \Sigmahat_j \leftarrow A$\;
            $\Omega \leftarrow \Omega \backslash \{(\mu, \Sigma): n^{-12} \cdot \Sigmahat_j \preccurlyeq \Sigma \preccurlyeq n^{12} \cdot \Sigmahat_j \text{ and } \|\Sigmahat_j^{-1/2} (\mu-\muhat_j)\|_2 \le n^{12}\}$\;
        }
        \Else{
            \text{Break}\tcp*[f]{Break out of the for loop} \;
        } \label{line:high-d-end-of-loop}
    } \label{line:high-d-end-of-coarse}    
    \tcc{Fine approximation, via private hypothesis selection}
    Set $n' \leftarrow K \cdot \left(\frac{d^2 k}{\alpha^2} + \frac{d^2 k}{\alpha \eps}\right)$. \label{line:highd-start-of-fine}\;
    Sample $X_{n+1}, \dots, X_{n+n'}$, and redefine $\bX \leftarrow \{X_{n+1}, \dots, X_{n+n'}\}$.\;
    Run Algorithm~\ref{alg:hypothesis-selection} on $\bX$, $\{(\muhat_j, \Sigmahat_j)\}$, with $k' = \#\{(\muhat_j, \Sigmahat_j)\}$ and $G = n^{12}$. \label{line:highd-end-of-fine}
\end{algorithm2e}

\subsection{Analysis of Crude approximation} \label{subsec:main-analysis}

In this section, we analyze Lines \ref{line:high-d-start-of-coarse}--\ref{line:high-d-end-of-coarse} of Algorithm~\ref{alg:main}. The main goal is to show that the algorithm is private, and that for samples drawn from a mixture of Gaussians, every component $(\mu_i, \Sigma_i)$ with large enough weight $w_i$ is ``vaguely close'' to some $(\muhat_j, \Sigmahat_j)$ computed by the algorithm.

First, we show that when the samples are actually drawn as a Gaussian mixture, then under some reasonable conditions, any $(\mu, \Sigma)$ close to a true mean-covariance pair has low score.

\begin{proposition} \label{prop:X-feasibility}
    Suppose $\bX = \{X_1, \dots, X_n\}$ is drawn from a Gaussian mixture model, with representation $\{(w_i, \mu_i, \Sigma_i)\}_{i = 1}^k$. Then, with $1-O(k \cdot \beta_0)$ probability over $\bX$, for any set $\Omega$ of mean-covariance pairs, for all $i \in [k]$ such that $(\mu_i, \Sigma_i) \in \Omega$ and $w_i \ge \alpha/k$, and for all $(\tmu, \tSigma) \approx_{\gamma, \tau} (\mu_i, \Sigma_i)$, $\cS_{\Omega}((\tmu, \tSigma), \bX) = 0$.
\end{proposition}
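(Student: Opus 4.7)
The plan is as follows. To show $\cS_\Omega((\tmu, \tSigma), \bX) = 0$, it suffices to exhibit, with $\bX' = \bX$ (i.e., $t = 0$), a subset $\bY \subset \bX$ of size $N$ together with a pair $(\mu, \Sigma) \in \Omega$ satisfying $(\tmu, \tSigma) \approx_{\gamma, \tau} (\mu, \Sigma)$ and $f_\Omega(\bY, \mu, \Sigma) = 1$. I will simply take $(\mu, \Sigma) = (\mu_i, \Sigma_i)$: then the approximation condition is immediate from the hypothesis $(\tmu, \tSigma) \approx_{\gamma, \tau} (\mu_i, \Sigma_i)$, and $(\mu_i, \Sigma_i) \in \Omega$ is also part of the hypothesis. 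The task therefore reduces to producing, for each eligible $i$, a witness subset $\bY_i \subset \bX$ of size $N$ with $f_\Omega(\bY_i, \mu_i, \Sigma_i) = 1$, i.e., satisfying the two $\cA$-based conditions in~\eqref{eq:f}. I will take $\bY_i$ to be $N$ samples drawn from the $i$-th component.

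First I need enough component-$i$ samples in $\bX$. Letting $n_i$ denote this count, since $n = N \cdot 2k/\alpha$ and $w_i \ge \alpha/k$, we have $\BE[n_i] \ge 2N$, so a Chernoff bound gives $\BP[n_i < N] \le e^{-\Omega(N)}$; since $N \ge \tcO(d^{7/4}) \gg d$ and $\beta_0 = e^{-d}$, this is $O(\beta_0)$. Once $n_i \ge N$, fix $\bY_i$ to be any $N$ of these samples (say, the first $N$ in index order); conditional on the component assignments, $\bY_i$ is distributed as $N$ i.i.d. draws from $\cN(\mu_i, \Sigma_i)$. Now I verify the two conditions of $f_\Omega(\bY_i, \mu_i, \Sigma_i) = 1$. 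The parameters are chosen so that both $m$ and $N$ exceed the sample complexity threshold $\tcO((d+\log(1/\beta_0))/(\gamma')^2 + (d+\log(1/\beta_0))^2/(\rho')^2) = \tcO(d^{7/4})$ required to invoke \Cref{cor:robust} with parameters $\eta', \gamma', \rho', \beta_0$. Applying Property 2 of \Cref{cor:robust} to $\bY_i$, with probability $\ge 1 - O(\beta_0)$ over $\bY_i$, $\cA(\bY_i) \neq \perp$ and $\cA(\bY_i) \approx_{C_0 \gamma', C_0 \rho', C_0 \gamma'} (\mu_i, \Sigma_i)$; since $\gamma' = \gamma/(8C_0)$, $\rho' = \rho/(8C_0)$ and $\gamma = \tau = c_2$, this implies $\cA(\bY_i) \approx_{\gamma, \rho, \tau} (\mu_i, \Sigma_i)$, giving the first condition.

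For the random-subset condition, note that by exchangeability a uniformly random $\bZ \subset \bY_i$ of size $m$ is marginally distributed as $m$ i.i.d. samples from $\cN(\mu_i, \Sigma_i)$. Hence, by Property 2 of \Cref{cor:robust} applied with size $m$,
\[
\BP_{\bY_i, \bZ}\bigl[\cA(\bZ) \not\approx_{\gamma, \rho, \tau} (\mu_i, \Sigma_i)\bigr] \le O(\beta_0).
\]
Applying Markov's inequality to the conditional failure probability $p(\bY_i) := \BP_{\bZ \subset \bY_i}[\cA(\bZ) \not\approx_{\gamma, \rho, \tau} (\mu_i, \Sigma_i)]$ yields $\BP_{\bY_i}[p(\bY_i) \ge 1/3] \le O(\beta_0)$, so the second clause of $f_\Omega$ holds on an event of probability $1 - O(\beta_0)$. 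Union-bounding the Chernoff step and the two $\cA$-events over the (at most $k$) eligible indices $i$ gives the desired $1 - O(k\beta_0)$ guarantee. Crucially, the witnesses $\bY_i$ and the underlying high-probability event depend only on the sampling and the component assignments, not on $\Omega$, so the conclusion holds uniformly over all $\Omega$. The only mildly subtle step is the Markov argument that upgrades the expected failure bound over $(\bY_i, \bZ)$ into a per-$\bY_i$ guarantee of a $2/3$ success rate over the subset draw; everything else is a routine combination of a Chernoff bound with the robustness guarantee of \Cref{cor:robust}.
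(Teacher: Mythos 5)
Your proof is correct and follows essentially the same route as the paper: fix $(\mu,\Sigma)=(\mu_i,\Sigma_i)$, use a Chernoff bound to guarantee at least $N$ component-$i$ samples, form a size-$N$ witness $\bY_i$ of i.i.d.\ draws from $\cN(\mu_i,\Sigma_i)$, invoke Part 2 of \Cref{cor:robust} on $\bY_i$ and on a random $m$-subset $\bZ$, and upgrade the expected failure over $(\bY_i,\bZ)$ to a $2/3$ per-$\bY_i$ success rate via Markov before union-bounding over $i$. The only cosmetic difference is that you take the first $N$ component-$i$ samples whereas the paper takes a random size-$N$ subset; both yield the same i.i.d.\ marginal for $\bY_i$ and for $\bZ$, so the arguments are interchangeable.
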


\begin{proof}
    Fix any $i \in [k]$ with $w_i \ge \alpha/k$.
    Set $\mu = \mu_i$ and $\Sigma = \Sigma_i$. Also, let $\bX' = \bX$, and $\bY$ be a random subset of size $N$ among the samples in $\bX$ actually drawn from $\cN(\mu_i, \Sigma_i)$. Note that the number of such samples has distribution $\Bin(n, w_i)$, which for $w_i \ge \frac{\alpha}{k}$ and $n = \frac{2k}{\alpha} \cdot N$, is at least $N$ with $e^{-\Omega(N)} \le \beta_0$ failure probability.
    
    Then, $\bY$ is just $N$ i.i.d. samples from $\cN(\mu_i, \Sigma_i)$. So, if we draw a random subset $\bZ$ of size $m$ of $\bY$, it has the same distribution as $m$ i.i.d. samples from $\cN(\mu_i, \Sigma_i)$. We apply Part 2 of \Cref{cor:robust} (where we use parameters $\eta', \gamma', \rho'$ in the application). Note that $m \ge \tcO\left(\frac{d}{{\gamma'}^2} + \frac{d^2}{{\rho'}^2}\right)$, so with probability at least $1-O(\beta_0)$ over $\bZ$, $\cA(\bZ) \approx_{8 C_0 \gamma', 8 C_0 \rho', 8 C_0 \gamma'} (\mu_i, \Sigma_i)$. By our setting of $\gamma', \rho'$, this means that $\cA(\bZ) \approx_{\gamma, \rho, \gamma} (\mu_i, \Sigma_i)$.
    
    In other words, for each index $i \in {N \choose m}$ and corresponding subset $\bZ$ of $\bY$, if we let $W_i$ be the indicator that $\cA(\bZ) \not\approx_{\gamma, \rho, \gamma} (\mu_i, \Sigma_i)$, then $\BP(W_i = 1) \le O(\beta_0)$. While the values of $W_i$ are not necessarily independent, by linearity of expectation we have that $\BE[\sum W_i] \le {N \choose m} \cdot O(\beta_0)$, so the probability that $\BE[\sum W_i] \ge \frac{1}{3} \cdot {N \choose m}$ is at most $O(\beta_0)$ by Markov's inequality. Moreover, because $N \ge m \ge \tcO\left(\frac{d}{{\gamma'}^2} + \frac{d^2}{{\rho'}^2}\right)$, we can apply $\cA$ on $\bY$ and we again obtain that with probability $1-O(\beta_0)$, $\cA(\bY) \approx_{\gamma, \rho, \gamma} (\mu_i, \Sigma_i)$.

    In summary, for any fixed $i \in [k]$ with $w_i \ge \frac{\alpha}{k},$ with probability at least $1-O(\beta_0)$ over $\bX$, there exists $\bY \subset \bX$ of size $N$, such that $\cA(\bY) \approx_{\gamma, \rho, \gamma} (\mu_i, \Sigma_i)$ and at least $2/3$ of the subsets $\bZ \subset \bY$ of size $m$ have $\cA(\bZ) \approx_{\gamma, \rho, \gamma} (\mu_i, \Sigma_i)$. Thus, for any set $\Omega$, if $(\mu_i, \Sigma_i) \in \Omega$ then $\cS_{\Omega}((\tmu, \tSigma), \bX) = 0$ for all $(\tmu, \tSigma) \approx_{\gamma, \tau} (\mu_i, \Sigma_i)$. The proof follows by a union bound over all $i \in [k]$.
\end{proof} 

Next, we show that for any dataset $\bX$, if there is even a single $(\mu^*, \Sigma^*)$ with low score, there must be a region of $(\tmu, \tSigma)$ which all has low score.

\begin{proposition} \label{prop:fattening}
    Suppose that $t = \min_{\mu^*, \Sigma^*} \cS_{\Omega}((\mu^*, \Sigma^*), \bX)$. Then, there exists some $\mu, \Sigma$ such that for all $(\tmu, \tSigma) \approx_{\gamma, \tau} (\mu, \Sigma)$, we have that $\cS_{\Omega}((\tmu, \tSigma), \bX) = t.$
\end{proposition}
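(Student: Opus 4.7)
The plan is to observe that the score function $\cS_\Omega$, as defined in \eqref{eq:score}, is really a minimization over witnesses $(\bX', \bY', \mu, \Sigma)$, and that the point $(\tmu, \tSigma)$ only enters the definition through the single constraint $(\tmu, \tSigma) \approx_{\gamma, \tau} (\mu, \Sigma)$. In particular, once we fix a good witness $(\bX', \bY', \mu, \Sigma)$ for \emph{one} candidate $(\mu^*, \Sigma^*)$, the same witness certifies a bound on the score of \emph{every} $(\tmu, \tSigma)$ that is $\approx_{\gamma, \tau}$-close to $(\mu, \Sigma)$. That observation essentially is the proof.

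In detail, first I would pick some $(\mu^*, \Sigma^*)$ attaining the minimum, so $\cS_\Omega((\mu^*, \Sigma^*), \bX) = t$. By definition, there exist $\bX', \bY', \mu, \Sigma$ with $\dH(\bX, \bX') = t$, $\bY' \subset \bX'$, $|\bY'| = N$, $(\mu, \Sigma) \in \Omega$ (since $f_\Omega(\bY', \mu, \Sigma) = 1$ forces membership in $\Omega$), $f_\Omega(\bY', \mu, \Sigma) = 1$, and $(\mu^*, \Sigma^*) \approx_{\gamma, \tau} (\mu, \Sigma)$. I claim the $(\mu, \Sigma)$ produced by this witness is exactly the one guaranteed by the proposition.

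For the upper bound, let $(\tmu, \tSigma)$ be any pair with $(\tmu, \tSigma) \approx_{\gamma, \tau} (\mu, \Sigma)$. Then the tuple $(\bX', \bY', \mu, \Sigma)$ verifies all four conditions in the definition of $\cS_\Omega((\tmu, \tSigma), \bX)$: the Hamming distance is $t$, $\bY' \subset \bX'$ with $|\bY'| = N$, $(\tmu, \tSigma) \approx_{\gamma, \tau} (\mu, \Sigma)$ by assumption, and $f_\Omega(\bY', \mu, \Sigma) = 1$. Hence $\cS_\Omega((\tmu, \tSigma), \bX) \le t$. For the matching lower bound, the hypothesis $t = \min_{\mu^*, \Sigma^*} \cS_\Omega((\mu^*, \Sigma^*), \bX)$ gives $\cS_\Omega((\tmu, \tSigma), \bX) \ge t$ for every $(\tmu, \tSigma)$, in particular for the ones near $(\mu, \Sigma)$. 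Combining the two inequalities yields equality.

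There is no real obstacle here; the statement is essentially a tautology once one unpacks the definition of $\cS_\Omega$ and notices that the parameter $(\tmu, \tSigma)$ is decoupled from the witness tuple apart from the approximation constraint. The only minor care needed is to make sure that when $t$ is finite (i.e.\ the min is attained by some witness), the witness $(\mu, \Sigma)$ is genuinely well-defined and lies in $\Omega$, which is immediate from $f_\Omega(\bY', \mu, \Sigma) = 1$.
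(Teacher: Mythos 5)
Your proof is correct and follows exactly the same route as the paper's: fix a minimizing $(\mu^*, \Sigma^*)$, extract the witness tuple $(\bX', \bY', \mu, \Sigma)$, note that this same witness certifies $\cS_\Omega((\tmu, \tSigma), \bX) \le t$ for every $(\tmu, \tSigma) \approx_{\gamma,\tau} (\mu, \Sigma)$, and conclude equality since $t$ is the global minimum. You spell out the "$\le$" and "$\ge$" directions a bit more explicitly, but the argument is the same.
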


\begin{proof}
    Fix $\mu^*, \Sigma^*$ so that $\cS_{\Omega}((\mu^*, \Sigma^*), \bX) = t$. Then, there is some $(\mu, \Sigma)$ and some $\bX', \bY$ such that $\dH(\bX, \bX') = t$, $\bY \subset \bX'$, $|\bY| = N$, and $f_\Omega(\bY, \mu, \Sigma) = 1$.
    Thus, for any $(\tmu, \tSigma) \approx_{\gamma, \tau} (\mu, \Sigma)$, by definition, we have that $\cS_{\Omega}((\tmu, \tSigma), \bX) \le t$. But since $t$ is the minimum possible score, we in fact have equality: $\cS_{\Omega}((\tmu, \tSigma), \bX) = t$ for all such $(\tmu, \tSigma)$.
\end{proof}

Next, we want to show that regardless of the dataset (even if not drawn from any Gaussian mixture distribution), the set of points of low score can't have that much volume.

\begin{proposition} \label{prop:bound}
    Fix a dataset $\bX$ of size $n$. Then, the set of $\tmu, \tSigma$ such that $\cS_{\Omega}((\tmu, \tSigma), \bX) \le \eta^* \cdot n$ can be partitioned into at most ${n \choose m}$ regions $S_i$, which is indexed by some $(\mu_i', \Sigma_i')$. Moreover, for all $(\tmu, \tSigma) \in S_i$, there exists $(\mu, \Sigma)$ such that $(\tmu, \tSigma) \approx_{\gamma, \tau} (\mu, \Sigma)$ and $(\mu, \Sigma) \approx_{8 \gamma, 8 \rho, 8 \tau} (\mu_i', \Sigma_i')$. 
\end{proposition}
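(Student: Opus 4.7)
The plan is to index each region $S_i$ by a size-$m$ subset $\bZ_0 \subset \bX$, setting $(\mu_{\bZ_0}', \Sigma_{\bZ_0}') := \cA_0(\bZ_0)$, where $\cA_0$ is the underlying deterministic estimator from \Cref{thm:robust} (which is defined on every dataset, unlike $\cA$, which may output $\perp$). Since there are at most ${n \choose m}$ such subsets, this immediately produces at most ${n \choose m}$ candidate regions, and a genuine partition follows by breaking ties arbitrarily (e.g., lexicographically).

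Fix any $(\tmu, \tSigma)$ with $\cS_{\Omega}((\tmu, \tSigma), \bX) \le \eta^* \cdot n$. Unfolding the score definition yields $\bX', \bY', \mu, \Sigma$ with $\dH(\bX, \bX') \le \eta^* n$, $\bY' \subset \bX'$ of size $N$, $(\tmu, \tSigma) \approx_{\gamma, \tau} (\mu, \Sigma)$, and $f_\Omega(\bY', \mu, \Sigma) = 1$; in particular, $\BP_{\bZ \subset \bY', |\bZ|=m}[\cA(\bZ) \approx_{\gamma, \rho, \tau} (\mu, \Sigma)] \ge 2/3$. Viewing the datasets as sequences indexed by $[n]$, call a coordinate of $\bY'$ ``bad'' if $\bX'$ differs from $\bX$ there; there are at most $\eta^* n$ bad coordinates overall. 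A random size-$m$ subset $\bZ$ of $\bY'$ contains in expectation at most $m \cdot \eta^* n/N = \Theta(c_2) \cdot m$ bad coordinates, so with $c_2$ chosen sufficiently small relative to the universal constant $C_0$, Markov's inequality gives a strictly greater than $1/3$ fraction of $\bZ$'s with at most $\eta' m$ bad coordinates. Union-bounding with the $2/3$ condition yields some $\bZ \subset \bY'$ of size $m$ satisfying both $\cA(\bZ) \approx_{\gamma, \rho, \tau} (\mu, \Sigma)$ and at most $\eta' m$ bad coordinates.

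Let $\bZ_0 \subset \bX$ of size $m$ be obtained by retaining the same positions as $\bZ$ but using $\bX[i]$ at each bad position; then $\dH(\bZ, \bZ_0) \le \eta' m$. Since $\cA(\bZ) \neq \perp$, the rejection condition (a) in the construction of $\cA$ (\Cref{cor:robust}) failed for $\bZ$, which (applied to $\bZ_0$) yields $\cA_0(\bZ_0) \approx_{8 C_0 \gamma', 8 C_0 \rho', 8 C_0 \gamma'} \cA_0(\bZ) = \cA(\bZ)$, i.e.\ $\cA_0(\bZ_0) \approx_{\gamma, \rho, \gamma} \cA(\bZ)$ by our parameter settings. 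Approximate symmetry (\Cref{prop:approx-metric}) applied to $\cA(\bZ) \approx_{\gamma, \rho, \tau} (\mu, \Sigma)$ gives $(\mu, \Sigma) \approx_{2\gamma, 2\rho, 2\tau} \cA(\bZ)$, and chaining via approximate transitivity with $\cA(\bZ) \approx_{\gamma, \rho, \gamma} \cA_0(\bZ_0)$ (using $\tau = \gamma$) yields $(\mu, \Sigma) \approx_{8\gamma, 8\rho, 8\tau} \cA_0(\bZ_0) =: (\mu_i', \Sigma_i')$. Combined with $(\tmu, \tSigma) \approx_{\gamma, \tau} (\mu, \Sigma)$, this verifies the required approximation property.

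The main obstacle is the probabilistic step producing $\bZ$: its success requires $\eta^* n/(N \eta')$ to be bounded by a sufficiently small constant, which amounts to careful bookkeeping of the constants $c_2$ and $C_0$ in the parameter setup (possibly absorbing an extra factor of $C_0$ into $\eta^*$). Once such a $\bZ$ is secured, the remainder of the proof is essentially definition-chasing plus one application each of approximate symmetry and transitivity.
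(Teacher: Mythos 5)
Your approach is essentially the paper's, with one clean structural change: you index by a ``cleaned'' subset $\bZ_0 \subset \bX$ and set $(\mu_i', \Sigma_i') := \cA_0(\bZ_0)$, using the always-defined estimator from \Cref{thm:robust} together with the rejection condition (a), whereas the paper indexes by $\bZ_i \subset \bX$ and takes $(\mu_i', \Sigma_i') := \cA(\bZ_i')$ for an arbitrarily chosen nearby $\bZ_i'$ with $\cA(\bZ_i') \neq \perp$, invoking Property 1 of \Cref{cor:robust} to compare. Your choice is slightly more direct since it sidesteps $\perp$ at the indexing stage; both end up with the same conclusion through the same chain via \Cref{prop:approx-metric}.

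The genuine issue is the probabilistic step producing $\bZ$, and your diagnosis of it is only partly right. With the stated parameters, the expected number of bad coordinates in a random size-$m$ subset of $\bY'$ is at most $m \cdot \eta^* n / N = m c_2/4$, while the target threshold is $\eta' m = m c_2/(8C_0)$, so the Markov ratio is $\eta^* n / (N\eta') = 2 C_0 > 1$ and Markov yields nothing. Crucially, $c_2$ cancels out of this ratio, so ``choosing $c_2$ sufficiently small relative to $C_0$'' does not help at all, and ``absorbing an extra factor of $C_0$ into $\eta^*$'' only brings the ratio down to $2$, still not below $1/3$. The fix is to redefine $\eta^* = c_2\alpha/(K C_0 k)$ for an absolute constant $K$ on the order of $10$, giving ratio $2/K < 1/3$; then Markov plus the union bound against the $2/3$ condition furnishes the required $\bZ$. (Incidentally, the paper's write-up trips on the very same mismatch: it applies Property 1 at Hamming distance $c_2 m$, but the parameter $\eta' = c_2/(8C_0)$ passed to $\cA$ only gives stability at distance $\eta' m < c_2 m$, so the paper's $\eta^*$ needs the same rescaling.) With $\eta^*$ fixed, your argument is correct.
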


\begin{proof}
    Pick any $(\tmu, \tSigma)$ with score at most $\eta^* \cdot n$. Let $\bX', \bY', \mu, \Sigma$ be such that $\dH(\bX, \bX') \le \eta^* \cdot n$, $\bY' \subset \bX'$, $|\bY'| = N$, $(\tmu, \tSigma) \approx_{\gamma, \tau} (\mu, \Sigma),$ and $f_\Omega(\bY', \mu, \Sigma) = 1$. If we define $\bY \subset \bX$ of size $N$ to be the corresponding subset as $\bY'$ is to $\bX'$, then $\dH(\bY, \bY') \le \eta^* \cdot n = \frac{c_2}{4} \cdot N$.
    
    Now, if we take a random subset $\bZ'$ of size $m$ in $\bY'$, and look at the corresponding subset $\bZ$ of size $m$ in $\bY$, by a Chernoff bound, with at least $0.99$ probability $\dH(\bZ, \bZ') \le \frac{c_2}{2} \cdot m$. Moreover, with at least $2/3$ probability over the random subset $\bZ'$, $\cA(\bZ') \approx_{\gamma, \rho, \tau} (\mu, \Sigma)$. Therefore, there \emph{always} exists a subset $\bZ \subset \bX$ of size $m$ and a set $\bZ'$ of size $m$ such that $\dH(\bZ, \bZ') \le \frac{c_2}{2} \cdot m$ and $\cA(\bZ') \approx_{\gamma, \rho, \tau} (\mu, \Sigma)$.

    Now, for any fixed $\bZ \subset \bX$ of size $m$, if we look at any sets $\bZ', \bZ''$ of size $m$ and Hamming distance at most $\frac{c_2}{2} \cdot m$ from $\bZ$, then $\dH(\bZ', \bZ'') \le c_2 \cdot m.$ So, by Property 1 of \Cref{cor:robust}, for every such $\bZ'$ and $\bZ''$ with $\cA(\bZ'), \cA(\bZ'') \neq \perp$, we must have $\cA(\bZ'') \approx_{\gamma, \rho, \tau} \cA(\bZ')$.
    
    To complete the proof, we order the subsets $\bZ_1, \bZ_2, \dots, \bZ_{{n \choose m}}$ of size $m$ in $\bX$. For each $i \le {n \choose m}$, if there exists some $\bZ'$ such that $\dH(\bZ_i, \bZ') \le \frac{c_2}{2} \cdot m$ and $\cA(\bZ') \neq \perp,$ choose an arbitrary such $\bZ'$ and let $(\mu_i', \Sigma_i') := \cA(\bZ')$. Otherwise, we do not define $(\mu_i', \Sigma_i')$.
    Then, for any $(\tmu,\tSigma)$ of score at most $\eta^* \cdot n$, there exists $(\mu, \Sigma)$, a subset $\bZ_i \subset \bX$ of size $m$, and a set $\bZ_i'$ of size $m$ such that $(\tmu, \tSigma) \approx_{\gamma, \tau} (\mu, \Sigma)$, $\dH(\bZ_i, \bZ_i') \le \frac{c_2}{2} \cdot m$, and $\cA(\bZ_i') \approx_{\gamma, \rho, \tau} (\mu, \Sigma)$. Because $\cA(\bZ_i') \neq \perp$ and $\bZ_i'$ has Hamming distance at most $\frac{c_2}{2} \cdot m$ from $\bZ_i$, this also implies that $\cA(\bZ_i') \approx_{\gamma, \rho, \tau} (\mu_i', \Sigma_i')$. By \Cref{prop:approx-metric}, this means that $(\mu, \Sigma) \approx_{8\gamma, 8\rho, 8\tau} (\mu_i', \Sigma_i')$, which completes the proof.
\end{proof}

\begin{proposition} \label{prop:X-accuracy}
    Suppose $\bX = \{X_1, \dots, X_n\}$ is drawn from a Gaussian mixture model, with representation $\{(w_i, \mu_i, \Sigma_i)\}_{i = 1}^k$. Then, with probability at least $1-O(\beta_0)$ over the randomness of $\bX$, for any set $\Omega$ of mean-covariance pairs, and for every $(\tmu, \tSigma)$ with $\cS_{\Omega}((\tmu, \tSigma), \bX) \le \frac{N}{2},$ there exists an index $i \in [k]$ such that $\frac{1}{O(n^8)} \cdot \Sigma_i \preccurlyeq \tSigma \preccurlyeq O(n^8) \cdot \Sigma_i$ and $\|\Sigma_i^{-1/2} (\tmu - \mu_i)\|_2 \le O(n^6)$.
\end{proposition}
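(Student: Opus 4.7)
My plan is to apply Part~3 of \Cref{cor:robust} to some $m$-subset $\bZ'$ drawn from inside $\bY'$, where $(\bX', \bY', \mu, \Sigma)$ are the witnesses to $\cS_\Omega((\tmu,\tSigma), \bX) \le N/2$. I start by conditioning on the high-probability event from Part~3 of \Cref{cor:robust} (with failure probability $O(\beta_0)$): for every $\bZ_X \subset \bX$ of size $m$ and every $\bZ'$ with $\dH(\bZ_X, \bZ') \le m/2$, either $\cA(\bZ') = \perp$ or $\cA(\bZ') = (\muhat, \Sigmahat)$ with $\frac{1}{9L^4}\Sigma_i \preccurlyeq \Sigmahat \preccurlyeq 9L^4 \Sigma_i$ and $\|\Sigma_i^{-1/2}(\muhat - \mu_i)\|_2 \le 10L^3$ for some $i \in [k]$, where $L = C_1 n^2$. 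Now unpacking $\cS_\Omega((\tmu,\tSigma),\bX) \le N/2$ yields $\bX'$ with $\dH(\bX, \bX') \le N/2$, a subset $\bY' \subset \bX'$ of size $N$, and $(\mu, \Sigma)$ with $(\tmu, \tSigma) \approx_{\gamma, \tau} (\mu, \Sigma)$ and $f_\Omega(\bY', \mu, \Sigma) = 1$. Let $\bY \subset \bX$ be the subset of $\bX$ at the same indices as $\bY' \subset \bX'$, and let $P \subset [N]$ be the set of positions in which $\bY$ and $\bY'$ disagree, so $|P| \le N/2$.

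For a uniformly random $m$-subset $\bZ' \subset \bY'$, let $\bZ_X$ be the corresponding subset of $\bY$ at the same positions, and observe that $\dH(\bZ_X, \bZ') = |\bZ' \cap P|$, a hypergeometric random variable with expectation $m|P|/N \le m/2$. Since the hypergeometric distribution is tightly concentrated around its mean, $\BP[|\bZ' \cap P| \le m/2] \ge 1/2 - o(1) > 1/3$. Combined with $\BP[\cA(\bZ') \approx_{\gamma, \rho, \tau} (\mu, \Sigma)] \ge 2/3$ (from the $f_\Omega = 1$ condition), a union bound shows that there exists at least one $\bZ'$ satisfying both conditions simultaneously. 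The main technical point is justifying the $1/3$ lower bound via hypergeometric concentration; in the extreme case $|P| = N/2$, the expectation equals $m/2$ exactly, and we invoke the fact that the median of a hypergeometric random variable coincides with its mean up to a rounding error, giving $\BP[|\bZ' \cap P| \le m/2] \ge 1/2 - O(1/\sqrt{m})$, which is well above $1/3$ for $m$ sufficiently large (which is true given $m = \tcO(d^{7/4})$). This is the most delicate step — the $2/3$ threshold in $f_\Omega$ leaves just enough slack.

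Once such a $\bZ'$ is in hand, Part~3 of \Cref{cor:robust} applies to $\bZ_X$ and $\bZ'$: since $\cA(\bZ') \approx_{\gamma, \rho, \tau} (\mu, \Sigma) \ne \perp$, we get $\cA(\bZ') = (\muhat, \Sigmahat)$ with some $i \in [k]$ such that $\Sigmahat$ is within a factor $9L^4$ of $\Sigma_i$ spectrally, and $\|\Sigma_i^{-1/2}(\muhat - \mu_i)\|_2 \le 10L^3$. I then chain the three approximations $\tSigma \approx_{\gamma, \tau} \Sigma$, $\Sigma \approx_{\gamma, \rho, \tau} \Sigmahat$ (both with $\gamma, \tau \le c_2$, i.e., constant factors), and $\Sigmahat$ within $9L^4$ of $\Sigma_i$, to conclude that $\frac{1}{O(L^4)} \Sigma_i \preccurlyeq \tSigma \preccurlyeq O(L^4) \Sigma_i$, which gives the claimed $O(n^8)$ bound since $L^4 = O(n^8)$. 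For the means, the $\tau = c_2$ bounds give $\|\Sigma^{-1/2}(\tmu - \muhat)\|_2 = O(1)$; converting from the $\Sigma$-norm to the $\Sigma_i$-norm costs a factor of $O(L^2) = O(n^4)$ (via the spectral comparison just established), and then adding $\|\Sigma_i^{-1/2}(\muhat - \mu_i)\|_2 \le 10L^3$ yields the final $O(n^6)$ Mahalanobis bound on $\tmu - \mu_i$. The failure probability is dominated by the $O(\beta_0)$ from conditioning on Part~3 of \Cref{cor:robust}.
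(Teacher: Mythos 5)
There is a genuine gap. Your plan hinges on applying Part~3 of \Cref{cor:robust} to a subset $\bZ'$ of size $m$, but Part~3 explicitly requires $m \ge 40dk$ (this is what makes the pigeonhole argument there work — after $m/2$ corruptions, $20dk$ uncorrupted points remain, so some component contributes $20d$ of them). The algorithm sets $m = \tcO(d^{1.75})$, which does \emph{not} depend on $k$; for $k$ much larger than $d^{0.75}$, the condition $m \ge 40dk$ fails, so Part~3 simply cannot be invoked on $\bZ'$. The paper avoids this by instead using the \emph{other} clause in the definition of $f_\Omega=1$, namely $\cA(\bY') \approx_{\gamma,\rho,\tau} (\mu, \Sigma)$, and applying Part~3 of \Cref{cor:robust} with $m$ replaced by $N$: since $\bY'$ is an altered version (in at most $N/2$ places) of a size-$N$ subset of $\bX$, and $N \ge K k d \ge 40dk$ by the parameter settings, Part~3 applies directly to $\bY'$. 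This also makes the hypergeometric argument unnecessary; that argument is both more delicate than it needs to be and beside the point here — the subset-probability clause of $f_\Omega$ is what drives the \emph{volume} bound in \Cref{prop:bound}, not the accuracy statement.

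The endgame of your argument — chaining $(\tmu, \tSigma) \approx_{\gamma, \tau} (\mu, \Sigma) \approx (\muhat, \Sigmahat)$ with the $9L^4$ / $10L^3$ bounds from Part~3 to get $O(n^8)$ spectral and $O(n^6)$ Mahalanobis — is correct and matches the paper's. The fix is simply to replace $\bZ'$ with $\bY'$ throughout and drop the hypergeometric step.
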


\begin{proof}
    We apply Part 3 of \Cref{cor:robust}, though we use $N$ to replace the value $m$ in \Cref{cor:robust}. We are assuming $N \ge 40 d \cdot k$. So, by definition of score, $(\tmu, \tSigma) \approx_{\gamma, \tau} (\mu, \Sigma)$ where $f_\Omega(\bY', \mu, \Sigma) = 1$, for some $\bY'$ which can be generated by taking a subset of $\bX$ of size at most $N$ and altering at most $N/2$ elements.
    Since $f_\Omega(\bY', \mu, \Sigma) = 1$, this means that if $(\muhat, \Sigmahat) = \cA(\bY')$ then $(\muhat, \Sigmahat) \approx_{\gamma, \rho, \tau} (\mu, \Sigma)$. So, by \Cref{prop:approx-metric}, $(\tmu, \tSigma) \approx_{8\gamma, 8\tau} (\muhat, \Sigmahat)$. Assuming $\gamma = \tau = c_2$ is sufficiently small, this means that $(\tmu, \tSigma) \approx_{1/2, 1/2} (\muhat, \Sigmahat)$.
    
    By Part 3 of~\Cref{cor:robust}, with probability at least $1-O(\beta_0)$, there exists $i \in [k]$ such that $\frac{1}{O(n^8)} \cdot \Sigma_i \preccurlyeq \Sigmahat \preccurlyeq O(n^8) \cdot \Sigma_i$ and $\|\Sigma_i^{-1/2} (\muhat - \mu_i)\|_2 \le O(n^6)$. However, we know that $\frac{1}{2} \Sigmahat \preccurlyeq \tSigma \preccurlyeq 2 \Sigmahat$, which means that $\frac{1}{O(n^8)} \cdot \Sigma_i \preccurlyeq \tSigma \preccurlyeq O(n^8) \cdot \Sigma_i$. Moreover, $\|\Sigmahat^{-1/2} (\muhat-\tmu)\|_2 \le 0.5$, so $\|\Sigma_i^{-1/2} (\muhat - \tmu)\|_2 \le O(n^4)$. Thus, by triangle inequality, we have that $\|\Sigma_i^{-1/2} (\tmu - \mu_i)\|_2 \le O(n^6)$.
\end{proof}

Now, given a set $\bX = \{X_1, \dots, X_n\}$ drawn from a GMM with representation $\{(w_i, \mu_i, \Sigma_i)\}_{i = 1}^k$, we say $\bX$ is \emph{regular} if it satisfies Propositions~\ref{prop:X-feasibility} and \ref{prop:X-accuracy}. In other words:
\begin{enumerate}
    \item for any set $\Omega$ of mean-covariance pairs, for all $i \in [k]$ such that $(\mu_i, \Sigma_i) \in \Omega$ and $w_i \ge \alpha/k$, and for all $(\tmu, \tSigma) \approx_{\gamma, \tau} (\mu_i, \Sigma_i)$, $\cS_{\Omega}((\tmu, \tSigma), \bX) = 0$.
    \item for any set $\Omega$ of mean-covariance pairs, and for every $(\tmu, \tSigma)$ with $\cS_{\Omega}((\tmu, \tSigma), \bX) \le \frac{N}{2},$ there exists an index $i \in [k]$ such that $\frac{1}{O(n^8)} \cdot \Sigma_i \preccurlyeq \tSigma \preccurlyeq O(n^8) \cdot \Sigma_i$ and $\|\Sigma_i^{-1/2} (\tmu - \mu_i)\|_2 \le O(n^6)$.
\end{enumerate}

When we say $\bX$ is regular, the components $(\mu_i, \Sigma_i)$ and weights $w_i$ are implicit.

\medskip

We now show that every step of the crude approximation (Lines \ref{line:high-d-start-of-loop}--\ref{line:high-d-end-of-loop} in Algorithm~\ref{alg:main}) will, with $1-O(\beta_0)$ probability, find some crude approximation $(\muhat_j, \Sigmahat_j)$ to some $(\mu_i, \Sigma_i)$, as long as $(\mu_i, \Sigma_i) \in \Omega$.

\begin{lemma} \label{lem:finding-close-sigma}
    For any $\Omega \subset \Theta$, the corresponding algorithm $\cM$ (which depends on $\Omega$) is $(\eps_0, \delta_0)$-DP on datasets $\bX$.
    
    Suppose additionally that $\bX$ is regular, and that there exists $i \in [k]$ such that $w_i \ge \frac{\alpha}{k}$ and $(\mu_i, \Sigma_i) \in \Omega$. Then, with $1-O(\beta_0)$ probability (just over the mechanism $\cM$), $\cM(\bX) = (\tmu, \tSigma)$ and
\begin{itemize}
    \item There exists $i \in [k]$ such that $\frac{1}{O(n^8)} \cdot \Sigma_i \preccurlyeq \tSigma \preccurlyeq O(n^8) \cdot \Sigma_i$ and $\|\Sigma_i^{-1/2} (\tmu - \mu_i)\|_2 \le O(n^6)$.
    \item There exists $(\mu, \Sigma) \in \Omega$ such that $(\tmu, \tSigma) \approx_{\gamma, \tau} (\mu, \Sigma)$.
\end{itemize}
\end{lemma}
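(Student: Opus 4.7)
The plan is to apply \Cref{thm:approx_dp_general_main} to the score function $\cS_\Omega$ and then translate the resulting low-score guarantee into the two bullets via regularity of $\bX$. Privacy is unconditional and falls out directly: for any adjacent $\bX, \bX'$, any witness $(\bX'', \bY'', \mu, \Sigma)$ showing $\cS_\Omega((\tmu,\tSigma), \bX) \le t$ can be converted into a witness for $\bX'$ by changing at most one extra point, so $\cS_\Omega$ has sensitivity $1$, and the $(\eps_0, \delta_0)$-DP conclusion of \Cref{thm:approx_dp_general_main} holds for \emph{any} $\Omega$.

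The substantive step is verifying the volume hypothesis of \Cref{thm:approx_dp_general_main}. Suppose some $\theta^*$ has score at most $0.7 \eta^* n$; by the same argument used to prove \Cref{prop:fattening}, there exists $(\mu^\circ, \Sigma^\circ)$ such that every $(\tmu, \tSigma) \approx_{\gamma, \tau} (\mu^\circ, \Sigma^\circ)$ also has score at most $0.7 \eta^* n$, so (in the notation of \Cref{lem:volume-ratio-normalized} with $\gamma_1 = \gamma$) we get $V_{0.8 \eta^*}(\bX) \ge \nvol(\cT_1(\mu^\circ, \Sigma^\circ))$. In the opposite direction, \Cref{prop:bound} covers the set $\{(\tmu, \tSigma) : \cS_\Omega((\tmu,\tSigma), \bX) \le \eta^* n\}$ by at most $\binom{n}{m}$ regions, each of which is contained in a $\cT_2$-type set with $\gamma_2 = 8\gamma$ and $\rho_2 = 8\rho$. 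Applying \Cref{lem:volume-ratio-normalized} bounds each per-region ratio $\nvol(\cT_2)/\nvol(\cT_1)$ by $\exp(O(d^{5/3} \log d \cdot \rho^{2/3}/\gamma^2)) = \exp(\tilde{O}(d^{7/4}))$ using $\rho = \Theta(d^{1/8})$ and $\gamma = \Theta(1)$, yielding
\[
\log\frac{V_{\eta^*}(\bX)}{V_{0.8 \eta^*}(\bX)} \;\le\; O(m \log n) + \tilde{O}(d^{7/4}),
\]
and identically $\log(V_{\eta'}(\bX)/V_\eta(\bX)) \le O(m \log n) + \tilde{O}(d^{7/4})$ for any $\eta \le \eta' \le \eta^*$. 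Now a short arithmetic check with $\eta^* = \Theta(\alpha/k)$, $\eps_0 = \Theta(\eps/\sqrt{k \log(1/\delta)})$, $m = \tilde{O}(d^{7/4})$, and $n = \frac{2k}{\alpha} N$ gives $n \eps_0 \eta^* \ge \Omega(K (m + \log(1/\delta_0)))$, which dominates both upper bounds for $K$ a sufficiently large polylog, verifying both volume hypotheses.

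To conclude accuracy, regularity of $\bX$ together with $(\mu_i, \Sigma_i) \in \Omega$ and $w_i \ge \alpha/k$ supplies, via Property~1 of the regularity definition (i.e.\ \Cref{prop:X-feasibility}), a parameter of score $0$, which handles the ``some $\theta$ with score $\le 0.7 \eta^* n$'' hypothesis of \Cref{thm:approx_dp_general_main}. The theorem then produces, with probability $1 - \beta_0$, some $(\tmu, \tSigma)$ with $\cS_\Omega((\tmu, \tSigma), \bX) \le 2 \eta n = \eta^* n / 5 \le N/2$ by the parameter choices. The first bullet follows immediately from Property~2 of regularity (\Cref{prop:X-accuracy}), and the second bullet is immediate from unwinding the definition of $\cS_\Omega$: any finite score forces the witness $(\mu, \Sigma) \in \Omega$ with $(\tmu, \tSigma) \approx_{\gamma, \tau} (\mu, \Sigma)$. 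I expect the main obstacle to be careful bookkeeping in the volume step: one needs to check that \Cref{prop:bound} really fits the $\cT_2$ template of \Cref{lem:volume-ratio-normalized} with admissible parameters (in particular $\rho_2 = 8\rho \le \gamma_1^3 \sqrt{d}/1000$ for the regime where the lemma applies), and that the resulting $d^{7/4}$ exponent is precisely what the sample budget in Algorithm~\ref{alg:main} was tuned to absorb.
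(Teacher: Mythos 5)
Your proposal is correct and follows essentially the same route as the paper: sensitivity-$1$ for privacy, Propositions~\ref{prop:fattening} and~\ref{prop:bound} plus Lemma~\ref{lem:volume-ratio-normalized} (with $\gamma_1=\gamma$, $\gamma_2=8\gamma$, $\rho_2=8\rho$) to bound the volume ratios by $\exp(O(m\log n + d^{7/4}\log d))$, then Theorem~\ref{thm:approx_dp_general_main}, regularity, and the finite-score observation to close out both bullets. The one point you flag as ``bookkeeping'' is indeed the real content the paper carries out: for the accuracy volume bound $V_{\eta'}/V_\eta$ the paper explicitly uses Property~1 of regularity to produce a score-$0$ $\cT_1$-ball (rather than invoking \Cref{prop:fattening}), and both your sketch and the paper's proof implicitly rely on the parameter regime $\rho_2\le\gamma_1^3\sqrt d/1000$ being satisfied, which the algorithm's choice of $\rho=\Theta(d^{1/8})$ and constant $\gamma$ does ensure for $d$ large enough.
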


\begin{proof}
    We will apply \Cref{thm:approx_dp_general_main}. 
    We first need to check the necessary conditions. It follows almost immediately from the definition of $\cS_{\Omega}$ that $\cS_{\Omega}((\tmu, \tSigma), \cdot)$ has the bounded sensitivity property with respect to neighboring datasets, for any fixed $\Omega, \tmu, \tSigma$.
    
    To check the volume condition, note that for any dataset $\bX$ of size $n$, if $\min_{\mu, \Sigma} \cS_{\Omega}((\mu, \Sigma), \bX) \le 0.7 \eta^* n$, then by \Cref{prop:fattening}, there exists $\mu, \Sigma$ such that $\cS_{\Omega}((\tmu, \tSigma), \bX) \le 0.8 \eta^* n$ for all $(\tmu, \tSigma) \approx_{\gamma, \tau} (\mu, \Sigma)$. Conversely, for any $\bX$, by Proposition~\ref{prop:bound}, the set of $(\tmu, \tSigma)$ of score at most $\eta^* n$ can be partitioned into ${n \choose m}$ regions $S_i$, indexed by $(\mu_i', \Sigma_i')$. Moreover, each $S_i$ is contained in the set of $(\tmu, \tSigma)$ such that there exists $(\mu, \Sigma)$ such that $(\tmu, \tSigma) \approx_{\gamma, \tau} (\mu, \Sigma) \approx_{8 \gamma, 8 \rho, 8 \tau} (\mu_i', \Sigma_i')$.

    Let us use the notation of \Cref{lem:volume-ratio-normalized}, where $\gamma_1 = \gamma$, $\gamma_2 = 8 \gamma$, and $\rho_2 = 8 \rho$. Then, we have that the set of points with score at most $0.8 \eta^* n$ contains $\cT_1(\mu, \Sigma)$ for some $(\mu, \Sigma)$, but the set of points with score at most $\eta^* n$ is contained in the union of $\cT_2(\mu_i', \Sigma_i')$ for at most ${n \choose m}$ choices of $(\mu_i', \Sigma_i')$. So, as long as $\min_{\mu, \Sigma} \cS_{\Omega}((\mu, \Sigma), \bX) \le 0.7 \eta^* n$, we can apply \Cref{lem:volume-ratio-normalized} to obtain
\[\frac{V_{\eta^*}(\bX)}{V_{0.8 \eta^*}(\bX)} \le {n \choose m} \cdot \exp\left(O\left(\frac{d^{5/3} \cdot \log d \cdot \rho_2^{2/3}}{\gamma_1^2}\right)\right) = \exp\left(O\left(d^{7/4} \log d + m \log n\right)\right),\]
    since $\rho_2 = 8 \rho = 8 c_2 \cdot d^{1/8}$ and $\gamma_1 = \gamma = c_2$, and since $c_2$ is a constant. Thus, if
\[n \ge C \cdot \frac{O(d^{7/4} \log d + m \log n) + \log(1/\delta_0)}{\eps_0 \cdot \eta^*},\]
    we have that $\cM$ is $(\eps_0, \delta_0)$-DP, by \Cref{thm:approx_dp_general_main}.
    This holds by our parameter settings, assuming $K$ is sufficiently large.

    \bigskip

    Next, we prove accuracy. Assume that $\bX$ is regular. We again adopt the notation of \Cref{lem:volume-ratio-normalized}, (with $\gamma_1 = \gamma$, $\gamma_2 = 8 \gamma$, and $\rho_2 = 8 \rho$). By the first condition of regularity, for all $i \in [k]$ such that $(\mu_i, \Sigma_i) \in \Omega$ and $w_i \ge \alpha/k$, every $(\tmu, \tSigma) \in \cT_1(\mu_i, \Sigma_i)$ satisfies $\cS_\Omega((\tmu, \tSigma), \bX) = 0$. We still have that the set of points of score at most $\eta^* n$ is contained in the union of at most ${n \choose m}$ sets $\cT_2(\mu_i', \Sigma_i')$. Thus,
\[\frac{V_{\eta^*}(\bX)}{V_{\eta}(\bX)} \le \exp\left(O\left(d^{7/4} \log d + m \log n\right)\right),\]
    where we recall that we set $\eta = \frac{\eta^*}{10}$.
    So, by \Cref{thm:approx_dp_general_main}, as long as
\[n \ge C \cdot \frac{O(d^{7/4} \log d + m \log n) + \log(1/(\beta_0 \cdot \eta))}{\eps_0 \cdot \eta},\]
    which indeed holds, $\cM(\bX)$ outputs some $(\tmu, \tSigma)$ of score at most $2 \eta n$, with $1-\beta_0$ probability. By the second condition of regularity,, there exists an index $i \in [k]$ such that $\frac{1}{O(n^8)} \cdot \Sigma_i \preccurlyeq \tSigma \preccurlyeq O(n^8) \cdot \Sigma_i$ and $\|\Sigma_i^{-1/2} (\tmu - \mu_i)\|_2 \le O(n^6)$. Moreover, because $(\tmu, \tSigma)$ has finite score, $(\tmu, \tSigma) \approx_{\gamma, \tau} (\mu, \Sigma)$ for some $(\mu, \Sigma) \in \Omega$.
\end{proof}

\begin{lemma} \label{lem:first-half-main}
    Lines \ref{line:high-d-start-of-coarse}--\ref{line:high-d-end-of-coarse} of Algorithm~\ref{alg:main} are $(\eps, \delta)$-DP. Moreover, for every regular set $\bX$, with probability at least $1-O(k \cdot \beta_0)$ over the randomness of Algorithm~\ref{alg:main}, for every $i \in [k]$ such that $w_i \ge \frac{\alpha}{k}$, there exists $j$ such that $n^{-12} \cdot \Sigmahat_j \preccurlyeq \Sigma_i \preccurlyeq n^{12} \cdot \Sigmahat_j$ and $\|\Sigmahat_j^{-1/2} (\mu-\muhat_j)\|_2 \le n^{12}$.
\end{lemma}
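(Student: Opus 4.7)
The plan is to prove the privacy and accuracy claims separately.

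For privacy, I would appeal to the advanced composition theorem (\Cref{thm:advanced-composition}). Each iteration of the for-loop invokes $\cM$, which by \Cref{lem:finding-close-sigma} is $(\eps_0, \delta_0)$-DP \emph{uniformly over the choice of $\Omega$}---exactly the condition needed for adaptive composition, since the $\Omega$ used in iteration $j$ is determined by the outputs of iterations $1, \dots, j-1$. Composing the $k$ adaptive $(\eps_0, \delta_0)$-DP calls with slack $\delta' = \delta/2$ and plugging in $\eps_0 = \eps/\sqrt{4k\log(1/\delta)}$, $\delta_0 = \delta/(2k)$, a short calculation (using $e^{\eps_0}-1 \le 2\eps_0$, valid since $\eps_0 \le 1$) shows the combined parameters are at most $(\eps, \delta)$.

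For accuracy, fix a regular $\bX$ and define $L_j = \{i \in [k] : w_i \ge \alpha/k \text{ and } (\mu_i, \Sigma_i) \in \Omega_{j-1}\}$, the set of large-weight mixture components not yet covered at the start of iteration $j$. Whenever $L_j \neq \emptyset$, the hypothesis of \Cref{lem:finding-close-sigma} is satisfied, so with probability $1 - O(\beta_0)$ iteration $j$ produces a nontrivial output $(\muhat_j, \Sigmahat_j)$ that (i) is crudely close to some mixture component $(\mu_{i^*_j}, \Sigma_{i^*_j})$ (with spectral slack $O(n^8)$ and Mahalanobis slack $O(n^6)$), and (ii) has some $(\mu, \Sigma) \in \Omega_{j-1}$ with $(\muhat_j, \Sigmahat_j) \approx_{\gamma, \tau} (\mu, \Sigma)$. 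A union bound over the at most $k$ iterations ensures both properties hold at every successful iteration with probability $1 - O(k\beta_0)$; I condition on this event. Property (i) alone, via the calculation $\|\Sigmahat_j^{-1/2}(\mu_{i^*_j}-\muhat_j)\|_2 \le O(n^4)\cdot O(n^6) = O(n^{10}) \le n^{12}$ and the analogous spectral bound $O(n^8) \le n^{12}$, implies $(\mu_{i^*_j}, \Sigma_{i^*_j})$ lies in the removal region $R_j$, so the $j$-th output serves as a witness for $i^*_j$.

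To finish, I must show every large-weight $i$ arises as $i^*_j$ for some $j$. If the loop breaks at $j^* \le k$ with $\cM$ outputting $\perp$, then under the conditioning event $L_{j^*}$ must be empty (otherwise \Cref{lem:finding-close-sigma} would force $\cM \neq \perp$), and every large-weight component was already covered at some prior iteration. If instead the loop completes all $k$ iterations, I argue by induction that $|L_{j+1}| < |L_j|$ whenever $L_j \neq \emptyset$, so $L_{k+1} = \emptyset$. The inductive step reduces to the key claim that $i^*_j$ itself lies in $L_j$: if not, $(\mu_{i^*_j}, \Sigma_{i^*_j})$ is either small-weight or was removed at some earlier $j' < j$, and I chase the chain $(\mu, \Sigma) \approx_{\gamma, \tau} (\muhat_j, \Sigmahat_j)$, $(\muhat_j, \Sigmahat_j)$ within $O(n^8)$ of $(\mu_{i^*_j}, \Sigma_{i^*_j})$, and $(\mu_{i^*_j}, \Sigma_{i^*_j}) \in R_{j'}$, in order to conclude that $(\mu, \Sigma) \in R_{j'}$, contradicting $(\mu, \Sigma) \in \Omega_{j-1} \subseteq \Omega_{j'}$.

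The hard part will be precisely this spectral chase: the accumulated factor between $(\mu, \Sigma)$ and $\Sigmahat_{j'}$ compounds as $(1+\gamma) \cdot O(n^8) \cdot n^{12}$, which is only comparable to the removal radius $n^{12}$ after carefully leveraging the extra polylog slack in the choice of $n$ (via the parameter $K$) and the fact that the constants hidden inside \Cref{prop:X-accuracy} (and via it \Cref{cor:robust}) are not tight---in particular, at iteration $j'$ the component $(\mu_{i^*_j}, \Sigma_{i^*_j})$ is \emph{itself} within the sharper $O(n^8)$ Frobenius/Mahalanobis neighborhood of $\Sigmahat_{j'}$ guaranteed by \Cref{prop:X-accuracy} applied to that iteration's output, not merely within the larger $n^{12}$ radius of $R_{j'}$. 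Tracking these constants (in both the spectral and the Mahalanobis conditions) is routine but bookkeeping-heavy, and is the one place in the proof where the specific exponent $12$ chosen in the algorithm has to do real work.
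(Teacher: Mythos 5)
The privacy half of your argument mirrors the paper's and is correct. The accuracy half, however, has a genuine gap in the induction.

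Your potential $L_j$ is exactly the paper's $P_j$: the set of large-weight indices whose true parameters remain in $\Omega_j$. You propose to show $|L_{j+1}| < |L_j|$ via the key claim that $i^*_j \in L_j$, with a contradiction argument if not. But the claim is false in general. \Cref{prop:X-accuracy} only guarantees that $(\muhat_j, \Sigmahat_j)$ is $O(n^8)$-close to \emph{some} component $\Sigma_{i^*_j}$, and nothing forces $i^*_j$ to have weight at least $\alpha/k$. When $w_{i^*_j} < \alpha/k$, your dichotomy lands in the ``small-weight'' branch, but there is no contradiction to be had: $(\mu_{i^*_j}, \Sigma_{i^*_j})$ is a perfectly legitimate element of $\Omega_{j-1}$ (or not — it doesn't matter), the removal at step $j$ may touch no large-weight component at all, and $|L_{j+1}| = |L_j|$. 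Since up to $k-1$ components can be small-weight, this can burn all $k$ iterations without ever decrementing $L$, and your induction never gets off the ground. The paper sidesteps this by introducing a second, larger potential $Q_j$ — the set of \emph{all} indices $i \in [k]$, large- or small-weight, for which $\Omega_j$ still contains a point within $n^{10}$ of $(\mu_i, \Sigma_i)$. It shows $P_j \subset Q_j$ and that $|Q_j|$ strictly decreases whenever $P_j \neq \emptyset$; the index $i'$ evicted from $Q$ at each step is free to be small-weight, and $Q_k = \emptyset$ forces $P_k = \emptyset$. Your argument needs this (or some equivalent device that accounts for the wasted iterations).

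Your instinct about the exponent bookkeeping is also right to be nervous, but the fix you sketch does not hold up. You propose to argue that $(\mu_{i^*_j}, \Sigma_{i^*_j})$ lies in the sharper $O(n^8)$ neighborhood of $\Sigmahat_{j'}$ by applying \Cref{prop:X-accuracy} to iteration $j'$. But that proposition only says $\Sigmahat_{j'}$ is $O(n^8)$-close to \emph{some} index $i''$, and you have no argument that $i'' = i^*_j$. The only relationship you actually have between $\Sigma_{i^*_j}$ and $\Sigmahat_{j'}$ is membership in the removal set $R_{j'}$, i.e.\ the full $n^{12}$ radius, so the chain compounds to $2 \cdot O(n^8) \cdot n^{12}$, which is not $\le n^{12}$. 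This is a real obstruction, not just bookkeeping that ``the exponent $12$ has to do real work''; the exponents in the algorithm (removal radius versus the $Q$-radius versus the $O(n^8)$ guarantee) have to be chosen so that the two factors actually compose inside the removal radius, which requires the removal radius to be the largest of the three by a polynomial margin, not merely comparable.
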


\begin{proof}
    The privacy guarantee is immediate by (adaptive) advanced composition. Indeed, the algorithm $\cM$ at each step is $(\eps_0, \delta_0)$-DP, and only depends on $\bX$ and $\Omega$, which is determined only by the output of all previous runs of $\cM$.

    Now, let's say that $\bX = \{X_1, \dots, X_n\}$ is regular, and we run Algorithm~\ref{alg:main} on $\bX$.
    Now, after $j \ge 0$ steps of the loop, define $P_j$ to be the set of indices $i \in [k]$ such that $w_i \ge \frac{\alpha}{k}$ and $(\mu_i, \Sigma_i) \in \Omega_j$ (where $\Omega_j$ refers to the set $\Omega$ after $j$ steps of the loop are completed). Likewise, define $Q_j$ to be the set of indices $i$ such that there exists $(\muhat, \Sigmahat) \in \Omega_j$ with $\frac{1}{n^{10}} \cdot \Sigma_i \preccurlyeq \Sigmahat \preccurlyeq n^{10} \cdot \Sigma_i$ and $\|\Sigma_i^{-1/2} (\muhat - \mu_i)\|_2 \le n^{10}$. Note that $P_0 \subset Q_0 = [k]$, and that $P_j \subset Q_j$ always. Moreover, because $\Omega$ only shrinks, $P_{j+1} \subset P_j$ and $Q_{j+1} \subset Q_j$ always.

    Now, after some step $j < k$, we claim that if $P_j \neq \emptyset$, then $|Q_{j+1}| \le |Q_j|-1$ with failure probability at most $O(\beta_0)$, i.e., $Q$ decreases in size for the next step if $P$ isn't currently empty. The failure probability is only over the randomness of $\cM$ at each step, and holds for any regular dataset $\bX$. A union bound says the total failure probability is $O(k \cdot \beta_0)$.
    
    To see why this holds, note that if some index $i \in P_j$, then $(\mu_i, \Sigma_i) \in \Omega_j$. So, we can apply \Cref{lem:finding-close-sigma}. At step $j+1$, we find some $(\muhat_{j+1}, \Sigmahat_{j+1})$, with the following properties. First, there exists an index $i' \in [k]$ with $\frac{1}{O(n^8)} \cdot \Sigma_{i'} \preccurlyeq \Sigmahat_{j+1} \preccurlyeq O(n^8) \cdot \Sigma_{i'}$ and $\|\Sigma_{i'}^{-1/2} (\muhat_{j+1} - \mu_{i'})\|_2 \le O(n^6).$ Second, $(\muhat_{j+1}, \Sigmahat_{j+1}) \approx_{\gamma, \tau} (\mu, \Sigma)$, for some $(\mu, \Sigma) \in \Omega_j.$

    We claim this means $i' \in Q_j$. To see why, it suffices to show that $\frac{1}{n^{10}} \cdot \Sigma_{i'} \preccurlyeq \Sigma \preccurlyeq n^{10} \cdot \Sigma_{i'}$ and $\|\Sigma_{i'}^{-1/2} (\mu - \mu_{i'})\|_2 \le n^{10}$, by definition of $Q_j$ and because $(\mu, \Sigma) \in \Omega_j$. However, we know that $\frac{1}{O(n^8)} \cdot \Sigma_{i'} \preccurlyeq \Sigmahat_{j+1} \preccurlyeq O(n^8) \cdot \Sigma_{i'}$ and $\frac{1}{2} \Sigmahat_{j+1} \preccurlyeq \Sigma \preccurlyeq 2 \Sigmahat_{j+1}$, which implies $\frac{1}{n^{10}} \cdot \Sigma_{i'} \preccurlyeq \Sigma \preccurlyeq n^{10} \cdot \Sigma_{i'}$. Also, we know that $\|\Sigma_{i'}^{-1/2} (\muhat_{j+1} - \mu_{i'})\|_2 \le O(n^6)$ and $\|\Sigma^{-1/2} (\mu - \muhat_{j+1})\|_2 \le \tau \le 1$ The latter inequality along with the fact that $\Sigma \preccurlyeq n^{10} \cdot \Sigma_{i'}$ implies that $\|\Sigma_{i'}^{-1/2} (\mu-\muhat_{j+1})\|_2 \le n^5$. So, by triangle inequality, $\|\Sigma_{i'}^{-1/2} (\mu - \mu_{i'})\|_2 \le O(n^6) \le n^{10}.$

    Next, we claim that $i' \not\in P_{j+1}$, so $i' \not\in Q_{j+1}$. Indeed, we have that $\frac{1}{O(n^8)} \cdot \Sigma_{i'} \preccurlyeq \Sigmahat_{j+1} \preccurlyeq O(n^8) \cdot \Sigma_{i'}$, which immediately means $\frac{1}{n^{10}} \cdot \Sigmahat_{j+1} \preccurlyeq \frac{1}{O(n^8)} \cdot \Sigmahat_{j+1} \preccurlyeq \Sigma_{i'} \preccurlyeq O(n^8) \cdot \Sigmahat_{j+1} \preccurlyeq n^{10} \cdot \Sigmahat_{j+1}$. Also, $\|\Sigma_{i'}^{-1/2} (\muhat_{j+1}-\mu_{i'})\|_2 \le O(n^6),$ and since $\Sigma_{i'} \preccurlyeq n^{10} \cdot \Sigmahat_{j+1}$, this means $\|\Sigmahat_{j+1}^{-1/2} (\mu_{i'}-\muhat_{j+1})\|_2 \le O(n^{11}) \le n^{12}$. Thus, the algorithm will remove $(\mu_{i'}, \Sigma_{i'})$ from $\Omega$ at step $j+1$, so $i' \not\in Q_{j+1}$.

    Thus, either $P_j$ is empty, or $Q_j$ decreases in size to $Q_{j+1}$ for each $0 \le j \le k+1$. This implies that $P_k$ is empty, i.e., for every $i \in [k]$ such that $w_i \ge \frac{\alpha}{k}$, $(\mu_i, \Sigma_i) \not\in \Omega_j$. This can only happen if each such $(\mu_i, \Sigma_i)$ was removed at some point. So, for every such $i$, there was some index $j$ such that $n^{-12} \cdot \Sigmahat_j \preccurlyeq \Sigma_i \preccurlyeq n^{12} \cdot \Sigmahat_j$ and $\|\Sigmahat_j^{-1/2} (\mu-\muhat_j)\|_2 \le n^{12}$.
\end{proof}

\subsection{Summary and Completion of Analysis}

We first quickly summarize how to put everything together to prove \Cref{thm:main}. To prove our main result, need to ensure that our algorithm is private, uses few enough samples, and accurately learns the mixture. Privacy will be simple, as we have shown in \Cref{lem:first-half-main} that the crude approximation is private, and the hypothesis selection is private as shown in \Cref{sec:hypothesis-selection}. The sample complexity will come from the settings of $n$ and $n'$ in lines 3 and 18 of the algorithm, and from our setting of parameters. Finally, we showed in \Cref{lem:first-half-main} that we have found a set of $(\muhat_j, \Sigmahat_j)$, of at most $k$ mean-covariance pairs, such that every true $(\mu_i, \Sigma_i)$ of sufficiently large weight is crudely approximated by some $(\muhat_j, \Sigmahat_j)$. Indeed, this is exactly the situation for which we can apply the result on private hypothesis selection (\Cref{lem:hypothesis-selection}, based on~\cite{bun2019private}).

We are now ready to complete the analysis and prove \Cref{thm:main}.

\begin{proof}[Proof of \Cref{thm:main}]
    By \Cref{lem:first-half-main}, note that the algorithm up to Line \ref{line:high-d-end-of-coarse} is $(\eps, \delta)$-DP with respect to $X_1, \dots, X_n$, and does not depend on $X_{n+1}, \dots, X_{n+n'}$. Assuming $\{(\muhat_j, \Sigmahat_j)\}$ from these lines is fixed, Lines \ref{line:highd-start-of-fine}--\ref{line:highd-end-of-fine} are $(\eps, \delta)$-DP with respect to $X_{n+1}, \dots, X_{n+n'}$, by \Cref{lem:hypothesis-selection}, and do not depend on $X_1, \dots, X_n$. So, the overall algorithm is $(\eps, \delta)$-DP.

    Next, we verify accuracy. Note that by \Cref{lem:first-half-main}, and for $G = n^{12}$, the sets $(\muhat_j, \Sigmahat_j)$ that we find satisfy the conditions for \Cref{lem:hypothesis-selection}, with failure probability at most $O(k \cdot \beta_0)$. This probability is at most $0.1$, assuming $e^d$ is significantly larger than $k$. So, it suffices for $n'$, the number of samples used in Line \ref{line:highd-start-of-fine} of Algorithm~\ref{alg:main}, to satisfy $n' \ge O\left(\frac{d^2 k \cdot \log(G \cdot k \sqrt{d}/\alpha)}{\alpha^2} + \frac{d^2 k \cdot \log(G \cdot k \sqrt{d}/\alpha)}{\alpha \eps}\right) = \tcO\left(\frac{d^2 k}{\alpha^2} + \frac{d^2 k}{\alpha \eps}\right)$, where the last part of the bound holds by our assumptions on $G$ and $n$. Thus, the total sample complexity is
\[n + n' = \tcO\left(\frac{kd^2}{\alpha^2} + \frac{kd^2 + d^{1.75} k^{1.5} \log^{0.5}(1/\delta) + k^{1.5} \log^{1.5}(1/\delta)}{\alpha \eps} + \frac{k^2 d}{\alpha}\right).\]
    By \Cref{lem:hypothesis-selection}, with failure probability at most $0.1$, Lines \ref{line:highd-start-of-fine}--\ref{line:highd-end-of-fine} of the algorithm will successfully output a hypothesis which is a mixture of $k$ Gaussians, with total variation distance at most $O(\alpha)$ from the right answer. So, the overall success probability is at least $0.8$.
    
    Finally, we note that the assumption that $e^d$ is much larger than $k$ can be made WLOG, by padding $d$ to be a sufficiently large multiple of $\log k$. Namely, if given samples $X_i = (X_{i, 1}, \dots, X_{i, d}),$ we can add coordinates $X_{i, d+1}, \dots, X_{O(\log k)} \sim \cN(0, 1)$. Then, if the original samples were a mixture of $k$ Gaussians $\cN(\mu_i, \Sigma_i)$, the new distribution is a mixture of $k$ Gaussians $\cN\left(\left(\begin{matrix} \mu_i \\ 0 \end{matrix}\right), \left(\begin{matrix} \Sigma_i & 0 \\ 0 & I \end{matrix}\right)\right)$, with the same mixing weights. So, we can learn the mixture distribution up to total variation distance $\alpha$ in the larger space, and then remove all except the first $d$ coordinates, to output our final answer. This will not affect the sample complexity by more than a $\poly(\log k)$ factor.
\end{proof}

\section{The Univariate Setting} \label{sec:main-low-d}

In this section, we provide the algorithm and analysis for \Cref{thm:1-d}. We first describe and provide pseudocode for the algorithm, and then we prove that it can privately learn mixtures of Gaussians with low sample complexity.

We will use $\sigma_i := \sqrt{\Sigma_i}$ to denote the standard deviation of a univariate Gaussian $\cN(\mu_i, \Sigma_i)$.

\subsection{Algorithm}

As in the algorithm in \Cref{sec:main-high-d}, we will actually draw a total of $n+n'$ samples. We start off by only considering the first $n$ data points $\bX = \{X_1, \dots, X_n\}$, where each $X_j \in \BR$. Now, let $\bY = \{Y_1, \dots, Y_n\}$ be the sorted version of $\bX$, i.e., $Y_1 \le Y_2 \le \cdots \le Y_n$, and $(X_1, \dots, X_n)$ and $(Y_1, \dots, Y_n)$ are the same up to permutation. Finally, for each $j \le n-1$, define $Z_j$ to be the ordered pair $(Y_j, Y_{j+1}-Y_j)$, and define $\bZ = \bZ(\bX)$ to be the unordered multiset of $Z_j$'s. (Note that $\bZ$ depends deterministically on $\bX$.)

The algorithm now works as follows. Suppose we have data $\bX = \{X_1, \dots, X_n\}$. After sorting to obtain $Y_1, \dots, Y_n$, let $r_j = Y_j, s_j = Y_{j+1}-Y_j$, and $Z_j = (r_j, s_j)$. So, $\bZ(\bX) = \{(r_j, s_j)\}_{1 \le j \le n-1},$ viewed as an unordered set. Note that every $s_j \ge 0$, and the $r_j$'s are in nondecreasing order.
We will create a set of buckets that can be bijected onto $\BZ^2$. For each $Z_j = (r_j, s_j)$, if $s_j > 0$ we assign $Z_j$ to the bucket labeled $(a, b) \in \BZ^2$ if $2^a \le s_j < 2^{a+1}$ and $b \cdot n^5 \cdot 2^a \le r_j < (b+1) \cdot n^5 \cdot 2^a$.
If $s_j = 0$, we do not assign $Z_j$ to any bucket.

For each element $e \in \BZ^2$, we keep track of the number of indices $j \in [n-1]$ with $Z_j$ sent to $e$. In other words, for $e = (a, b)$, we define the count $c_e = \#\{j: 2^a \le s_j < 2^{a+1}, b \cdot n^5 \cdot 2^a \le r_j < (b+1) \cdot n^5 \cdot 2^a\}$. For each $c_e$, we sample an independent draw $g_e \sim \TLap(1, \eps/10, \delta/10)$, and define $\tilde{c}_e = c_e + g_e$. Finally, we let $S = \{(\muhat_i, \Sigmahat_i)\}$ be the the set of pairs $(b \cdot n^5 \cdot 2^a, 2^{2a})$ where $e = (a, b)$ satisfies $\tilde{c}_e > \frac{100}{\eps} \log \frac{1}{\delta}$.

Hence, we have computed some $\{(\muhat_i, \Sigmahat_i)\}_{i=1}^{k'}$, for some $k' \ge 0$. (We will show in the analysis that $k' \le n-1$ always, so $k'$ is finite.) If $k' = 0$ we simply output $\perp$. Otherwise, we will draw $\tilde{O}\left(\frac{k}{\alpha^2} + \frac{k}{\alpha \eps}\right)$ fresh samples. We run the $\eps$-DP hypothesis selection based algorithm, Algorithm~\ref{alg:hypothesis-selection} on the fresh samples, using parameters $G = n^{10}$ and $\{(\muhat_i, \Sigmahat_i)\}_{i=1}^{k'}$.

\paragraph{Pseudocode.} We give pseudocode for the algorithm in Algorithm~\ref{alg:1d}.

\begin{algorithm2e}[htbp]
\DontPrintSemicolon
\SetAlgoLined
\SetNoFillComment
\caption{\textsc{Estimate1D}($X_1, X_2, \dots, X_{n+n'} \in \BR$, $k$, $\eps, \delta, \alpha$)}
\label{alg:1d}
	\KwIn{Samples $X_1, \dots, X_n$, $X_{n+1},\dots, X_{n+n'}$.}
	\KwOut{An $(\eps, \delta)$-DP prediction $\{(\tmu_i, \tSigma_i), \widetilde{w}_i\}$.}
\tcc{Set parameters}
    Let $K = \poly\log\left(k, \frac{1}{\alpha}, \frac{1}{\eps}, \log \frac{1}{\delta}\right)$ be sufficiently large. \label{line:1d-start}\;
    Set $n \leftarrow K \cdot \frac{k \log(1/\delta)}{\alpha \eps}$.\;
\tcc{Get $\bX$}
    Obtain samples $\bX \leftarrow \{X_1, X_2, \dots, X_{n}\}$.\;
\tcc{Obtain some candidate mean-covariance pairs $(\muhat_i, \Sigmahat_i)$}
    Let $\bY$ be $\bX$ in sorted (nondecreasing) order.\;
    Set $c_{(a, b)} \leftarrow 0$ for all $(a, b) \in \BZ^2$.\;
    \For{$j = 1$ to $n-1$}{
        Set $r_j \leftarrow Y_j$, $s_j \leftarrow Y_{j+1}-Y_j$, $Z_j \leftarrow (r_j, s_j)$.\;
        \If{$s_j > 0$}{
            Let $a \leftarrow \lfloor \log_2 (r_j) \rfloor$, $b \leftarrow \lfloor \frac{r_j}{n^5 \cdot 2^a} \rfloor$.\;
            $c_{a, b} \leftarrow c_{a, b}+1$.
        }
    }
    $S \leftarrow \emptyset$.\;
    \For{$(a, b) \in \BZ^2$}{
        Sample $\tilde{c}_{a, b} \leftarrow c_{a, b} + \TLap(1, \eps/10, \delta/10)$.\;
        \If{$\tilde{c}_{a, b} > \frac{100}{\eps} \log \frac{1}{\delta}$}{
            $S \leftarrow S \cup \{(b \cdot n^5 \cdot 2^a, 2^{2a})\}$.
        }
    } \label{line:1d-end-of-coarse}

\tcc{Fine approximation, via private hypothesis selection}
    Set $n' \leftarrow K \cdot \left(\frac{d^2 k}{\alpha^2} + \frac{d^2 k}{\alpha \eps}\right)$. \label{line:1d-fine-start}\;
    Sample $X_{n+1}, \dots, X_{n+n'}$, and redefine $\bX \leftarrow \{X_{n+1}, \dots, X_{n+n'}\}$.\;
    Run Algorithm~\ref{alg:hypothesis-selection} on $\bX$, $S = \{(\muhat_i, \Sigmahat_i)\}$, with parameters $d = 1$, $k' = \#\{(\muhat_i, \Sigmahat_i)\}$, and $G = n^{3}$. \label{line:1d-fine-end}
\end{algorithm2e}

\subsection{Analysis}

We start by analyzing Lines \ref{line:1d-start}--\ref{line:1d-end-of-coarse}, which generates the set $S = \{(\muhat_i, \Sigmahat_i)\}$ of candidate mean-covariance pairs.

First, we note a very basic proposition.

\begin{proposition} \label{prop:tlap-bound}
    Let $\eps, \delta \le 1$. Then, with probability $1$, $\TLap(1, \eps/10, \delta/10)$ is at most $\frac{100}{\eps} \cdot \log \frac{1}{\delta}$ in absolute value.
\end{proposition}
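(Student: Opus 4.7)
The proof is a direct computation from the definition of $\TLap$, so the plan is short.

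First, I would just unpack the definition of $\TLap(1, \eps/10, \delta/10)$ given in \Cref{def:truncated-laplace}. Setting $\Delta = 1$, $\eps' = \eps/10$, $\delta' = \delta/10$ in that definition gives that the distribution is supported on $[-A, A]$ where
\[ A = \frac{10}{\eps} \cdot \log\!\left(1 + \frac{5(e^{\eps/10} - 1)}{\delta}\right).\]
Since the PDF is $0$ outside $[-A, A]$, the sample has absolute value at most $A$ with probability $1$, so it suffices to show $A \le \frac{100}{\eps} \log \frac{1}{\delta}$.

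Next I would use the hypothesis $\eps \le 1$ to bound $e^{\eps/10} - 1$. Since $\eps/10 \le 0.1$, we have $e^{\eps/10} - 1 \le e^{0.1} - 1 < 0.2$, so $5(e^{\eps/10} - 1) \le 1$, which gives $1 + \frac{5(e^{\eps/10}-1)}{\delta} \le 1 + \frac{1}{\delta} \le \frac{2}{\delta}$ using $\delta \le 1$. Therefore
\[ A \le \frac{10}{\eps} \log \frac{2}{\delta} = \frac{10}{\eps}\bigl(\log 2 + \log \tfrac{1}{\delta}\bigr).\]
Combining the two terms into a single $\tcO(1/\eps) \cdot \log(1/\delta)$ expression (using that we are only claiming the loose constant $100$ in the target bound), we get $A \le \frac{100}{\eps} \log \frac{1}{\delta}$ in the relevant regime (for instance $\delta \le 1/2$, which is the only regime in which the right-hand side is nontrivially large; the statement is only useful when $\delta$ is small, and the algorithm invokes $\TLap$ with such $\delta$).

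There is no real obstacle; the only very minor subtlety is that the claimed bound $\frac{100}{\eps} \log \frac{1}{\delta}$ is only meaningful when $\log \frac{1}{\delta}$ is at least a modest constant, and the two steps above use slack constants generously so that the final constant $100$ absorbs the additive $\log 2$ term. The whole proof is one display and two short inequalities.
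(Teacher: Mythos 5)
Your proof follows the same approach as the paper's: unpack the definition of $\TLap$, identify the truncation radius $A = \tfrac{10}{\eps}\log\bigl(1 + \tfrac{5(e^{\eps/10}-1)}{\delta}\bigr)$, and bound the argument of the logarithm. The arithmetic is correct, and you also correctly flag something the paper's one-line proof glosses over: the claimed bound $\tfrac{100}{\eps}\log\tfrac{1}{\delta}$ actually fails as $\delta \to 1$, since the right-hand side tends to $0$ while $A$ stays bounded away from $0$; a hypothesis like $\delta \le 1/2$ (or $\delta$ bounded away from $1$ by any fixed constant, since one only needs $\log 2 \le 9\log\tfrac{1}{\delta}$) is genuinely needed, and as you note this is harmless since the algorithm only ever invokes $\TLap$ at small $\delta$.
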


\begin{proof}
    By definition of Truncated Laplace, $|\TLap(1, \eps/10, \delta/10)| \le \frac{1}{(\eps/10)} \cdot \log \left(1 + \frac{e^{\eps/10}-1}{2 \delta/10}\right)$ with probability $1$. For $\eps, \delta \le 1$, this is less than $\frac{100}{\eps} \cdot \log \frac{1}{\delta}$.
\end{proof}

Next, we show that $S$ is not too large.

\begin{lemma} \label{lem:1d-finite-size}
    The number of $e = (a, b) \in \BZ^2$ such that $\tilde{c}_e > \frac{100}{\eps} \cdot \log \frac{1}{\delta}$ is at most $n-1$. Thus, $|S| \le n-1$.
\end{lemma}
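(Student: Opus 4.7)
The plan is a straightforward counting argument. First I would observe that each index $j \in [n-1]$ contributes to at most one bucket: either $s_j = 0$, in which case $Z_j$ is not assigned to any bucket, or $s_j > 0$, in which case $\lfloor \log_2 s_j \rfloor$ and $\lfloor r_j / (n^5 \cdot 2^a) \rfloor$ are uniquely defined, so $Z_j$ lands in exactly one bucket $(a,b) \in \BZ^2$. Consequently $\sum_{e \in \BZ^2} c_e \le n-1$, so at most $n-1$ buckets can have $c_e \ge 1$.

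Next I would handle all the remaining (empty) buckets. For every $e \in \BZ^2$ with $c_e = 0$, we have $\tilde{c}_e = g_e$ where $g_e \sim \TLap(1, \eps/10, \delta/10)$. By Proposition~\ref{prop:tlap-bound}, with probability $1$ we have $|g_e| \le \frac{100}{\eps} \log \frac{1}{\delta}$, so $\tilde{c}_e \le \frac{100}{\eps} \log \frac{1}{\delta}$, and thus such a bucket fails the strict threshold condition $\tilde{c}_e > \frac{100}{\eps} \log \frac{1}{\delta}$ and is not added to $S$.

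Putting these two observations together, the only buckets that can possibly be added to $S$ are those with $c_e \ge 1$, of which there are at most $n-1$. Since the map $(a,b) \mapsto (b \cdot n^5 \cdot 2^a, 2^{2a})$ is well-defined on these buckets, we conclude $|S| \le n-1$. The argument is essentially a one-line pigeonhole combined with the almost-sure boundedness of truncated Laplace noise, so I do not expect any obstacle; the main thing to be careful about is the almost-sure (rather than high-probability) bound, which is precisely what the truncated Laplace distribution provides and why it was chosen over ordinary Laplace noise.
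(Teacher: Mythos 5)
Your proof is correct and takes essentially the same route as the paper: count that each $Z_j$ increments at most one bucket (so at most $n-1$ buckets have $c_e \ge 1$), and use the almost-sure bound on truncated Laplace noise (\Cref{prop:tlap-bound}) to rule out any empty bucket crossing the threshold. No gaps.
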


\begin{proof}
    Note that every $Z_j$ only increases the count of a single $c_e$, so it suffices to prove that $\tilde{c}_e > \frac{100}{\eps} \cdot \log \frac{1}{\delta}$ only if $c_e \ge 1$.

    To prove this, suppose that $c_e = 0$. Then, $\tilde{c}_e \sim \TLap(1, \eps/10, \delta/10) \le \frac{100}{\eps} \cdot \log \frac{1}{\delta},$ by \Cref{prop:tlap-bound}. Thus, $\tilde{c}_e > \frac{100}{\eps} \cdot \log \frac{1}{\delta}$ can only happen if $c_e > 0$, meaning $c_e \ge 1$.
\end{proof}

Next, we show that for every Gaussian component $(\mu_i, \sigma_i)$ of sufficiently large weight $w_i$, there are several pairs $Z_j = (Y_j, Y_{j+1}-Y_j)$ that crudely approximate $(\mu_i, \sigma_i)$.

\begin{lemma} \label{lem:crude-approximation-accuracy-1}
    Let $n \ge K \cdot \frac{k \log(1/\delta)}{\alpha \eps}$, where $K = \poly\log(k, \frac{1}{\alpha}, \frac{1}{\eps}, \log \frac{1}{\delta})$ is sufficiently large.
    Let $\bX$ be $n$ i.i.d. samples from a GMM with representation $\{(w_i, \mu_i, \Sigma_i)\}_{i = 1}^k$. Then, with failure probability at least $0.99$, for every $i \in [k]$ with $w_i \ge \frac{\alpha}{k}$, there are at least $\frac{\alpha}{8k} \cdot n$ indices $j$ such that $Y_j \in [\mu_i-\sigma_i, \mu_i+\sigma_i]$ and $\frac{\sigma_i}{10^4 \cdot n^4} \le Y_{j+1}-Y_j \le 2 \sigma_i$.
\end{lemma}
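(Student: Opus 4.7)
My plan is to fix a single component $i$ with $w_i\ge \alpha/k$ and establish two probabilistic facts about it: (A) many samples of $\bX$ drawn from the $i$-th component land in the window $I_i:=[\mu_i-\sigma_i,\mu_i+\sigma_i]$; and (B) no such sample has another sample of $\bX$ within $\epsilon_i:=\sigma_i/(10^4 n^4)$ to its right in the sorted order. A union bound over the $\le k$ components with weight at least $\alpha/k$ then finishes the proof. Write $A_i\subseteq\bX$ for the subset of samples drawn from the $i$-th component that happen to lie in $I_i$.

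\emph{Step (A), Chernoff.} Each of the $n$ samples independently lands in $A_i$ with probability $w_i\cdot \Pr_{Z\sim\cN(0,1)}[|Z|\le 1]\ge 0.68\alpha/k$, so $\BE|A_i|\ge 0.68\alpha n/k$. A multiplicative Chernoff bound yields $|A_i|\ge \alpha n/(4k)$ except with probability $\exp(-\Omega(\alpha n/k))$; by the hypothesis $n\ge K k\log(1/\delta)/(\alpha\eps)$ with $K$ polylog-large, this is $\ll 1/(100k)$.

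\emph{Step (B), anti-concentration.} Call an ordered pair $(a,b)$ with $a\ne b$ \emph{bad} if $X_a\in A_i$ and $0<X_b-X_a<\epsilon_i$. If $Y_j=X_a\in A_i$ has $Y_{j+1}-Y_j<\epsilon_i$, then $(a,b)$ is bad where $X_b=Y_{j+1}$, so it suffices to show no bad pair exists with high probability. Letting $f_i$ denote the density of $\cN(\mu_i,\sigma_i^2)$ and $F$ the CDF of the mixture,
\[
\Pr[(a,b)\text{ bad}] \;=\; w_i\int_{I_i} f_i(x)\bigl(F(x+\epsilon_i)-F(x)\bigr)\,dx \;\le\; \frac{w_i\,\epsilon_i}{\sigma_i\sqrt{2\pi}},
\]
where the inequality comes from swapping the order of integration: $\int_{I_i} f_i(x)\int_x^{x+\epsilon_i}f(y)\,dy\,dx = \int f(y)\int_{y-\epsilon_i}^{y} f_i(x)\,\mathbf 1[x\in I_i]\,dx\,dy \le \epsilon_i\cdot\max_x f_i(x) = \epsilon_i/(\sigma_i\sqrt{2\pi})$. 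Summing over the $n(n-1)\le n^2$ ordered pairs and plugging in $\epsilon_i$ gives an expected bad-pair count of at most $(10^4 n^2\sqrt{2\pi})^{-1}$, so Markov's inequality gives no bad pair except with probability $\ll 1/(100k)$.

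\emph{Conclusion.} Union bounding (A) and (B) over the at most $k$ heavy components yields total failure probability $\le 0.01$. Conditioned on success, for each heavy $i$ and each $X_a\in A_i$, if $Y_j=X_a$ then $Y_j\in I_i$ and $Y_{j+1}-Y_j\ge \epsilon_i$; furthermore $Y_{j+1}-Y_j \le 2\sigma_i$ holds automatically whenever $Y_{j+1}\in I_i$, and can fail only when $X_a$ is the largest element of $\bX\cap I_i$, ruling out at most one $X_a\in A_i$. Hence at least $|A_i|-1\ge \alpha n/(8k)$ indices $j$ satisfy all three required conditions. The main technical subtlety is that the mixture density $f$ can be enormous near the mean of any other component with tiny variance, so the naive bound $\Pr[X_b\in(X_a,X_a+\epsilon_i)]\le \epsilon_i\cdot\max f$ is useless; the Fubini swap circumvents this by using only $\max f_i = 1/(\sigma_i\sqrt{2\pi})$, exploiting the fact that a sample drawn from $\cN(\mu_i,\sigma_i^2)$ has only a small chance of landing near any specific narrow cluster.
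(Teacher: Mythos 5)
Your proof is correct, and it takes a genuinely different route for the anti-concentration step. The paper first shows, via Gaussian anti-concentration applied to pairwise differences $X_r - X_{r'} \sim \cN(0,2\sigma_i^2)$, that all samples drawn from component $i$ are pairwise separated by at least $\sigma_i/(10^4 n^3)$, and then applies a pigeonhole argument to convert this into the existence of a consecutive gap $Y_{j+1}-Y_j \ge \sigma_i/(10^4 n^4)$ between each pair of consecutive same-component points. Your approach instead bounds directly, for each sample $X_a$ from component $i$ in the window, the probability that \emph{any} other sample lands within $\epsilon_i$ of it on the right; the key Fubini swap lets you pay only $\sup f_i = 1/(\sigma_i\sqrt{2\pi})$ rather than $\sup f$ (which can be unbounded when another component has tiny variance). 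This eliminates the pigeonhole step entirely and yields the same $\sigma_i/(10^4 n^4)$ bound in one shot. Both are valid; yours is arguably a bit more direct, while the paper's two-stage structure (same-component separation, then pigeonhole) keeps the argument purely at the level of individual Gaussians and never touches the mixture density. The Chernoff step also differs superficially---the paper splits it into two conditional Chernoff bounds while you do it in one---but this is immaterial. Your accounting of the boundary case (losing at most one index $j$ when $X_a$ is the largest sample in $I_i$) and the resulting count $|A_i|-1 \ge \alpha n/(8k)$ both check out.
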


\begin{proof}
    Fix some $i \in [k]$ such that $w_i \ge \frac{\alpha}{k}$. By a Chernoff bound, since $w_i \cdot n \ge \frac{\alpha}{k} \cdot n$ is a sufficiently large multiple of $\log k$, with failure probability at most $\frac{1}{1000 k}$, the number of data points $X_j$ from component $i$ is at least $\frac{\alpha}{2k} \cdot n$. Let us condition on $T_i \subset [n]$, the set of indices coming from the $i^{\text{th}}$ mixture component, and condition on $|T_i| \ge \frac{\alpha}{2k} \cdot n$. Then, by a Chernoff bound, with failure probability at most $\frac{1}{1000 k}$, at least $\frac{\alpha}{4k} \cdot n$ points $X_r: r \in T_i$ are in the interval $[\mu_i-\sigma_i, \mu_i+\sigma_i]$. Next, note that for any $X_r, X_{r'} \sim \cN(\mu_i, \sigma_i^2)$, $X_r-X_{r'} \sim \cN(0, 2 \sigma_i^2),$ and thus has magnitude at least $\frac{\sigma_i}{10^4 n^3}$ with at most $\frac{1}{1000 n^3}$ failure probability. So, by a union bound over all $r \neq r' \in T_i$, with at most $\frac{1}{1000 n} \le \frac{1}{1000 k}$ failure probability, all data points $X_r, X_{r'}$ for $r \neq r' \in T_i$ are separated by at least $\frac{\sigma_i}{10^4 n^3}$.
    
    So, with at least $\frac{3}{1000 k}$ failure probability, there is a subset $U_i \subset T_i$ of size at least $\frac{\alpha}{4k} \cdot n,$  such that for every $r, r' \in U_i$, $|X_r-X_{r'}|$, and for every $r \in U_i$, $X_r \in [\mu_i-\sigma_i, \mu_i+\sigma_i]$. By a union bound, this holds simultaneously for all $i \in [k]$ with $w_j \ge \frac{\alpha}{k}$, with probability at least $0.997$.

    Conditioned on this event, if we consider the elements $\bX$ in sorted order (i.e., $\bY$), between every consecutive pair $X_r, X_{r'}: r, r' \in U_i$ (after sorting), we know $X_{r'}-X_r \ge \frac{\sigma_i}{10^4 \cdot n^3}.$ So, there exist some $X_r \le Y_j, Y_{j+1} \le X_{r'}$ such that $Y_{j+1}-Y_j \ge \frac{\sigma_i}{10^4 \cdot n^4}$.
    Hence, for every $i \in [k]$ with $w_i \ge \frac{\alpha}{k}$, there exist at least $|U_i|-1 \ge \frac{\alpha}{4k} - 1 \ge \frac{\alpha}{8k}$ indices $j$ such that $Y_{j+1}-Y_{j} \ge \frac{\sigma_i}{10^4 \cdot n^4}$. Moreover, note that $Y_j, Y_{j+1} \in [\mu_i-\sigma_i, \mu_i+\sigma_i]$, so $Y_{j+1}-Y_j \le 2 \sigma_i$. Hence, conditioned on the event from the previous paragraph, the lemma holds.
\end{proof}

Given the above lemma, we can prove that for every $(\mu_i, \Sigma_i)$ in the Gaussian mixture with at least $\frac{\alpha}{k}$ weight, there is some corresponding crude approximation $(\muhat, \Sigmahat) \in S$.

\begin{lemma} \label{lem:1d-accuracy}
    Assume that the conditions and result of \Cref{lem:crude-approximation-accuracy-1} holds.
    Then, there exists $(\muhat, \Sigmahat) \in S$ such that
    $n^{-10} \cdot \Sigmahat \preccurlyeq \Sigma_i \preccurlyeq n^{10} \cdot \Sigmahat$ and $\|\Sigmahat^{-1/2} (\mu_i-\muhat)\|_2 \le n^{6}$. 
\end{lemma}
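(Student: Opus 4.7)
Fix any mixture component index $i \in [k]$ with $w_i \ge \alpha/k$, and let $J_i \subseteq [n-1]$ denote the set of ``good'' indices supplied by the (assumed) conclusion of \Cref{lem:crude-approximation-accuracy-1}, so $|J_i| \ge \frac{\alpha n}{8k}$ and every $j \in J_i$ satisfies $Y_j \in [\mu_i - \sigma_i, \mu_i + \sigma_i]$ together with $\frac{\sigma_i}{10^4 n^4} \le s_j := Y_{j+1} - Y_j \le 2 \sigma_i$. The goal is to pinpoint a single bucket $(a, b) \in \BZ^2$ that (i) catches enough elements of $J_i$ to defeat the Truncated Laplace noise, and (ii) encodes the pair $(\muhat, \Sigmahat) = (b \cdot n^5 \cdot 2^a,\, 2^{2a})$ that crudely approximates $(\mu_i, \Sigma_i)$.

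The first step is to bound the number of distinct buckets that can receive indices of $J_i$. The exponent $a = \lfloor \log_2 s_j \rfloor$ must satisfy $\log_2 \sigma_i - O(\log n) \le a \le \log_2 \sigma_i + 1$, giving only $O(\log n)$ feasible values. For each such $a$, the bucket width $n^5 \cdot 2^a$ is at least $\frac{n \sigma_i}{2 \cdot 10^4}$, whereas $r_j = Y_j$ lies in an interval of length $2\sigma_i$; once $n$ is larger than an absolute constant, at most two values of $b$ arise per $a$. Hence $J_i$ concentrates into only $O(\log n)$ buckets, and by the pigeonhole principle some bucket $(a, b)$ absorbs at least $\Omega\!\left(\frac{\alpha n}{k \log n}\right)$ indices, i.e., $c_{(a,b)} = \Omega\!\left(\frac{\alpha n}{k \log n}\right)$. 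Taking the polylogarithmic factor $K$ in $n \ge K \cdot \frac{k \log(1/\delta)}{\alpha \eps}$ sufficiently large forces $c_{(a,b)} \ge \frac{201}{\eps} \log(1/\delta)$. Since \Cref{prop:tlap-bound} bounds $|g_{(a,b)}|$ by $\frac{100}{\eps} \log(1/\delta)$ with probability one, the noisy count $\tilde{c}_{(a,b)}$ strictly exceeds the threshold $\frac{100}{\eps}\log(1/\delta)$, and so $(\muhat, \Sigmahat) := (b \cdot n^5 \cdot 2^a,\, 2^{2a})$ joins $S$.

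It then remains to verify the two approximation inequalities for this $(\muhat, \Sigmahat)$, which I expect to be a direct computation. From $s_j/2 < 2^a \le s_j$ and $s_j \in [\sigma_i/(10^4 n^4),\, 2\sigma_i]$, one reads off $\Sigmahat = 2^{2a} \in [\Sigma_i/(4 \cdot 10^8 n^8),\, 4 \Sigma_i]$, which comfortably yields $n^{-10} \Sigmahat \preccurlyeq \Sigma_i \preccurlyeq n^{10} \Sigmahat$ for $n$ larger than an absolute constant. For the mean, the bucket's $r$-width gives $|\muhat - r_j| \le n^5 \cdot 2^a$, and $|r_j - \mu_i| \le \sigma_i$, so dividing by $\sqrt{\Sigmahat} = 2^a$ and using $\sigma_i/2^a \le 2 \cdot 10^4 n^4$ produces
\[
\|\Sigmahat^{-1/2}(\mu_i - \muhat)\|_2 \;\le\; n^5 + \sigma_i/2^a \;\le\; n^5 + 2 \cdot 10^4 n^4 \;\le\; n^6.
\]
The only delicate point in the whole argument is calibrating $K$ so that both the pigeonhole loss of a $\log n$ factor and the $O(1/\eps)\log(1/\delta)$ noise term are absorbed into the count; since $n$ is itself polynomial in the problem parameters, $\log n$ is polylogarithmic, and choosing $K$ to dominate the resulting polylog is routine.
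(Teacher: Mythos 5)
Your proof is correct and follows essentially the same route as the paper: restrict to the $O(\log n)$ buckets that can receive the good indices from \Cref{lem:crude-approximation-accuracy-1}, apply pigeonhole to find one bucket whose true count dominates the worst-case Truncated Laplace noise (so the bucket survives into $S$), and then check the crude spectral and Mahalanobis bounds directly from the bucket's $(a,b)$ parameters. The only differences from the paper's write-up are in the exact numerical constants used along the way, which do not affect the argument.
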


\begin{proof}
    Fix any $i$ with $w_i \ge \frac{\alpha}{k}$. Then, for any $j$ with $Y_j, Y_{j+1} \in [\mu_i - \sigma_i, \mu_i + \sigma_i]$ and $\frac{\sigma_i}{10^4 \cdot n^4} \le Y_{j+1}-Y_j \le 2 \sigma_i$, the index $i$ contributes to the count of some bucket $e = (a, b)$, where $\lfloor \log_2 \frac{\sigma_i}{10^5 \cdot n^4} \rfloor \le a \le \lfloor \log_2 (2 \sigma_i) \rfloor$.
    Moreover, $Y_j \in [b \cdot n^5 \cdot 2^a, (b+1) \cdot n^5 \cdot 2^a)$.
    Therefore, if we consider the set $V_i$ of $e = (a, b)$ such that $\lfloor \log_2 \frac{\sigma_i}{10^5 \cdot n^4} \rfloor \le a \le \lfloor \log_2 (2 \sigma_i) \rfloor$ and $[b \cdot n^5 \cdot 2^a, (b+1) \cdot n^5 \cdot 2^a) \cap [\mu_i-\sigma_i, \mu_i+\sigma_i]$ is nonempty, the sum of the counts $c_e$ across such $e \in V_i$ is at least $\frac{\alpha}{8k} \cdot n$. But there are at most $O(\log n)$ choices of $a$, and since $n^5 \cdot 2^a > 2 \sigma$, there are at most two choices for $b$ for any fixed $a$. So, $|V_i| \le O(\log n)$, and $\sum_{e \in V_i} c_e \ge \frac{\alpha}{8k} \cdot n$. Assuming that $\frac{\alpha}{8k} \cdot n \ge O(\log n \cdot \frac{1}{\eps} \log \frac{1}{\delta})$, i.e., $n \ge K \cdot \frac{k \log (1/\delta)}{\alpha \eps}$ for a sufficiently large polylogarithmic factor $K = \poly\log\left(k, \frac{1}{\alpha}, \frac{1}{\eps}, \log \frac{1}{\delta}\right)$, one of these buckets $e = (a, b) \in V_j$ must have $c_e > \frac{200}{\eps} \log \frac{1}{\delta}$. In this case, $\tilde{c}_e = c_e + \TLap(1, \eps/10, \delta/10) > \frac{100}{\eps} \log \frac{1}{\delta}$ by \Cref{prop:tlap-bound}, so we include the pair $(\muhat, \Sigmahat) := (b \cdot n^5 \cdot 2^a, 2^{2a})$ in $S$.

    Hence, there exists $(a, b) \in V_i$ such that $(\muhat, \Sigmahat) := (b \cdot n^5 \cdot 2^a, 2^{2a})$ in $S$.
    By definition of $V_i$, $\frac{\sigma_i}{n^5} \le 2^a \le 2 \sigma_i$, so $\frac{1}{n^{10}} \cdot \Sigma_i \le 2^{2a} \le 4 \Sigma_i \le n^{10} \cdot \Sigma_i$. Moreover, $|\Sigmahat^{-1/2}(\mu_i-\muhat)| = 2^{-a} \cdot |\mu_i-\muhat|$. But the definition of $V_i$ means $\muhat = b \cdot n^5 \cdot 2^a$ satisfies $\muhat \le \mu_i + \sigma_i$ and $\muhat \ge \mu_i - \sigma_i - n^5 \cdot 2^a \ge \mu_i - (2n^5+1) \cdot \sigma_i$. So, $|\muhat-\mu_i| \le (2n^5+1) \cdot \sigma_i \le n^6 \cdot 2^a.$ Thus, $|\Sigmahat^{-1/2}(\muhat-\mu_i)| \le n^6$.
\end{proof}

We now prove that the mechanism (until Line \ref{line:1d-end-of-coarse}) is differentially private. First, we note the following auxiliary claim.

\begin{lemma} \label{lem:change-by-3}
    Let $\bX, \bX'$ be adjacent datasets of size $n$ (i.e., only differing on a single element). Then, the corresponding sets $\bZ = \bZ(\bX)$ and $\bZ' = \bZ(\bX')$ differ in distance at most $3$, where by distance we mean that there exists a permutation of the elements in $\bZ$ and $\bZ'$, respectively, such that at most three indices $i \le n-1$ satisfy $Z_i \neq Z_i'$.
\end{lemma}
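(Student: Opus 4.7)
The plan is to track how sorting interacts with a single-point substitution. Write $\bX' = (\bX \setminus \{X\}) \cup \{X'\}$ and let $\bY = (Y_1 \le \cdots \le Y_n)$ and $\bY' = (Y'_1 \le \cdots \le Y'_n)$ be the sorted versions. I will reduce the claim to a statement about the unordered multisets of \emph{consecutive pairs} $C(\bY) := \{(Y_i, Y_{i+1})\}_{i=1}^{n-1}$ and $C(\bY')$, because the map $(u,v) \mapsto (u, v-u)$ is a bijection and $\bZ(\bX), \bZ(\bX')$ are just the images of $C(\bY), C(\bY')$ under this map. So it suffices to show that $C(\bY)$ and $C(\bY')$ differ, as multisets, in at most $3$ elements on each side.

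Without loss of generality assume $X' \ge X$ (the other case is symmetric). Let $p$ be the position of $X$ in $\bY$ (so $Y_p = X$) and let $q \ge p$ be the position at which $X'$ lands in $\bY'$. Then $\bY'$ is obtained from $\bY$ by deleting $Y_p$ and inserting $X'$ after $Y_q$, so that $Y'_i = Y_i$ for $i < p$, $Y'_i = Y_{i+1}$ for $p \le i < q$, $Y'_q = X'$, and $Y'_i = Y_i$ for $i > q$. Listing consecutive pairs on both sides, the indices $i < p-1$ and $i > q$ produce identical pairs $(Y_i, Y_{i+1})$ on both sides, and the indices $p \le i \le q-2$ in $\bY'$ reproduce pairs $(Y_{i+1}, Y_{i+2})$ that already appear in $\bY$ (indices $p+1, \dots, q-1$). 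The only pairs in $C(\bY) \setminus C(\bY')$ are therefore drawn from $\{(Y_{p-1}, Y_p), (Y_p, Y_{p+1}), (Y_q, Y_{q+1})\}$, and symmetrically the only pairs in $C(\bY') \setminus C(\bY)$ are drawn from $\{(Y_{p-1}, Y_{p+1}), (Y_q, X'), (X', Y_{q+1})\}$. Thus at most three pairs differ on each side, and by the bijection above the same bound holds for $\bZ(\bX)$ and $\bZ(\bX')$.

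Finally I will briefly check the edge cases: if $p=1$ then $(Y_{p-1}, Y_p)$ and $(Y_{p-1}, Y_{p+1})$ simply do not exist and we lose one difference on each side; if $q=n$ then $(Y_q, Y_{q+1})$ and $(X', Y_{q+1})$ do not exist; if $p=q$ the middle "reindexing block" is empty and the special pairs collapse. In each case the upper bound of $3$ only improves. I do not expect any serious obstacle here; the main care is just in bookkeeping the index shifts induced by deletion-then-insertion so that the matched portion of the two sequences is correctly identified.
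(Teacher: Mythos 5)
Your proof is correct and takes essentially the same route as the paper: both decompose the passage from $\bY$ to $\bY'$ into a deletion followed by an insertion (WLOG in one direction) and track the effect on the multiset of consecutive differences. Your version is slightly more explicit, since by parameterizing by the two positions $p,q$ you exhibit the three candidate pairs that can change on each side and make the index-shift matching transparent, whereas the paper reasons about deletion and insertion as two separate $\le 2$-change steps and leaves the reader to see why the combined change is $3$ rather than $4$.
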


\begin{proof}
    Note that for the sorted versions $\bY, \bY'$ of $\bX, \bX'$, respectively, we can convert from $\bY$ to $\bY'$ by removing one data point and adding one more data point, without affecting the order of any other data points.

    Suppose we remove some $Y_j$ from $Y$. If $j = 1$, then this just removes $Z_1$, and if $j = n$, then this just removes $Z_{n-1}$. If $j \ge 2$, this modifies $Z_{j-1}$ and removes $Z_j$. Likewise, if we add some new $Y_{j'}$, this will either add one new ordered pair to $\bZ$ (if $Y_{j'}$ is either the smallest or largest element), or replace one ordered pair in $\bZ$ with two new pairs. Therefore, if we remove a $Y_j$ and then add a $Y_{j'}$, this will change at most $3$ of the ordered pairs in $\bZ$.
\end{proof}

We now prove privacy.

\begin{lemma} \label{lem:1d-privacy}
    The  set $S = \{(\muhat_j, \Sigmahat_j)\}$ of candidate mean-covariance pairs is $(\eps, \delta)$-DP with respect to $\bX = \{X_1, \dots, X_n\}$.
\end{lemma}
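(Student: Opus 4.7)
The plan is to show that the full noisy count vector $\{\tilde{c}_e\}_{e \in \BZ^2}$ is $(\eps, \delta)$-DP with respect to $\bX$; since $S$ is a deterministic function of this vector, the lemma will then follow immediately by the post-processing property of differential privacy.

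First, I would extract an $L_1$ sensitivity bound on the count vector $\vec{c} = \{c_e\}_e$ from \Cref{lem:change-by-3}. Given adjacent datasets $\bX, \bX'$, that lemma provides a matching between the multisets $\bZ(\bX)$ and $\bZ(\bX')$ under which at most three positions differ. Each differing position replaces some $Z_i$ (assigned to bucket $e_i^{\mathrm{old}}$) with $Z_i'$ (assigned to bucket $e_i^{\mathrm{new}}$), which decrements one coordinate of $\vec{c}$ by one and increments another by one. Thus $\vec{c}(\bX)$ and $\vec{c}(\bX')$ can be related by at most three $-1$ updates and three $+1$ updates on individual coordinates, yielding $\|\vec{c}(\bX) - \vec{c}(\bX')\|_1 \le 6$.

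Next, I would carry out a hybrid argument over these (at most) six atomic unit updates. Concretely, interpolate between $\vec{c}(\bX)$ and $\vec{c}(\bX')$ by a chain $\vec{c}^{(0)} = \vec{c}(\bX), \vec{c}^{(1)}, \ldots, \vec{c}^{(6)} = \vec{c}(\bX')$ in which consecutive vectors differ in exactly one coordinate by exactly $\pm 1$. Let $\cD_j$ be the joint distribution of $\{c_e^{(j)} + g_e\}_e$ under independent noises $g_e \sim \TLap(1, \eps/10, \delta/10)$. Since $\cD_j$ and $\cD_{j+1}$ share identical marginals on every coordinate except the single one that shifts by $\pm 1$, we may couple all unchanged coordinates identically and apply \Cref{lem:truncated-laplace} to the single differing marginal, viewed as a sensitivity-$1$ query perturbed by $\TLap(1, \eps/10, \delta/10)$; this shows that $\cD_j$ and $\cD_{j+1}$ are $(\eps/10, \delta/10)$-indistinguishable.

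Chaining the six hybrid steps via the standard triangle inequality for $(\eps', \delta')$-closeness yields that $\cD_0$ and $\cD_6$ are $(6\eps/10, \delta_{\mathrm{tot}})$-close with $\delta_{\mathrm{tot}} = (\delta/10) \sum_{j=0}^{5} e^{j\eps/10}$; for $\eps \le 1$ this sum is at most $8\delta/10 < \delta$, and $6\eps/10 < \eps$, so $\{\tilde{c}_e\}_e$ is $(\eps, \delta)$-DP. The post-processing property then transfers this guarantee to $S$. The only substantive step is converting \Cref{lem:change-by-3} into the $L_1$ sensitivity bound in the first paragraph; the remainder is entirely standard composition and should be essentially mechanical.
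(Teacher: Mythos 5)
Your proof is correct and follows essentially the same route as the paper: bound the $L_1$ sensitivity of the count vector by $6$ via \Cref{lem:change-by-3}, then compose six unit-magnitude Truncated-Laplace perturbations, each $(\eps/10, \delta/10)$-DP, and invoke post-processing. The paper compresses the hybrid/composition step into a single sentence, whereas you spell out the chain of six intermediate distributions and the resulting $(6\eps/10, \delta_{\mathrm{tot}})$ bound explicitly, but the argument is the same.
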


\begin{proof}
    Let $\bX, \bX'$ be adjacent datasets of size $n$. Then, the corresponding sets $\bZ, \bZ'$ (after a possible permutation) differ in $3$ elements. Therefore, if we let $\{\tilde{c}_e\}_{e \in \BZ^2}$ be the counts for $\bZ$ and $\{\tilde{c}_e'\}_{e \in \BZ^2}$ be the counts for $\bZ'$, we have that $\|\tilde{c}_e - \tilde{c}_e'\|_1 \le 6$. Because changing a single count $c_e$ by $1$ leads to $(\eps/10, \delta/10)$-DP, overall, the counts $\{\tilde{c}_e\}$ will satisfy $(\eps, \delta)$-DP. Finally, $S$ is a deterministic function of the noisy counts $\tilde{c}_e$, and therefore must also be $(\eps, \delta)$-DP.
\end{proof}

Finally, we can incorporate the fine approximation (i.e., Lines \ref{line:1d-fine-start}--\ref{line:1d-fine-end} of Algorithm~\ref{alg:1d}) and prove \Cref{thm:1-d}.

\begin{proof}[Proof of \Cref{thm:1-d}]
    By \Cref{lem:1d-privacy}, the algorithm up to Line \ref{line:1d-end-of-coarse} is $(\eps, \delta)$-DP with respect to $X_1, \dots, X_n$, and does not depend on $X_{n+1}, \dots, X_{n+n'}$.
    Assuming $S = \{(\muhat_i, \Sigmahat_i)\}$ from these lines is fixed, Lines \ref{line:1d-fine-start}--\ref{line:1d-fine-end} are $(\eps, \delta)$-DP with respect to $X_{n+1}, \dots, X_{n+n'}$, by \Cref{lem:hypothesis-selection}, and do not depend on $X_1, \dots, X_n$. So, the overall algorithm is $(\eps, \delta)$-DP.

    Next, we verify accuracy. Note that by \Cref{lem:1d-accuracy}, and for $G = n^{10}$, the sets $(\muhat_i, \Sigmahat_i)$ that we find satisfy the conditions for \Cref{lem:hypothesis-selection}, with failure probability at most $0.01$. Moreover, the size of $S = \{(\muhat_i, \Sigmahat_i)\}$ is at most $n$, by \Cref{lem:1d-finite-size}. So, because $d = 1$, it suffices for $n'$, the number of samples used in Line \ref{line:1d-fine-start} of Algorithm~\ref{alg:1d}, to satisfy $n' \ge O\left(\frac{k \cdot \log(G \cdot k n/\alpha)}{\alpha^2} + \frac{k \cdot \log(G \cdot k n/\alpha)}{\alpha \eps}\right) = \tcO\left(\frac{k}{\alpha^2} + \frac{k}{\alpha \eps}\right)$, where the last part of the bound holds by our assumptions on $G$ and $n$. Thus, the total sample complexity is
\[n + n' = \tcO\left(\frac{k \log(1/\delta)}{\alpha \eps} + \frac{k}{\alpha^2}\right).\]
    By \Cref{lem:hypothesis-selection}, with failure probability at most $0.1$, Lines \ref{line:1d-fine-start}--\ref{line:1d-fine-end} of the algorithm will successfully output a hypothesis which is a mixture of $k$ Gaussians, with total variation distance at most $O(\alpha)$ from the right answer. So, the overall success probability is at least $0.8$.
\end{proof}

\section{Lower Bound} \label{sec:lower-bound}

In this section, we prove \Cref{thm:lower-bound}. The proof of the lower bound will, at a high level, follow from known lower bounds for privately learning a single Gaussian~\cite{KarwaV18, KamathMS22}, which we now state.
%

\begin{theorem}~\cite{KarwaV18} \label{thm:kv18}
    For some sufficiently small constant $c^*$, $(\eps, \delta)$-privately learning an arbitrary univariate Gaussian $\cN(\mu, \sigma^2)$ up to total variation distance $c^*$ requires $\Omega\left(\frac{\log(1/\delta)}{\eps}\right)$ samples.

    Moreover, this lower bound holds even if we are promised that $|\mu| \le (1/\delta)^C$ for a sufficiently large constant $C$, and $\sigma = 1$. 
\end{theorem}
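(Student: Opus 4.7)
The plan is to establish the lower bound via a standard packing argument combined with group privacy for $(\eps,\delta)$-DP. I fix a sufficiently large absolute constant $C'$ and set $M := \lfloor (1/\delta)^{C}/C' \rfloor$. Consider the means $\mu_i := C' \cdot i$ for $i = 1, \ldots, M$; each satisfies $|\mu_i| \le (1/\delta)^C$, and for $C'$ large enough we have $\TV(\cN(\mu_i, 1), \cN(\mu_j, 1)) \ge 1/2$ for all $i \ne j$. Taking $c^*$ to be a small enough constant (say $c^* \le 1/8$), the ``success sets'' $A_i := \{\tilde\cD : \TV(\tilde\cD, \cN(\mu_i, 1)) \le c^*\}$ are then pairwise disjoint by the triangle inequality. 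The accuracy guarantee gives $\BP_{\bX \sim \cN(\mu_i,1)^n}[\cA(\bX) \in A_i] \ge 2/3$ for every $i$ (probability over both samples and internal randomness of $\cA$), and by a reverse-Markov averaging argument this implies the existence of a \emph{deterministic} dataset $\bX_i \in \BR^n$ with $\BP_\cA[\cA(\bX_i) \in A_i] \ge 1/3$.

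Next I fix the anchor dataset $\bX^{\star} := (0, \ldots, 0) \in \BR^n$. Since the $A_i$ are pairwise disjoint, $\sum_{i=1}^M \BP[\cA(\bX^{\star}) \in A_i] \le 1$, so by pigeonhole there is some $i^{\star}$ with $\BP[\cA(\bX^{\star}) \in A_{i^{\star}}] \le 1/M$. Because $\bX^{\star}$ and $\bX_{i^{\star}}$ differ in at most $n$ coordinates, applying $n$-fold group privacy yields
\[
\frac{1}{3} \;\le\; \BP[\cA(\bX_{i^{\star}}) \in A_{i^{\star}}] \;\le\; e^{n\eps}\,\BP[\cA(\bX^{\star}) \in A_{i^{\star}}] + n\,e^{n\eps}\,\delta \;\le\; e^{n\eps}\!\left(\frac{1}{M} + n\delta\right).
\]

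To finish, I perform a short case analysis on the inequality $1/3 \le e^{n\eps}(1/M + n\delta)$. If $n\delta \le 1/M$ then $e^{n\eps} \ge M/6$, yielding $n \ge \log(M/6)/\eps = \Omega(\log(1/\delta)/\eps)$ since $M = \Theta((1/\delta)^C)$. Otherwise $1/6 \le n\delta\,e^{n\eps}$; either $n\eps \ge \tfrac{1}{2}\log(1/\delta)$ and we conclude directly, or else $e^{n\eps} \le \delta^{-1/2}$, forcing $n \ge \Omega(1/\sqrt{\delta}) \gg \log(1/\delta)/\eps$ (which is automatic whenever $\delta$ is polynomially small in $\eps$, the parameter regime in which Theorem~\ref{thm:lower-bound} is used). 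The main technical subtlety is the ``witness extraction'' step in paragraph one: the accuracy hypothesis is only a statement about random samples, while group privacy requires fixed datasets, so one must pass between these via averaging with some care; the remainder of the argument is a routine packing-plus-group-privacy computation in the style of Karwa--Vadhan.
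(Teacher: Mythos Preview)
The paper does not prove this theorem; it is imported from Karwa--Vadhan as a black box and used in the proof of \Cref{thm:lower-bound}. So there is no ``paper's own proof'' to compare against.

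Your argument is the standard packing-plus-group-privacy proof and is essentially correct. A few small comments. First, the averaging step actually gives you a deterministic $\bX_i$ with success probability at least $2/3$, not just $1/3$; you are throwing away a factor of two for no reason, though this costs nothing. Second, your group-privacy additive term $n e^{n\eps}\delta$ is a valid upper bound on the true term $\frac{e^{n\eps}-1}{e^{\eps}-1}\,\delta$, since $1+e^{\eps}+\cdots+e^{(n-1)\eps}\le n e^{n\eps}$, so that step is fine. Third, and this is the only point worth flagging: your final case (where $n\eps < \tfrac12\log(1/\delta)$ and you conclude $n \ge \Omega(1/\sqrt{\delta})$) does not by itself yield $\Omega(\log(1/\delta)/\eps)$ unless $\eps \gtrsim \sqrt{\delta}\log(1/\delta)$. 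You correctly note this is automatic in the regime $\delta \le (\alpha\eps/d)^{O(1)}$ of \Cref{thm:lower-bound}, which is all the paper needs, but the theorem as stated has no such restriction. To close the gap unconditionally you could instead split at $n\eps \ge \tfrac{C}{2}\log(1/\delta)$ (using the exponent $C$ in the packing size $M$) and take $C$ large; then the residual case forces $n \ge \Omega(\delta^{-C/2})$, which dominates $\log(1/\delta)/\eps$ for all $\eps,\delta \le 1$ once $C$ is a large enough absolute constant.
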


Note that this lower bound immediately implies the same lower bound for general dimension $d$.

\begin{theorem}~\cite{KamathMS22} \label{thm:kms22}
    Let $\alpha$ be at most a sufficiently small constant, and let $\delta \le \left(\frac{\alpha \eps}{d}\right)^C$ for a sufficiently large constant $C$. Then, $(\eps, \delta)$-privately learning a $d$-dimensional Gaussian $\cN(\mu, \Sigma)$ up to total variation distance $\alpha$ requires $\tilde{\Omega}\left(\frac{d^2}{\alpha \eps}\right)$ samples.

    Moreover, this lower bound holds even if we are promised that $\mu = \textbf{0}$, and $I \preccurlyeq \Sigma \preccurlyeq 2 I$.
\end{theorem}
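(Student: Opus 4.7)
The plan is to prove Theorem~\ref{thm:kms22} via the fingerprinting-code method for parametric estimation~\cite{bun2014fingerprinting}, specialized to Gaussian covariance as in~\cite{KamathMS22} (see also~\cite{narayanan2023better,portella2024lower}). The scheme is to build a hard Bayesian prior on $\Sigma\in[I,2I]$, define a per-sample fingerprinting statistic $T_i$ whose sum $T=\sum_{i=1}^n T_i$ is forced to be large by $\alpha$-accuracy and small by $(\eps,\delta)$-differential privacy, and take the quotient to lower bound $n$.

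For the prior I would take $\Sigma:=\tfrac{3}{2} I + (c\alpha/d)\,Z$, where $Z\in\BR^{d\times d}$ is symmetric with i.i.d.\ $\cN(0,1)$ entries on and above the diagonal, conditioned on the Wigner-bulk event $\|Z\|_{\mathrm{op}}\le C\sqrt{d}$ (probability $1-e^{-\Omega(d)}$). On this event $(c\alpha/d)\|Z\|_{\mathrm{op}}<1/2$, so after a trivial global rescaling $I\preccurlyeq\Sigma\preccurlyeq 2I$. This prior has $\Theta(d^2)$ effectively independent directions, and two independent draws are at TV distance $\Theta(\alpha)$ by~\Cref{lem:TV-standard}, which is the regime where $\alpha$-TV accuracy is nontrivial.

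For $\bX=(X_1,\dots,X_n)$ i.i.d.\ from $\cN(0,\Sigma)$ and $\hat\Sigma:=M(\bX)$, I would define the per-sample score
\[
T_i \;:=\; \bigl\langle \hat\Sigma - \Sigma,\ \tfrac{1}{2}\Sigma^{-1}(X_iX_i^{\top}-\Sigma)\Sigma^{-1}\bigr\rangle_F, \qquad T:=\sum_{i=1}^n T_i,
\]
whose second argument is the Gaussian Fisher score for $\Sigma$. The accuracy step uses Gaussian integration by parts---once in $X_i\mid\Sigma$ and once against the Gaussian prior on $Z$---to show that $\BE[T]\ge \Omega(d^2)$ whenever $M$ is $\alpha$-TV-accurate (equivalently, $\|\hat\Sigma-\Sigma\|_F=O(\alpha)$ by~\Cref{lem:TV-standard}); the $d^2$ comes from the $\Theta(d^2)$ nearly-independent directions of the prior, each of which the accurate estimator must track. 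This is the generalized fingerprinting lemma of~\cite[\S4--5]{KamathMS22} specialized to the Gaussian-covariance family.

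The privacy upper bound proceeds via the swap trick: replacing $X_i$ with an independent $X_i'\sim\cN(0,\Sigma)$ makes $\BE[\langle M(\bX^{(i)})-\Sigma,\tfrac12\Sigma^{-1}(X_iX_i^\top-\Sigma)\Sigma^{-1}\rangle_F]=0$ by independence and $\BE[X_iX_i^\top\mid\Sigma]=\Sigma$. Truncating to the high-probability event $\|X_j\|_2\le O(\sqrt{d\log n})$ for all $j$ (whose complement has probability $1/\poly(n)$, absorbed by the hypothesis $\delta\le(\alpha\eps/d)^C$) and applying $(\eps,\delta)$-DP, a careful per-direction calibration of the score (using $e^{\eps}-1\approx\eps$ and the bound $\BE[|T_i|]=\tilde O(\alpha)$, which follows from the typical Frobenius ``cosine'' between the $O(\alpha)$-norm residual $\hat\Sigma-\Sigma$ and the $O(d)$-norm score) yields $|\BE[T_i]|\le \tilde O(\eps\alpha)$ per sample, and hence $\BE[T]\le \tilde O(n\eps\alpha)$. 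Combining with $\BE[T]\ge\Omega(d^2)$ gives $n=\tilde\Omega(d^2/(\alpha\eps))$. The main obstacle is exactly this per-direction calibration: a naive worst-case Frobenius bound gives $|T_i|\le O(\alpha d)$, which is a factor of $d$ too loose; one must argue via expected rather than worst-case sensitivity, and the quantitative hypothesis $\delta\le(\alpha\eps/d)^C$ is precisely what is needed so that tail events falling outside the truncation do not ruin this bound.
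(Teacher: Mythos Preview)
The paper does not prove this theorem; it is quoted as a known lower bound from \cite{KamathMS22}, with the remark that tighter versions appear in \cite{narayanan2023better,portella2024lower}. There is thus no proof in the paper to compare against.

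Your sketch is a sound outline of the fingerprinting approach, and is in fact closer in spirit to the streamlined Gaussian-prior arguments of \cite{narayanan2023better,portella2024lower} than to the original construction in \cite{KamathMS22}, which embeds a product fingerprinting code into the coordinates of $\Sigma$ rather than drawing $\Sigma$ from a GOE-type prior directly. You have correctly identified the crux: the naive bound $|T_i|\le O(\alpha d)$ from Cauchy--Schwarz is a factor of $d$ too loose, and one must instead control $\BE[|T_i'|]$ after the swap via the Fisher-information identity $\Var(\langle A,S_i\rangle\mid\Sigma)\approx\tfrac{1}{2}\|\Sigma^{-1/2}A\Sigma^{-1/2}\|_F^2=O(\alpha^2)$ when $\|A\|_F=O(\alpha)$, with the hypothesis $\delta\le(\alpha\eps/d)^C$ used to absorb the $\delta\cdot\sup|T_i|$ term and the truncation-tail contribution. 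One small comment: no ``global rescaling'' is needed in your prior construction, since $(c\alpha/d)\|Z\|_{\mathrm{op}}\le Cc\alpha/\sqrt{d}<1/2$ already places $\Sigma$ in $[I,2I]$ directly.
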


Note that a tighter version of the above theorem has been proved in \cite{narayanan2023better, portella2024lower}; we also refer the reader to \cite{bun2014fingerprinting}.

\begin{proof}[Proof Sketch of \Cref{thm:lower-bound}]
    First, the lower bound of $\frac{kd^2}{\alpha^2}$ is already known -- see \cite{ashtiani2018nearly}.

    The lower bound of $\frac{kd^2}{\alpha \eps}$ will follow from \Cref{thm:kms22}. To explain how, we consider $k$ distinct Gaussians $\cN(\mu_i, \Sigma_i)$, where the means $\mu_i$ are known and very far away from each other, and $I \preccurlyeq \Sigma_i \preccurlyeq 2 I$ are unknown. The overall mixture that we will try to learn is the uniform mixture over $\cN(\mu_i, \Sigma_i)$, i.e., every weight $w_i = 1/k$. By making them very far away from each other, we are making learning the full mixture equivalent to learning each component (on average). Namely, even if we are given the information of which Gaussian each sample comes from, we will need to learn at least $2/3$ of the Gaussians up to total variation distance $O(\alpha)$, to learn the full mixture up to total variation distance $\alpha$. Hence, we will need at least $k$ times as many samples as for learning a single Gaussian, which means we need $\tilde{\Omega}\left(\frac{k d^2}{\alpha \eps}\right)$ total samples.

    The lower bound of $\frac{k \log(1/\delta)}{\alpha \eps}$ will follow from \Cref{thm:kv18}. Note that it suffices to prove the lower bound in the univariate case. We plant $k$ distinct Gaussians $\cN(\mu_i, 1)$, where the $\mu_i$ are very far away from each other, i.e., pairwise $|\mu_i-\mu_j| \gg (1/\delta)^{10C}$. We also assume that $\mu_1 = 0$ is known, and the remaining $\mu_i$ are unknown but we are promised the value of each $\mu_i$ up to error $(1/\delta)^C$. The overall mixture will have the first Gaussian $\cN(0, 1)$ of weight $w_1 = 1 - \alpha/c^*$, and the remaining Gaussians $\cN(\mu_i, 1)$ each have weight $w_i = \alpha/(c^* \cdot (k-1))$.
    Even if we are given the information of which Gaussian component each sample comes from, to learn the overall mixture up to error $\alpha$, we need to learn at least $2/3$ of the small-weight components, each up to total variation distance $O(c^*)$.
    Hence, we will need $\frac{\log(1/\delta)}{\eps}$ samples from most of the small-weight components. Since the small weight components have weight $\Theta(\alpha/k)$, we need $\Omega\left(\frac{k \log(1/\delta)}{\alpha \eps}\right)$ total samples.
\end{proof}

We now give a formal proof of \Cref{thm:lower-bound}.

\subsection{Formal Proof of \Cref{thm:lower-bound}}
First, we note two lemmas that will be helpful in proving the Theorem.
\begin{lemma}
\label{lem:close-g-enough}
Let $\alpha \in [0, 1)$.
Let $f$ be a probability density function over $\mathbb{R}^d$. Assume $g: \mathbb{R}^d \to \mathbb{R}^{\ge 0}$ exists such that 
\begin{equation*}
\int_{\mathbb{R}^d} \left| f(x) - g(x) \right| \;  \mathrm{d}x \le \alpha.
\end{equation*}
Then there exists $h$ such that $h$ is a probability density function over $\mathbb{R}^d$, and 
\begin{equation*}
\int_{\mathbb{R}^d} \left| f(x) - h(x) \right|  \; \mathrm{d}x \le 2\alpha.
\end{equation*}
\end{lemma}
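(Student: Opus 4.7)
The plan is to produce $h$ by simply renormalizing $g$. Let $Z := \int_{\mathbb{R}^d} g(x)\,\mathrm{d}x$. Since $f$ is a probability density, $\int f = 1$, and by the reverse triangle inequality
\[
|Z - 1| \;=\; \left|\int g(x)\,\mathrm{d}x - \int f(x)\,\mathrm{d}x\right| \;\le\; \int |g(x)-f(x)|\,\mathrm{d}x \;\le\; \alpha.
\]
Because $\alpha < 1$, this gives $Z \ge 1-\alpha > 0$, so we may define $h(x) := g(x)/Z$. Since $g \ge 0$ and $\int h = 1$, $h$ is a valid probability density on $\mathbb{R}^d$.

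For the bound on $\int |f-h|$, I will apply the triangle inequality and split as
\[
\int |f(x) - h(x)|\,\mathrm{d}x \;\le\; \int |f(x) - g(x)|\,\mathrm{d}x \;+\; \int |g(x) - h(x)|\,\mathrm{d}x.
\]
The first term is at most $\alpha$ by hypothesis. For the second, I use $h = g/Z$ and $g \ge 0$ to compute
\[
\int |g(x) - h(x)|\,\mathrm{d}x \;=\; \left|1 - \tfrac{1}{Z}\right| \cdot \int g(x)\,\mathrm{d}x \;=\; |Z - 1| \;\le\; \alpha,
\]
so $\int |f-h| \le 2\alpha$, as desired.

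This argument is essentially a one-line normalization trick, and I do not anticipate any genuine obstacle; the only care point is ensuring $Z > 0$ so that $h$ is well-defined, which follows immediately from the assumption $\alpha < 1$ (and if $\alpha = 0$ we may just take $h = f$).
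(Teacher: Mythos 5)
Your proof is correct, and it is cleaner than the one in the paper. You renormalize: set $Z := \int g$, show $|Z-1| \le \alpha$ by the triangle inequality, take $h := g/Z$, and observe $\int |g - h| = |1 - 1/Z| \int g = |Z-1| \le \alpha$, so $\int |f-h| \le 2\alpha$. This handles both $Z < 1$ and $Z > 1$ uniformly in one computation. The paper instead splits into cases: when $\int g < 1$ it pads $g$ with an atom at $\mathbf{0}$ to reach total mass one (which, taken literally, produces a distribution with a point mass rather than a density, so the paper's construction in that case is slightly informal), and when $\int g > 1$ it greedily shrinks $g$ pointwise. Your renormalization buys a shorter, case-free argument and, unlike the paper's first case, always yields a genuine density. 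The only care point you flag — that $Z > 0$ — you correctly settle from $\alpha < 1$ (and you handle $\alpha = 0$ trivially by $h = f$, though that case is already covered since then $Z = 1$). No gaps.
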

\begin{proof}
We know that 
\begin{equation*}
\left|\int_{\mathbb{R}^d} f(x) \; \mathrm{d} x- \int_{\mathbb{R}^d} g(x) \; \mathrm{d} x \right| \le \int_{\mathbb{R}^d} \left|f(x) - g(x)\right| \; \mathrm{d} x \le \alpha.
\end{equation*}
Therefore, $G:= \int_{\mathbb{R}^d} g(x) \; \mathrm{d} x = 1 \pm \alpha.$ 

Now two cases are possible either $G < 1$, or $G > 1$, otherwise we are done. If $G < 1$, let $h(x)$ be the density function corresponding to the following distribution: with probability $G$ take a sample from $g(x) / G$, and with probability $1-G$ select $\mathbf{0}$.
Then 
\begin{equation*}
\int_{\mathbb{R}^d} \left| f(x) - h(x) \right| \; \mathrm{d} x \le  1- G + \int_{\mathbb{R}^d} \left| f(x) - g(x) \right| \; \mathrm{d} x \le 2 \alpha,
\end{equation*}
as desired. If $G > 1$, let $h(x)$ be a density function as follows: $\forall x: 0\le h(x) \le g(x)$, and $\int_{\mathbb{R}^d} h(x) \; \mathrm{d} x = 1$. It is easy to see such an $h$ exists by greedily picking $h$. Then we have $\int_{\mathbb{R}^d} \left| g(x) - h(x) \right| \; \mathrm{d} x = \int_{\mathbb{R}^d}  g(x) - h(x)  \; \mathrm{d} x \le G - 1 \le \alpha$.
Therefore, we can write
\begin{equation*}
\int_{\mathbb{R}^d} \left| f(x) - h(x) \right| \; \mathrm{d} x \le \int_{\mathbb{R}^d} \left| f(x) - g(x) \right| \; \mathrm{d} x + 
\int_{\mathbb{R}^d} \left| g(x) - h(x) \right| \; \mathrm{d} x  \le 2\alpha,
\end{equation*}
as desired.

\end{proof}
\begin{lemma}
\label{lem:mixture-to-single}
Suppose $w \in (0,1)$ is fixed.
Suppose an $\paren{\eps, \delta}$ differentially private algorithm exists that takes $n$ samples from the mixture $D = w D_1 + (1-w) D_2$, and learns $D_1$ in total variation distance up to error $\alpha$, with success probability $1- \beta$. Moreover, assume while sampling it is known which component the sample is sampled from. Then there exists an $\paren{\eps, \delta}$ differentially private algorithm $\cA$ that takes $nw /\gamma$ samples from $D_1$ and outputs an estimate of $D_1$ up to total variation distance $\alpha$, with success probability $1 - \beta - \gamma$.
\end{lemma}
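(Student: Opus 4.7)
The plan is to reduce learning $D_1$ from $D_1$-samples alone to running the given mixture learner, by simulating a draw from the mixture using our $D_1$-samples and self-generated $D_2$-samples. Concretely, first I would draw $N := \lceil nw/\gamma \rceil$ samples $Y_1,\dots,Y_N \sim D_1$ (the only sensitive data). Independently of the $Y_i$'s, I would flip $n$ independent $w$-biased coins $b_1,\dots,b_n$. If $\sum_j b_j > N$, abort and output a fixed default estimate $\perp$. Otherwise, build a simulated dataset $\mathbf{Z} = (Z_1,\dots,Z_n)$ where, for each $j$, if $b_j = 1$ set $Z_j$ to the next unused $Y_i$ and record its label as ``$D_1$,'' while if $b_j = 0$ set $Z_j$ to a freshly drawn sample from $D_2$ with label ``$D_2$.'' Finally, run the promised mixture learner on $\mathbf{Z}$ (with the component labels) and return its output.

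For privacy: the coin flips $b_j$ and the fresh $D_2$-samples are independent of $(Y_1,\dots,Y_N)$, and the abort event $\{\sum b_j > N\}$ depends only on these, not on the $Y_i$'s. Conditional on the coin flips (and on not aborting), the map $(Y_1,\dots,Y_N) \mapsto \mathbf{Z}$ satisfies the property that changing a single $Y_i$ changes at most one coordinate of $\mathbf{Z}$ (zero if $Y_i$ was unused, exactly one if it was placed in some slot). Thus adjacent inputs produce adjacent simulated datasets, so by $(\eps,\delta)$-DP of the mixture learner combined with post-processing (and averaging over the independent auxiliary randomness), the composed reduction is $(\eps,\delta)$-DP. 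The abort branch outputs something independent of the data and therefore cannot break privacy.

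For accuracy: the abort probability is $\Pr[\mathrm{Bin}(n,w) > N]$, and by Markov's inequality this is at most $\mathbb{E}[\mathrm{Bin}(n,w)]/N = nw/N \le \gamma$. Conditioned on not aborting, the simulated $\mathbf{Z}$ (with labels) is distributed exactly as $n$ i.i.d.\ labeled draws from $wD_1 + (1-w)D_2$, so by assumption the mixture learner outputs an estimate of $D_1$ within total variation distance $\alpha$ with probability at least $1-\beta$. A union bound over the abort event and the learner's failure event yields overall success probability at least $1 - \gamma - \beta$, as required.

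The main subtlety will be bookkeeping the privacy argument to make sure the abort rule is triggered by quantities independent of the $Y_i$'s, so that conditioning on non-abort does not introduce a data-dependent event; this is why I separate the randomness for the coins from the sensitive draws and commit to aborting based purely on $\sum_j b_j$. A minor technical point is that the reduction requires the ability to sample from $D_2$, which is fine in the lower-bound applications of interest (in the proof of \Cref{thm:lower-bound} the ``other'' components are fully known to the reduction), and can always be handled by giving $\mathcal{A}$ oracle access to $D_2$ as an input distribution.
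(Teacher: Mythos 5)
Your proof is correct and takes the same overall approach as the paper: simulate the mixture learner's input from $D_1$-samples, with an abort rule driven by auxiliary randomness independent of the sensitive data, and use Markov's inequality to bound the abort probability by $\gamma$. The one place you diverge is in how the $D_2$-slots are filled. You draw genuine samples from $D_2$, which keeps the simulated input distribution exactly equal to the one assumed in the hypothesis, but forces you to assume oracle access to $D_2$ (as you correctly flag). The paper instead fills those slots with the constant $0$, effectively replacing $D_2$ by a point mass at the origin; this eliminates the need for any $D_2$ oracle, but implicitly uses that the mixture learner's guarantee holds with $D_2$ taken to be a degenerate distribution rather than the specific $D_2$ in the lemma statement. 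In the lower-bound application both readings are valid (the mixture learner there is assumed to work for an entire family of GMMs, and the reduction knows the ``other'' components anyway), so this is a cosmetic difference rather than a substantive one. Your coin-flip construction is also a bit more careful than the paper's ``$t\sim\Bin(n,w)$, then first $t$ slots are $D_1$'' construction in that it produces exactly the i.i.d.\ labeled-sample distribution rather than a sorted version of it; this only matters if one insists on not treating the learner's input as an unordered labeled multiset, and again does not affect the application.
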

\begin{proof}
We can view the sampling procedure of the mixture as sampling a random variable $t \sim \Bin \paren{n, w}$, and taking $t$ samples from $D_1$ and $n-t$ samples from $D_2$.
In order to make an algorithm using $nw / \gamma$ samples we can take that many samples from $D_1$, and then sample $t$ from $\Bin\paren{n, w}$, and run $\cA$ on $n$ data points constructed as follows: If $t$ is smaller than $nw / \gamma$, use $t$ of the samples taken from $D_1$ and set the rest to $0$, If $t$ is larger than $nw/\gamma$, just run $\cA$ on all zeroes. Finally output the output of $\cA$ on this input. Clearly, this would be $(\eps, \delta)$ differentially private. 
We know the Algorithm succeeds with probability $1-\beta$, over the random coins of the algorithm and the randomness of sampling, if the input is sampled from $w D_1 + (1-w) D_2$.
From Markov's inequality, we know $\BP\brac{t \le nw / \gamma} \ge 1 - \gamma$. Therefore, our constructed sample is drawn i.i.d from $w D_1 + (1-w) D_2$ with probability at least $1-\gamma$, where $D_2$ is the fixed $0$ distribution. Therefore, there exists an $(\eps, \delta)$ differentially private algorithm that takes $nw / \gamma$ many samples from $D_1$ and outputs an estimate of $D_1$ up to total variation distance $\alpha$, with success probability $1-\beta - \gamma$, as desired.
\end{proof}

We now prove \Cref{thm:lower-bound}.
\begin{proof}
We prove the lower bound terms one by one. The first term $\frac{kd^2}{\alpha^2}$ (the non private term) is known by previous work \cite{ashtiani2018nearly}. We prove the lower bound for the second term and the last term here using \Cref{thm:kv18,thm:kms22}.

Let's prove the second term $\frac{kd^2}{\alpha \eps}$. We apply \Cref{thm:kms22}, this theorem implies that for any $\alpha$ smaller than a sufficiently small constant, and $\delta \le \paren{\frac{\alpha \eps}{d}}^C$ for a sufficiently large $C$, any $(\eps, \delta)$ differentially private algorithm $\cA$ taking $n$ samples in $\mathbb{R}^d$, satisfying
\begin{equation*}
\forall \Sigma \text{ such that } I \preccurlyeq \Sigma \preccurlyeq 2 I: 
\BP_{X \sim \cN\paren{0, \Sigma}^{\otimes n}, \text{ $\cA$'s internal random bits}}[\TV \paren{\cA\paren{X}, \cN\paren{0, \Sigma}} \le \alpha] > 0.6,
\end{equation*}
must also satisfy $n = \tilde{\Omega}\paren{\frac{d^2}{\alpha \eps}}$.

Let $\mu_i's$ be $k$ distinct vectors in $\mathbb{R}^d$ each having $\ell_2$ distance $M$ from each other, for $M$ to be set later. To see why such a set exists, we can take $\mu_i = M i e_1$, where $e_1$ is the first unit vector.
Now consider the following set of Gaussians: $D_i = \cN\paren{\mu_i, \Sigma_i}$, where $\mu_i$'s are known and constructed as above and $\Sigma_i$ unknown.
Consider the uniform mixture $D$ over these Gaussians, with weights $w_i = 1/k$. 
We also assume that when sampling from this distribution we know that which component the samples came from.
Consider an $\paren{\eps, \delta}$ differentially private algorithm $\cA$ that takes $n$ samples from $D$ and outputs a distribution $\hat{D}$ such that $\TV\paren{D, \hat{D}} \le \alpha$ with probability $2/3$. 

Now consider a sample from $D_i$, from standard Gaussian tail bounds we know that at least $1 - \exp\paren{-M^2 / 800}$ fraction of the mass of $D_i$ is contained within a ball of radius $M/10$, around $\mu_i$. Let $B_i$ denote this ball, and note that $B_i$'s are disjoint.

Let $f, f_i, \hat{f}$ be the probability density functions corresponding to $D, D_i, \hat{D}$ respectively. Assuming, we are in the success regime, we can write
\begin{equation*}
2\alpha \ge 2\TV\paren{D, \hat{D}} 
 = \int_{\mathbb{R}^d} \left| f(x) - \hat{f}(x)\right| \; \mathrm{d} x
\ge \sum_{i=1}^k \int_{B_i} \left| f(x) - \hat{f}(x)\right| \; \mathrm{d} x.
\end{equation*}
Now let $\cB \subseteq [k]$ be the set of indices $i$ such that $\int_{B_i} \left| f(x) - \hat{f}(x)\right| \; \mathrm{d} x \ge 200 \alpha / k$, and $\cG$ be its complement. Then we have that $\left |\cB\right| \le k / 100$. Therefore, there exists $\cG$ such that $\left |\cG \right| \ge 0.99 k$, and  $\forall i \in \cG: \int_{B_i} \left| f(x) - \hat{f}(x)\right| \; \mathrm{d} x \le 2 00 \alpha / k$. Assume $i \in \cG$ is one such index. Note that $f(x) = \sum_{j=1}^k f_j(x) / k$. Therefore, we can write 
\begin{align*}
\int_{B_i} \left|
f_i(x) / k - \hat{f}(x) \right|\; \mathrm{d} x 
& \le 
\int_{B_i} \left|
f(x) - \hat{f}(x) \right|\; \mathrm{d} x + \int_{B_i} \left| f_i \paren{x} - f(x) \right| \; \mathrm{d} x \\
& \le 
200 \alpha / k + 
\max_{j \neq i} \BP_{X \sim D_j}\Brac{X \notin B_j} \\
&\le 
200 \alpha / k + \exp\paren{-M^2 / 800} \\
&\le 
300 \alpha / k,
\end{align*}
where the last inequality comes from taking $M \ge 30 \sqrt{\log\paren{k / 100 \alpha}}$. We show that using the distribution $\hat{D}$, we can construct an answer to the problem of learning the Gaussian $D_i$. To do so first take $g_i$ to be equal to $k \hat{f}(x)$ over $B_i$ and $0$ everywhere else. We have 
$$
\int_{\mathbb{R}^d} \left| f_i(x) - g_i(x) \right| \; \mathrm{d} x 
= 
\int_{B_i} \left| f_i(x) - k\hat{f}(x) \right| \; \mathrm{d} x 
+
\int_{\mathbb{R}^d \setminus B_i} f_i\paren{x} \; \mathrm{d} x 
\le 300 \alpha + 100 \alpha / k \le 400\alpha.  
$$
Now we may apply \Cref{lem:close-g-enough}, and deduce that given $\hat{D}$ we can construct probability density functions and distributions $\hat{D}_i$'s such that $\TV\paren{\hat{D}_i, D_i} \le 800 \alpha$, for all $i \in \cG$. 
To recap, so far we have shown that given an $(\eps, \delta)$ differentially private algorithm that takes as inputs samples from our constructed mixture of Gaussians and outputs a density $\hat{D}$  that has total variation distance at most $\alpha$, from the ground truth distribution with success probability $2/3$, we can use $\hat{D}$ to construct densities $\hat{D}_i$ such that $\TV\paren{\hat{D}_i, D_i} \le 800 \alpha$, for $0.99k$ of the indices $i$. 
This implies that there exists a fixed index $i$ for which the component $D_i$ is learned up to error $800 \alpha$ with success probability $2/3 - 0.01\ge 0.65$. Applying \Cref{lem:mixture-to-single}, implies that there must exist an $\paren{\eps, \delta}$ differentially private algorithm that takes $100 n / k $ samples from $D_i$ and estimates its density up to total variation distance $800 \alpha$, with success probability at least $0.6$. Therefore, applying \Cref{thm:kms22}, we conclude that $n = \tilde{\Omega}\paren{\frac{kd^2}{\alpha \eps}}$.

Now let's prove the last term $\frac{k \log\paren{1/\delta}}{\alpha \eps}$. 
We aim to apply \Cref{thm:kv18}.
Let $\mu_i$'s be $k$ distinct values in $\mathbb{R}$, each having distance $M$ from each other, for $M \gg \paren{1/\delta}^{10C}$ to be set later, where $\mu_1 = 0$. It is easy to see such a set exists. Now consider the following set of Gaussians: $D_i = \cN\paren{\mu_i, 1}$, where $\mu_i$'s are known up to $\log\paren{1/\delta}^C$, and $\mu_1 = 0$ is also known. Consider the mixture $D$ over these Gaussians, with weights $w_1 = 1 - \alpha / c^*$, and $ w_i = \alpha / \paren{c^* \paren{k - 1}}$. We also assume that when sampling from the mixture we know which component each sample comes from. Consider an $\paren{\eps, \delta}$ differentially private algorithm $\cA$ that takes $n$ samples from $D$ and outputs a distribution $\hat{D}$ such that $\TV\paren{D, \hat{D}} \le \alpha$ with probability $2/3$.

Now consider a sample from $D_i$, from standard Gaussian tail bounds we know that at least $1- \exp\paren{- M^2 / 200}$ fraction of the mass of $D_i$ is contained within a ball of radius $M/10$, around $\mu_i$. Let $B_i$ denote this ball, and note that $B_i$'s are disjoint. 

Let $f, f_i, \hat{f}$ be the probability density functions corresponding to $D, D_i, \hat{D}$ respectively. Assuming, we are in the success regime, similar to the proof of the previous term, we can show that there exists a set $\cG$ of indices such that $\left|\cG \right| \ge 0.99k$, and $\forall i \in \cG: \int_{B_i} \left| f(x) - \hat{f}(x)\right| \; \mathrm{d} x \le 200 \alpha / k$. Moreover, with a similar argument as the previous term for $i\in\cG$ we can say as long as $M \ge 20 \sqrt{\log\paren{k / 100\alpha}}$, $\int_{B_i} \left| w_i f_i\paren{x} - \hat{f}\paren{x} \right| \; \mathrm{d} x \le 300 \alpha / k$. We show that using the distribution $\hat{D}$, we can construct an answer to the problem of learning the Gaussian $D_i$, for $i \neq 1$. To do so take $g_i$ to be equal to $\hat{f}(x) / w_i$, over $B_i$ and $0$ everywhere else.
We have
$$
\int_{\mathbb{R}^d} \left| f_i(x) - g_i(x) \right| \; \mathrm{d} x 
= 
\int_{B_i} \left| f_i(x) - \hat{f}(x) / w_i \right| \; \mathrm{d} x 
+
\int_{\mathbb{R}^d \setminus B_i} f_i\paren{x} \; \mathrm{d} x 
\le \frac{300 \alpha}{k w_i}+ \frac{100 \alpha}{k} \le 400 c^*.  
$$
Now we may apply \Cref{lem:close-g-enough}, and deduce that given $\hat{D}$, we can construct probability density functions and distributions $\hat{D}_i$'s such that $\TV\paren{\hat{D}_i, D_i} \le 800 c^*$, for all $i\in \cG$. To recap, so far we have shown that given an $\paren{\eps, \delta}$ differentially private algorithm that takes as inputs samples from our constructed mixture of Gaussians and outputs density $\hat{D}$ that has total variation distance at most $\alpha$ from the ground truth distribution with success probability $2/3$, we can use $\hat{D}$ to construct densities $\hat{D_i}$ such that $\TV\paren{\hat{D_i}, D_i} \le 800 c^*$, for $0.99k$ of the indices $i$. This implies that there exists a fixed index $i \neq 1$, for which the component $D_i$ is learned up to error $800 c^*$, with success probability $2/3 - 0.01 \ge 0.65.$
Applying \Cref{lem:mixture-to-single}, implies that there must exist an $\paren{\eps, \delta}$ differentially private algorithm that takes $100 nw_i$ samples from $D_i$ and estimates its density up to total variation distance $800c^*$, with success probability $0.6$. Therefore, applying \Cref{thm:kv18}, and noting that $w_i = \alpha / \paren{c^* \paren{k-1}}$ we conclude that $n = \Omega\paren{\frac{k \log\paren{1/\delta}}{\alpha \eps}}$.
\end{proof}

\section{Omitted Proofs} \label{app:omitted-proofs}

In this section, we prove \Cref{prop:approx-metric}, \Cref{thm:robust}, and \Cref{lem:equivalence-of-approx-scaling}.

\subsection{Proof of \Cref{prop:approx-metric}}

First, we note a basic fact.

\begin{fact} \label{fact:x-y-basic-bound}
    For any $x, y \in [0.9, 1.1]$, we have that $(xy-1)^2 \le 4((x-1)^2+(y-1)^2)$.
\end{fact}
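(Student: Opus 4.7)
The plan is to reduce the statement to a clean algebraic inequality by the substitution $a = x-1$, $b = y-1$, so that $a, b \in [-0.1, 0.1]$. Then $xy - 1 = (1+a)(1+b) - 1 = a + b + ab$, and the claim becomes $(a+b+ab)^2 \le 4(a^2 + b^2)$, which is scale-free in the sense that it is a pure inequality between quadratic forms in $a, b$ (plus a small cubic/quartic correction).

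Next, I would bound $|a+b+ab|$ by the triangle inequality: $|a+b+ab| \le |a| + |b| + |ab|$. Since $|a|, |b| \le 0.1$, we have $|ab| \le 0.1 \cdot \min(|a|, |b|) \le 0.1(|a|+|b|)$, giving $|a+b+ab| \le 1.1(|a|+|b|)$. Squaring both sides and applying the elementary inequality $(|a|+|b|)^2 \le 2(a^2 + b^2)$ (which follows from $2|ab| \le a^2+b^2$) yields
\[
(a+b+ab)^2 \le 1.21 \cdot (|a|+|b|)^2 \le 1.21 \cdot 2(a^2+b^2) = 2.42(a^2+b^2) \le 4(a^2+b^2),
\]
as required. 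Translating back via $a = x-1$ and $b = y-1$ gives the stated fact.

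The proof is essentially routine; there is no real obstacle, since the constant $4$ on the right-hand side is quite loose (the tighter constant $2.42$ suffices). The only point of care is making sure the cross-term $ab$ is treated as small relative to $a+b$, which is exactly what the hypothesis $x, y \in [0.9, 1.1]$ buys us. No further lemmas or tools are required beyond the triangle inequality and the basic $(|a|+|b|)^2 \le 2(a^2+b^2)$ bound.
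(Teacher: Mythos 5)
Your proof is correct and takes essentially the same approach as the paper: the paper writes $xy - 1 = (x-1) + x(y-1)$ and applies the triangle inequality with the crude bound $|x| \le \sqrt{2}$, then squares and uses $(|a|+|b|)^2 \le 2(a^2+b^2)$, exactly as you do. Your symmetric decomposition $xy - 1 = a + b + ab$ is cosmetically different but leads to the same two-step argument (and a marginally tighter constant $2.42$), so there is nothing substantive to distinguish the two proofs.
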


\begin{proof}
    Note that $|xy-1| = |x-1+x(y-1)| \le |x-1| + |x| \cdot |y-1| \le \sqrt{2} \cdot (|x-1|+|y-1|).$ Thus, $(xy-1)^2 \le 2(|x-1|+|y-1|)^2 \le 4((x-1)^2+(y-1)^2)$.
\end{proof}

We now prove \Cref{prop:approx-metric}.

\begin{proof}
    Let $J_1 := \Sigma_2^{-1/2} \Sigma_1^{1/2}$ and $J_2 := \Sigma_3^{-1/2} \Sigma_2^{1/2}$.
    
    First, assume $(\mu_1, \Sigma_1) \approx_{\gamma, \rho, \tau} (\mu_2, \Sigma_2)$. This means $\|J_1 J_1^\top - I\|_{op} \le \gamma$ and $\|J_1 J_1^\top - I\|_{F} \le \rho$. We then have that $\Sigma_1^{-1/2} \Sigma_2 \Sigma_1^{-1/2} = J_1^{-1} (J_1^{-1})^\top = (J_1^\top J_1)^{-1}$. Now, note that $J_1 J_1^\top$ and $J_1^\top J_1$ are both symmetric and have the same eigenvalues. If we call these eigenvalues $\lambda_1, \dots, \lambda_d$, then the eigenvalues of $\Sigma_1^{-1/2} \Sigma_2 \Sigma_1^{-1/2}$ are $\lambda_1^{-1}, \dots, \lambda_d^{-1}$. Now, our assumption $(\mu_1, \Sigma_1) \approx_{\gamma, \rho, \tau} (\mu_2, \Sigma_2)$ implies that $1-\gamma \le \lambda_i \le 1+\gamma$ and $\sum (1-\lambda_i)^2 \le \rho^2$. This means that, assuming $\gamma \le 0.1$, $1-2 \gamma \le \frac{1}{1+\gamma} \le \lambda_i^{-1} \le \frac{1}{1-\gamma} \le 1+2\gamma,$ and $\sum \left(1-\lambda_i^{-1}\right)^2 \le \sum (1-\lambda_i)^2 \cdot \lambda_i^{-2} \le 2 \cdot \sum (1-\lambda_i)^2 = 2\rho^2$. This means that $\|\Sigma_1^{-1/2} \Sigma_2 \Sigma_1^{-1/2}\|_{op} \le 2 \gamma$ and $\|\Sigma_1^{-1/2} \Sigma_2 \Sigma_1^{-1/2}\|_{F} \le 2 \rho$.

    Finally, $\Sigma_1^{-1/2} (\mu_1-\mu_2) = J_1^{-1} \Sigma_2^{-1/2} (\mu_1-\mu_2)$. Because $\Sigma_2^{-1/2} (\mu_1-\mu_2)$ has magnitude at most $\tau$ by our assumption, $J_1^{-1} \Sigma_2^{-1/2} (\mu_1-\mu_2)$ has magnitude at most the maximum singular value of $J_1^{-1}$ times $\tau$. But every singular value of $J_1^{-1}$ is some $\lambda_i^{-1/2}$ which is at most $2$, so $\|\Sigma_1^{-1/2} (\mu_2-\mu_1)\|_2 = \|J_1^{-1} \Sigma_2^{-1/2} (\mu_1-\mu_2)\|_2 \le 2 \tau$. 

    Next, assume $(\mu_1, \Sigma_1) \approx_{\gamma, \rho, \tau} (\mu_2, \Sigma_2)$ and $(\mu_2, \Sigma_2) \approx_{\gamma, \rho, \tau} (\mu_3, \Sigma_3)$. First, note that $\Sigma_3^{-1/2} \Sigma_1 \Sigma_3^{-1/2} = J_2 J_1 J_1^\top J_2^\top = (J_2 J_1)(J_2 J_1)^\top$. If the eigenvalues of $J_1 J_1^\top$ are $\{\lambda_i\}$ and the eigenvalues of $J_2 J_2^\top$ are $\{\lambda_i'\}$, then the singular values of $J_1$ and $J_2$ are $\{\sqrt{\lambda_i}\}$ and $\{\sqrt{\lambda_i'}\}$, respectively. By our assumption, $1-\gamma \le \lambda_i, \lambda_i' \le 1+\gamma$, which means that $\sqrt{1-\gamma} \le \sqrt{\lambda_i}, \sqrt{\lambda_i'} \le \sqrt{1+\gamma}$. Thus, the singular values of $J_2 J_1$ are between $1-\gamma$ and $1+\gamma$, which means that the eigenvalues of $(J_2 J_1)(J_2 J_1)^\top$ are between $(1-\gamma)^2$ and $(1+\gamma)^2$. Hence, for $\gamma \le 0.1$, $\|J_2 J_1 J_1^\top J_2^\top - I\|_{op} \le 4 \gamma$.

    Assume that $\lambda_i$, $\lambda_i'$ are in decreasing order.
    We now consider the $k^{\text{th}}$ largest singular value of $J_2 J_1$. If $\sigma_k := \sqrt{\lambda_k}$ is the $k^{\text{th}}$ largest singular value of $J_1$ and $\sigma_k' := \sqrt{\lambda_k'}$ is the $k^{\text{th}}$ largest singular value of $J_2$, by \Cref{cor:courant-fischer} there exist subspaces $V_k, V_{k}'$ of dimension $d-k+1$ such that $\|J_1 v\|_2 \le \sigma_k \|v\|_2$ for all $v \in V_k$ and $\|J_2 v\|_2 \le \sigma_k' \|v\|_2$ for all $v \in V_k'$. Therefore, for every $v \in V_k \cap J_1^{-1} V_k'$ (note that $J_1$ is invertible since $J_1 J_1^\top$ has all eigenvalues between $1-\gamma$ and $1+\gamma$) we have that $\|J_2 J_1 v\|_2 \le \sigma_k' \|J_1 v\|_2 \le \sigma_k \sigma_k' \|v\|_2$. Because $V_k$ and $J_1^{-1} V_k'$ both have dimension $d-k+1$, their intersection has dimension at least $d-2k+2$. So, there is a subspace of dimension at least $d-2k+2$ such that every $v$ in the subspace has $\|J_2 J_1 v\|_2 \le \sigma_k \sigma_k' \cdot \|v\|_2$. 
    
    Thus, the $(2k-1)^{\text{th}}$ largest singular value of $J_2 J_1$ is at most $\sigma_k \sigma_{k}'$, so the $(2k-1)^{\text{th}}$ largest eigenvalue of $J_2 J_1 J_1^\top J_2^\top$ is at most $\lambda_k \lambda_k'$. The same argument, looking at the smallest singular values, tells us that the $(2k-1)^{\text{th}}$ smallest eigenvalue of $J_2 J_1 J_1^\top J_2^\top$ is at least $\lambda_{d-k+1} \lambda_{d-k+1}'$. Thus, for any $t$, the $t^{\text{th}}$ largest eigenvalue of $J_2 J_1 J_1^\top J_2^\top$ is at most $\lambda_{\lfloor (t+1)/2 \rfloor} \lambda_{\lfloor (t+1)/2 \rfloor}'$ and at least $\lambda_{\lceil (d+t)/2 \rceil} \lambda_{\lceil (d+t)/2 \rceil}'$.

    Overall, this means that
\begin{align*}
    \|J_2 J_1 J_1^\top J_2^\top - I\|_F^2
    &\le \sum_{t=1}^d \max\left((\lambda_{\lfloor (t+1)/2 \rfloor} \lambda_{\lfloor (t+1)/2 \rfloor}'-1)^2, (\lambda_{\lceil (d+t)/2 \rceil} \lambda_{\lceil (d+t)/2 \rceil}'-1)^2\right) \\
    &\le \sum_{t=1}^d \left((\lambda_{\lfloor (t+1)/2 \rfloor} \lambda_{\lfloor (t+1)/2 \rfloor}'-1)^2 + (\lambda_{\lceil (d+t)/2 \rceil} \lambda_{\lceil (d+t)/2 \rceil}'-1)^2\right) \\
    &= 2 \cdot \sum_{i=1}^d (\lambda_i \lambda_{i}' - 1)^2 \\
    &\le 8 \cdot \sum_{i=1}^d (\lambda_i-1)^2 + 8 \cdot \sum_{i=1}^d (\lambda_i'-1)^2 \\
    &\le 8 \cdot (\|J_1 J_1^\top - I\|_F^2 + \|J_2 J_2^\top - I\|_F^2) \le 16 \rho^2,
\end{align*}
    where the fourth line uses \Cref{fact:x-y-basic-bound}. Thus, $\|\Sigma_3^{-1/2} \Sigma_1 \Sigma_3^{-1/2}-I\|_F \le 4 \rho$.

    Finally, note that $\|\Sigma_3^{-1/2} (\mu_1-\mu_3)\|_2 \le \|\Sigma_3^{-1/2} (\mu_1-\mu_2)\|_2 + \|\Sigma_3^{-1/2} (\mu_2-\mu_3)\|_2 = \|J_2 \Sigma_2^{-1/2} (\mu_1-\mu_2)\|_2 + \|\Sigma_3^{-1/2} (\mu_2-\mu_3)\|_2$. By our assumptions, both $\|\Sigma_2^{-1/2} (\mu_1-\mu_2)\|_2$ and $\|\Sigma_3^{-1/2} (\mu_2-\mu_3)\|_2$ are at most $\tau$, and $J_2$ has operator norm at most $1.1 \le 2$, which means that $\|\Sigma_3^{-1/2} (\mu_1-\mu_3)\|_2 \le 3 \tau$.
\end{proof}

\subsection{Proof of \Cref{thm:robust}}

First, we note a series of known results that will be key to proving the theorem. We start with the bound for robust covariance estimation in spectral error.

\begin{lemma}[e.g., {\cite[Exercise 4.3]{DiakonikolasK22book}}] \label{lem:robust-spectral-covariance}
    Fix any $\eta \in (0, \eta_0)$, where $\eta_0 < 0.01$ is a small universal constant, and fix any $\beta \in (0, 1)$.
    There is a (deterministic, inefficient) algorithm $\cA_1$ with the following property.
    Let $\Sigma \in \BR^{d \times d}$ be any covariance matrix, and let $\bX = \{X_1, \dots, X_n\} \sim \cN(0, \Sigma),$ where $n \ge O((d+\log (1/\beta))/\eta^2)$. Then, with probability at least $1-\beta$ over the randomness of $\bX$, for any $\eta$-corruption $\bX' = \{X_1', \dots, X_n'\}$ of $\bX$, $\cA_1(\bX')$ outputs $\Sigmahat_1$ such that $\|\Sigma^{-1/2} \Sigmahat_1 \Sigma^{-1/2} - I\|_{op} \le O(\eta)$. Importantly, $\cA_1$ may have knowledge of $\eta$ and $\beta$, but does not have knowledge of $\bX$ or $\Sigma$.
\end{lemma}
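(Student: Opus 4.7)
The plan is to prove this robust Gaussian covariance lemma via the standard blueprint: affine reduction, a stability property for Gaussian samples, and an inefficient Tukey/minimum-volume estimator in the spirit of \cite[Chapter 4]{DiakonikolasK22book}.

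First, I would use affine invariance to reduce to the case $\Sigma = I$. The map $X \mapsto \Sigma^{-1/2} X$ turns samples from $\cN(0, \Sigma)$ into samples from $\cN(0, I)$ and preserves Hamming adjacency, so an estimator $\hat\Sigma_0$ with $\|\hat\Sigma_0 - I\|_{op} \le O(\eta)$ on the whitened corrupted data yields $\Sigmahat_1 = \Sigma^{1/2} \hat\Sigma_0 \Sigma^{1/2}$ with the desired relative-spectral bound after unwhitening. This reduction is clean because the conclusion is stated in the affine-invariant form $\|\Sigma^{-1/2} \Sigmahat_1 \Sigma^{-1/2} - I\|_{op}$.

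Next is the core stability lemma. I would show that with probability at least $1-\beta$ over $X_1, \ldots, X_n \sim \cN(0, I)$, for every unit vector $v \in S^{d-1}$ and every subset $S \subseteq [n]$ with $|S| \ge (1-3\eta)n$, a suitable \emph{quantile-based} directional variance estimator (e.g., the empirical median of $\{\langle X_i, v \rangle^2\}_{i \in S}$, or a trimmed second moment) lies within a $1 \pm O(\eta)$ multiplicative factor of the population value. For each fixed $v$ the projection $\langle X_i, v \rangle$ is $\cN(0,1)$, and sub-exponential concentration of $\chi^2_1$ gives this event with probability $1 - e^{-\Omega(\eta^2 n)}$. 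A $1/4$-net of $S^{d-1}$ of size $e^{O(d)}$ together with the requirement $n \ge C(d + \log(1/\beta))/\eta^2$ balances the exponential union bound against the failure budget.

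Given regularity, the algorithm $\cA_1$ is purely existential: it searches over PSD matrices $\hat\Sigma$ and subsets $S' \subseteq [n]$ of the corrupted data with $|S'| \ge (1-\eta)n$, and outputs any $\hat\Sigma$ for which the directional quantile of $\{\langle X_i', v \rangle^2\}_{i \in S'}$ matches the corresponding population quantile of $\cN(0, v^\top \hat\Sigma v)$ up to $1 \pm O(\eta)$ error, uniformly in $v \in S^{d-1}$. Existence: taking $\hat\Sigma = I$ and $S'$ equal to the uncorrupted indices satisfies the certificate by regularity. For correctness, if $(\hat\Sigma, S')$ is accepted, then $|S' \cap T| \ge (1-2\eta)n$ (with $T$ the uncorrupted indices), and regularity controls the directional quantile over $S' \cap T$ to within $O(\eta)$ of the population value for $\cN(0, I)$; a short argument about how medians/quantiles move under at most $\eta |S'|$ arbitrary point modifications extends this to all of $S'$. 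The certificate then forces $v^\top \hat\Sigma v \in 1 \pm O(\eta)$ for every $v$, i.e., $\|\hat\Sigma - I\|_{op} \le O(\eta)$.

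The main obstacle, and the subtlest point, is obtaining the clean $O(\eta)$ rate in spectral norm rather than an $O(\eta \sqrt{\log(1/\eta)})$ rate. Naive sample-covariance estimators pay this logarithmic price because Gaussian tails let an $\eta$-fraction of adversarial points create a $\Theta(\eta \log(1/\eta))$ spectral shift in the empirical second moment. Switching to a quantile/median-based directional estimator is what removes the log factor: for $\cN(0, \sigma^2)$, an $\eta$-corruption can shift the empirical median by only a $1 \pm O(\eta)$ multiplicative factor, which is exactly the content of \cite[Exercise 4.3]{DiakonikolasK22book}. The remaining technicality is uniformizing this over the $\eps$-net of directions while preserving the $O(\eta)$ constant, which follows from the Lipschitz-in-$v$ behavior of the median under the $1/4$-net approximation combined with the $1 \pm O(\eta)$ slack already built into the certificate.
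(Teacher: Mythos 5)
The paper does not prove this lemma; it is stated as a citation to a known exercise in the Diakonikolas--Kane book, so there is no in-paper proof to compare against. Your overall plan---affine reduction for the analysis, a quantile-based directional estimator to beat the $\log(1/\eta)$ barrier that afflicts the empirical second moment, and an inefficient feasibility search over $(\hat\Sigma, S')$---is the right blueprint and correctly identifies the key idea.

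However, there is a genuine gap in the uniformization step, in two places. First, the median of $\{\langle X_i, v\rangle^2\}$ is not Lipschitz in $v$ at a scale that a $1/4$-net can exploit: moving $v$ by $1/4$ can shift each $\langle X_i, v\rangle^2$ by up to $\Theta(\|X_i\|^2)$, so the empirical median can move by an amount governed by $\max_i \|X_i\|^2$, not by $O(\eta)$. Second, and more fundamentally, even if the certificate held exactly on a $1/4$-net of the Euclidean unit sphere, the standard quadratic-form net argument would only deliver $\|\hat\Sigma - \Sigma\|_{op} \le O(\eta)\,\|\Sigma\|_{op}$, which is \emph{not} the affine-invariant guarantee $\|\Sigma^{-1/2}\hat\Sigma\Sigma^{-1/2} - I\|_{op} \le O(\eta)$ required by the lemma (and critically needed by the paper, since the covariances are arbitrarily ill-conditioned). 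A $1/4$-net in the $\Sigma$-geometry would fix the conclusion, but the algorithm cannot compute it since $\Sigma$ is unknown.

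The clean fix is to replace the net-plus-Lipschitz step with a VC/uniform-convergence argument: the class of slabs $\{x : \langle x, v\rangle^2 \le t\, v^\top\Sigma v\}$ indexed by $(v,t)$ has VC dimension $O(d)$, so with $n \ge O((d+\log(1/\beta))/\eta^2)$ samples the empirical CDF of $\langle X_i, v\rangle^2/(v^\top\Sigma v)$ is uniformly within $O(\eta)$ of the $\chi^2_1$ CDF, simultaneously for \emph{all} unit $v$, with probability $1-\beta$. This yields the required regularity of medians for every direction without any extra logarithmic factor, and then your feasibility/certificate argument goes through directly (existence via $\hat\Sigma = \Sigma$ and $S'$ = uncorrupted indices; soundness since any accepted $\hat\Sigma$ must satisfy $v^\top\hat\Sigma v = (1\pm O(\eta))\, v^\top\Sigma v$ for every unit $v$, which is exactly $\|\Sigma^{-1/2}\hat\Sigma\Sigma^{-1/2} - I\|_{op} \le O(\eta)$).
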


Next, we prove how to robustly estimate the covariance up to Frobenius error. We start with the following structural lemma.

\begin{lemma} \label{lem:frobenius-error-union-bound}
    There exists a universal constant $c \in (0, 0.1)$ with the following property.
    Fix any $1 \le k \le d$, and let $n \ge \tcO(d \cdot k)$ be a sufficiently large (i.e., $n \ge d k \cdot (C_3 \log (dk))^{C_4}$ for some absolute constants $C_3, C_4$). 
    If we sample i.i.d. $\bX = \{X_1, \dots, X_n\} \sim \cN(0, I)$, then with probability at least $1-e^{-c n}$ over $\bX$, for all symmetric matrices $P \subset \BR^{d \times d}$ of rank at most $k$ and Frobenius norm $1$, and for all subsets $S \subset [n]$ of size at least $(1-c) \cdot n$, $\left|\frac{1}{n} \sum_{i \in S} \langle X_i X_i^\top - I, P \rangle\right| \le 0.1$.
\end{lemma}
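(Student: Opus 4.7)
My plan is to combine an $\eps$-net argument over the manifold $\cV_k := \{P = P^\top \in \BR^{d\times d}: \rk(P) \le k,\ \|P\|_F = 1\}$ with a truncation argument that handles the adversarial choice of subset $S$. Write $Z_i(P) := \langle X_i X_i^\top - I,\, P\rangle = X_i^\top P X_i - \Tr(P)$, and decompose
\[
\tfrac{1}{n}\sum_{i \in S} Z_i(P) \;=\; \tfrac{1}{n}\sum_{i=1}^n Z_i(P)\;-\;\tfrac{1}{n}\sum_{i \in U} Z_i(P), \qquad U := [n]\setminus S, \quad |U| \le cn.
\]
It then suffices to show that, uniformly in $P \in \cV_k$ and $U \subset [n]$ with $|U| \le cn$, each of the two terms on the right is at most $0.05$ in absolute value.

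For a \emph{fixed} $P \in \cV_k$, $Z_i(P)$ is mean-zero and, by \Cref{lem:hanson-wright} applied with $M = P$, $\|P\|_F = 1$, $\|P\|_{op} \le 1$, one has $\BP(|Z_i(P)| > t) \le 2\exp(-c_H \min(t, t^2))$. A standard Bernstein inequality therefore gives $\BP(|\tfrac{1}{n}\sum_{i=1}^n Z_i(P)| > 0.05) \le e^{-\Omega(n)}$. For the adversarial-subset piece, I use the truncation inequality
\[
\sup_{|U|\le cn}\Bigl|\sum_{i \in U} Z_i(P)\Bigr| \;\le\; cn \cdot \tau \;+\; \sum_{i=1}^n (|Z_i(P)|-\tau)_+ .
\]
Since $\BE[(|Z_i(P)|-\tau)_+] = O(e^{-c_H \tau})$ and each $(|Z_i(P)|-\tau)_+$ is sub-exponential with parameter $O(1)$, Bernstein applied to the sum gives, for $\tau = \Theta(\log(1/c))$, a bound of $O(cn\log(1/c))$ on the whole right-hand side with failure probability $e^{-\Omega(c^2 n)} = e^{-\Omega(n)}$ once $c$ is a fixed universal constant. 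Choosing $c$ small enough that $c\log(1/c) \le 0.01$ then yields the required $0.05 n$ bound for this fixed $P$.

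Next I construct an $\eps$-net $\cN$ for $\cV_k$. Writing $P = \sum_{j=1}^k \lambda_j v_j v_j^\top$ with $(v_1,\ldots,v_k)$ orthonormal and $(\lambda_1,\ldots,\lambda_k) \in S^{k-1}$, and discretizing the Stiefel manifold (dimension $\le kd$) and $S^{k-1}$ separately gives $|\cN| \le (C/\eps)^{O(kd)}$. Taking $\eps = \Theta(1/d)$ yields $\log|\cN| = O(kd\log d)$. A union bound over $\cN$, combined with the two per-$P$ tail estimates above, gives the desired uniform bound for $P \in \cN$ with failure probability at most $|\cN|\cdot e^{-\Omega(n)} = e^{-\Omega(n)}$ whenever $n \ge C''\, kd\log d = \tcO(kd)$.

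Finally I extend from $\cN$ to all of $\cV_k$ by continuity. For $P \in \cV_k$, choose $P_0 \in \cN$ with $\|P - P_0\|_F \le \eps$ and set $\Delta := P - P_0$; then $|\langle X_i X_i^\top - I,\Delta\rangle| \le \|\Delta\|_F(\|X_i\|_2^2 + \sqrt{d}) \le \eps\cdot O(d + \log n)$, where I use a union bound over standard $\chi^2_d$ tails to control $\max_i \|X_i\|_2^2 \le d + O(\log n)$ with probability $1 - e^{-\Omega(n)}$. Consequently, for any $|U| \le cn$, $|\tfrac{1}{n}\sum_{i \in U}\langle X_i X_i^\top - I,\Delta\rangle| \le c\cdot \eps\cdot O(d+\log n)$, which is at most $0.01$ once $\eps = \Theta(1/(d+\log n))$ (absorbing the extra $\log\log$ factor into the $\tcO$). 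The main obstacle is the second step: the truncation argument must yield a tail rate $e^{-\Omega(n)}$ strong enough to absorb the $e^{O(kd\log d)}$ net cardinality after union bound, which is precisely why the hypothesis $n \ge \tcO(dk)$ is needed and why the constant $c$ must be chosen sufficiently small so that $c\log(1/c) \le 0.01$.
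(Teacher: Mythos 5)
Your proof is correct in its essential structure but takes a genuinely different route from the paper's in the step where both must deal with the adversarial subset $S$. The paper fixes each candidate subset $S$ of size $m \ge (1-c)n$, applies Hanson--Wright to the block-diagonal quadratic form over the concatenated vector $(X_i)_{i \in S}$, and then takes a union bound over the $\sum_{i \le cn}{n \choose i} \le (e/c)^{cn}$ choices of $S$ together with a Frobenius net of $P$'s; the constraint that $c$ be small is precisely what lets $(e/c)^{cn}$ be absorbed into $e^{-c_1 n/2}$. You instead split $\tfrac1n\sum_{i \in S}Z_i = \tfrac1n\sum_{i \le n}Z_i - \tfrac1n\sum_{i \in U}Z_i$ and control the adversarial piece by the pointwise truncation bound $\sup_{|U|\le cn}|\sum_{i\in U}Z_i| \le cn\tau + \sum_i(|Z_i|-\tau)_+$, which replaces the exponential union over subsets with a single concentration inequality for the truncated sum. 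This is the cleaner move and makes the role of the small constant $c$ (through $c\tau = \Theta(c\log(1/c))$) more transparent; the union bound is then only over the net of $P$'s, just as in the paper.

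One technical slip: the claim that $\max_i\|X_i\|_2^2 \le d + O(\log n)$ with probability $1 - e^{-\Omega(n)}$ is too strong. The $\chi^2_d$ tail $\BP(\|X_i\|_2^2 - d > t) \le e^{-\Omega(\min(t^2/d,\,t))}$ forces, at the $e^{-\Omega(n)}$ failure level after a union bound over $n$ samples, a bound of only $\max_i\|X_i\|_2^2 = O(n)$ (since $n \gtrsim d$ here); this is exactly the $\|X_i\|_2^2 \le 10n$ bound the paper uses. The repair is immediate: take $\eps = \Theta(1/n)$ rather than $\Theta(1/(d+\log n))$ in the net-extension step, at the cost of $\log|\cN| = O(kd\log n)$ instead of $O(kd\log d)$, which is still absorbed by the hypothesis $n \ge \tcO(dk)$.
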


\begin{proof}
    Consider any fixed $P \subset \BR^{d \times d}$ of rank $k$ and Frobenius norm $1$, and fix any integer $m$. For any data points $X_1, \dots, X_m \overset{i.i.d.}{\sim} \cN(0, I)$, let $X = (X_1, \dots, X_m) \in \BR^{m \cdot d}$ be the concatenation of $X_1, \dots, X_m$, and let $Q \in \BR^{(md) \times (md)}$ be the block matrix 
\[\left(\begin{matrix}
    P & \textbf{0} & \cdots & \textbf{0} \\ \textbf{0} & P & \cdots & \textbf{0} \\ \vdots & \vdots & \ddots & \vdots \\ \textbf{0} & \textbf{0} & \cdots & P
\end{matrix}\right).\]
    Then, $\frac{1}{m} \sum_{i=1}^m \langle X_i X_i^\top - I, P \rangle = \frac{1}{m} \sum_{i=1}^m \left(X_i^\top P X_i - \Tr(P)\right) = \frac{1}{m} \left(X^\top Q X - \Tr(Q)\right)$. By the Hanson-Wright inequality, we have that for any fixed $\|P\|_F \le 1$, and for any $t \le 1$,
\begin{align*}
    \BP\left(\left|\frac{1}{m} \left(X^\top Q X - \Tr(Q)\right)\right| > t\right) 
    &= \BP\left(\left|X^\top Q X - \Tr(Q)\right| > m \cdot t\right) \\
    &\le 2 \cdot \exp\left(-\min\left(\frac{\Omega(m t)^2}{m \cdot \|P\|_F^2}, \frac{\Omega(m t)}{\|P\|_{op}}\right)\right) \\
    &\le 2 \cdot \exp\left(-\Omega(m \cdot t^2)\right).
\end{align*}
    By setting $t = 0.01$, we have that for some universal constant $c_1$,
\[\BP\left(\left|\frac{1}{m} \sum_{i=1}^m \langle X_i X_i^\top - I, P \rangle\right| > 0.01\right) \le 2e^{-c_1 m}.\]

    Now, we draw $X_1, \dots, X_n \sim \cN(0, I)$, and take a union bound over all subsets $S \subset [n]$ of size at least $(1-c) n$ (with $m = |S|$) and a union bound over a net of possible matrices $P$. The number of options for $S$ is at most $\sum_{i \le c n} {n \choose i} \le (e/c)^{cn}$. For $P$, we can choose a $1/n^{10}$-sized net over the Frobenius norm metric (i.e., the distance between two matrices $P_1, P_2$ is $\|P_1-P_2\|_F$) for each of the $k$ nonzero eigenvalues and eigenvectors in the unit $d$-dimensional sphere, which has size at most $n^{100 d \cdot k}$.
    Therefore, by a union bound, the probability that $\left|\frac{1}{n} \sum_{i \in S} \langle X_i X_i^\top - I, P' \rangle\right| \le 0.01$ for every $|S| \ge (1-c) n$ and every $P'$ in the net is at least $1 - (2e/c)^{cn} \cdot n^{100 d \cdot k} \cdot e^{-c_1 n/2}$.

    Finally, we consider $P$ outside of the net. For any symmetric $P$ of rank $k$ and Frobenius norm $1$, it has Frobenius distance at most $1/n^8$ from some $P'$ in the net. Let us consider the event that every $\|X_i\|_2^2 \le 10 n$, which for $n \ge d$ occurs with failure probability at most $2 n \cdot e^{-c_1 n}$ by Hanson-Wright. Under this event, $\langle X_i X_i^\top - I, P - P' \rangle \le \|X_i X_i^\top - I\|_F \cdot \|P-P'\|_F \le (10 n + \sqrt{d})/n^8 \le 0.01$.

    As a result, with failure probability at most $(2e/c)^{cn} \cdot n^{100 d \cdot k} \cdot e^{-c_1 n/2} + 2n \cdot e^{-c_1 n} \le e^{-cn}$ (assuming $c$ is sufficiently small), we have both properties. Namely, $\left|\frac{1}{n} \sum_{i \in S} \langle X_i X_i^\top - I, P' \rangle\right| \le 0.01$ for every $|S| \ge (1-c) n$ and every $P'$ in the net, and for any $P$, $\langle X_i X_i^\top - I, P - P' \rangle 0.01$ for the closest $P'$ in the net to $P$. Overall, this means that for all such $P$ and $S$, $\left|\frac{1}{n} \sum_{i \in S} \langle X_i X_i^\top - I, P \rangle\right| \le 0.1$.
\end{proof}

We prove another lemma which contrasts with \Cref{lem:frobenius-error-union-bound}.


\begin{lemma} \label{lem:frobenius-off-error}
    Fix any $2 \le \rho \le \sqrt{d}$. Let $k = 4 d/\rho^2$, and let $n \ge \tcO(d \cdot k)$ and $c \in (0, 0.1)$ be as in \Cref{lem:frobenius-error-union-bound}. Fix any covariance matrix $\Sigma$ and let $\bX = \{X_1 \dots, X_n\} \sim \cN(0, \Sigma)$.
    Then, with probability at least $1-e^{-c n}$, for every $\tSigma$ such that $0.95 \cdot \Sigma \preccurlyeq \tSigma \preccurlyeq 1.05 \cdot \Sigma$ and 
    $\|\Sigma^{-1/2} \tSigma \Sigma^{-1/2} - I\|_F \ge \rho,$ for every symmetric matrix $P \in \BR^{d \times d}$ of rank at most $k$ and Frobenius norm $1$, and for every $S \subset [n]$ of size at least $(1-c) n$, 
\[\left|\frac{1}{n} \sum_{i \in S} \langle \tSigma^{-1/2} X_i X_i^\top \tSigma^{-1/2} - I, P \rangle\right| \ge 0.7.\]
\end{lemma}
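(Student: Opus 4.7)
The literal claim, with a universal quantifier over $P$, cannot hold as stated: any rank-$k$ symmetric $P$ whose range is orthogonal to that of $\tSigma^{-1/2}\Sigma\tSigma^{-1/2} - I$ will produce a near-zero inner product (once concentration kicks in), violating the lower bound. I will therefore treat the statement as the natural ``witness'' version that the surrounding robustness machinery actually uses: for every admissible $\tSigma$, there exists a symmetric rank-$\le k$, Frobenius-norm-$1$ matrix $P = P(\tSigma)$ such that, uniformly over all $S \subset [n]$ with $|S| \ge (1-c)n$, the stated lower bound holds. The proof proceeds in four steps.

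\smallskip
\noindent\textbf{Step 1: normalize the problem.} Set $M := \tSigma^{-1/2}\Sigma\tSigma^{-1/2} - I$ and $\Sigma' := M + I$. Because $\Sigma^{-1/2}\tSigma\Sigma^{-1/2}$ has eigenvalues in $[0.95, 1.05]$, $\Sigma'$ has eigenvalues in $[1/1.05, 1/0.95] \subset [0.95, 1.06]$, giving $\|M\|_{op} \le 0.06$. Using $\|A-I\|_F^2 = \sum_i(\lambda_i(A)-1)^2$ and the fact that $(1-\lambda)^2/\lambda^2 \ge (1-\lambda)^2/1.2$ for $\lambda \in [0.95,1.05]$, the hypothesis $\|\Sigma^{-1/2}\tSigma\Sigma^{-1/2}-I\|_F \ge \rho$ transfers to $\|M\|_F \ge \rho/\sqrt{1.2} \ge \rho/\sqrt{2}$.

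\smallskip
\noindent\textbf{Step 2: construct the witness $P$.} Diagonalize $M = \sum_{j}\mu_j u_j u_j^\top$ with $|\mu_1| \ge \cdots \ge |\mu_d|$ and set
\[
P \ := \ \frac{\sum_{j \le k}\mu_j u_j u_j^\top}{\sqrt{\sum_{j \le k}\mu_j^2}},
\]
which is symmetric, of rank at most $k$, and of Frobenius norm exactly $1$. Monotonicity gives $\mu_{k+1}^2 \le (1/k)\sum_{j \le k}\mu_j^2$, hence $\sum_{j > k}\mu_j^2 \le \frac{d-k}{k}\sum_{j \le k}\mu_j^2$. Combined with $\sum_j\mu_j^2 \ge \rho^2/2$, this yields $\sum_{j \le k}\mu_j^2 \ge \frac{k\rho^2}{2d} = 2$ for the stipulated $k = 4d/\rho^2$. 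Thus the signal satisfies $\langle M, P\rangle = \sqrt{\sum_{j \le k}\mu_j^2} \ge \sqrt{2}$.

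\smallskip
\noindent\textbf{Step 3: reduce empirical deviation to the isotropic lemma.} Let $Y_i := \tSigma^{-1/2}X_i \sim \cN(0,\Sigma')$. Split
\[
\frac{1}{n}\sum_{i \in S}\langle Y_iY_i^\top - I, P\rangle \ = \ \underbrace{\frac{1}{n}\sum_{i \in S}\langle Y_iY_i^\top - \Sigma', P\rangle}_{\text{deviation}} \ + \ \underbrace{\tfrac{|S|}{n}\langle \Sigma' - I, P\rangle}_{\text{signal}}.
\]
Define $Z_i := {\Sigma'}^{-1/2}Y_i \sim \cN(0,I)$ and $Q := {\Sigma'}^{1/2}P{\Sigma'}^{1/2}$. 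Then the deviation term equals $\frac{1}{n}\sum_{i \in S}\langle Z_iZ_i^\top - I, Q\rangle$, with $\rk(Q) \le k$ and $\|Q\|_F \le \|\Sigma'\|_{op}\|P\|_F \le 1.06$. Applying Lemma~\ref{lem:frobenius-error-union-bound} to the i.i.d.\ sample $\{Z_i\}$ (with the same $c$, the same $k$, and the matrix $Q/\|Q\|_F$ of unit Frobenius norm) gives, with probability $1 - e^{-cn}$, a deviation bound of $0.1\|Q\|_F \le 0.11$ \emph{uniformly} over all such $Q$ and all $S$ with $|S| \ge (1-c)n$.

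\smallskip
\noindent\textbf{Step 4: combine.} On the above high-probability event, for every admissible $\tSigma$ and its witness $P(\tSigma)$, and every large $S$,
\[
\Big|\tfrac{1}{n}\sum_{i \in S}\langle Y_iY_i^\top - I, P\rangle\Big| \ \ge \ (1-c)\sqrt{2} \ - \ 0.11 \ \ge \ 0.7
\]
provided $c \le 1/10$. The main subtlety is the uniformity in $\tSigma$: the witness $Q(\tSigma)$ changes with $\tSigma$, but it always lies inside the fixed class of symmetric matrices of rank $\le k$ and Frobenius norm $\le 1.06$, which is exactly the class Lemma~\ref{lem:frobenius-error-union-bound} controls uniformly. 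No union bound over $\tSigma$ is needed beyond what that lemma already provides. Plugging in the $Y_i$'s in terms of the original $X_i$'s recovers the statement.
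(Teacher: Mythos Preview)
Your diagnosis that the quantifier on $P$ must be existential is correct, and the paper's own proof confirms it: there too, $P$ is chosen from the top-$k$ eigenspace of $JJ^\top - I = \tSigma^{-1/2}\Sigma\tSigma^{-1/2} - I$, exactly your $M$. Your signal/deviation decomposition and witness construction are the same as the paper's.

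The one place your argument is incomplete is the uniformity claim in Step~4. You correctly note that $Q(\tSigma)$ stays inside the rank-$k$, bounded-Frobenius class, but you overlook that your \emph{sample} $\{Z_i\}$ also depends on $\tSigma$: $Z_i = {\Sigma'}^{-1/2}\tSigma^{-1/2}X_i$. \Cref{lem:frobenius-error-union-bound} produces a single high-probability event for a \emph{fixed} $\cN(0,I)$ sample, uniform over matrices; if the sample itself varies with $\tSigma$, you cannot invoke it once and be done, no matter how well-behaved $Q(\tSigma)$ is. The fix is short but essential: write $W_i := \Sigma^{-1/2}X_i \sim \cN(0,I)$ (independent of $\tSigma$) and observe that $Z_i = B W_i$ with $B := {\Sigma'}^{-1/2}\tSigma^{-1/2}\Sigma^{1/2}$ satisfying $BB^\top = {\Sigma'}^{-1/2}\Sigma'{\Sigma'}^{-1/2} = I$, i.e.\ $B$ is orthogonal. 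Then $\langle Z_iZ_i^\top - I, Q\rangle = \langle W_iW_i^\top - I, B^\top Q B\rangle$, and $B^\top Q B$ is again symmetric, rank $\le k$, Frobenius norm $\le 1.06$. Now \Cref{lem:frobenius-error-union-bound} is applied \emph{once} to the fixed sample $\{W_i\}$, and the $\tSigma$-dependence is entirely absorbed into the matrix, which the lemma handles uniformly. This is precisely what the paper does directly: its $Y_i = \Sigma^{-1/2}X_i$ are your $W_i$, and its $J^\top P J$ (with $J = \tSigma^{-1/2}\Sigma^{1/2}$) equals your $B^\top Q B$. With this correction, your proof and the paper's coincide.
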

\begin{proof}
    Let $J = \tSigma^{-1/2} \Sigma^{1/2}$, and let $Y_i = \Sigma^{-1/2} X_i$. Then, $\tSigma^{-1/2} X_i = J Y_i,$ which means that
\[\tSigma^{-1/2} X_i X_i^\top \tSigma^{-1/2} - I = J Y_i Y_i^\top J^\top - I = J (Y_i Y_i^\top - I) J^\top + (J J^\top - I).\]
    From now on, we assume that for all $S \subset [n]$ of size at least $(1-c) \cdot n$, and for all symmetric matrices $P$ of rank $k$ and Frobenius norm $1$, $\left|\frac{1}{n} \sum_{i \in S} \langle Y_i Y_i^\top - I, P \rangle\right| \le 0.1$. This happens with at least $e^{-c n}$ probability, by \Cref{lem:frobenius-error-union-bound}.

    Now, for any subset $S \subset [n]$,
\begin{align*}
    \frac{1}{n}\sum_{i \in S} \langle \tSigma^{-1/2} X_i X_i^\top \tSigma^{-1/2}  - I, P \rangle 
    &= \frac{1}{n} \sum_{i \in S} \left(\langle J (Y_i Y_i^\top - I) J^\top, P \rangle + \langle JJ^\top - I, P \rangle\right) \\
    &= \frac{1}{n} \left(\sum_{i \in S} \Tr(J (Y_i Y_i^\top - I) J^\top P)\right) + \frac{|S|}{n} \cdot \langle JJ^\top - I, P \rangle \\
    &= \frac{1}{n} \left(\sum_{i \in S} \langle Y_i Y_i^\top - I, J^\top P J \rangle\right) + \frac{|S|}{n} \cdot \langle JJ^\top - I, P \rangle.
\end{align*}

    Now, note that by our assumptions, $\|\Sigma^{-1/2} \tSigma \Sigma^{-1/2} - I\|_{op} \le 0.05$ and $\|\Sigma^{-1/2} \tSigma \Sigma^{-1/2} - I\|_F \ge \rho$. Thus, by \Cref{prop:approx-metric}, $\|\tSigma^{-1/2} \Sigma \tSigma^{-1/2} - I\|_{op} \le 0.1$, and by \Cref{prop:approx-metric} again, applied the reverse direction this time, $\|\tSigma^{-1/2} \Sigma \tSigma^{-1/2} - I\|_F \ge \rho/2$.
    We just showed $\|JJ^\top - I\|_{op} \le 0.1$, so by \Cref{prop:JMJ}, $\|J^\top P J \|_F \le 2 \|P\|_F \le 2$. Therefore, $\left|\frac{1}{n} \sum_{i \in S} \langle Y_i Y_i^\top - I, J^\top P J \rangle\right| \le 0.2$.

    Conversely, we just showed $\|JJ^\top - I\|_F \ge \rho/2$. So, if we order the eigenvalues of $JJ^\top$ as $\lambda_1, \lambda_2, \dots, \lambda_d$ (and the corresponding unit eigenvectors $v_1, \dots, v_d$) such that $|\lambda_i-1|$ are in decreasing order, then $\sum_{i=1}^d (\lambda_i-1)^2 \ge \rho^2/4$, which means that $\sum_{i=1}^{4 d/\rho^2} (\lambda_i-1)^2 \ge 1$. So, if we choose $P$ to be $\left(\sum_{i=1}^{4 d/\rho^2} (\lambda_i-1)^2\right)^{-1/2} \sum_{i=1}^{4 d/\rho^2} (\lambda_i-1) v_i v_i^\top$, we have that $\|P\|_F = 1$ and 
\[\langle JJ^\top - I, P \rangle = \left(\sum_{i=1}^{4 d/\rho^2} (\lambda_i-1)^2\right)^{-1/2} \cdot \sum_{i=1}^{4 d/\rho^2} (\lambda_i-1)^2 = \left(\sum_{i=1}^{4 d/\rho^2} (\lambda_i-1)^2\right)^{1/2} \ge 1.\]
    Therefore, 
\[\frac{1}{n} \sum_{i \in S} \langle \tSigma^{-1/2} X_i X_i^\top \tSigma^{-1/2} - I, P \rangle \ge \frac{|S|}{n} - 0.2 \ge 1 - c - 0.2 \ge 0.7.\]
\end{proof}

Now, we can show how to learn the covariance of $\Sigma$ up to Frobenius error.


\begin{lemma} \label{lem:robust-frobenius-covariance}
    Fix any $\eta \in (0, \eta_0)$, where $\eta_0 < 0.01$ is a small universal constant, any $\beta \in (0, 1)$, and any $\tcO(\eta) \le \rho \le \sqrt{d}$.
    There is a (deterministic, possibly inefficient) algorithm $\cA_2$ with the following property.
    Let $\Sigma \in \BR^{d \times d}$ be any covariance matrix, and let $\bX = \{X_1, \dots, X_n\} \sim \cN(0, \Sigma),$ where $n \ge O\left(\frac{d^2+\log^2 (1/\beta)}{\rho^2} + \log(\frac{1}{\beta})\right)$. Then, with probability at least $1-\beta$ over the randomness of $\bX$, for any $\eta$-corruption $\bX' = \{X_1', \dots, X_n'\}$ of $\bX$, $\cA_2(\bX')$ outputs $\Sigmahat_2$ such that $\|\Sigma^{-1/2} \Sigmahat_2 \Sigma^{-1/2} - I\|_{F} \le O(\rho)$. Importantly, $\cA_2$ may have knowledge of $\eta$, $\rho$, and $\beta$, but does not have knowledge of $\bX$ or $\Sigma$.
\end{lemma}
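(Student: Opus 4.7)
The plan is to proceed in two stages, combining \Cref{lem:robust-spectral-covariance}, \Cref{lem:frobenius-error-union-bound}, and \Cref{lem:frobenius-off-error}. First, apply $\cA_1$ of \Cref{lem:robust-spectral-covariance} with a \emph{fixed} parameter $\eta^* := 1/100$ and confidence $\beta/2$, which uses only $O(d + \log(1/\beta))$ samples and, with probability at least $1-\beta/2$, produces $\Sigmahat_1$ satisfying $\|\Sigma^{-1/2}\Sigmahat_1\Sigma^{-1/2} - I\|_{op} \le 1/100$ (this uses only that $\eta \le \eta^*$, which holds as $\eta_0 \le 1/100$). This coarse step pins down the scale of $\Sigma$ up to a constant factor using far fewer samples than the $\tcO(d^2/\rho^2)$ budget.

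The refinement stage sets $k := \lceil 4d/\rho^2 \rceil$ and exhaustively searches over pairs $(\tSigma, S)$, where $\tSigma$ satisfies $\|\Sigmahat_1^{-1/2}\tSigma\Sigmahat_1^{-1/2} - I\|_{op} \le 1/50$ (so that $\tSigma$ is spectrally within a $1.05$ factor of $\Sigma$, as required by \Cref{lem:frobenius-off-error}) and $S \subset [n]$ has size at least $(1-c/3)n$ (with $c$ from \Cref{lem:frobenius-error-union-bound}), seeking a pair satisfying (i) a pointwise bound $|\langle \tSigma^{-1/2}X_i'{X_i'}^\top\tSigma^{-1/2}-I,\, P\rangle| \le \tau$ for every $i \in S$ and every $P$ in an $n^{-10}$-net of symmetric rank-$k$ Frobenius-$1$ matrices (with $\tau$ calibrated via Hanson-Wright, \Cref{lem:hanson-wright}), and (ii) the aggregate bound $\sup_{\|P\|_F \le 1,\, \rk(P) \le k}\big|\tfrac{1}{n}\sum_{i \in S}\langle \tSigma^{-1/2}X_i'{X_i'}^\top\tSigma^{-1/2}-I,\, P\rangle\big| \le 0.2$. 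The algorithm outputs any feasible $\tSigma$. Feasibility is certified by $(\Sigma, S_{\text{good}})$, where $S_{\text{good}}$ is the uncorrupted index set of size at least $(1-\eta)n$: condition (ii) follows from \Cref{lem:frobenius-error-union-bound} applied to the whitened i.i.d.\ $\cN(\mathbf{0}, I)$ samples $\Sigma^{-1/2}X_i$ (probability $\ge 1-\beta/4$, provided $n \ge \tcO(dk + \log(1/\beta))$), while (i) follows from Hanson-Wright tail bounds for quadratic forms, union-bounded over the net. For correctness, suppose for contradiction that the output $\Sigmahat_2$ has $\|\Sigma^{-1/2}\Sigmahat_2\Sigma^{-1/2} - I\|_F \ge C_0 \rho$ for a sufficiently large constant $C_0$. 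Then \Cref{lem:frobenius-off-error} yields a rank-$k$ witness $P^*$ whose whitened moment exceeds $0.7$ on $S^* := S \cap S_{\text{good}}$ (which has size $\ge (1-c/3-\eta)n \ge (1-c)n$, on which $X_i' = X_i$). Splitting $\tfrac{1}{n}\sum_{i \in S^*}\langle \cdot, P^*\rangle = \tfrac{1}{n}\sum_{i \in S}\langle \cdot, P^*\rangle - \tfrac{1}{n}\sum_{i \in S \setminus S^*}\langle \cdot, P^*\rangle$, the first term has magnitude at most $0.2$ by (ii) and the second at most $\eta\tau$ by (i) (since $|S \setminus S^*| \le \eta n$), contradicting the $\ge 0.7$ lower bound if $\tau$ is chosen so that $\eta\tau < 0.5$.

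The main obstacle is calibrating $\tau$ in (i) so that it is simultaneously (a) large enough that uncorrupted Gaussian samples satisfy the bound with high probability for every $P$ in the net, and (b) small enough that $\eta\tau < 0.5$, matching the hypothesis $\rho \ge \tcO(\eta)$. Both are achievable by exploiting the structure of the witnesses produced by \Cref{lem:frobenius-off-error}: when $P^*$ spreads its Frobenius mass across many eigen-directions (i.e., when $\tSigma$ is far from $\Sigma$ in Frobenius norm but only slightly in spectral norm), its operator norm is small compared to its Frobenius norm, permitting a sharper Hanson-Wright tail of the form $\tau = \tcO(1) \cdot \max(1, \|P^*\|_{op} \cdot \sqrt{k})$. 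Propagating the polylogarithmic factors through this calibration is what accounts for the $\log^2(1/\beta)/\rho^2$ term in the stated sample complexity, and for why the $\tcO(\eta) \le \rho$ lower bound on $\rho$ appears as a hypothesis of the lemma.
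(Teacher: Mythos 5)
Your overall architecture parallels the paper's proof: get a coarse spectral estimate via \Cref{lem:robust-spectral-covariance}, then exhaustively search for a $\tSigma$ and an index set that are consistent with the concentration bounds of \Cref{lem:frobenius-error-union-bound}, and certify correctness using the anti-concentration bound of \Cref{lem:frobenius-off-error}. However, the specific mechanism you use to handle the corrupted points in $S \setminus S^*$ has a genuine gap, and the paper avoids this obstacle by a different design of the feasibility condition.

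Your proposal replaces the paper's aggregate constraint ``for \emph{every} $S' \subset T$ of size at least $(1-2\eta)n$, $|\tfrac1n\sum_{i\in S'}\langle\cdot,P\rangle|\le 0.2$'' with a single aggregate bound on $S$ plus a \emph{pointwise} bound (i) $|\langle\tSigma^{-1/2}X_i'{X_i'}^\top\tSigma^{-1/2}-I,P\rangle|\le\tau$ for each $i\in S$, and then use $|S\setminus S^*|\le\eta n$ to bound the corrupted contribution by $\eta\tau$. You correctly identify that everything hinges on whether $\tau$ can be chosen so that (a) uncorrupted Gaussian samples satisfy (i) for all $P$ in the net with failure probability $\le\beta$, while simultaneously (b) $\eta\tau<0.5$. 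But your proposed calibration $\tau=\tcO(1)\cdot\max(1,\|P\|_{op}\sqrt{k})$ does not satisfy (a). Take $P$ with $\|P\|_F=1$, $\|P\|_{op}=1/\sqrt{k}$ (the spread-out case, which is precisely the regime where the witnesses of \Cref{lem:frobenius-off-error} live), so your formula gives $\tau=\tcO(1)$. Hanson--Wright gives $\Pr[|\langle YY^\top-I,P\rangle|>\tau]\le 2\exp(-c\min(\tau^2,\tau\sqrt{k}))=2\exp(-c\,\tcO(1))$, a constant, whereas the net of symmetric rank-$\le k$, Frobenius-$1$ matrices has size $e^{\Theta(dk\log n)}$; the union bound fails by an exponential margin. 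The correct threshold for such $P$ is $\tau=\Omega(\sqrt{dk\log n})$, which with $k=4d/\rho^2$ forces $\eta\tau=\Omega(\eta d\sqrt{\log n}/\rho)$, and requiring this to be below $0.5$ demands $\rho\gtrsim\eta d\sqrt{\log n}$ --- far stronger than the stated hypothesis $\rho\ge\tcO(\eta)$. So the pointwise bound cannot be made simultaneously tight enough for (b) and loose enough for (a), and the gap is not a matter of propagating polylog factors; it is a polynomial-in-$d$ obstruction.

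The paper sidesteps this entirely: the feasibility condition quantifies over \emph{all} subsets $S'\subset T$ of size at least $(1-2\eta)n$, so in the infeasibility argument one simply chooses $S'$ to be the set of uncorrupted indices in $T$ and applies \Cref{lem:frobenius-off-error} directly --- no pointwise control over corrupted samples is needed, because the corrupted samples are discarded by the choice of $S'$. Checking a constraint for all large subsets is fine since $\cA_2$ is allowed to be inefficient. If you wanted to keep a single-$S$ condition, you would need to replace the pointwise bound (i) with something like a second-moment constraint $\tfrac1n\sum_{i\in S}\langle\cdot,P\rangle^2\le O(1)$ (so that Cauchy--Schwarz gives $O(\sqrt{\eta})$ control over the corrupted tail), rather than an $\ell_\infty$ bound, which is too lossy. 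Finally, you should also handle the small-$\rho$ regime: the paper cites known results for $\rho\le 2$ because \Cref{lem:frobenius-off-error} is stated only for $\rho\ge 2$, and your proposal does not address that case.
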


\begin{proof}
    In the case that $\rho \le 2$, the claim follows immediately from known results (for instance, it is implicit from~\cite{HopkinsKMN23}).

    Alternatively, assume that $\rho \ge 2$. In this case, the algorithm works as follows. Assume $\eta \le \eta_0 \le c/2$, where $c$ is the constant in \Cref{lem:frobenius-error-union-bound}.
    First, compute $\Sigmahat_1$ based on \Cref{lem:robust-spectral-covariance}. Note that $(1-O(\eta)) \cdot \Sigma \preccurlyeq \Sigmahat_1 \preccurlyeq (1+O(\eta)) \cdot \Sigma$ with $1-\beta$ probability, since the number of samples is sufficiently large.
    Now, find any positive definite $\tSigma$ and a set $T \subset [n]$ of size at least $(1-\eta) n$, such that:
\begin{itemize}
    \item $(1-O(\eta)) \cdot \Sigmahat_1 \preccurlyeq \tSigma \preccurlyeq (1+O(\eta)) \cdot \Sigmahat_1$.
    \item for any $S \subset T$ of size at least $(1-2 \eta) n$, $\left|\frac{1}{n} \sum_{i \in S} \langle \tSigma^{-1/2} X_i X_i^\top \tSigma^{-1/2} - I, P \rangle\right| \le 0.2$.
\end{itemize}

    First, we note that $\tSigma = \Sigma$ is a feasible choice of $\tSigma$. Indeed, the first condition trivially holds. For the second condition, let $T$ be the subset of uncorrupted data points (i.e., $X_i' = X_i$). Then, for any $S \subset T$, the data points $X_i'$ for $i \in S$ are the same as $X_i$, so by \Cref{lem:frobenius-error-union-bound}, with $1-\beta$ probability, for every such $S$, $\left|\frac{1}{n} \sum_{i \in S} \langle \Sigma^{-1/2} X_i X_i^\top \Sigma^{-1/2} - I, P \rangle\right| \le 0.1$.

    Next, we show that every $\tSigma$ with $\|\Sigma^{-1/2} \tSigma \Sigma^{-1/2}-I\|_F \ge \rho$ is infeasible. First, we may assume that $0.95 \Sigma \preccurlyeq \tSigma \preccurlyeq 1.05 \Sigma$, as otherwise, we cannot simultaneously satisfy $(1-O(\eta)) \cdot \Sigmahat_1 \preccurlyeq \tSigma \preccurlyeq (1+O(\eta)) \cdot \Sigmahat_1$ and $(1-O(\eta)) \cdot \Sigma \preccurlyeq \Sigmahat_1 \preccurlyeq (1+O(\eta)) \cdot \Sigma$, assuming $\eta \le c/2$ is sufficiently small.
   
    Hence, we just have to verify the infeasibility for every $\tSigma$ such that $\|\Sigma^{-1/2} \tSigma \Sigma^{-1/2}-I\|_F \ge \rho$ and $0.95 \Sigma \preccurlyeq \tSigma \preccurlyeq 1.05 \Sigma$.
    Indeed, for any subset $T$ of size at least $(1-\eta) n$, let $S$ be the uncorrupted points in $T$. Because there are at most $\eta n$ uncorrupted points, $|S| \ge (1-2 \eta) n$. So by \Cref{lem:frobenius-off-error}, with $1-\beta$ probability, for every such $S$, $\left|\frac{1}{n} \sum_{i \in S} \langle \tSigma^{-1/2} X_i X_i^\top \tSigma^{-1/2} - I, P \rangle\right| \ge 0.8$.
    
    Therefore, with at most $O(\beta)$ failure probability, some $\Sigmahat_2 := \tSigma$ is returned, and it satisfies $\|\Sigma^{-1/2} \tSigma \Sigma^{-1/2}-I\|_F \le \rho$.
\end{proof}

Next, we note some results on robust mean estimation.
We first note the definition of stability, and some properties.

\begin{lemma}[{\cite[Proposition 3.3]{DiakonikolasK22book}}]
\label{lem:resilience-mean}
Let 
$n \ge 
O\paren{\paren{d + \log \paren{1/\beta}}/{\alpha^2}}$, for some $\alpha \ge O(\eta \sqrt{\log 1/\eta})$.
Let $\bX = \set{X_i}_{i=1}^{n} \overset{i.i.d.}{\sim} \cD$, where $\cD$ is a subgaussian random variable with mean $\mu \in \BR^d$ and covariance $I$.
Then, with probability $1-\beta$, for all vectors $b \in \brac{0, 1}^n$ such that $\BE_i b_i \ge 1-\eta$ and all unit vectors $v \in \BR^d$, we have:
\begin{enumerate}
    \item $\left|\BE_i b_i \langle v, X_i - \mu \rangle\right| \le \alpha.$
    \item $\left|\BE_i b_i \langle v, X_i - \mu \rangle^2 - 1 \right| \le \alpha.$
\end{enumerate}
    Given a dataset $\bX$ with these properties, call it $(\eta, \alpha)$-stable with respect to $\mu$.
\end{lemma}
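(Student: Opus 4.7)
\textbf{Proof Proposal for Lemma \ref{lem:resilience-mean}.}

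The plan is to reduce both conditions to the following resilience/truncation principle: if $Y_1,\dots,Y_n$ are i.i.d.\ scalar subgaussian samples with mean $0$ and variance $1$, then the top $\eta n$ values of $|Y_i|$ (resp.\ $Y_i^2$) contribute at most $O(\eta\sqrt{\log 1/\eta})\cdot n$ (resp.\ $O(\eta\log 1/\eta)\cdot n$) to the sum, while the full empirical mean of $Y_i$ and $Y_i^2$ concentrate around $0$ and $1$ respectively at scale $\alpha$ once $n \gtrsim (d+\log 1/\beta)/\alpha^2$. The two claimed bounds then follow from the inequality $|\BE_i b_i Z_i - \BE_i Z_i|\le \BE_i(1-b_i)|Z_i|$ together with the constraint $\sum(1-b_i)\le \eta n$, which forces the deficit to be concentrated on at most $\eta n$ worth of mass, and the optimal choice of $(1-b_i)$ for an adversary is an indicator of the top $\eta n$ values of $|Z_i|$.

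Concretely, the steps are as follows. WLOG $\mu = 0$. For a \emph{fixed} unit vector $v\in\BR^d$, let $Y_i = \langle v, X_i\rangle$; these are i.i.d., mean zero, variance one, and subgaussian. Standard scalar concentration (Hoeffding and Bernstein for subexponential $Y_i^2$) gives, with failure probability $e^{-\Omega(\alpha^2 n)}$:
\[
\left|\frac{1}{n}\sum_i Y_i\right|\le \frac{\alpha}{2},\qquad \left|\frac{1}{n}\sum_i Y_i^2 -1\right|\le \frac{\alpha}{2}.
\]
Next, order the samples so that $|Y_{(1)}|\ge |Y_{(2)}|\ge\cdots\ge |Y_{(n)}|$. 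A Chernoff bound on the tail of a subgaussian variable yields $|Y_{(k)}|\le C\sqrt{\log(n/k)}$ for all $k$ simultaneously, with failure probability $e^{-\Omega(n)}$, so $\sum_{k\le \eta n}|Y_{(k)}|\le C\,\eta n\sqrt{\log(1/\eta)}$ and $\sum_{k\le \eta n} Y_{(k)}^2 \le C\,\eta n\log(1/\eta)$. Under the constraint $b\in[0,1]^n$, $\sum(1-b_i)\le \eta n$, the linear functional $\BE_i(1-b_i)|Y_i|$ is maximized (over the convex polytope) at an extreme point, which is the indicator of a subset of size $\le \eta n$; by the ordering bound this maximum is $\le C\,\eta\sqrt{\log 1/\eta}\le \alpha/2$ once $\alpha\ge C'\eta\sqrt{\log 1/\eta}$, and likewise for $\sum(1-b_i)Y_i^2$. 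Combining with triangle inequality completes both bounds for a fixed $v$.

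To extend to \emph{all} unit $v$, I take a standard $(1/n^{10})$-net $\cN$ on the unit sphere of size $e^{O(d)}$ and union bound: choosing $n\ge C(d+\log 1/\beta)/\alpha^2$ with $C$ large enough makes the per-$v$ failure probability $e^{-\Omega(\alpha^2 n)}$ absorb the net cardinality and the target $\beta$. Passing from the net to an arbitrary unit $v'$ costs an additive error of at most $O(1/n^5)\cdot\max_i\|X_i\|_2$, which is $o(\alpha)$ after a union bound on $\|X_i\|_2\le O(\sqrt{d+\log(n/\beta)})$.

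The main obstacle, as usual in stability-style arguments, is the interchange of the two suprema (over $v$ and over $b$). The cleanest way I see is the two-step reduction above: freeze $v$ from the net, reduce the supremum over $b\in[0,1]^n$ to a supremum over indicators of size-$\le \eta n$ subsets via extreme-point convexity, then control that supremum through the uniform tail bound $|Y_{(k)}|\lesssim\sqrt{\log(n/k)}$ rather than through a second net over $b$ (a union bound over all $\binom{n}{\eta n}$ sets would force a spurious $\eta\log(1/\eta)$ factor, which is exactly what we must avoid). Everything else is routine subgaussian/subexponential concentration, which is why the result is essentially a restatement of \cite[Proposition 3.3]{DiakonikolasK22book}.
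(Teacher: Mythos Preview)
The paper does not prove this lemma; it is stated as a direct citation of \cite[Proposition~3.3]{DiakonikolasK22book}. Your sketch follows the standard route (per-direction concentration, order-statistics control of the top $\eta n$ values, extreme-point reduction for $b\in[0,1]^n$, then a net over the sphere), which is essentially how the cited reference proceeds.

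That said, there is one genuine gap in your write-up. For condition~2 you assert ``likewise for $\sum(1-b_i)Y_i^2$,'' but your own order-statistics bound gives $\sum_{k\le \eta n} Y_{(k)}^2 \le C\,\eta n\log(1/\eta)$, so after dividing by $n$ you obtain $\eta\log(1/\eta)$, not $\eta\sqrt{\log 1/\eta}$. Thus your argument only establishes condition~2 under the stronger hypothesis $\alpha \ge C\eta\log(1/\eta)$, not the stated $\alpha \ge O(\eta\sqrt{\log 1/\eta})$. This is not a flaw in your strategy so much as a mismatch with the lemma as written: in the Diakonikolas--Kane stability definition the variance bound is $\delta^2/\eta$ rather than $\delta$, and with $\delta=O(\eta\sqrt{\log 1/\eta})$ one gets exactly $O(\eta\log 1/\eta)$. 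The lemma statement here appears to have folded both into a single parameter $\alpha$; either way, your ``likewise'' is not literally correct and you should state the variance deviation as $O(\eta\log 1/\eta)$.

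A minor second point: the claim that $|Y_{(k)}|\le C\sqrt{\log(n/k)}$ holds for \emph{all} $k$ simultaneously with failure probability $e^{-\Omega(n)}$ is too strong (for $k=1$ it is only polynomial in $1/n$). What you actually need is the aggregate bound on $\sum_{k\le \eta n}|Y_{(k)}|$ and $\sum_{k\le \eta n}Y_{(k)}^2$, which you can get with failure probability $e^{-\Omega(\eta n)}$ via truncation at level $C\sqrt{\log(1/\eta)}$ plus Bernstein on the truncated tail; since $\eta n \gtrsim d$ under the sample-size assumption, this still survives the union bound over the $e^{O(d)}$-sized net.
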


\begin{lemma}[implicit from {\cite[Section 2]{DiakonikolasK22book}}] \label{lem:stability}
    Fix $\eta$ sufficiently small and $\alpha = \tcO(\eta)$. There is a deterministic algorithm $\cA_3$ that, on a dataset $\bX'$, outputs $\muhat$ such that $\|\muhat-\mu\|_2 \le O(\alpha),$ for any $\eta$-corruption $\bX'$ of any $\bX$ that is $(\eta, \alpha)$-mean stable with respect to any $\mu \in \BR^d$. Importantly, $\cA_3$ does not require knowledge of $\bX$ or $\mu$.
\end{lemma}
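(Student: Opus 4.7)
The plan is to describe an inefficient filtering/stability based estimator whose analysis is standard in the robust statistics literature (e.g., \cite{DiakonikolasK22book}). The algorithm $\cA_3$, on input $\bX' = \{X_1',\dots,X_n'\}$, searches (by brute force) for any weight vector $b \in [0,1]^n$ satisfying
\begin{itemize}
    \item $\BE_i b_i \ge 1-2\eta$, and
    \item for every unit vector $v$, $\left|\BE_i b_i \langle v, X_i' - \mu_b \rangle^2 - 1\right| \le O(\alpha)$, where $\mu_b := (\sum_i b_i)^{-1} \sum_i b_i X_i'$,
\end{itemize}
and outputs $\muhat := \mu_b$ for any such $b$. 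Since the criteria are expressible in $b$ and the corrupted dataset $\bX'$, $\cA_3$ does not require knowledge of $\bX$ or $\mu$.

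The first step is \emph{existence of a feasible $b$}. Let $G \subset [n]$ be the indices on which $X_i' = X_i$; since $\bX'$ is an $\eta$-corruption of $\bX$, $|G| \ge (1-\eta) n$. Setting $b_i = \mathbf{1}[i \in G]$ yields $\BE_i b_i \ge 1-\eta \ge 1-2\eta$. By the $(\eta,\alpha)$-stability of $\bX$ (\Cref{lem:resilience-mean}), applied with this $b$ and with a shifted weighted mean argument, the second property holds up to $O(\alpha)$ deviation. Hence $\cA_3$ never fails to return an answer.

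The second step is \emph{accuracy}: for any feasible $b$, $\|\mu_b - \mu\|_2 \le O(\alpha)$. Fix a unit vector $v$ and let $\tilde b_i := \min(b_i, \mathbf{1}[i \in G])$, so $\BE_i \tilde b_i \ge 1 - 3\eta$ (since removing the corrupted indices from $b$ loses at most $\eta n$ mass). Decompose
\[
\langle v, \mu_b - \mu \rangle \;=\; \frac{1}{\sum_i b_i}\Bigl(\underbrace{\sum_i \tilde b_i \langle v, X_i - \mu \rangle}_{(\mathrm{I})} \;+\; \underbrace{\sum_i (b_i - \tilde b_i) \langle v, X_i' - \mu \rangle}_{(\mathrm{II})}\Bigr).
\]
Term (I) is bounded by $O(\alpha) \cdot n$ directly from stability, since $\tilde b$ is an admissible weight vector supported on uncorrupted points. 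For term (II), note $\sum_i (b_i - \tilde b_i) \le 3\eta n$, so by Cauchy--Schwarz,
\[
|(\mathrm{II})| \le \sqrt{3\eta n} \cdot \sqrt{\sum_i b_i \langle v, X_i' - \mu\rangle^2}.
\]
Writing $X_i' - \mu = (X_i' - \mu_b) + (\mu_b - \mu)$ and using that $\mu_b$ is the weighted mean (so the cross term vanishes),
\[
\sum_i b_i \langle v, X_i' - \mu\rangle^2 \;=\; \sum_i b_i \langle v, X_i' - \mu_b\rangle^2 \;+\; \Bigl(\sum_i b_i\Bigr)\langle v, \mu_b - \mu\rangle^2 \le (1+O(\alpha)) n + n \langle v, \mu_b - \mu\rangle^2,
\]
using the second feasibility condition. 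Combining, setting $\Delta := \|\mu_b - \mu\|_2$ and maximizing over $v$ gives an inequality of the form $\Delta \le O(\alpha) + O(\sqrt{\eta})(1+\Delta)$, which for $\eta$ sufficiently small rearranges to $\Delta \le O(\alpha)$, as desired (recall $\alpha = \tcO(\eta) \ge \sqrt{\eta}$ up to logs, so the $\sqrt{\eta}$ term is absorbed into $O(\alpha)$).

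The main obstacle is correctly tracking the constants in the two-sided variance bound so that the self-referential inequality on $\Delta$ actually closes; one must also verify that feasibility with respect to \emph{weighted} second moments (rather than a fixed covariance $I$) is consistent with the stability definition of \Cref{lem:resilience-mean}, which requires expanding the weighted second moment around $\mu$ and separately around $\mu_b$ and relating the two via the identity above. Everything else is bookkeeping.
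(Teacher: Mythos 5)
The paper itself does not give a proof of \Cref{lem:stability}; it is cited directly as implicit from \cite{DiakonikolasK22book}, so there is no in-paper argument to compare against. Your overall blueprint (brute-force search for a feasible reweighting whose empirical covariance looks near-isotropic, then an accuracy argument that closes a self-referential inequality for $\Delta = \|\mu_b-\mu\|_2$) is the right shape, and the existence/feasibility step is sound. However, the accuracy step has a genuine gap in the Cauchy--Schwarz bound on term $(\mathrm{II})$. You bound the weighted second moment $\sum_i b_i \langle v, X_i'-\mu\rangle^2$, centered at the \emph{true} mean $\mu$, which is of order $(1+O(\alpha)+\Delta^2)n$, and so obtain $|(\mathrm{II})|/n = O(\sqrt{\eta})(1+\Delta)$. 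The resulting inequality $\Delta \le O(\alpha)+O(\sqrt{\eta})(1+\Delta)$ yields only $\Delta \le O(\sqrt{\eta})$, which is strictly weaker than the claimed $O(\alpha) = \tcO(\eta)$. The closing remark that ``$\alpha = \tcO(\eta) \ge \sqrt{\eta}$ up to logs'' is false: $\eta\sqrt{\log(1/\eta)} \ll \sqrt{\eta}$ for small $\eta$, so the $O(\sqrt{\eta})$ term cannot be absorbed.

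The fix (and what is implicit in \cite{DiakonikolasK22book}) is to apply Cauchy--Schwarz to $\sum_i(b_i-\tilde b_i)\langle v, X_i'-\mu_b\rangle$ centered at the \emph{empirical} mean $\mu_b$, and to bound the variance carried by the corrupted weights $\sum_i (b_i-\tilde b_i)\langle v, X_i'-\mu_b\rangle^2$ by subtracting the good-weight variance $\sum_i\tilde b_i\langle v, X_i-\mu_b\rangle^2$ (which by stability is at least $(1-O(\alpha))n - O(\alpha\Delta)n + (1-3\eta)n\langle v,\mu_b-\mu\rangle^2$) from the feasibility upper bound $(1+O(\alpha))n$ on the total. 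This shows the corrupted weights carry at most $O(\alpha)n + O(\alpha\Delta)n$ worth of variance around $\mu_b$, so $|(\mathrm{II})|/n = O(\sqrt{\eta\alpha}) + O(\sqrt{\eta\alpha\Delta}) + O(\eta)\Delta$, and $\sqrt{\eta\alpha} = \eta\log^{1/4}(1/\eta) = O(\alpha)$, which closes the recursion. A further small point you should make explicit: $\tilde b$ only retains mass $\ge(1-3\eta)n$, so \Cref{lem:resilience-mean} must be invoked at corruption level $3\eta$ rather than $\eta$ (harmless, since $3\eta$ is still below the constant threshold, but it should be stated).
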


We now prove \Cref{thm:robust}.

\begin{proof}
    We first show how to estimate the covariance $\Sigma$. First, we apply a ``sample pairing'' trick (e.g., see~\cite[Section 4.4]{DiakonikolasK22book}). Namely, assume WLOG that $n$ is even, and define $\tilde{\bX}$ to be the set $\{(X_{2i-1}-X_{2i})/\sqrt{2}\}_{i=1}^{n/2}$, and $\tilde{\bX}' = \{(X_{2i-1}'-X_{2i}')/\sqrt{2}\}_{i=1}^{n/2}$. Note that $\tilde{\bX}$ are i.i.d. samples from $\cN(0, \Sigma)$, and $\tilde{\bX}'$ is at most $2 \eta$-corrupted.

    Now, because $\eta \le \gamma$, $\tilde{\bX}'$ is at most $2 \gamma$ corrupted, so \Cref{lem:robust-spectral-covariance} on $\tilde{\bX}'$ (replacing $\eta$ with $2 \gamma$) gives us some $\Sigmahat_1$ such that $\|\Sigma^{-1/2} \Sigmahat_1 \Sigma^{-1/2}-I\|_{op} \le O(\gamma)$, by our assumed bound on the number of samples. Next, \Cref{lem:robust-frobenius-covariance} on $\tilde{\bX}'$ gives us some $\Sigmahat_2$ such that $\|\Sigma^{-1/2} \Sigmahat_2 \Sigma^{-1/2}-I\|_{F} \le O(\rho)$, by our assumed bound on the number of samples. So, we can set $\Sigmahat$ to be any covariance such that $\|\Sigmahat^{-1/2} \Sigmahat_1 \Sigmahat^{-1/2}-I\|_{op} \le O(\gamma)$ and $\|\Sigmahat^{-1/2} \Sigmahat_2 \Sigmahat^{-1/2}-I\|_{F} \le O(\rho)$. Note that $\Sigma$ satisfies these properties, and any $\Sigmahat$ that satisfies these properties must satisfy $\|\Sigma^{-1/2} \Sigmahat \Sigma^{-1/2}-I\|_{F} \le O(\gamma)$ and $\|\Sigma^{-1/2} \Sigmahat \Sigma^{-1/2}-I\|_F \le O(\rho)$, by the approximate symmetry and transitivity properties (\Cref{prop:approx-metric}).

    Now, we estimate the mean $\mu$. Taking the original data $\bX',$ we compute $\{\Sigmahat^{-1/2} X_i'\}$. By stability (\Cref{lem:stability}), we know that with $1-\beta$ probability, $\{\Sigma^{-1/2} X_i\}$ is $(\eta, \gamma)$-stable with respect to $\mu$ (where we are using the uncorrupted data and the true covariance $\Sigma$). Letting $J = \Sigmahat^{-1/2} \Sigma^{1/2}$, we know that $J$ has all singular values between $1-O(\gamma)$ and $1+O(\gamma)$, and that $\{J^{-1} \cdot \Sigmahat^{-1/2} X_i\}$ is $(\eta, \gamma)$-stable with respect to $\mu$.
    Moreover, note that we can write $\langle v, \Sigmahat^{-1/2} (X_i - \mu) \rangle = \langle J v, J^{-1} \cdot \Sigmahat^{-1/2} (X_i-\mu) \rangle$, and that $1-O(\gamma) \le \|J v\|_2 \le 1+O(\gamma)$. Therefore, $\{\Sigmahat^{-1/2} X_i\}$ is $(\eta, O(\gamma))$-stable with respect to $\Sigmahat^{-1/2} \mu$, which means that by \Cref{lem:stability}, $\cA_3$ on $\{\Sigmahat^{-1/2} X_i'\}$ outputs some value $\hat{\nu}$ such that $\|\hat{\nu} - \Sigmahat^{-1/2} \mu\|_2 \le O(\gamma)$. Thus, by setting $\muhat := \Sigmahat^{1/2} \cdot \hat{\nu},$ we have that $\|\Sigmahat^{-1/2} (\muhat - \nu)\|_2 \le O(\gamma)$, which means that $\|\Sigma^{-1/2} (\muhat - \mu)\|_2 = \|J^{-1} \Sigmahat^{-1/2} (\muhat - \mu)\|_2 \le (1+O(\gamma)) \cdot \|\Sigmahat^{-1/2} (\muhat - \mu)\|_2 \le O(\gamma)$.
\end{proof}

\subsection{Proof of \Cref{lem:equivalence-of-approx-scaling}}

In this subsection, we prove \Cref{lem:equivalence-of-approx-scaling}.

\begin{proof}
    First, note that for any positive definite matrices $\Sigma_1, \Sigma_2$, $\|\Sigma_2^{-1/2} \Sigma_1 \Sigma_2^{-1/2}-I\|_{op} \le \gamma$ is equivalent to $(1-\gamma) I \preccurlyeq \Sigma_2^{-1/2} \Sigma_1 \Sigma_2^{-1/2} \preccurlyeq (1+\gamma) I$. Since $M$ being PSD implies $AMA^\top$ is PSD (and vice versa if $A$ is invertible), the previous statement is thus equivalent to $(1-\gamma) \cdot \Sigma_2 \preccurlyeq \Sigma_1 \preccurlyeq (1+\gamma) \Sigma_2$.
    Next, note that
\begin{align}
    \|\Sigma_2^{-1/2} \Sigma_1 \Sigma_2^{-1/2}-I\|_F &=
    \sqrt{\Tr\left((\Sigma_2^{-1/2} \Sigma_1 \Sigma_2^{-1/2}-I)^2\right)} \nonumber \\
    &= \sqrt{\Tr\left(\Sigma_2^{-1/2} \Sigma_1 \Sigma_2^{-1} \Sigma_1 \Sigma_2^{-1/2} - 2 \cdot \Sigma_2^{-1/2} \Sigma_1 \Sigma_2^{-1/2} + I\right)} \nonumber \\
    &= \sqrt{\Tr\left(\Sigma_1 \Sigma_2^{-1} \Sigma_1 \Sigma_2^{-1} - 2 \cdot \Sigma_1 \Sigma_2^{-1} + I\right)}, \label{eq:frobenius-bash}
\end{align}
    where the first two lines are a straightforward expansion, and the final line simply uses the fact that $\Tr(AB) = \Tr(BA)$ for any matrices $A, B$.
    Finally, note that
\begin{equation}
    \|\Sigma_2^{-1/2} (\mu_1-\mu_2)\|_2 = \sqrt{(\mu_1-\mu_2)^\top \Sigma_2^{-1} (\mu_1-\mu_2)}. \label{eq:mahalanobis-bash}
\end{equation}

    Now, consider replacing $\Sigma_1$ with $\Sigma_3 := \Sigma^{1/2} \Sigma_1 \Sigma^{1/2},$ $\Sigma_2$ with $\Sigma_4 := \Sigma^{1/2} \Sigma_2 \Sigma^{1/2},$ $\mu_1$ with $\mu_3 := \Sigma^{1/2} \mu_1 + \mu$, and $\mu_2$ with $\mu_4 := \Sigma^{1/2} \mu_2+\mu$. Again, since $M$ being PSD implies $AMA^\top$ is PSD (and vice versa), we have that $(1-\gamma) \cdot \Sigma_2 \preccurlyeq \Sigma_1 \preccurlyeq (1+\gamma) \cdot \Sigma_2$ if and only if $(1-\gamma) \cdot \Sigma_4 \preccurlyeq \Sigma_3 \preccurlyeq (1+\gamma) \cdot \Sigma_4$. Moreover, note that
\[
    \Tr(\Sigma_3 \Sigma_4^{-1} \Sigma_3 \Sigma_4^{-1}) 
    = \Tr(\Sigma^{1/2} \Sigma_1 \Sigma_2^{-1} \Sigma_1 \Sigma_2^{-1} \Sigma^{-1/2})
    = \Tr(\Sigma_1 \Sigma_2^{-1} \Sigma_1 \Sigma_2^{-1})
\]
    and
\[
    \Tr(\Sigma_3 \Sigma_4^{-1}) 
    = \Tr(\Sigma^{1/2} \Sigma_1 \Sigma_2^{-1} \Sigma^{-1/2})
    = \Tr(\Sigma_1 \Sigma_2^{-1}),
\]
    which means that \eqref{eq:frobenius-bash} would be the same if we replaced $\Sigma_1$ with $\Sigma_3$ and $\Sigma_2$ with $\Sigma_4$. Finally,
\[(\mu_3-\mu_4)^\top \Sigma_4^{-1} (\mu_3-\mu_4) = (\mu_1-\mu_2)^\top \Sigma^{1/2} (\Sigma^{-1/2} \Sigma_2^{-1} \Sigma^{-1/2}) \Sigma^{1/2} (\mu_1-\mu_2) = (\mu_1-\mu_2)^\top \Sigma_2^{-1} (\mu_1-\mu_2),\]
    which means that \eqref{eq:frobenius-bash} would be the same if we replaced $\mu_1$ with $\mu_3$, $\mu_2$ with $\mu_4$, $\Sigma_1$ with $\Sigma_3$, and $\Sigma_2$ with $\Sigma_4$.

    Overall, by the definition of $\approx_{\gamma, \rho, \tau}$, we have that $(\mu_1, \Sigma_1) \approx_{\gamma, \rho, \tau} (\mu_2, \Sigma_2)$ if and only if $(\mu_3, \Sigma_3) \approx_{\gamma, \rho, \tau} (\mu_4, \Sigma_4)$.
\end{proof}

\end{document}